\DeclarePairedDelimiter\br{(}{)}% ( )
\DeclarePairedDelimiter\brs{[}{]}% [ ]
\DeclarePairedDelimiter\brc{\{}{\}}% { }
\newcommand{\E}{\mathbb{E}}
\newcommand{\G}{\mathbb{G}}
\newcommand{\VAR}{\mathrm{Var}}
\newtheorem{theorem}{Theorem}[section]
\newtheorem{lemma}[theorem]{Lemma}
\newtheorem*{lemma*}{Lemma}
\newtheorem*{theorem*}{Theorem}
\theoremstyle{definition}
\newtheorem{definition}{Definition}
\newtheorem{remark}{Remark}
\newcommand{\regret}{R_K}
\newcommand{\bbE}{\mathbb{E}}
\newcommand{\bbR}{\mathbb{R}}
\newcommand{\calM}{\mathcal{M}}
\newcommand{\calR}{\mathcal{R}}
\newcommand{\calA}{\mathcal{A}}
\newcommand{\calB}{\mathcal{B}}
\newcommand{\wt}{\widetilde}
\newcommand{\wh}{\widehat}
\newcommand{\tO}{\wt O}
\renewcommand{\Pr}{\mathbb{P}}
\newcommand{\ind}{\mathbb{I}}
\newcommand{\TV}[2]{\text{TV}(#1, #2)}
\newcommand{\KL}[2]{\text{KL}(#1 \;\|\; #2)}
\newcommand{\sinit}{s_\text{init}}
\newcommand{\ssink}{g}
\newcommand{\ctg}[1]{J^{#1}}
\newcommand{\ctgopt}{\ctg{\piopt}}
\newcommand{\hatctg}[1]{\wh{J}^{#1}}
\newcommand{\costbound}{B_\star}
\newcommand{\timebound}{T_\star}
\newcommand{\piopt}{\pi^\star}
\newcommand{\policytime}[1]{T^{#1}}
\newcommand{\timeopt}{\policytime{\piopt}}
\newcommand{\propset}{\Pi_\text{proper}}
\newcommand{\geventi}[1]{G^{#1}}
\newcommand{\indevent}[1]{\ind \{ #1 \}}
\newcommand{\numintervals}{M}
\newcommand{\cmin}{c_{\text{min}}}
\newcommand{\Ik}{I^k}
\newcommand{\Ikk}[1]{I^{#1}}
\newcommand{\knownthresh}{\omega_{\calA}}
\newcommand{\traj}[1]{U^{#1}}
\newcommand{\trajconcat}[1]{\bar U^{#1}}
\DeclareMathOperator*{\argmin}{arg\,min}
\title{Minimax Regret for Stochastic Shortest Path}
\author{%
  Alon Cohen \\
  Tel-Aviv University and Google Research, Tel Aviv\\
  \texttt{aloncohen@google.com} \\
  \And
  Yonathan Efroni \\
  Microsoft Research, New York\\
  \texttt{jonathan.efroni@gmail.com} \\
  \And
  Yishay Mansour \\
  Tel-Aviv University and Google Research, Tel Aviv\\
  \texttt{mansour@tau.ac.il} \\
  \And
  Aviv Rosenberg \\
  Tel-Aviv University\\
  \texttt{avivros007@gmail.com} \\
  % examples of more authors
  % \And
  % Coauthor \\
  % Affiliation \\
  % Address \\
  % \texttt{email} \\
  % \AND
  % Coauthor \\
  % Affiliation \\
  % Address \\
  % \texttt{email} \\
  % \And
  % Coauthor \\
  % Affiliation \\
  % Address \\
  % \texttt{email} \\
  % \And
  % Coauthor \\
  % Affiliation \\
  % Address \\
  % \texttt{email} \\
}
\begin{document}

\maketitle

\begin{abstract}
    We study the Stochastic Shortest Path (SSP) problem in which an agent has to reach a goal state in minimum total expected cost. 
    In the learning formulation of the problem, the agent has no prior knowledge about the costs and dynamics of the model. 
    She repeatedly interacts with the model for $K$ episodes, and has to minimize her regret.
    In this work we show that the minimax regret for this setting is $\widetilde O(\sqrt{ (B_\star^2 + B_\star) |S| |A| K})$ where $B_\star$ is a bound on the expected cost of the optimal policy from any state, $S$ is the state space, and $A$ is the action space.
    This matches the $\Omega (\sqrt{ B_\star^2 |S| |A| K})$ lower bound of \citet{rosenberg2020near} for $B_\star \ge 1$, and improves their regret bound by a factor of $\sqrt{|S|}$.
    For $B_\star < 1$ we prove a matching lower bound of $\Omega (\sqrt{ B_\star |S| |A| K})$.
    Our algorithm is based on a novel reduction from SSP to finite-horizon MDPs. 
    To that end, we provide an algorithm for the finite-horizon setting whose leading term in the regret depends polynomially on the expected cost of the optimal policy and only logarithmically on the horizon.
\end{abstract}

\section{Introduction}

We study the stochastic shortest path (SSP) problem in which an agent aims to reach a predefined goal state while minimizing her total expected cost.
This is one of the most basic models of reinforcement learning (RL) that includes both finite-horizon and discounted Markov Decision Processes (MDPs) as special cases. In addition, SSP captures a wide variety of realistic scenarios such as car navigation, game playing and drone flying.

We study an online version of SSP in which both the immediate costs and transition distributions of the model are initially unknown to the agent. The agent interacts with the model for $K$ episodes, in each of which she attempts to reach the goal state with minimal cumulative cost. 
A main challenge in the online model is found when instantaneous costs are small. For example, any learning algorithm that attempts to myopically minimize the accumulated costs might get caught in a cycle with zero cost and never reach the goal state. Nonetheless, even if the costs are not zero, only very small, the agent must be able to trade off the need to minimize costs with that of reaching the goal quickly. 

The online setting was originally suggested by \cite{tarbouriech2019noregret} who gave an algorithm with $\tO(K^{2/3})$ regret guarantee. In a follow-up work, \cite{rosenberg2020near} improved the previous bound to $\tO(\costbound |S| \sqrt{|A| K})$, where $S$ is the state space, $A$ is the action space, and $\costbound$ is an upper bound on the total expected cost of the optimal policy when initialized at any state. \cite{rosenberg2020near} also provide a lower bound of $\Omega(\costbound \sqrt{|S| |A| K})$ -- leaving a gap of $\sqrt{|S|}$ between the upper and lower bounds. 
In this work, unlike the previously mentioned works that assume the cost function is deterministic and known, we consider the case where the costs are i.i.d.\ and initially unknown. We prove upper and lower bounds for this case, proving that the optimal regret is of order $\wt \Theta(\sqrt{(\costbound^2 + \costbound) |S| |A| K})$.

The algorithms of both \cite{tarbouriech2019noregret,rosenberg2020near} were based on a direct application of the ``Optimism in the Face of Uncertainty'' principle to the SSP model, following the ideas behind the UCRL2 algorithm \citep{jaksch2010near} for average-reward MDPs.
In this work we take a different approach. 
We propose a novel black-box reduction to finite-horizon MDPs, showing that the SSP problem is not harder than the finite-horizon setting assuming prior knowledge on the expected time it takes for the optimal policy to reach the goal state.
While the reduction itself is simple, the analysis is highly nontrivial as one has to show that the goal state is indeed reached in every episode without incurring excessive costs in the process.

The idea of reducing SSP to finite-horizon was previously used by \citet{chen2020minimax,chen2021finding} for SSP with adversarially changing costs.
However, they run one finite-horizon episode in every SSP episode and then simply try to reach the goal as fast as possible, while we restart a new finite-horizon episode every $H$ steps.
This modification is what enables us to obtain the optimal and improved dependence in the number of states.

In addition, we provide a new algorithm for regret minimization in finite-horizon MDPs called \verb|ULCVI|.
We show that (for large enough number of episodes) its regret depends polynomially on the expected cost of the optimal policy $B_\star$, and only logarithmically on the horizon length $H$.
This implies that the correct measure for the regret is the expected cost of the optimal policy and not the length of the horizon. 
We note that regret with logarithmic dependence in the horizon $H$ was also obtained by \citet{zhang2020reinforcement}, yet they make a much stronger assumption: that the cumulative cost of \emph{every} trajectory is bounded by $1$.
In contrast, we only assume that the \emph{expected} cost of the optimal policy is bounded by some constant $\costbound$, while other policies may suffer a cost of $H$.

Our reduction, when combined with our finite-horizon algorithm \verb|ULCVI|, guarantees SSP regret of $\wt O( \sqrt{ (\costbound^2 + \costbound) |S| |A| K} )$.
This matches the lower bound of \citet{rosenberg2020near} for $\costbound \ge 1$ up to logarithmic factors.
However, their lower bound does not hold for $\costbound < 1$ suggesting that this is not the correct rate in this case.
Indeed, we prove a tighter lower bound of $\Omega(\sqrt{\costbound |S| |A| K})$ for $\costbound < 1$, showing that our regret guarantees are minimax optimal in all cases. 

As a final remark we note that, following our work, \citet{tarbouriech2021stochastic} were able to obtain a comparable regret bound for SSP without prior knowledge of the optimal policy's expected time to reach the goal state.

\subsection{Additional related work}

\textbf{Planning for stochastic shortest path.}
Early work by \cite{bertsekas1991analysis} studied planning in SSPs, i.e., computing the optimal strategy efficiently when parameters are known. 
Under certain assumptions, they established that the optimal strategy is a deterministic stationary policy and can be computed efficiently using standard planning algorithms, e.g., Value Iteration and LP.

\textbf{Adversarial stochastic shortest path.}
\citet{rosenberg2020stochastic} presented stochastic shortest path with adversarially changing costs.
Their regret bounds were improved by \citet{chen2020minimax,chen2021finding} using a reduction to online loop-free SSP (see next paragraph).
As mentioned before, our reduction is different and therefore able to remove the extra $\sqrt{|S|}$ factor in the regret.

\textbf{Regret minimization in MDPs.}
There is a vast literature on regret minimization in RL that mostly builds on the optimism principle.
Most literature focuses on the tabular setting \citep{jaksch2010near,azar2017minimax,jin2018q,fruit2018efficient,zanette2019tighter,efroni2019tight,simchowitz2019non}, but recently it was extended to function approximation under various assumptions \citep{yang2019sample,jin2020provably,zanette2020frequentist,zanette2020learning}.

\textbf{Online loop-free SSP.}
A different line of work considers finite-horizon MDPs with adversarially changing costs \citep{neu2010loopfree,neu2012adversarial,zimin2013online,rosenberg2019full,rosenberg2019bandit,jin2019learning,cai2019provably,efroni2020optimistic,lancewicki2020learning,lee2020bias,jin2020simultaneously}.
They refer to finite-horizon adversarial MDPs as online loop-free SSP.
This is not to be confused with our setting in which the interaction between the agent and the environment ends only when (and if) the goal state is reached, and not after a fixed number of steps $H$.
See \citet{rosenberg2020stochastic,chen2020minimax} for a discussion on the differences between the models.

\section{Preliminaries and main results} \label{sec:preliminaries}

An instance of the SSP problem is defined by an MDP $\calM = (S,A,P,c,\sinit,\ssink)$ where $S$ is a finite state space and $A$ is a finite action space. 
The agent begins at an initial state $\sinit \in S$, and ends her interaction with $\calM$ by arriving at the goal state $\ssink$ (where $\ssink\not\in S$).
Whenever she plays action $a$ in state $s$, she pays a cost $C \in [0,1]$ drawn i.i.d. from a distribution with expectation $c(s,a) \in [0,1]$ and the next state $s' \in S \cup \{ \ssink \}$ is chosen with probability $P(s' \mid s,a)$. 
Note that the transition function $P$ satisfies $\sum_{s' \in S \cup \{ \ssink \}} P(s' \mid s,a) = 1$ for every $(s,a) \in S \times A$.

\textbf{Proper policies.}
A stationary and deterministic policy $\pi : S \mapsto A$ is a mapping that selects action $\pi(s)$ whenever the agent is at state $s$.
A policy $\pi$ is called \emph{proper} if playing according to $\pi$ ensures that the goal state is reached with probability $1$ when starting from any state (otherwise it is \emph{improper}).
In SSP, the agent has two goals: (a) reach the goal state; (b) minimize the total expected cost.
To facilitate the first goal, we make the basic assumption that there exists at least one proper policy.
% This is equivalent to the assumption that the goal state is reachable from every state, which is clearly a necessary assumption.
In particular, the goal state is reachable from every state, which is clearly a necessary assumption.

Any policy $\pi$ induces a \emph{cost-to-go function} $\ctg{\pi} : S \mapsto [0, \infty]$.
The cost-to-go at state $s$ is defined by
$
    \ctg{\pi}(s) = \lim_{T \rightarrow \infty} \bbE_\pi \brk[s]1{\sum_{t=1}^T c(s_t,a_t) \mid \sinit = s},
$
where the expectation is taken w.r.t the random sequence of states generated by playing according to $\pi$ when the initial state is $s$. For a proper policy $\pi$, it follows that $\ctg{\pi}(s)$ is finite for all $s \in S$. However, note that $\ctg{\pi}(s)$ may be finite even if $\pi$ is improper. We additionally denote by $\policytime{\pi}(s)$ the expected time it takes for $\pi$ to reach $\ssink$ starting at state $s$; in particular, if $\pi$ is proper then $\policytime{\pi}(s)$ is finite for all $s \in S$, and if $\pi$ is improper there must exist some state $s$ such that $\policytime{\pi}(s) = \infty$.

\textbf{Learning formulation.}
Here, the agent does not have any prior knowledge of the cost function $c$ or transition function $P$.
She interacts with the model in episodes: each episode starts at the fixed initial state $\sinit$,\footnote{The initial state is fixed for simplicity of presentation, but it can be chosen adversarially at the beginning of every episode. Without any change to the algorithm or analysis, the same guarantees hold.} and ends when the agent reaches the goal state $\ssink$ (note that she might \emph{never} reach the goal state). 
Success is measured by the agent's regret over $K$ such episodes, that is the difference between her total cost over the $K$ episodes and the total expected cost of the optimal proper policy:
\[
    \regret 
    = 
    \sum_{k=1}^K \sum_{i=1}^{\Ik} C^k_i - K \cdot \min_{\pi \in \propset} \ctg{\pi} (\sinit),
\]
where $\Ik$ is the time it takes the agent to complete episode $k$ (which may be infinite), $C^k_i$ is the cost suffered in the $i$-th step of episode $k$ when the agent visited state-action pair $(s^k_i,a^k_i)$, and
$\propset$ is the set of all stationary, deterministic and proper policies (that is not empty by assumption). 
In the case that $\Ik$ is infinite for some $k$, we define $\regret = \infty$.

We denote the optimal proper policy by $\piopt$, $\ctgopt(\sinit) = \argmin_{\pi \in \propset} \ctg{\pi}(\sinit)$. 
Moreover, let $\costbound > 0$ be an upper bound on the values of $\ctgopt$ and let $\timebound > 0$ be an upper bound on the times $\timeopt$, i.e., $\costbound \geq \max_{s \in S} \ctgopt (s)$ and $\timebound \geq \max_{s \in S} \timeopt (s)$.
Finally, let $D = \max_{s \in S} \min_{\pi \in \propset} \policytime{\pi}(s)$ be the SSP-diameter, and note that $\costbound \le D \le \timebound$.

\subsection{Summary of our results}

In \cref{sec:reduction} we present a novel black-box reduction from SSP to finite-horizon MDPs (\cref{alg:ssp-reduction}), that yields $\sqrt{K}$ regret bounds when combined with a certain class of optimistic algorithms for regret minimization in finite-horizon MDPs that we call \emph{admissible} (\cref{def:admissible-alg}).
The regret analysis for the reduction is described in \cref{sec:regret-analysis}, and in \cref{sec:finite-horizon-analysis} we present an admissible algorithm for regret minimization in finite-horizon MDPs called \verb|ULCVI|.
We show that it guarantees the following optimal regret in the finite-horizon setting (stated formally in \cref{thm:ulcvi-guarantees}).
Note that (for large enough number of episodes) this bound depends only on the expected cost of the optimal policy and not on the horizon $H$.

\begin{theorem}         
    \label{thm:informal-ulcvi-guarantees}
    Running \verb|ULCVI| (\cref{alg: ulcvi RL} in \cref{sec:finite-horizon-analysis}) in a finite-horizon MDP guarantees, with probability at least $1 - \delta$, a regret bound of 
    \[
        O \brk3{ \sqrt{ (\costbound^2 + \costbound) |S||A| M} \log \frac{M H |S| |A|}{\delta} + H^4 \costbound^{-1} |S|^2 |A| \log^{3/2} \frac{M H |S| |A|}{\delta}},
    \] 
    for any number of episodes $M \ge 1$ simultaneously.
\end{theorem}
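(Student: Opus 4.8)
I would analyze \verb|ULCVI| as an optimistic backward-induction method whose one essential twist is that the optimistic value functions are \emph{clipped} to $[0,\costbound]$, where $\costbound\ge\max_{h,s}V^\star_h(s)$ is the known bound on the optimal value; this clip is precisely what replaces the horizon $H$ by $\costbound$ in the leading term. After episode $k$ the algorithm maintains the empirical transitions $\wh P$ and costs $\wh c$ together with visit counts $N^k(s,a)$ and confidence radii built from a Hoeffding bound and an empirical-Bernstein bound, indexed by those counts. Since the radii depend on the episode only through $N^k$, a peeling argument over the counts makes the good event $\mathcal{G}$ (all radii valid) hold for \emph{all} episodes simultaneously with probability at least $1-\delta$ --- this uniformity in $M$ is exactly what the reduction of \cref{sec:reduction} later needs. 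On $\mathcal{G}$ one computes, by backward induction, $\wt V^k_h(s)=\max\{0,\min\{\costbound,\ \wh c(s,a)+\wh P(\cdot\mid s,a)\wt V^k_{h+1}-b^k_h(s,a)\}\}$ with $a=\pi_k(s)$ greedy and $b^k_h$ chosen to dominate the one-step estimation error. Because $V^\star_h\le\costbound$, a straightforward induction on $h$ gives $\wt V^k_h\le V^\star_h$ on $\mathcal{G}$ --- the clip never destroys optimism --- and hence $R_M\le\sum_{k=1}^M\bigl(V^{\pi_k}_1(\sinit)-\wt V^k_1(\sinit)\bigr)$.

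\textbf{Recursive surplus decomposition.} Writing $\Delta^k_h=V^{\pi_k}_h(s^k_h)-\wt V^k_h(s^k_h)$ and subtracting the Bellman recursion of $\pi_k$ from the optimistic one, I get on $\mathcal{G}$ the standard inequality $\Delta^k_h\le(c-\wh c)(s^k_h,a^k_h)+\bigl((P-\wh P)\wt V^k_{h+1}\bigr)(s^k_h,a^k_h)+O(b^k_h)+\bigl(P\Delta^k_{h+1}\bigr)(s^k_h,a^k_h)$ --- the clip at $\costbound$ being benign since the greedy optimistic value can exceed $\costbound\ge V^\star_h$ only by an amount of the order of the bonus. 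Unrolling over $h\le H$ and summing over $k$ gives $R_M\lesssim\sum_{k,h}b^k_h+(\text{Azuma martingale})$. Since $\wt V^k_{h+1}\in[0,\costbound]$, the empirical-Bernstein radius satisfies $b^k_h\asymp\sqrt{\VAR(C^k_h)/N^k_h}+\sqrt{\VAR_{P(\cdot\mid s^k_h,a^k_h)}(\wt V^k_{h+1})/N^k_h}+\rho^k_h$, where $\rho^k_h$ collects the remaining $\mathrm{poly}(H,|S|)/N^k_h$ terms (including the error of estimating the variance proxies themselves). One Cauchy--Schwarz step with the pigeonhole bound $\sum_{k,h}1/N^k_h\lesssim|S||A|\log(MH|S||A|/\delta)$ reduces the leading part of $R_M$ to $\sqrt{\bigl(\sum_{k,h}\VAR(C^k_h)\bigr)|S||A|\log}+\sqrt{\bigl(\sum_{k,h}\VAR_P(\wt V^k_{h+1})\bigr)|S||A|\log}$, with $\sum_{k,h}\rho^k_h$ handled separately and shown to fall into the stated lower-order term.

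\textbf{Bounding the variance sums --- the crux.} This is where $\costbound^2+\costbound$ appears. For the cost variances, $C^k_h\in[0,1]$ gives $\VAR(C^k_h)\le c(s^k_h,a^k_h)$, so up to a martingale $\sum_{k,h}\VAR(C^k_h)\le\sum_kV^{\pi_k}_1(\sinit)=R_M+M\,V^\star_1(\sinit)\le R_M+M\costbound$. For the optimistic-value variances I would use a law-of-total-variance/telescoping argument: the clipped value shifted by accumulated cost and corrected by twice the running bonus, $Y^k_h=\wt V^k_h(s^k_h)+\sum_{h'<h}\bigl(c(s^k_{h'},a^k_{h'})+2b^k_{h'}\bigr)$, is on $\mathcal{G}$ a \emph{nonnegative submartingale} confined to $[0,W_k]$ with $W_k=O\bigl(\costbound+\sum_h c(s^k_h,a^k_h)+\sum_h b^k_h\bigr)$, and the conditional variance of the increment $Y^k_{h+1}-Y^k_h$ equals $\VAR_{P(\cdot\mid s^k_h,a^k_h)}(\wt V^k_{h+1})$; the elementary bound $\sum_h\VAR(Y^k_{h+1}-Y^k_h\mid\mathcal{F}_h)\le (Y^k_{H+1})^2+(\text{martingale})\le W_k^2+(\text{martingale})$ then yields $\sum_{k,h}\VAR_P(\wt V^k_{h+1})\lesssim\sum_k W_k^2+(\text{martingale})$. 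It remains to show $W_k=\wt O(\costbound)$ on all but a controlled ``burn-in'' set of episodes: once every state--action pair visited in episode $k$ has been seen more than a threshold $\tau=\mathrm{poly}(H,|S|)/\mathrm{poly}(\costbound)$ of times, the estimates along the trajectory are accurate, so $V^{\pi_k}_h=O(\costbound)$ there and $\sum_h b^k_h=\wt O(\costbound)$, and an exponential tail bound for the cost of policies with $O(\costbound)$ expected cost-to-go gives $\sum_h C^k_h=\wt O(\costbound)$ with high probability; the episodes that visit an under-explored pair number at most $|S||A|\tau$ and cost at most $H$ each, and this burn-in cost, together with $\sum_{k,h}\rho^k_h$, produces the $H^4\costbound^{-1}|S|^2|A|\,\mathrm{polylog}$ term. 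Substituting $\sum_{k,h}\VAR(C^k_h)=M\costbound+R_M+(\text{mart})$ and $\sum_{k,h}\VAR_P(\wt V^k_{h+1})=\wt O(M\costbound^2)+(\text{burn-in})$ back into the previous step gives a self-bounding inequality $R_M\lesssim\sqrt{R_M|S||A|\log}+\sqrt{(\costbound+\costbound^2)|S||A|M}\,\mathrm{polylog}+(\text{lower order})$, and solving it for $R_M$ gives the theorem; the same bookkeeping simultaneously verifies the admissibility conditions of \cref{def:admissible-alg}.

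\textbf{Main obstacle.} The decisive difficulty is the third step: showing that the \emph{realized} cost of $\pi_k$ and the variance of the clipped optimistic values are $\wt O(\costbound)$ (resp.\ $\wt O(\costbound^2)$) per episode rather than $\Theta(H)$ (resp.\ $\Theta(H^2)$). This is delicate because $\pi_k$ is greedy with respect to a possibly inaccurate optimistic estimate and could a priori wander for $\Theta(H)$ steps; the argument must therefore (i) maintain optimism under the $\costbound$-clip even though competing policies cost $H$ (easy, via $V^\star\le\costbound$), and --- the real work --- (ii) isolate the burn-in episodes where the trajectory meets an under-explored pair, bound both their number and their per-episode cost, and prove that \emph{off} this set the clip plus optimism force near-optimal, hence short, trajectories, which is exactly what legitimizes the exponential cost concentration and the law-of-total-variance telescoping. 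Making the burn-in/refinement contribution a genuinely lower-order $\mathrm{poly}(H)/\costbound$ term, and keeping every concentration inequality anytime-valid in $M$ for the downstream SSP reduction, is the bulk of the proof.
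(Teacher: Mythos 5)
Your high-level strategy is the same as the paper's: optimism, regret reduced via Cauchy--Schwarz and a pigeonhole count to a sum of on-trajectory variances, a law-of-total-variance/telescoping step to turn that sum into per-episode second moments of the cumulative cost, and then the crux --- showing this second moment is $O(\costbound^2+\costbound)$ per episode by splitting state-action pairs into known/unknown with a burn-in budget that produces the additive $\mathrm{poly}(H)\costbound^{-1}$ term. This is exactly what the paper does (its Lemmas~\ref{lemma: bound on cost term}--\ref{lem:second-moment-bound-finite-horizon}), and your submartingale telescoping applied to the optimistic values is a legitimate variant of the paper's law-of-total-variance step applied to $J^{\pi^m}$. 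However, two points are genuine problems. First, the algorithm you analyze is not \verb|ULCVI|: the paper's algorithm keeps both an optimistic lower value and a pessimistic upper value, with a bonus term $\tfrac{\costbound}{16H^2}\E_{\bar P}[\bar J^m_{h+1}-\underline{J}^m_{h+1}]$, and the recursion is run on $\bar J^m-\underline{J}^m$; your "clip to $[0,\costbound]$" cannot be the mechanism that replaces $H$ by $\costbound$, because on the good event optimism already forces the greedy optimistic value below $J^\star_h\le\costbound$, so the upper clip is never active --- what actually does the work is that the Bernstein bonuses are built from variances of a value that is $O(\costbound)$ under optimism \emph{and} that these variances sum to $\wt O(\costbound^2)$ per episode, which is the second-moment argument, not the clip.

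Second, and more importantly, your justification of the crux has a measurability gap. You argue that on episodes whose realized trajectory only meets well-explored pairs, "the estimates along the trajectory are accurate, so $V^{\pi_k}_h=O(\costbound)$ there," and you then invoke an exponential tail for the realized cost. But $V^{\pi_k}_h(s^k_h)$ averages over continuations of $\pi_k$ that may pass through under-explored pairs where the policy can incur cost $\Theta(H)$, so it is not $O(\costbound)$ merely because the realized path avoided such pairs; moreover, conditioning on the event "this episode never visits an under-explored pair" (an event not known at the start of the episode) changes the trajectory distribution, so the per-episode tail bound cannot be applied as stated. The paper's proof repairs exactly this: it truncates each episode at the stopping time $h_m$ of the first visit to an unknown pair, observes that the truncated cost is distributed as the cost in a contracted MDP in which unknown pairs are absorbed into a zero-cost terminal state, proves via the value-difference lemma plus optimism that the contracted value of $\pi^m$ is at most $J^\star_h+2\costbound$, and then bounds the conditional second moment of the truncated cost by $O(\costbound^2)$ with a block/optional-stopping argument (\cref{lem:second-moment-bound-finite-horizon}); the post-$h_m$ portion is charged to the burn-in budget. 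Your plan needs this stopping-time/contraction construction (and, as a minor point, working with conditional second moments rather than per-episode high-probability bounds, which as written cost you an extra $\sqrt{\log}$ over the stated leading term).
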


Combining \verb|ULCVI| with our reduction yields the following minimax optimal regret bound for SSP.

\begin{theorem}
    \label{thm:optimal-regret-bound} 
    Running the reduction in \cref{alg:ssp-reduction} with the finite-horizon regret minimization algorithm \verb|ULCVI| ensures, with probability at least $1 - \delta$, 
    \[
        \regret 
        = 
        O \brk3{ \sqrt{(\costbound^2 + \costbound) |S| |A| K} \log \frac{K \timebound |S| |A|}{\delta} + \timebound^5 \costbound^{-2} |S|^2 |A| \log^6 \frac{K \timebound |S| |A|}{\delta}}.
    \]
\end{theorem}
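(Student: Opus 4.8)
The plan is to run \cref{alg:ssp-reduction} with \texttt{ULCVI} as the finite-horizon subroutine and with horizon $H=\Theta(\timebound\log(K\timebound|S||A|/\delta))$, and then to (i) show that the goal $\ssink$ is reached in every episode so that $\regret<\infty$, (ii) bound the number $\numintervals$ of finite-horizon intervals the reduction creates, (iii) charge $\regret$ against the finite-horizon regret of \texttt{ULCVI} over those intervals, and (iv) substitute \cref{thm:informal-ulcvi-guarantees} and simplify. In the reduction, $\ssink$ is treated as an absorbing zero-cost state; each SSP episode is split into consecutive intervals of $H$ steps, an interval (and the episode) ending early if $\ssink$ is reached, and each interval is handed to \texttt{ULCVI} as one finite-horizon episode of horizon $H$. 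Writing $x^1,\dots,x^{\numintervals}$ for the starting states of the intervals ($x^m=\sinit$ for the first interval of each episode, and otherwise $x^m$ is wherever the agent ended up after $H$ steps), the key observation is that the optimal $H$-step value $V^\star_H$ of this finite-horizon MDP satisfies $V^\star_H(s)\le\ctgopt(s)\le\costbound$ for every $s$ (truncating the infinite-horizon cost-to-go of $\piopt$), so \texttt{ULCVI} may be run with cost parameter $\costbound$ and \cref{thm:informal-ulcvi-guarantees} applies with this parameter.

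\textbf{Reaching the goal and controlling $\numintervals$.} The first and most delicate step is to prove that, with probability at least $1-\delta/2$, every SSP episode reaches $\ssink$ and the number of non-initial intervals $\numintervals-K$ is only polynomial in $\timebound,|S|,|A|$ (up to logarithmic factors), so $\numintervals=K+\poly(\timebound,|S|,|A|)$ up to logs. The mechanism: since $\piopt$ reaches $\ssink$ from any state in expected time $\le\timeopt(s)\le\timebound$, Markov's inequality gives probability at least $1/2$ of reaching $\ssink$ within $2\timebound$ steps from any state, and chaining this over the $H/(2\timebound)=\Theta(\log(K\timebound|S||A|/\delta))$ blocks inside a single interval shows $\piopt$ reaches $\ssink$ within $H$ steps with probability at least $1-\delta/(\poly(\cdot)K)$. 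One then has to transfer this to the policies actually played by \texttt{ULCVI}: because \texttt{ULCVI} is optimistic and admissible, after a burn-in of $\poly(H,|S|,|A|)$ intervals its policies are good enough for the finite-horizon MDP to inherit an analogous ``reach $\ssink$ within $H$ steps'' guarantee, so overflow intervals become rare and their total contribution is lower order. The subtlety is that when costs along some cycle are zero (or tiny), optimism on the \emph{costs} gives no incentive to make progress toward $\ssink$; the reduction must instead force progress by exploiting the periodic restarts together with \texttt{ULCVI}'s optimistic \emph{transition} estimates. This is carried out in \cref{sec:regret-analysis}.

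\textbf{Regret decomposition.} Condition on the event above, so that
\[
    \regret=\sum_{m=1}^{\numintervals}\bigl(\text{cost incurred in interval }m\bigr)-K\,\ctgopt(\sinit).
\]
Adding and subtracting $\sum_{m=1}^{\numintervals}V^\star_H(x^m)$ splits this into $\bigl(\sum_m\text{cost in interval }m-\sum_m V^\star_H(x^m)\bigr)+\bigl(\sum_m V^\star_H(x^m)-K\,\ctgopt(\sinit)\bigr)$. The first bracket is at most the finite-horizon regret of \texttt{ULCVI} run for $\numintervals$ episodes with the (adversarially determined) start states $x^1,\dots,x^{\numintervals}$; since \texttt{ULCVI} is admissible (\cref{def:admissible-alg}) this holds for such start states, simultaneously for all $\numintervals$, and by \cref{thm:informal-ulcvi-guarantees} it is
\[
    O\!\left(\sqrt{(\costbound^2+\costbound)|S||A|\numintervals}\,\log\frac{\numintervals H|S||A|}{\delta}+H^4\costbound^{-1}|S|^2|A|\log^{3/2}\frac{\numintervals H|S||A|}{\delta}\right).
\]
For the second bracket, each of the $K$ initial intervals contributes $V^\star_H(\sinit)\le\ctgopt(\sinit)$ and each non-initial interval contributes at most $\costbound$, so $\sum_m V^\star_H(x^m)-K\,\ctgopt(\sinit)\le(\numintervals-K)\costbound$, which is lower order by the previous step.

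\textbf{Collecting terms, and the main obstacle.} Plugging $\numintervals=K+\poly(\timebound,|S|,|A|)$ (up to logs) and $H=\Theta(\timebound\log(K\timebound|S||A|/\delta))$ into the display, the first term becomes $O\!\bigl(\sqrt{(\costbound^2+\costbound)|S||A|K}\,\log\frac{K\timebound|S||A|}{\delta}\bigr)$ (the additive part of $\numintervals$ only affects lower-order terms), the $H^4$ term becomes $\tO(\timebound^4\costbound^{-1}|S|^2|A|)$, and these together with $(\numintervals-K)\costbound$ and the burn-in cost from the first step combine into $O\!\bigl(\timebound^5\costbound^{-2}|S|^2|A|\log^6\frac{K\timebound|S||A|}{\delta}\bigr)$ (I do not grind the exact exponents here); a union bound over the constantly many high-probability events, rescaling $\delta$ by a constant, completes the proof. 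The hard part is the second step: guaranteeing that $\ssink$ is reached within $H$ steps in all but a polynomial number of intervals, uniformly over the whole run, precisely because with zero or tiny costs the usual optimism-based regret machinery does not by itself force progress toward $\ssink$ — and this must be done while keeping the effective cost bound at $\Theta(\costbound)$ rather than $\Theta(H)$, so that the leading term depends on $\costbound$ and only logarithmically on the horizon.
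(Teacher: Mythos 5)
There is a genuine gap, and it sits exactly where you flag "the hard part." Your description of the reduction omits the terminal cost $\hat c_f(s)=8\costbound\indevent{s\ne\ssink}$ that \cref{alg:ssp-reduction} attaches to every non-goal state, and this omission is not cosmetic: it is the mechanism that forces an optimistic learner to reach $\ssink$ (optimistic \emph{transition} estimates alone give no incentive to leave a zero-cost cycle, and "periodic restarts" do not help), and it is also what makes the regret decomposition work. In the paper, \cref{lem:regret-to-finite-horizon-regret} telescopes the cost of each overflow interval against the $8\costbound$ terminal cost suffered at the end of the previous interval (using $\hatctg{\piopt}_1(s)\le\ctgopt(s)+\costbound/K\le 8\costbound$, \cref{lem:opt-ctg-to-finite-horizon-value}), so it only needs $\numintervals\le 4K+\mathrm{poly}$ (\cref{lem:bound-on-number-of-intervals}, proved via \cref{lem:reach-goal-or-unknown-wp-1/2}: probability $\ge 1/2$ per interval of hitting the goal \emph{or an unknown pair}). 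Your decomposition has no such offset: the bracket $\sum_m V^\star_H(x^m)-K\ctgopt(\sinit)$ is charged $(\numintervals-K)\costbound$ with nothing to cancel it, so you need $\numintervals-K$ to be $\mathrm{poly}(\timebound,|S|,|A|)$, i.e., that after a burn-in the learned policies reach $\ssink$ within $H$ steps in all but polynomially many intervals, uniformly over the run. That is a substantially stronger statement than what the paper proves (with only $\numintervals\le 4K$ your bracket can be $\Theta(\costbound K)$, i.e., linear regret), you do not prove it, and the step where you would prove it is deferred to "\cref{sec:regret-analysis}" — which is circular for a blind proof — with the wrong mechanism named. Also, because of the terminal cost, the correct value bound is $\hatctg{\star}_h(s)\le 9\costbound$ (hence ULCVI is run with parameter $9\costbound$), not $V^\star_H\le\ctgopt$ by truncation.

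A second missing ingredient: you identify "$\sum_m(\text{cost in interval }m)-\sum_m V^\star_H(x^m)$" with the finite-horizon regret of ULCVI, but the guarantee the paper proves (and uses through \cref{def:admissible-alg}) is on the \emph{expected} regret $\sum_m\bigl(\hatctg{\pi^m}_1(s^m_1)-\hatctg{\star}_1(s^m_1)\bigr)$. The gap between realized interval costs and $\hatctg{\pi^m}_1(s^m_1)$ is handled separately in \cref{lem:cost-deviation-from-value-function}, and it cannot be dispatched by a naive martingale bound with range $H$ (that would give an $H\sqrt{\numintervals}\approx\timebound\sqrt{K}$ term swamping the claimed leading term); one needs the second-moment bound $O(\costbound^2+\costbound)$ per interval up to the first unknown pair (\cref{lem:reach-goal-or-unknown-wp-1/2}) together with Freedman's inequality, plus a count of the $O(|S||A|\knownthresh)$ "bad" intervals. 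Your proposal contains neither this deviation term nor the second-moment argument, so even granting the interval-control claim, the stated leading term $\sqrt{(\costbound^2+\costbound)|S||A|K}$ does not follow from the steps you give.
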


\begin{remark}
    An important observation is that this regret bound is meaningful even for small $K$.
    Unlike finite-horizon MDPs, where linear regret is trivial, in SSP ensuring finite regret is not easy.
    Our regret bound also implies that if we play for only one episode, i.e., we are only interested in the time it takes to reach the goal state, then it will take us at most $\wt O(\timebound^5 \costbound^{-2} |S|^2 |A|)$ time steps to do so.
\end{remark}

\begin{remark}
    Note that our algorithm needs to know an upper bound on $\timebound$ in advance.
    However, if all costs are strictly positive (i.e., at least $\cmin > 0$), then there is a trivial upper bound of $\costbound / \cmin$.
    In this case, our algorithm keeps an optimal regret bound for large enough $K$, since the bound on $\timebound$ only appears in the additive factor.
    Some previous work used a perturbation argument to generalize their results from the $\cmin$ case to general costs \citep{tarbouriech2019noregret,rosenberg2020near,rosenberg2020stochastic}.
    In our case, it will not work since the dependence on $1 / \cmin$ in the additive term is too large.
    This may be an inherent shortcoming of using finite-horizon reduction to solve SSPs, as it also appears in the works of \citet{chen2020minimax,chen2021finding} for the adversarial setting.
\end{remark}

\begin{remark}
    In practice, one can think of $\timebound$ as a parameter of the algorithm that controls computational complexity and the number of steps to complete $K$ episodes.
    By choosing the parameter $\timebound = x$ for example, we can guarantee that the regret bound of \cref{thm:optimal-regret-bound} holds against the best proper policy with expected time to the goal of at most $x$ (assuming there exists one), and we can also guarantee that the total computational complexity of the algorithm is $\wt O (x \log K)$ (see \cref{remark:complexity-reduction}).
    Furthermore, the algorithm will take at most $\wt O (x K + poly(x,|S|,|A|))$ steps to complete $K$ episodes.
\end{remark}

\begin{remark}
    While the additive term in our regret bound is standard for most cases, it becomes large  when $\costbound$ is extremely small because of the dependence in $\costbound^{-1}$.
    This was not an issue in previous work \citep{tarbouriech2019noregret,rosenberg2020near} since they assumed that the costs are deterministic and known.
    We believe that this dependence is an artifact of our analysis that may be avoided with a more careful definition of $\knownthresh$ (see \cref{def:admissible-alg}) that depends on the actual cost in each state-action pair and not just $\costbound$.
    Nevertheless, the main focus of this paper is on establishing that the minimax optimal regret for SSP is $\wt \Theta (\sqrt{(\costbound^2 + \costbound) |S| |A| K})$, and not on optimizing lower order terms.
    By that we also show that this is the minimax optimal regret for finite-horizon which is independent of the horizon $H$ (up to logarithmic factors).
    Tightening the additive term and eliminating its dependence in $\costbound^{-1}$ is left as an interesting future direction.
\end{remark}

In \cref{sec:lower-bound-appendix} we prove that our regret bound is indeed minimax optimal.
To complement the $\Omega(\costbound \sqrt{ |S| |A| K})$ lower bound of \citet{rosenberg2020near} that assumes $\costbound \ge 1$, we provide the following tighter lower bound for the case that $\costbound < 1$.

\begin{theorem}
    \label{thm:lowerbound}
    Let $\costbound \le \frac12$.
    There exists an SSP problem instance $\calM = (S, A, P, c, \sinit, \ssink)$ in which $\ctgopt(s) \le \costbound$ for all $s \in S$, $|S| \ge 2$, $|A| \ge 2$, $K \ge \costbound |S| |A|$, such the expected regret of any learner after $K$ episodes satisfies
    \[
        \bbE[\regret] \ge \frac{1}{32} \sqrt{\costbound |S| |A| K}.
    \]
\end{theorem}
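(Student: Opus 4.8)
The plan is to encode $|S|$ independent multi-armed bandit problems inside one SSP and then push the classical needle-in-haystack bandit lower bound through this encoding; since the statement only asks for the existence of a hard instance, I would let the instance depend on $K$ and build it in one of two ways according to whether $K$ lies below or above a threshold of order $|S||A|/\costbound$. The common backbone is a ``chain'' MDP on states $s_1,\dots,s_{|S|}$ with $s_1=\sinit$: at every $s_i$ each action deterministically moves to $s_{i+1}$ (and $s_{|S|}$ to $\ssink$), while the cost of playing $a$ at $s_i$ has a prescribed distribution on $[0,1]$. Thus every episode visits each $s_i$ exactly once, the learner faces at $s_i$ one fixed $|A|$-armed bandit that is sampled exactly $K$ times, and --- because the dynamics ignore the action --- the cost-to-go of any constant-arm policy is just the sum along the chain of that arm's mean cost. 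At each $s_i$ I plant a uniformly random ``good'' arm $a_i^\star$ whose cost is slightly cheaper than that of the other arms; taking the worst placement of the $|S|$ needles will produce the claimed deterministic instance.

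\textbf{Small $K$ (up to $\sim|S||A|/\costbound$): a deterministic needle.} Let $c(s_i,a_i^\star)=0$ and $c(s_i,a)=1$ for $a\ne a_i^\star$, so that $\ctgopt\equiv0\le\costbound$ at every state. Since $a_i^\star$ is uniform and, until it is first sampled at $s_i$, independent of the learner's observations, an elementary conditioning argument shows that in each of the first $\Theta(\min\{K,|A|\})$ visits to $s_i$ the learner plays a wrong arm there with constant probability; hence the expected cost charged at $s_i$ is $\Omega(\min\{K,|A|\})$ and $\bbE[\regret]=\Omega(|S|\min\{K,|A|\})$. A direct computation shows this is at least $\tfrac{1}{32}\sqrt{\costbound|S||A|K}$ throughout this range of $K$ (using $K\ge\costbound|S||A|$ when $K\le|A|$, and $K\lesssim|S||A|/\costbound$ when $K\ge|A|$).

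\textbf{Large $K$ (from $\sim|S||A|/\costbound$ on): a stochastic needle with a tuned gap.} Put $p:=\costbound/(2|S|)$ and $\Delta:=\tfrac{1}{4}\sqrt{p|A|/K}$, and let the cost of $a_i^\star$ at $s_i$ follow $\mathrm{Ber}(p)$ and that of every other arm follow $\mathrm{Ber}(p+\Delta)$. Then $\ctgopt(s_i)=(|S|-i+1)p\le|S|p=\costbound/2\le\costbound$, and the lower bound on $K$ is exactly what forces $\Delta\le p$, so $\mathrm{Ber}(p+\Delta)$ is a valid law with $\KL{\mathrm{Ber}(p+\Delta)}{\mathrm{Ber}(p)}\le 2\Delta^2/p$. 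Now invoke the classical $|A|$-armed bandit lower bound at each $s_i$: compare the instance whose needle is arm $a$ to the reference instance in which every arm at $s_i$ is $\mathrm{Ber}(p+\Delta)$, use the divergence decomposition and Pinsker's inequality, and average over the needle position. Because $\Delta$ was chosen so that $K\cdot\KL{\mathrm{Ber}(p+\Delta)}{\mathrm{Ber}(p)}\le|A|/8$, the needle is played at most $K/2$ times in expectation, so the per-state regret is at least $\Delta K/2$. The $|S|$ bandits are independent, so summing over states and then averaging over all needle placements gives $\bbE[\regret]\ge|S|\,\Delta K/2=\tfrac{1}{8\sqrt{2}}\sqrt{\costbound|S||A|K}\ge\tfrac{1}{16}\sqrt{\costbound|S||A|K}$; fixing the worst placement yields a deterministic instance with the same bound.

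Finally, the two admissible ranges overlap --- each contains a constant-factor window around $K=|S||A|/\costbound$ --- so for every $K\ge\costbound|S||A|$ one of the two constructions applies, which proves the theorem. The main obstacle is obtaining a construction strong enough in the small-$K$ regime: a purely stochastic needle-in-haystack embedding only delivers $\Omega(\min\{\costbound K,\sqrt{\costbound|S||A|K}\})$, which is below the target whenever $\costbound|S||A|\le K\lesssim|S||A|/\costbound$. The resolution is the observation that there a \emph{deterministic} needle already forces the horizon-free constant regret $\Omega(|S||A|)$, and $|S||A|$ itself dominates $\tfrac{1}{32}\sqrt{\costbound|S||A|K}$ exactly when $K\lesssim|S||A|/\costbound$. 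The remaining care is in the Bernoulli KL estimate --- choosing $p\approx\costbound/|S|$ so that the constraint $\Delta\le p$ becomes binding precisely where the large-$K$ regime begins --- and in tracking the numerical constants down to $\tfrac{1}{32}$.
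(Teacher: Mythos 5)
Your proposal is correct in substance but follows a genuinely different route from the paper. The paper uses a single construction for the whole range $K \ge \costbound |S||A|$: every action leads straight to $\ssink$, the initial state of each episode is drawn uniformly from $S$, each state carries its own needle arm with Bernoulli cost of mean $\costbound$ versus $\costbound+\epsilon$ for the rest, with $\epsilon = \frac18\sqrt{\costbound|A||S|/K}$; the assumption $K \ge \costbound|S||A|$ is exactly what makes $\epsilon < \frac18$, and the random per-state episode counts $K_s$ (Binomial) are handled by Cauchy--Schwarz. This keeps $\ctgopt(s) = \costbound$ exactly at every state and needs no case analysis. You instead route every episode through a chain that visits all $|S|$ states, so each per-state bandit is pulled exactly $K$ times (which removes the Binomial/Cauchy--Schwarz step) at the price of diluting the needle mean to $p = \costbound/(2|S|)$, and you then split into a deterministic-needle regime for small $K$ and a Bernoulli-gap regime for large $K$. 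Both regimes are sound and they do overlap as you claim, so the argument goes through; two remarks, though. First, the case split is actually unnecessary: the bound $\KL{\mathrm{Ber}(p+\Delta)}{\mathrm{Ber}(p)} \le \Delta^2/(p(1-p))$ needs only $p+\Delta < 1$, not $\Delta \le p$, and under $K \ge \costbound|S||A|$ your $\Delta$ is at most $1/(8\sqrt2)$, so the stochastic chain alone covers the whole range (your small-$K$ instance, with $\ctgopt \equiv 0$, satisfies the literal statement but is a weaker kind of hard instance than one with optimal cost genuinely of order $\costbound$). Second, your constant bookkeeping in the large-$K$ case is slightly off: the KL/Pinsker averaging gives that the needle is pulled at most $K/|A| + K/4 \le 3K/4$ times in expectation (not $K/2$), hence per-state regret $\Delta K/4$ and a total of $\frac{1}{16\sqrt2}\sqrt{\costbound|S||A|K}$, which still clears the required $\frac{1}{32}$ factor. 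The cross-state independence step you invoke (conditioning on the needles at other states so that their observations carry no information about the needle at $s_i$) is standard but does need to be written out, as does the claim that any learner pays at least the cost of the try-new-arms strategy in the deterministic regime.
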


\section{A black-box reduction from SSP to finite-horizon}
\label{sec:reduction}

Our algorithm takes as input an algorithm $\calA$ for regret minimization in finite-horizon MDPs, and uses it to perform a black-box reduction. The algorithm is depicted below as \cref{alg:ssp-reduction}.

The algorithm breaks the individual time steps that comprise each of the $K$ episodes into \emph{intervals} of $H$ time steps.
If the agent reaches the goal state before $H$ time steps, we simply assume that she stays in $\ssink$ until $H$ time steps are elapsed.
We see each interval as one episode of a finite-horizon model $\wh \calM = (\wh S,A,\wh P,H,\hat c, \hat c_f)$, where $\wh S = S \cup \{ \ssink \}$ and $\hat c_f: \wh S \rightarrow \bbR$ is a set of terminal costs defined by $\hat c_f(s) = 8 \costbound \indevent{s \ne \ssink}$, where $\indevent{s \ne \ssink}$ is the indicator function that equals 1 if $s \ne \ssink$ and 0 otherwise. Moreover, $\wh P,\hat c$ are the natural extensions of $P,c$ to the goal state. That is, $\hat c(s,a) = c(s,a) \indevent{s \ne \ssink}$ and
\[
    \wh P(s' \mid s,a)
    =
    \begin{cases}
        P(s' \mid s,a), \quad & s \ne \ssink;
        \\
        1, & s = \ssink, s'=\ssink;
        \\
        0, & s = \ssink, s' \ne \ssink.
    \end{cases}
\]
The horizon $H$ (which we will set to be roughly $\timebound$) is chosen such that the optimal SSP policy will reach the goal state in $H$ time steps with high probability (recall that the expected hitting time of the optimal policy is bounded by $\timebound$). % from any state, the actual time it takes to reach the goal will be $\wt O(\timebound)$ with high probability.
% This follows by a simple iterative application of Markov inequality, since the probability that the optimal policy takes $2 \timebound$ steps without reaching the goal is at most $1/2$.
% Therefore, we reduce the SSP problem to a finite-horizon problem with a planning horizon of $H = 8 \timebound \log (8 K)$.
The additional terminal cost is there to encourage the agent to reach the goal state within $H$ steps, which otherwise is not necessarily optimal with respect to the planning horizon.%, we give an 

\begin{algorithm}[H]
    \caption{\sc Reduction from SSP to finite-horizon MDP}
    \label{alg:ssp-reduction}
    \begin{algorithmic}[1] 
        
        \STATE {\bfseries input:} state sapce $S$, action space $A$, initial state $\sinit$, goal state $\ssink$, confidence parameter $\delta$, number of episodes $K$, bound on the expected cost of the optimal policy $\costbound$, bound on the expected time of the optimal policy $\timebound$ and algorithm $\calA$ for regret minimization in finite-horizon MDPs.
        
        \STATE {\bfseries initialize} $\calA$ with state space $\wh S = S \cup \{ \ssink \}$, action space $A$, horizon $H = 8 \timebound \log (8K)$, confidence parameter $\delta/4$, terminal costs $\hat c_f(s) = 8 \costbound \indevent{s \ne \ssink}$ and bound on the expected cost of the optimal policy $9 \costbound$.
        
        \STATE {\bfseries initialize} intervals counter $m \gets 0$ and time steps counter $t \gets 1$.
         
        \FOR{$k=1,\dots,K$}
            
            \STATE set $s_t \gets \sinit$.
            
            \WHILE{$s_t \neq \ssink$}
            
                \STATE set $m \gets m+1$, feed initial state $s_t$ to $\calA$ and obtain policy $\pi^m = \{ \pi^m_h: \wh S \to A \}_{h=1}^H$.
        
                \FOR{$h=1,\dots,H$}
        
                    \STATE play action $a_t = \pi^m_h(s_t)$, suffer cost $C_t \sim c(s_t,a_t)$, and set $s^m_h=s_t,a^m_h=a_t,C^m_h=C_t$.
            
                    \STATE observe next state $s_{t+1} \sim P(\cdot \mid s_t,a_t)$ and set $t \gets t+1$.
            
                    \IF{$s_t = \ssink$}
            
                        \STATE pad trajectory to be of length $H$ and BREAK.
            
                    \ENDIF
        
                \ENDFOR
                
                \STATE set $s^m_{H+1} = s_t$.
                
                \STATE feed trajectory $\traj{m} = (s^m_1,a^m_1,\dots,s^m_H,a^m_H,s^m_{H+1})$ and costs $\{ C^m_h \}_{h=1}^H$ to $\calA$.
        
            \ENDWHILE
        \ENDFOR
    \end{algorithmic}
\end{algorithm}

The algorithm $\calA$ is initialized with the state and action spaces as in the original SSP instance, the horizon length $H$, a confidence parameter $\delta / 4$, a set of terminal costs $\hat c_f$ and a bound on the expected cost of the optimal policy in the finite-horizon model $9 \costbound$.
At the beginning of each interval, it takes as input an initial state and outputs a policy to be used throughout the interval.
In the end of the interval it receives the trajectory and costs observed through the interval.

Note that while \cref{alg:ssp-reduction} may run any finite-horizon regret minimization algorithm, in the analysis we require that $\calA$ possesses some properties (that most optimistic algorithms already have) in order to establish our regret bound.
We specifically require $\calA$ to be an \emph{admissible} algorithm---a model-based optimistic algorithm for regret minimization in finite-horizon MDPs, e.g., \verb|UCBVI| \citep{azar2017minimax} and \verb|EULER| \citep{zanette2019tighter}. Admissible algorithms are defined formally as follows.

\begin{definition}
    \label{def:admissible-alg}
    A model-based algorithm $\calA$ for regret minimization in finite-horizon MDPs is called \emph{admissible} if, when running $\calA$ with confidence parameter $\delta$, there is a good event that holds with probability at least $1 - \delta$, under which the following hold: 
    \begin{enumerate}[label=(\roman*), nosep]
        \item $\calA$ provides anytime regret guarantees without prior knowledge of the number of episodes, and when the initial state of each episode is arbitrary.
        The regret bound that $\calA$ guarantees for $M$ episodes is denoted by $\wh \calR_{\calA} (M)$, for some non-decreasing function $\wh \calR_{\calA}$.
        
        \item The policy $\pi^m$ that $\calA$ picks in episode $m$ is greedy with respect to an estimate of the optimal policy's $Q$-function.
        
        \item The algorithm's estimate $\underline{J}^m$ of $\hatctg{\star}$ (the cost-to-go function associated with the optimal finite-horizon policy) is optimistic, i.e., $\underline{J}_h^m(s) \le \hatctg{\star}_h(s)$ for every $s \in S$ and $h=1,\dots,H+1$.
        
        \item $\calA$ computes $\underline{J}^m$ using estimates $\tilde c^m,\wt P^m$ of the cost function $\hat c$ and the transition function $\wh P$, respectively.
        There exists $\knownthresh$ which is a function of $H,|S|,|A|$ such that: if state-action pair $(s,a)$ was visited at least $\knownthresh \log \frac{M H |S| |A|}{\delta}$ times, then  $| \tilde c^m_h(s,a) - \hat c(s,a) | \le \costbound/H$ and $\lVert \wt P^m(\cdot \mid s,a) - \wh P(\cdot \mid s,a) \rVert_1 \le 1/(9H)$.
    \end{enumerate}
\end{definition}

% To that end, we need the finite-horizon algorithm to possess some additional properties.
% We call algorithms that satisfy these properties \emph{admissible} algorithms, and these are generally optimistic algorithms, e.g., UCBVI \citep{azar2017minimax} and EULER \citep{zanette2019tighter}.
%
Using an admissible algorithm in \cref{alg:ssp-reduction} enables us to bound the total number of intervals, thus ensuring that the agent reaches the goal state in almost every interval. This is because, as $\calA$ is optimistic, it will try to avoid the terminal cost (which is suffered in all states except for $\ssink$) by reaching the goal state. In addition, $\calA$ will succeed in doing so once it has a good enough estimation of the transition function.
% This allows us to prove the following guarantee (proof in \cref{sec:bounding-number-of-intervals}).
%
Armed with the notion of admissibility, in the sequel we prove the following regret bound for any admissible algorithm $\calA$.
The proof of \cref{thm:optimal-regret-bound} is now given by combining \cref{thm:regret-bound-with-admissible-algorithm} with the regret bound of \verb|ULCVI| in \cref{thm:informal-ulcvi-guarantees}.

\begin{theorem}
    \label{thm:regret-bound-with-admissible-algorithm}
    Let $\calA$ be an admissible algorithm for regret minimization in finite-horizon MDPs and denote its regret in $M$ episodes by $\wh \calR_\calA (M)$.
    Then, running \cref{alg:ssp-reduction} with $\calA$ ensures that, with probability at least $1 - \delta$, 
    \begin{align*}
        \regret 
        & \le
        \wh \calR_\calA \brk*{4 K + 4\cdot 10^4 |S| |A| \knownthresh \log \frac{K \timebound |S| |A| \knownthresh}{\delta}}
        \\
        & \qquad + 
        O \brk*{ \sqrt{ (\costbound^2 + \costbound) K \log \frac{K \timebound |S| |A| \knownthresh}{\delta}} + \timebound \knownthresh |S| |A| \log^2 \frac{K \timebound |S| |A| \knownthresh}{\delta}},
    \end{align*}
    where $\knownthresh$ is a quantity that depends on the algorithm $\calA$ and on $|S|,|A|,H$.
\end{theorem}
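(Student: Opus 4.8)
The plan is to account for the realized cost one interval at a time: first charge it against the finite-horizon regret of $\calA$, then show that the terminal costs $\hat c_f$ overpay for the extra cost incurred in intervals that fail to reach $\ssink$, and finally establish an a priori bound on the number of intervals $\numintervals$ so that $\wh\calR_\calA(\numintervals)$ can be instantiated. Everything below is conditioned on the good event of $\calA$ (which, since $\calA$ is run with confidence parameter $\delta/4$, holds with probability at least $1-\delta/4$) together with a few Freedman/Bernstein-type concentration events, so that the final bound holds with probability at least $1-\delta$.

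Since padding a trajectory after reaching $\ssink$ incurs no cost, the realized SSP cost equals $\sum_{m=1}^{\numintervals}\sum_{h=1}^{H} C^m_h$. I would write this as $\sum_m\bigl(\sum_h C^m_h+\hat c_f(s^m_{H+1})\bigr)-\sum_m\hat c_f(s^m_{H+1})$ and invoke the anytime regret guarantee of $\calA$ together with optimism (properties (i)--(iii) of \cref{def:admissible-alg}); replacing realized costs by their conditional means at this point is where the $\sqrt{(\costbound^2+\costbound)K}$-type term enters. This gives $\sum_m\bigl(\sum_h C^m_h+\hat c_f(s^m_{H+1})\bigr)\le\sum_m\hatctg{\star}_1(s^m_1)+\wh\calR_\calA(\numintervals)+(\text{lower order})$. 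Because $H=8\timebound\log(8K)$, a Markov/restart tail bound shows that from \emph{any} state the optimal SSP policy reaches $\ssink$ by step $H$ except with probability at most $1/(8K)$; comparing $\hatctg{\star}_1(s)$ to the policy that runs $\piopt$ for $H$ steps then gives $\hatctg{\star}_1(s)\le\ctgopt(s)+8\costbound/(8K)\le 2\costbound$ for every $s$, and $\hatctg{\star}_1(\sinit)\le\ctgopt(\sinit)+\costbound/K$. The $K$ intervals that start an episode have $s^m_1=\sinit$, and the remaining $\numintervals-K$ intervals are exactly those that fail to reach $\ssink$, so $\sum_m\hatctg{\star}_1(s^m_1)\le K\ctgopt(\sinit)+\costbound+2\costbound(\numintervals-K)$. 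Each failed interval contributes exactly $8\costbound$ to $\sum_m\hat c_f(s^m_{H+1})$, i.e.\ $\sum_m\hat c_f(s^m_{H+1})=8\costbound(\numintervals-K)$, so plugging into $\regret=\sum_m\sum_h C^m_h-K\ctgopt(\sinit)$ the term $2\costbound(\numintervals-K)-8\costbound(\numintervals-K)\le 0$ cancels the positive contribution of the failed intervals and leaves $\regret\le\wh\calR_\calA(\numintervals)+(\text{lower order})$.

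It remains to bound $\numintervals$ (it enters only through the non-decreasing $\wh\calR_\calA$) and to verify the lower-order terms match the theorem. Write $\numintervals=K+\#\{\text{failed intervals}\}$ and call a failed interval \emph{exploring} if its trajectory visits a state-action pair seen so far fewer than $\knownthresh\log\frac{\numintervals H|S||A|}{\delta}$ times. By pigeonhole over visit counts (property (iv)) there are at most $|S||A|\knownthresh\log\frac{\numintervals H|S||A|}{\delta}$ exploring intervals, and each costs at most $H$ --- this produces the $\timebound\knownthresh|S||A|\log^2(\cdot)$ additive term. For a non-exploring interval starting at $s^m_1$, optimism gives $\underline J^m_1(s^m_1)\le\hatctg{\star}_1(s^m_1)\le 2\costbound$; since on the visited (``known'') pairs $|\tilde c^m_h-\hat c|\le\costbound/H$, $\lVert\wt P^m(\cdot\mid\cdot,\cdot)-\wh P(\cdot\mid\cdot,\cdot)\rVert_1\le 1/(9H)$ and $\hatctg{\star}\le 9\costbound$, a simulation-lemma computation bounds the true expected cost of $\pi^m$ over the interval by $\underline J^m_1(s^m_1)+3\costbound\le 5\costbound$, whence --- as any failing trajectory pays the $8\costbound$ terminal cost --- such an interval fails, conditionally on the past, with probability at most $5/8$. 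Since the number of non-exploring successful intervals is at most $K$, a standard concentration bound for the number of failures before $K$ successes yields at most $2K+O(\log\tfrac1\delta)$ non-exploring failed intervals, so $\numintervals\le 4K+O\bigl(|S||A|\knownthresh\log\frac{\numintervals H|S||A|}{\delta}\bigr)$; solving this implicit inequality (and recalling $H=8\timebound\log(8K)$) gives $\numintervals\le 4K+4\cdot10^4|S||A|\knownthresh\log\frac{K\timebound|S||A|\knownthresh}{\delta}$, and substituting into $\wh\calR_\calA(\numintervals)$ together with the above completes the proof.

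The main obstacle is the claim that a non-exploring interval succeeds with constant probability: the model estimates are controlled only on the pairs the trajectory actually visits, so one cannot directly relate the optimistic value $\underline J^m$ (computed from the estimated model) to the true behavior of $\pi^m$. The fix is an $E^3$/R-MAX-style argument --- compare $\pi^m$ on the true model with $\pi^m$ on a surrogate model that agrees with the estimates on the pairs that are already ``known'' at the start of the interval and is absorbing elsewhere; on the surrogate the trajectory cannot escape, the accurate-model simulation bound applies, and the small optimistic value $\le 2\costbound$ forces the failure probability below the threshold set by the $8\costbound$ terminal cost, while the discrepancy between the two models is charged to the rare escape events, which are precisely the exploring intervals. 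One must also check that the exploration bonuses $\calA$ subtracts are $O(\costbound/H)$ on known pairs. A secondary nuisance is the appearance of $\numintervals$ inside the logarithms, handled by first deriving a crude polynomial bound on $\numintervals$.
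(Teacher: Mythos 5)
Your overall architecture coincides with the paper's: the same three-way decomposition into (a) the expected finite-horizon regret of $\calA$ over the $\numintervals$ intervals, (b) the deviation of the realized interval costs from $\hatctg{\pi^m}_1(s^m_1)$, and (c) a high-probability bound $\numintervals \le 4K + O(|S||A|\knownthresh\log(\cdot))$. Your global cancellation of $8\costbound(\numintervals-K)$ against $2\costbound(\numintervals-K)$ is a summed form of the paper's per-episode telescoping in \cref{lem:regret-to-finite-horizon-regret}, and your surrogate absorbing MDP over known pairs combined with Markov's inequality at the $8\costbound$ terminal-cost threshold is exactly the first claim of \cref{lem:reach-goal-or-unknown-wp-1/2}, feeding the interval count as in \cref{lem:bound-on-number-of-intervals}.

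The genuine gap is in item (b), which is precisely where the theorem's leading term comes from. You assert that ``replacing realized costs by their conditional means'' yields the $\sqrt{(\costbound^2+\costbound)K}$ term, but the only quantitative control you establish for a non-exploring interval is a bound on the conditional \emph{mean} of its cost ($\le 5\costbound$, via the simulation lemma at $h=1$). Freedman's inequality requires the conditional \emph{second moment} of the truncated per-interval cost; with only a mean of $O(\costbound)$ and a worst-case range of $H+8\costbound$, the best variance proxy is $O(H\costbound)$ per interval, and the martingale term becomes $\tO(\sqrt{H\costbound K}) = \tO(\sqrt{\timebound\costbound K})$, strictly weaker than the claimed $\sqrt{(\costbound^2+\costbound)K}$ whenever $\costbound \ll \timebound$. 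The missing ingredient is the second claim of \cref{lem:reach-goal-or-unknown-wp-1/2}: $\bbE[(C^m)^2 \mid \trajconcat{m}] = O(\costbound^2+\costbound)$ for the cost truncated at the first unknown pair or the goal. Establishing it requires (i) extending your $O(\costbound)$ value bound from $h=1$ to every $(s,h)$ with $(s,\pi^m_h(s))$ known in the contracted MDP, and (ii) a block decomposition of the trajectory into segments of accumulated cost $\Theta(\costbound)$, using the Bellman equations and optional stopping to show the expected number of blocks is $O(1)$, plus a separate $O(\costbound)$ contribution from the variance of the realized costs around $\hat c$ (the source of the ``$+\costbound$''). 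Without step (ii) your plan does not reach the stated leading term; the remainder of your argument (interval counting, the charge of $H$ per exploring interval, the crude bound on $\numintervals$ inside logarithms) matches the paper's proof up to constants.
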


\begin{remark}[Computational complexity]
    \label{remark:complexity-reduction}
    Our reduction directly inherits the computational complexity of the finite-horizon algorithm $\calA$ in $\numintervals$ episodes, where $\numintervals \approx K + poly(|S|,|A|,\timebound)$ by \cref{lem:bound-on-number-of-intervals}.
    The computational complexity of \verb|ULCVI| is $O(H |S|^3 |A|^2 \log (MH))$, and therefore our optimal regret for SSP is achieved in total computational complexity of $O \bigl( \timebound |S|^3 |A|^2 \log^2 \frac{K \timebound |S| |A|}{\delta} \bigr)$ which is only logarithmic in the number of episodes.
\end{remark}

\subsection{Unknown expected optimal cost \texorpdfstring{$\costbound$}{}}

Inspired by techniques for estimation of the SSP-diameter in the adversarial SSP literature \citep{rosenberg2020stochastic,chen2021finding}, in \cref{sec:unknown-B-appendix} we show that our reduction does not need to know $\costbound$ in advance, but can instead estimate it on the fly.

We can obtain a reasonable estimate (up to a constant multiplicative factor) of the cost-to-go from state $s$ by running the \verb|Bernstein-SSP| algorithm of \citet{rosenberg2020near} for regret minimization in SSPs (that does not need to know $\costbound$) with initial state $s$ for roughly $\timebound^2 |S|^2 |A|$ episodes.
Thus, we can apply our reduction while utilizing our first visits to each state in order to estimate its cost-to-go.

We operate in \emph{phases} where each phase ends when some state is visited at least $\timebound^2 |S|^2 |A|$ times, and all states that were not visited enough are treated as the goal state.
Once we reach a poorly visited state, we simply run an episode of the corresponding \verb|Bernstein-SSP| algorithm.
Notice that this comes at a computational cost that is independent of the number of episodes $K$ (since we use \verb|Bernstein-SSP| for a small number of episodes), and in \cref{sec:unknown-B-appendix} we show that it achieves similar regret bounds with only an additional additive factor of $\wt O(\timebound^3 |S|^3 |A|)$.

\section{Regret analysis}
\label{sec:regret-analysis}

In this section we prove \cref{thm:regret-bound-with-admissible-algorithm}.
% The proof is done by employing the regret guarantee of our finite-horizon algorithm $\cal A$ to obtain a bound on the SSP regret. However, this bound grows with the number of intervals---i.e., the number of finite-horizon episodes $\numintervals$. 
% We therefore would like to show that $\numintervals$ is not much larger than the number of SSP episodes $K$ as $\calA$ is such that it will quickly generate policies that would reach the goal state at (almost every) interval.
Below we give a high-level overview of the proofs and defer the details to \cref{sec:reduction-proofs}.
We start the analysis with a regret decomposition that states that the SSP regret can be bounded by the sum of two terms: the expected regret of the finite-horizon algorithm, and the deviation of the actual cost in each interval from its expected value. 
To that end, we use the notations: $M$ for the total number of intervals, $\traj{m} = (s_1^m,a_1^m,\dots,s_h^m,a_h^m,s_{H+1}^m)$ for the trajectory visited in interval $m$, $C^m_h$ for the cost suffered in step $h$ of interval $m$, $\pi^m$ for the policy chosen by $\calA$ for interval $m$, and $\hatctg{\pi}_h(s)$ for the expected finite-horizon cost when playing policy $\pi$ starting from state $s$ in time step $h$.

\begin{restatable}{lemma}{regrettofinitehorizonregret}
    \label{lem:regret-to-finite-horizon-regret}
    % Define $\hatctg{\pi}$ to be the cost-to-go function of a policy $\pi$ in the finite-horizon MDP $\wh \calM$.
    For $H = 8 \timebound \log (8K)$, we have the following bound on the regret of \cref{alg:ssp-reduction}:
    \begin{align}
        \label{eq:regret-decomposition}
        \regret 
        \le 
        % \sum_{m=1}^\numintervals \brk4{\hatctg{\pi^m}_1 (s^m_1) - \hatctg{\piopt}_1 (s_1^m)} 
        \wh \calR_\calA(\numintervals)
        + 
        \sum_{m=1}^\numintervals \brk*{\sum_{h=1}^H C^m_h + \hat c_f(s_{H+1}^m) - \hatctg{\pi^m}_1 (s^m_1)} + \costbound.
    \end{align}
\end{restatable}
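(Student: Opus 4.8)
The plan is to relate the SSP regret directly to the finite-horizon regret of $\calA$ measured over the $M$ intervals, plus a purely ``noise'' term coming from replacing the realized costs by their conditional expectations.

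\emph{Step 1: rewrite the SSP regret in terms of intervals.} The total realized cost paid by the agent over the $K$ episodes is exactly $\sum_{m=1}^{\numintervals}\sum_{h=1}^H C^m_h$: every time step of every episode lies in exactly one interval, and the padding steps (after reaching $\ssink$) contribute zero cost since $\hat c(\ssink,a)=0$. We add and subtract the terminal costs: since $\hat c_f(s)=8\costbound\indevent{s\ne\ssink}$ is nonnegative, $\sum_m\hat c_f(s^m_{H+1})\ge 0$, so
\[
    \sum_{k=1}^K\sum_{i=1}^{\Ik} C^k_i
    \le
    \sum_{m=1}^{\numintervals}\brk*{\sum_{h=1}^H C^m_h + \hat c_f(s^m_{H+1})}.
\]
(Here we are implicitly on the event that $\numintervals<\infty$, i.e.\ every episode terminates; if $\numintervals=\infty$ the claim is vacuous since $\regret=\infty$ is excluded, and in fact the admissibility of $\calA$ together with \cref{lem:bound-on-number-of-intervals} will later rule this out.) Next I subtract $K\cdot\ctgopt(\sinit)$ from both sides to form $\regret$ on the left.

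\emph{Step 2: compare $K\cdot\ctgopt(\sinit)$ to the finite-horizon optimum.} The key quantitative point is that, for $H=8\timebound\log(8K)$, the optimal SSP policy $\piopt$, viewed as a (stationary) finite-horizon policy in $\wh\calM$, has finite-horizon cost-to-go from $\sinit$ close to $\ctgopt(\sinit)$. Running $\piopt$ for $H$ steps, the expected accumulated cost is at most $\ctgopt(\sinit)\le\costbound$, and the expected terminal cost is $8\costbound\cdot\Pr_{\piopt}[\,s_{H+1}\ne\ssink\,]$; by Markov's inequality on the hitting time (whose expectation is at most $\timebound$), this probability is at most $\timebound/H\le 1/(8\log(8K))$, so the terminal contribution is at most $\costbound/\log(8K)\le\costbound$. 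Hence $\hatctg{\star}_1(\sinit)\le\hatctg{\piopt}_1(\sinit)\le 2\costbound\le 9\costbound$, which also justifies the bound ``$9\costbound$'' fed to $\calA$; more to the point, $K\cdot\ctgopt(\sinit)\ge K\cdot\hatctg{\piopt}_1(\sinit)-\costbound\ge K\cdot\hatctg{\star}_1(\sinit)-\costbound$. (I need to check the constants so the slack is exactly the additive $\costbound$ in the statement; the $\log(8K)\ge 1$ bound handles it.)

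\emph{Step 3: insert the finite-horizon regret of $\calA$.} Combining Steps 1 and 2,
\[
    \regret
    \le
    \sum_{m=1}^{\numintervals}\brk*{\sum_{h=1}^H C^m_h + \hat c_f(s^m_{H+1})}
    - \numintervals\cdot\hatctg{\star}_1(\sinit_m)\;+\;(\numintervals-K)\,\hatctg{\star}_1(\cdot)\;+\;\costbound,
\]
where I am being schematic about the fact that the initial states of the intervals differ; the clean way is to note that, by definition of $\wh\calR_\calA$ as an anytime regret bound with arbitrary initial states, $\sum_{m=1}^{\numintervals}\brk1{\sum_h C^m_h+\hat c_f(s^m_{H+1})-\hatctg{\pi^m}_1(s^m_1)}\le\wh\calR_\calA(\numintervals)$ is \emph{not} quite what I want --- rather, I want to keep the $-\hatctg{\pi^m}_1(s^m_1)$ term on the right-hand side and bound only $\sum_m\brk1{\hatctg{\pi^m}_1(s^m_1)-\hatctg{\star}_1(s^m_1)}$ by $\wh\calR_\calA(\numintervals)$ via property (iii) of admissibility (optimism gives $\underline J^m_1(s^m_1)\le\hatctg{\star}_1(s^m_1)$ and the realized-minus-optimistic sum is exactly the pseudo-regret controlled by $\wh\calR_\calA$). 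The remaining piece $K\cdot\ctgopt(\sinit)\ge\sum_{k}\hatctg{\star}_1(\sinit)$ must be matched against the first interval of each episode, and the ``extra'' intervals $m$ that are not first-in-episode contribute $-\hatctg{\star}_1(s^m_1)\le 0$, so they only help. This yields
\[
    \regret
    \le
    \wh\calR_\calA(\numintervals)
    +\sum_{m=1}^{\numintervals}\brk*{\sum_{h=1}^H C^m_h+\hat c_f(s^m_{H+1})-\hatctg{\pi^m}_1(s^m_1)}
    +\costbound,
\]
which is exactly \cref{eq:regret-decomposition}.

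\emph{Main obstacle.} The delicate point is Step 3: making rigorous the bookkeeping that turns ``$K$ copies of $\ctgopt(\sinit)$'' plus ``$\numintervals$ finite-horizon episodes with various initial states'' into the single clean term $\sum_m(\sum_h C^m_h+\hat c_f-\hatctg{\pi^m}_1(s^m_1))+\wh\calR_\calA(\numintervals)$. One must carefully use (a) that every episode's first interval starts at $\sinit$, (b) that $\hatctg{\star}_1\ge 0$ so non-first intervals are harmless, and (c) that $\hatctg{\star}_1(\sinit)\le\ctgopt(\sinit)+\costbound/K$-type slack from Step 2, so that the per-episode loss from replacing $\ctgopt$ with the finite-horizon optimum telescopes into the single additive $\costbound$. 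I do not expect to actually need to bound $\numintervals$ here --- that is deferred to \cref{lem:bound-on-number-of-intervals}; here $\wh\calR_\calA(\numintervals)$ can stand with $\numintervals$ still random, since $\wh\calR_\calA$ is non-decreasing and the admissibility good event is anytime.
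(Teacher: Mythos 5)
There are two genuine gaps, both at the points you yourself flagged as delicate. First, Step 2's use of plain Markov's inequality is too weak. Markov only gives $\Pr_{\piopt}[s_{H+1}\ne\ssink]\le \timebound/H = 1/(8\log(8K))$, so the per-episode slack between $\hatctg{\piopt}_1(\sinit)$ and $\ctgopt(\sinit)$ is of order $\costbound/\log(8K)$; summed over the $K$ episodes this is $\Theta(K\costbound/\log K)$, not the single additive $\costbound$ claimed in the lemma. This is not a matter of constants: the paper's \cref{lem:opt-ctg-to-finite-horizon-value} relies on the amplified bound of \citet[Lemma 7]{chen2020minimax}, which restarts Markov's inequality over $\Theta(\log(8K))$ consecutive blocks of length $\Theta(\timebound)$ to get $\Pr_{\piopt}[s_{H+1}\ne\ssink]\le 1/(8K)$, hence slack $\costbound/K$ per episode and $\costbound$ in total. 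Your write-up needs this geometric-decay argument, not one application of Markov.

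Second, the bookkeeping in Step 3 is broken as written. After your Step 1 (which keeps all terminal costs on the right merely because they are nonnegative) and the insertion of $\wh\calR_\calA(\numintervals)$ via $\sum_m\bigl(\hatctg{\pi^m}_1(s^m_1)-\hatctg{\star}_1(s^m_1)\bigr)\le\wh\calR_\calA(\numintervals)$, what remains to show is $\sum_{m=1}^{\numintervals}\hatctg{\star}_1(s^m_1)-\sum_{m=1}^{\numintervals}\hat c_f(s^m_{H+1})\le K\ctgopt(\sinit)+\costbound$. In this expression the non-first intervals contribute $+\hatctg{\star}_1(s^m_1)\ge 0$, not $-\hatctg{\star}_1(s^m_1)$ as you claim — the sign is backwards, and without cancellation this deficit is of order $\costbound(\numintervals-K)$, which the single $+\costbound$ cannot absorb. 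The missing idea is exactly the paper's mechanism: whenever an interval fails to reach $\ssink$, the $8\costbound$ terminal cost it incurs is what pays for the comparator $\hatctg{\piopt}_1(s^{m+1}_1)\le 8\costbound$ of the \emph{next} interval (the paper implements this by iteratively peeling intervals off each episode and bounding $-8\costbound\le-\hatctg{\piopt}_1(s^m_{H+1})$). Your argument can be repaired along your own lines by pairing each non-first interval's $\hatctg{\star}_1(s^m_1)$ with the preceding interval's terminal cost and matching only the $K$ first intervals against $K\ctgopt(\sinit)$ (using the corrected $\costbound/K$ slack from the first point), but as stated both the probability bound and the sign/cancellation step fail.
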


The bound in \cref{eq:regret-decomposition} is comprised of two summands and an additional constant.
The first summand is an upper bound on the expected finite-horizon regret which we acquire by the admissibility of $\calA$ (\cref{def:admissible-alg}). Note that this bound is in terms of the number of intervals $\numintervals$ (i.e., the number of finite-horizon episodes) which is a random variable and not necessarily bounded.
In what follows we show that, using the admissibility of $\calA$, we can actually bound $M$ by the number of SSP episodes $K$ plus a constant that depends on $\knownthresh, |S|,|A|,\timebound$ (but not on $K$). 
The second summand in \cref{eq:regret-decomposition} relates to the deviation of the total finite-horizon cost from its expected value.

The proof of \cref{lem:regret-to-finite-horizon-regret} builds on two key ideas.
The first is that, by setting $H$ to be $O(\timebound \log K)$, we ensure that the expected cost of the optimal policy in the SSP model $\calM$ is close to that in the finite-horizon model $\wh \calM$.
The second idea is that if the agent does not reach the goal state in a certain interval, then she must suffer the terminal cost in the finite-horizon model. Therefore, although in a single episode there may be many intervals in which the agent does not reach the goal state, we can upper bound the cost in these extra intervals in $\calM$ by the corresponding terminal costs in $\wh \calM$.

% As mentioned, plugging in the regret guarantee of $\calA$ in \cref{eq:regret-decomposition} will obtain a regret bound for SSP. However, this bound will be in terms of the number of intervals $\numintervals$, instead of the number of episodes $K$. We therefore would like to bound $\numintervals$ in terms of $K$ by showing that the policy $\calA$

Next, we bound the deviation of the actual cost in each interval from its expected value which appears as the second summand in \cref{eq:regret-decomposition}. The bound is due to the following lemma.

\begin{lemma}
    \label{lem:cost-deviation-from-value-function}
    Assume that the reduction is performed using an admissible algorithm $\calA$.
    Then, the following holds with probability at least $1 - \nicefrac{3 \delta}{8}$,
    \[
        \sum_{m=1}^\numintervals \brk*{\sum_{h=1}^H C^m_h + \hat c_f(s_{H+1}^m) - \hatctg{\pi^m}_1(s_1^m)}
        = 
        O \brk*{ \sqrt{ (\costbound^2 + \costbound) \numintervals \log \frac{M}{\delta}} + H \knownthresh |S| |A| \log \frac{M K \timebound |S| |A|}{\delta}}.
    \]
\end{lemma}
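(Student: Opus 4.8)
The plan is to control the sum $\sum_{m=1}^{\numintervals}\br*{\sum_{h=1}^H C^m_h + \hat c_f(s^m_{H+1}) - \hatctg{\pi^m}_1(s^m_1)}$ by splitting it into a martingale part and a ``per-interval expected cost'' part, and then bounding each separately. Write $Y_m = \sum_{h=1}^H C^m_h + \hat c_f(s^m_{H+1})$ for the realized total (including terminal) cost of interval $m$, and note that $\bbE[Y_m \mid \mathcal F_{m-1}] = \hatctg{\pi^m}_1(s^m_1)$ by definition of the finite-horizon cost-to-go (here $\mathcal F_{m-1}$ is the filtration up to the start of interval $m$, so $s^m_1$ and $\pi^m$ are $\mathcal F_{m-1}$-measurable). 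Thus the quantity we must bound is exactly a sum of martingale differences $\sum_{m=1}^M (Y_m - \bbE[Y_m\mid\mathcal F_{m-1}])$. The obstacle is that $M$ is random and each $Y_m$ can be as large as $H + 8\costbound$, so a naive Azuma bound gives $\sqrt{M H^2}$, which is far too weak; the point of the lemma is to get the variance-aware rate $\sqrt{(\costbound^2+\costbound)M}$ plus a low-order $H$-term.

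The key idea is to handle separately the ``typical'' intervals, in which the agent reaches the goal within $H$ steps and incurs small cost, from the ``atypical'' intervals. First I would use a Freedman/Bernstein-type concentration inequality for martingales: $\sum_m (Y_m - \bbE[Y_m\mid \mathcal F_{m-1}])$ is bounded, with probability $1-\delta/8$, by $O\br*{\sqrt{\sum_m \bbV[Y_m\mid\mathcal F_{m-1}]\,\log(M/\delta)} + H\log(M/\delta)}$, uniformly over all $M$ (a union bound over dyadic ranges of $M$ gives the ``anytime'' version, at the cost of the extra $\log$). It then remains to bound the predictable quadratic variation $\sum_{m=1}^M \bbV[Y_m\mid\mathcal F_{m-1}]$. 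For this I would argue that for each interval, $\bbV[Y_m\mid\mathcal F_{m-1}] \le \bbE[(\sum_h C^m_h + \hat c_f(s^m_{H+1}))^2 \mid \mathcal F_{m-1}]$, and use a standard total-variance / law-of-total-variance argument on the finite-horizon MDP under $\pi^m$: the second moment of the trajectory cost is $O(\costbound^2 + \costbound)$ whenever the cost-to-go $\hatctg{\pi^m}_1(s^m_1)$ is itself $O(\costbound)$ — intuitively, $\bbE[(\text{cost})^2] \lesssim \costbound\cdot\bbE[\text{cost}] + (\bbE[\text{cost}])^2$ for nonnegative per-step costs bounded by $1$, via the identity $\bbE[X^2] = 2\int_0^\infty t\,\Pr[X>t]\,dt$ and a hitting-time tail bound from $H=\Theta(\timebound\log K)$. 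So for the intervals where optimism has ``kicked in'' the variance is small, and summing over all $M$ such intervals gives $\sum_m \bbV[Y_m\mid\mathcal F_{m-1}] = O((\costbound^2+\costbound)M)$, yielding the first term in the bound.

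The main obstacle — and the reason the second, $H\knownthresh|S||A|\operatorname{polylog}$ term appears — is the small number of intervals in which optimism has not yet kicked in, i.e. intervals visiting state-action pairs seen fewer than $\knownthresh\log\frac{MH|S||A|}{\delta}$ times. In those intervals $\hatctg{\pi^m}_1(s^m_1)$ need not be $O(\costbound)$ and the cost (hence the variance of $Y_m$) can be of order $H^2$. The plan is to bound the \emph{number} of such ``bad'' intervals: each bad interval contains at least one visit to an under-sampled pair, so the total count of bad intervals is at most $O(H|S||A|\knownthresh\log\frac{MH|S||A|}{\delta})$ (each pair contributes at most $\knownthresh\log(\cdot)$ under-sampled visits, and each interval lasts at most $H$ steps). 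For these intervals I bound $Y_m - \bbE[Y_m\mid\mathcal F_{m-1}]$ crudely by $O(H)$ each — actually, even more crudely, $\sum_h C^m_h + \hat c_f(s^m_{H+1}) \le H + 8\costbound = O(H)$ deterministically — so their total contribution to the sum (and a fortiori after the martingale bound, their contribution to the quadratic variation) is $O(H \cdot H|S||A|\knownthresh\log\frac{MH|S||A|}{\delta})$; combined with the $\log$ factors picked up from the anytime union bound and from replacing $M$ by $K$ plus a lower-order term via \cref{lem:bound-on-number-of-intervals}, this matches the stated $H\knownthresh|S||A|\log\frac{MK\timebound|S||A|}{\delta}$ additive term. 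Putting the two pieces together — variance-aware concentration on the good intervals and a trivial count-times-range bound on the $O(H|S||A|\knownthresh\operatorname{polylog})$ bad intervals — and absorbing constants gives the claimed bound, with the total failure probability $\le 3\delta/8$ accounting for the admissibility good event, the martingale concentration event, and the hitting-time tail event.
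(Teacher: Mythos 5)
Your high-level skeleton (Freedman-type concentration with a per-interval variance of order $\costbound^2+\costbound$, plus a crude bound on the few intervals involving under-sampled pairs) matches the paper's, but the two steps that carry the actual content are missing or would fail as stated. The per-interval second-moment bound is asserted, not proved, and the justification offered does not hold: there is no a priori guarantee that $\hatctg{\pi^m}_1(s^m_1)=O(\costbound)$, since $\pi^m$ is the learner's optimistic policy and its true value can be as large as $H$. The $O(\costbound)$ bound holds only for the value of $\pi^m$ in an auxiliary MDP in which unknown state-action pairs are contracted with the goal, and proving it requires optimism together with admissibility property (iv) and a value-difference argument (this is \cref{lem:reach-goal-or-unknown-wp-1/2}). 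Moreover, the inequality $\bbE[X^2]\lesssim \costbound\,\bbE[X]+(\bbE[X])^2$ is false for a general nonnegative trajectory cost with per-step costs in $[0,1]$: a cost equal to $H$ with probability $\costbound/H$ and $0$ otherwise has mean $\costbound$ but second moment $\costbound H$. Ruling this out needs the Markov structure, e.g.\ the paper's block/stopping-time argument, which in turn uses that the contracted value is $O(\costbound)$ from \emph{every} state and time step, not only from $(s^m_1,1)$.

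Second, you apply Freedman to the full interval cost $Y_m$ and then try to exclude the ``bad'' intervals from the quadratic-variation sum, but whether an interval visits an under-sampled pair is not $\mathcal{F}_{m-1}$-measurable, and even for an interval whose realized trajectory stays in known pairs, $\bbV[Y_m\mid\mathcal{F}_{m-1}]$ averages over trajectories that do wander into unknown pairs and then accumulate cost up to $H+8\costbound$; hence the ``good-interval'' variances are not automatically $O(\costbound^2+\costbound)$. The paper avoids this by truncating the cost at $h_m$, the first hit of an unknown pair or the goal: the truncated cost has conditional mean at most $\hatctg{\pi^m}_1(s^m_1)$ and conditional second moment $O(\costbound^2+\costbound)$ in \emph{every} interval, while the residual cost beyond $h_m$ is nonzero in at most $|S||A|\knownthresh\log(\cdot)$ intervals and is bounded deterministically by $H+8\costbound$. (Relatedly, your count of bad intervals carries a spurious extra factor of $H$ --- the correct count is the number of under-sampled visits, at most $|S||A|\knownthresh\log(\cdot)$ --- and with your count the additive term would be $H^2\knownthresh|S||A|$, not the stated $H\knownthresh|S||A|$.) Your route could probably be repaired by adding a separate concentration step for $\sum_m\Pr[\text{hit an unknown pair in interval }m\mid\mathcal{F}_{m-1}]$, but as written these are genuine gaps, and they are exactly where the lemma's difficulty lies.
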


The key observation here relies on the notion of \emph{unknown} state-action pairs -- pairs that were not visited at least $\knownthresh$ times.
After $\knownthresh$ visits to some state-action pair $s,a$, we have a reasonable estimate of the next-state distribution $P(\cdot \mid s,a)$ therefore we can show that the expected accumulated cost in an interval until reaching an unknown state-action pair or the goal state is of order $\costbound$.
Moreover, the second moment of this cost is of order $\costbound^2 + \costbound$.
Thus, using Freedman inequality, we bound the deviation by $\wt O(\sqrt{(\costbound^2 + \costbound) M})$, plus a cost of $O(H)$ for each ``bad'' interval in which we do not reach an unknown state-action pair or the goal state (there are roughly $\knownthresh |S| |A|$ such intervals).

Lastly, we need to bound the number of intervals $\numintervals$ to obtain a regret bound in terms of $K$ and not $\numintervals$ (notice that $\numintervals$ is a random variable that is not bounded a-priori).

\begin{lemma}
    \label{lem:bound-on-number-of-intervals}
    Assume that the reduction is performed using an admissible algorithm $\calA$.
    Then, with probability at least $1 - \nicefrac{3 \delta}{8}$,
    $
        M 
        \le 
        4 K + 4\cdot 10^4 |S| |A| \knownthresh \log (K \timebound |S| |A| \knownthresh/\delta).
    $
\end{lemma}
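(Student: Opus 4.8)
The plan is to classify intervals as ``productive'' or not, bound the number of non-productive intervals via an optimistic-value supermartingale, and convert these counts into a bound on $\numintervals$ using concentration; throughout I work on the good event of $\calA$ (probability $\ge1-\delta/4$) and pay an extra $\delta/8$ for the concentration step, which gives the claimed $1-3\delta/8$. Call a pair $(s,a)$ with $s\ne\ssink$ \emph{unknown} until it has been played $\knownthresh\log\frac{\numintervals H|S||A|}{\delta}$ times across all intervals, call an interval \emph{bad} if it ever plays an unknown pair, and call an interval \emph{productive} if it reaches $\ssink$ or is bad. Each bad interval accounts for at least one of the at most $|S||A|\knownthresh\log\frac{\numintervals H|S||A|}{\delta}$ premature visits to unknown pairs, and at most one interval per episode reaches $\ssink$; hence the number of productive intervals is at most $K+|S||A|\knownthresh\log\frac{\numintervals H|S||A|}{\delta}$.

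The core step is to show that, conditioned on the history $\calF_{<m}$ before interval $m$, the interval is productive with probability at least $\tfrac12$. By admissibility $\pi^m$ is greedy w.r.t.\ an optimistic $\underline{J}^m\le\hatctg{\star}$; since $H=8\timebound\log(8K)$, iterating Markov's inequality on the optimal SSP policy's hitting time in blocks of length $2\timebound$ shows it reaches $\ssink$ within $H$ steps with probability $\ge1-\tfrac1{8K}$, so $\underline{J}^m_1(s^m_1)\le\hatctg{\star}_1(s^m_1)\le\costbound(1+\tfrac1K)\le 2\costbound$ and $\lVert\underline{J}^m\rVert_\infty\le\lVert\hatctg{\star}\rVert_\infty\le 9\costbound$. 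Let $\tau$ be the first step of interval $m$ at which the agent reaches $\ssink$ or plays an unknown pair (capped at $H+1$) and put $V_h=\underline{J}^m_{h\wedge\tau}(s^m_{h\wedge\tau})-\sum_{h'<h\wedge\tau}(b_{h'}+\costbound/H)$, where $b_{h'}$ is the bonus at step $h'$. For $h'<\tau$ all played pairs are ``known'', so by admissibility $|\tilde c^m_{h'}-\hat c|\le\costbound/H$, $\lVert\wt P^m-\wh P\rVert_1\le\tfrac1{9H}$, and $b_{h'}\le\costbound/H$; feeding these into the Bellman inequality $\underline{J}^m_h(s)\ge\tilde c^m_h(s,\pi^m_h(s))-b_h(s,\pi^m_h(s))+\wt P^m(\cdot\mid s,\pi^m_h(s))\underline{J}^m_{h+1}$ (and using $\hat c\ge0$ and $\wh P=P$ off $\ssink$) makes $(V_h)$ a supermartingale, so $\E[V_\tau\mid\calF_{<m}]\le V_1\le 2\costbound$. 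If interval $m$ is not productive then $\tau=H+1$ and $s^m_{H+1}\ne\ssink$, hence $\underline{J}^m_{H+1}(s^m_{H+1})=\hat c_f(s^m_{H+1})=8\costbound$ while the subtracted sum is at most $2\costbound$, so $V_\tau\ge6\costbound$; on every other event $V_\tau\ge-2\costbound$. Combining with $\E[V_\tau\mid\calF_{<m}]\le2\costbound$ gives $\Pr[\text{interval }m\text{ is not productive}\mid\calF_{<m}]\le\tfrac12$.

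Finally I would combine the two bounds. By the previous step and an Azuma--Hoeffding bound on the martingale $\sum_{m\le n}(\indevent{\text{interval }m\text{ productive}}-\Pr[\,\cdot\mid\calF_{<m}])$, with probability $\ge1-\delta/8$, among any $n$ intervals at least $\tfrac n2-O\bigl(\sqrt{n\log(n/\delta)}\bigr)$ are productive; comparing this (with $n=\numintervals$) against the cap $K+|S||A|\knownthresh\log\frac{\numintervals H|S||A|}{\delta}$ on the total number of productive intervals yields a self-referential inequality of the form $\numintervals\le 4K+O\bigl(|S||A|\knownthresh\log\numintervals+\log(\numintervals/\delta)\bigr)$. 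Since $x\mapsto x-c\log x$ is eventually increasing this already forces $\numintervals<\infty$, and resolving the inequality — using $H=\wt O(\timebound)$ and that $\knownthresh$ is polynomial in $H,|S|,|A|$, so $\log\numintervals=O(\log\frac{K\timebound|S||A|\knownthresh}{\delta})$ — gives exactly the stated bound, the constant $4\cdot10^4$ absorbing all lower-order slack.

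I expect the second step — that a good interval reaches $\ssink$ with constant probability — to be the crux. The delicate point is quantitative: the terminal penalty $8\costbound$ incurred for failing to reach $\ssink$ must strictly dominate the optimistic starting value ($\le 2\costbound$) plus the total estimation-and-bonus error accrued along a good interval ($\le 2\costbound$); this balance is exactly what the factor-$8$ terminal cost, the choice $H=\Theta(\timebound\log K)$, and the definition of $\knownthresh$ inside \cref{def:admissible-alg} are engineered to deliver. The one input I use beyond what is literally written there — that the bonus of a known pair is $O(\costbound/H)$ — is a standard property of model-based optimistic algorithms (it follows from the same concentration that underlies condition~(iv)) and can be folded into $\knownthresh$; I would make it explicit when instantiating admissibility.
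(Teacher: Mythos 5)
Your proposal follows the same overall skeleton as the paper's proof: count the ``productive'' intervals (those reaching $\ssink$ or an unknown pair), which are at most $K+|S||A|\knownthresh\log\frac{MH|S||A|}{\delta}$; show every interval is productive with constant conditional probability; apply a Freedman/Azuma-type inequality; and solve the resulting self-referential inequality in $\numintervals$. Where you genuinely diverge is in the proof of the constant-probability claim, which is the paper's \cref{lem:reach-goal-or-unknown-wp-1/2}: the paper builds an auxiliary MDP $\wh\calM^m$ in which unknown pairs are contracted with the goal, uses the value difference lemma together with optimism and property (iv) of \cref{def:admissible-alg} to show that the \emph{true} value of $\pi^m$ in $\wh\calM^m$ is at most $4\costbound$, and then applies Markov's inequality against the $8\costbound$ terminal cost; you instead run an optional-stopping/supermartingale argument on the \emph{optimistic} value $\underline{J}^m$ along the actual trajectory, stopped at the first unknown pair or goal visit. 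Both exploit the identical quantitative balance ($\le 2\costbound$ starting value plus $O(\costbound)$ accumulated estimation error versus the $8\costbound$ terminal penalty), so your route works; but the paper's detour through $\wh\calM^m$ buys something your route must assume: it touches $\underline{J}^m$ only through optimism and property (iv), whereas your supermartingale additionally needs $\underline{J}^m$ to satisfy a Bellman recursion under $(\tilde c^m,\wt P^m)$, to be non-negative, and to equal the terminal cost $8\costbound$ at step $H+1$ off the goal --- properties that hold for \texttt{ULCVI} (and are implicit in how the paper operationalizes \cref{def:admissible-alg}) but are not literally stated there; you flag only the bonus bound, not the terminal-value and non-negativity requirements.

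Two quantitative remarks. First, your per-step drift correction $b_{h'}+\costbound/H$ is short by one term: on a known pair the cost error and the transition error each contribute $\costbound/H$ (the latter from $\lVert\wt P^m-\wh P\rVert_1\le \tfrac{1}{9H}$ times $\lVert\underline{J}^m\rVert_\infty\le 9\costbound$), so the correction should be $b_{h'}+2\costbound/H$; this degrades the ``not productive'' probability from $\tfrac12$ to $\tfrac58$, which is harmless since $\tfrac83<4$ and the additive constant absorbs the slack. Second, conditioning the whole argument on the global good event of $\calA$ is not quite legitimate for the martingale step, since that event is not measurable with respect to the history before interval $m$; the paper instead multiplies each increment by the indicator of the good event truncated at the start of interval $m$ (so the conditional bound $\bbE[X^m\indevent{\geventi{m}}\mid\trajconcat{m}]\ge\tfrac12\indevent{\geventi{m}}$ is valid unconditionally), applies \cref{lemma: consequences of optimism and freedman's inequality}, and removes the indicators by a union bound at the end; your Azuma step should be phrased the same way. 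With these repairs your argument is correct and yields the stated bound.
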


The proof shows that in every interval there is a constant probability to reach either the goal state or an unknown state-action pair.
Leveraging this observation with a concentration inequality, we can bound the number of intervals by $\wt O (K + \knownthresh |S| |A| H)$.

We can now prove a bound on the regret of \cref{alg:ssp-reduction} using any admissible algorithm $\calA$.

\begin{proof}[Proof of \cref{thm:regret-bound-with-admissible-algorithm}]
    The regret bound of $\calA$, \cref{lem:bound-on-number-of-intervals,lem:cost-deviation-from-value-function} all hold with probability at least $1 - \delta$, via a union bound.
    Using \cref{lem:regret-to-finite-horizon-regret,lem:cost-deviation-from-value-function} we can write
    \begin{align*}
        \regret 
        % & \le
        % \sum_{m=1}^\numintervals \brk4{\hatctg{\pi^m}_1 (s^m_1) - \hatctg{\piopt}_1 (s_1^m)} 
        % + 
        % \sum_{m=1}^\numintervals \brk4{\sum_{h=1}^H C^m_h + \hat c_f(s_{H+1}^m) - \hatctg{\pi^m}_1 (s^m_1)}
        % + 
        % \costbound
        % \\
        % & \le
        % \wh \calR_{\calA}(M) 
        % + 
        % \sum_{m=1}^\numintervals \brk4{\sum_{h=1}^H C^m_h + \hat c_f(s_{H+1}^m) - \hatctg{\pi^m}_1 (s^m_1)} 
        % + 
        % \costbound
        % \\
        & \le
        \wh \calR_{\calA}(M) 
        + 
        O \brk4{ \sqrt{(\costbound^2 + \costbound)\numintervals \log \frac{M}{\delta}} + H \knownthresh |S| |A| \log \frac{M K \timebound |S| |A|}{\delta}}
        +
        \costbound.
    \end{align*}
    % where the inequality is by the regret guarantees of $\calA$ and by \cref{lem:cost-deviation-from-value-function}.
    Finally, we use \cref{lem:bound-on-number-of-intervals} to bound $\numintervals$ by $4 K + 4\cdot 10^4 |S| |A| \knownthresh \log (K \timebound |S| |A| \knownthresh/\delta)$.
\end{proof}

\section{ULCVI: an admissible algorithm for finite-horizon MDPs}
\label{sec:finite-horizon-analysis}

In this section we present the Upper Lower Confidence Value Iteration algorithm (\verb|ULCVI|; \cref{alg: ulcvi RL}) for regret minimization in finite-horizon MDPs. This result holds independently of our SSP algorithm.
Since the algorithm is similar to previous optimistic algorithms for the finite-horizon setting, e.g., \verb|UCBVI| \citep{azar2017minimax} and \verb|ORLC| \citep{dann2019policy}, we defer the analysis to \cref{sec:ulcvi-proofs} and focus on our technical novelty -- bounding the regret in terms of the optimal value function and not the horizon.

\begin{algorithm}[t]
    \caption{\textsc{Upper Lower Confidence Value Iteration (ULCVI)}} 
    \label{alg: ulcvi RL}
    \begin{algorithmic}[1]
        \STATE {\bf input:} state space $S$, action space $A$, horizon $H$, confidence parameter $\delta$, terminal costs $\hat c_f$ and upper bound on the expected cost of the optimal policy $\costbound$.

        \STATE {\bf initialize:} $n^0(s,a)=0,n^0(s,a,s')=0,N^0(s,a)=0,N^0(s,a,s')=0 \  \forall (s,a,s') \in S \times A \times S$.
        
        \STATE {\bf initialize:} $C^0(s,a)=0,\bar c^0(s,a) = 0,\bar P^0(s' | s,a) = \indevent{s' = s} \  \forall (s,a,s') \in S \times A \times S$.
        
        \STATE {\bf initialize:} $\texttt{PlanningTrigger} = \texttt{true}$.

        \FOR{$m=1,2,\dots$}
            
            \STATE observe initial state $s^m_1$.
            
            \IF{$\texttt{PlanningTrigger} = \texttt{true}$}
            
                \STATE set $n^{m-1}(s,a) \gets N^{m-1}(s,a), n^{m-1}(s,a,s') \gets N^{m-1}(s,a,s') \ \forall (s,a,s').
                $
                
                \STATE set $\bar P^{m-1}(s' | s,a) \gets \frac{n^{m-1}(s,a,s')}{\max \{ 1 ,  n^{m-1}(s,a) \}} , \bar c^{m-1}(s,a) \gets \frac{C^{m-1}(s,a)}{\max \{ 1 , n^{m-1}(s,a) \}} \ \forall (s,a,s')$.
            
                 \STATE compute $\{ \pi^m_h(s) \}_{s,h}$ via \textsc{Optimistic-Pessimistic Value Iteration} (\cref{alg: optimistic pessimistic value iteration}).
                 
                 \STATE set $\texttt{PlanningTrigger} \gets \texttt{false}$.
                
            \ELSE
            
                \STATE set $n^{m-1}(s,a) \gets n^{m-2}(s,a),n^{m-1}(s,a,s') \gets n^{m-2}(s,a,s')  \ \forall (s,a,s') $
                
                \STATE set $\bar P^{m-1}(s' | s,a) \gets \bar P^{m-2}(s' | s,a) , \bar c^{m-1}(s,a) \gets \bar c^{m-2}(s,a) \ \forall (s,a,s')$.
            
                \STATE set $\pi^m_h(s) \gets \pi^{m-1}_h(s)$ for all $s \in S$ and $h=1,\dots,H$.
            
            \ENDIF
    
            \STATE set $N^m(s,a) \gets N^{m-1}(s,a),N^m(s,a,s') \gets N^{m-1}(s,a,s'),C^m(s,a) \gets C^{m-1}(s,a) \ \forall (s,a,s')$.
    
            \FOR{$h=1,\dots,H$}
    
                \STATE pick action $a^m_h = \pi^m_h(s^m_h)$.
                
                \STATE suffer cost $C^m_h \sim \hat c(s^m_h,a^m_h)$ and observe next state $s^m_{h+1} \sim \wh P(\cdot \mid s^m_h,a^m_h)$.
                
                \STATE update visits counters $n^m(s^m_h,a^m_h) \gets n^m(s^m_h,a^m_h) + 1 , n^m(s^m_h,a^m_h,s^m_{h+1})\gets n^m(s^m_h,a^m_h,s^m_{h+1}) + 1$.
                
                \STATE update accumulated cost $C^m(s^m_h,a^m_h) \gets C^m(s^m_h,a^m_h) + C^m_h$.
                
                \IF{$N^m(s^m_h,a^m_h) \ge 2 n^{m-1}(s^m_h,a^m_h)$}
                
                    \STATE set $\texttt{PlanningTrigger} \gets \texttt{true}$.
                
                \ENDIF
    
            \ENDFOR
            
            \STATE Suffer terminal cost $\hat c_f(s^m_{H+1})$.
        \ENDFOR
    \end{algorithmic}
\end{algorithm}

In each episode $m$, the \verb|ULCVI| algorithm maintains an optimistic lower bound $\underline{J}^m_h(s)$ and a pessimistic upper bound $\bar J^m_h(s)$ on the cost-to-go function of the optimal policy $J^\star_h(s)$, and acts greedily with respect to the optimistic estimates.
These optimistic and pessimistic estimates are computed based on the empirical transition function $\bar P^{m-1} (s' \mid s,a)$ and the empirical cost function $\bar c^{m-1}(s,a)$ to which we add an exploration bonus $b^m_c(s,a) + b^m_p(s,a)$, where $b^m_p$ handles the approximation error in the transitions and $b^m_c$ handles the approximation error in the costs.
The bonuses are defined as follows,
\begin{alignat}{2}
    \label{eq:bonus-definitions}
    &b^m_c(s,a)
    &&=
    \sqrt{ \frac{2 \overline{\VAR}^{m-1}_{s,a}(C) L_m }{\max \{ 1 , n^{m-1}(s,a) \}}} +  \frac{5 L_m}{\max \{ 1 , n^{m-1}(s,a) \}}
    \\
    \nonumber
    % &b^m_p(s,a) 
    &b^m_p(s,a) 
    &&= \sqrt{\frac{2\VAR_{\bar{P}^{m-1}(\cdot \mid s,a)}(\underline{J}^m_{h+1}) L_m}{\max \{ 1 , n^{m-1}(s,a) \}}} + \frac{62 H^3 \costbound^{-1} |S| L_m}{\max \{ 1 , n^{m-1}(s,a) \}} + \frac{\costbound}{16 H^2}\E_{\bar{P}^{m-1}(\cdot \mid s,a)}\brs{\bar{J}^m_{h+1}(s') - \underline{J}^m_{h+1}(s')  },
\end{alignat}
where $L_m = 3 \log(3 |S| |A| H m/\delta)$ is a logarithmic factor and $n^{m-1}(s,a)$ is the number of visits to $(s,a)$ in the first $m-1$ episodes. 
Furthermore, $\overline{\VAR}^{m-1}_{s,a}(C)$ is the empirical variance of the observed costs in $(s,a)$ in the first $m-1$ episodes.\footnote{The empirical variance of $n$ numbers $a_1,\dots,a_n$ is defined by $\frac{1}{n} \sum_{i=1}^n \bigl( a_i - \frac{1}{n} \sum_{j=1}^n a_j \bigr)^2$.} 
Lastly, the term $\VAR_{\bar{P}^{m-1}(\cdot \mid  s,a)}(\underline{J}^m_{h+1})$ is the variance of the next state value $\underline{J}^m_{h+1}$ from state-action pair $(s,a)$, calculated via the empirical transition model, i.e.,
$    \VAR_{\bar{P}^{m-1}(\cdot \mid  s,a)}(\underline{J}^m_{h+1}) = \E_{\bar{P}^{m-1}(\cdot \mid s,a)}\brs{\underline{J}^m_{h+1}(s')^2} - \E_{\bar{P}^{m-1}(\cdot \mid s,a)}\brs{\underline{J}^m_{h+1}(s')}^2$.

For improved computational complexity, we compute the optimistic policy only in episodes in which the number of visits to some state-action pair was doubled.
This ensures that the number of optimistic policy computations grows only logarithmically with the number of episodes, i.e., it is bounded by $3 |S| |A| \log (MH)$.
Since each optimal policy computation costs $O(H |S|^2 |A|)$ in the finite-horizon MDP model, our algorithm enjoys a total computational complexity of $O (H |S|^3 |A|^2 \log (MH))$.

\begin{algorithm}[t]
    \caption{\textsc{Optimistic-Pessimistic Value Iteration}} 
    \label{alg: optimistic pessimistic value iteration}
    \begin{algorithmic}[1]
        \STATE {\bf input:} $n^{m-1}, \bar{P}^{m-1}, \bar c^{m-1},\hat c_f,\costbound$.

        \STATE {\bf initialize} $\underline{J}^m_{H+1}(s)=\bar{J}^m_{H+1}(s)=\hat c_f(s)$ for all $s\in S$.

        \FOR{$h=H,H-1,\ldots,1$}
            \FOR{$s\in S$}
                \FOR{$a \in A$}
                    \STATE set the bonus $b^m_h(s,a) = b^m_c(s,a) + b^m_p(s,a)$ defined in \cref{eq:bonus-definitions}.
                    
                    \STATE compute optimistic and pessimistic Q-functions:
                    \begin{alignat*}{2}
                        &\underline{Q}^m_h(s,a)
                        &&=
                        \bar c^{m-1}(s,a) - b^m_h(s,a) + \E_{\bar{P}^{m-1}(\cdot \mid s,a)}[\underline{J}^m_{h+1}(s')]
                        \\ 
                        &\bar{Q}^m_h(s,a) 
                        &&=
                        \bar c^{m-1}(s,a) + b^m_h(s,a) +\E_{\bar{P}^{m-1}(\cdot \mid s,a)}[\bar{J}^m_{h+1}(s')].
                    \end{alignat*}
                \ENDFOR
    
                \STATE $\pi^m_h(s)\in \arg\min_{a \in A} \underline{Q}^m_h(s,a)$.
        
                \STATE $\underline{J}^m_h(s) = \max\brc*{ \underline{Q}^m_h(s,\pi^m_h(s)),0}$,  $\bar{J}^m_h(s) = \min\brc*{ \bar{Q}^m_h(s,\pi^m_h(s)),H}$.
            \ENDFOR
        \ENDFOR
    \end{algorithmic}
\end{algorithm}

For clarity, we keep the notation of the finite-horizon MDP as $\wh \calM = (S,A,\wh P,H,\hat c,\hat c_f)$, and let $\costbound = \max_{s,h} \wh J^\star_h(s)$ where $\wh J^\pi$ is the value function of policy $\pi$ (in the case of our SSP reduction this parameter is simply $9 \costbound$ by \cref{lem:ctg-to-finite-horizon-value}).
This implies that $\hat c_f(s) \le \costbound$ for every $s$, and for simplicity, we assume that $\costbound \le H$.
Thus, the maximal total cost in an episode is bounded by $H + \costbound \le 2H$.
In \cref{sec:ulcvi-proofs} we prove the following high probability regret bound.

\begin{theorem}     \label{thm:ulcvi-guarantees}
    \verb|ULCVI| (\cref{alg: ulcvi RL}) is admissible with the following guarantees:
    \begin{enumerate}[label=(\roman*), nosep]
        \item With probability at least $1 - \delta$, the regret bound of \verb|ULCVI| is 
        \[
            \wh \calR_{\verb|ULCVI|} (M) 
            =
            O \brk*{ \sqrt{ (\costbound^2 + \costbound) |S||A| M} \log \frac{M H |S| |A|}{\delta} + H^4 \costbound^{-1} |S|^2 |A| \log^{3/2} \frac{M H |S| |A|}{\delta}}
        \]
        for any number of episodes $M \ge 1$.
        \item $\omega_{\verb|ULCVI|} = O(H^4 \costbound^{-2} |S|)$.
    \end{enumerate}
\end{theorem}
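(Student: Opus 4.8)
The plan is to follow the optimistic value-iteration blueprint of UCBVI/EULER/ORLC, but to replace, at leading order, the horizon $H$ by the optimal cost $\costbound$; all the work is in how the variance terms are controlled. I would first fix a \emph{good event}, holding with probability at least $1-\delta$ simultaneously over all episodes $m$ (the $\log m$ inside $L_m$ supplies the union bound, and since the algorithm never uses $M$ and accepts an arbitrary $s^m_1$ each episode this already gives admissibility property (i)), on which: $\bar c^{m-1}(s,a)$ concentrates around $\hat c(s,a)$ through an empirical-Bernstein bound in terms of $\overline{\VAR}^{m-1}_{s,a}(C)$; $\bar P^{m-1}(\cdot\mid s,a)$ concentrates entrywise around $\wh P(\cdot\mid s,a)$ through Bernstein; and realized per-step costs and next-state values concentrate around their conditional means through Freedman. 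The doubling trigger gives $N^m(s,a)\le 2n^{m-1}(s,a)$ at every step and limits the number of planning phases to $O(|S||A|\log(MH))$. Then I would prove by backward induction on $h$, for each $m$ on the good event, the three inequalities $\underline J^m_h\le \wh J^\star_h\le \bar J^m_h$ and $\wh J^{\pi^m}_h\le \bar J^m_h$; optimism $\underline J^m_h\le \wh J^\star_h$ is precisely admissibility property (iii), and together the three yield $\wh J^{\pi^m}_1(s^m_1)-\wh J^\star_1(s^m_1)\le \bar J^m_1(s^m_1)-\underline J^m_1(s^m_1)$, hence $\wh\calR(M)\le \sum_{m=1}^M\bigl(\bar J^m_1(s^m_1)-\underline J^m_1(s^m_1)\bigr)$. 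Checking that this induction closes is exactly where the bonuses of \cref{eq:bonus-definitions} come from: $b^m_c$ must dominate $|\bar c^{m-1}(s,a)-\hat c(s,a)|$ (empirical Bernstein), and $b^m_p$ must dominate $|(\E_{\bar P^{m-1}(\cdot\mid s,a)}-\E_{\wh P(\cdot\mid s,a)})[\,\cdot\,]|$ applied to $\underline J^m_{h+1}$ and $\bar J^m_{h+1}$; expanding this entrywise produces, besides $\sqrt{\VAR_{\bar P^{m-1}(\cdot\mid s,a)}(\underline J^m_{h+1})L_m/n^{m-1}(s,a)}$, a cross term of shape $\sqrt{|S|L_m/n^{m-1}(s,a)}\cdot\sqrt{H\,\E_{\wh P}[\bar J^m_{h+1}-\underline J^m_{h+1}]}$, which an AM--GM split turns into $\tfrac{\costbound}{16H^2}\E[\bar J^m_{h+1}-\underline J^m_{h+1}]$ (the last term of $b^m_p$) plus an $O(H^3\costbound^{-1}|S|L_m/n^{m-1}(s,a))$ remainder (the middle term of $b^m_p$), with the gap between $\VAR_{\bar P^{m-1}}(\underline J^m_{h+1})$ and the true $\VAR_{\wh P}(\bar J^m_{h+1})$ absorbed the same way.

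Next I would unroll $G^m_h:=\bar J^m_h(s^m_h)-\underline J^m_h(s^m_h)$ along the realized trajectory: since $\pi^m_h$ is greedy for $\underline Q^m_h$, $G^m_h\le \bar Q^m_h(s^m_h,a^m_h)-\underline Q^m_h(s^m_h,a^m_h)=2b^m_h(s^m_h,a^m_h)+\E_{\bar P^{m-1}}[G^m_{h+1}]$, and absorbing the $\tfrac{\costbound}{8H^2}\E_{\bar P^{m-1}}[G^m_{h+1}]$ piece of $2b^m_h$ gives $G^m_h\le (1+\tfrac{\costbound}{8H^2})\E_{\bar P^{m-1}}[G^m_{h+1}]+2\bar b^m_h$, where $\bar b^m_h$ collects the remaining bonus terms; passing from $\E_{\bar P^{m-1}}$ to the realized $s^m_{h+1}$ costs one more AM--GM split (enlarging the factor to at most $1+\tfrac{\costbound}{4H^2}$) plus a Freedman martingale difference and an additive $O(H^3\costbound^{-1}|S|L_m/n^{m-1})$. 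Telescoping and using $G^m_{H+1}=\hat c_f-\hat c_f=0$, and the fact that $(1+\tfrac{\costbound}{4H^2})^H\le e^{1/4}=O(1)$ since $\costbound\le H$, this is the step that keeps the $H$-dependence only logarithmic. Summing over $m$, $\wh\calR(M)=O\bigl(\sum_{m,h}\bar b^m_h+\text{(martingale)}+\text{(additive)}\bigr)$, and Cauchy--Schwarz plus the doubling bound $\sum_{m,h}1/n^{m-1}(s^m_h,a^m_h)=\tO(|S||A|)$ reduces the square-root parts of $\sum_{m,h}\bar b^m_h$ to $\tO\Bigl(\sqrt{|S||A|\bigl(\sum_{m,h}\overline{\VAR}^{m-1}_{s,a}(C)+\sum_{m,h}\VAR_{\bar P^{m-1}(\cdot\mid s^m_h,a^m_h)}(\underline J^m_{h+1})\bigr)}\Bigr)$.

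Finally I would bound the two variance sums. Since $\overline{\VAR}^{m-1}_{s,a}(C)\le \hat c(s,a)+O(\sqrt{L_m/n^{m-1}(s,a)})$ and $\sum_{m,h}\hat c(s^m_h,a^m_h)$ equals, up to a martingale term, $\sum_m \wh J^{\pi^m}_1(s^m_1)\le M\costbound+\wh\calR(M)$, the first sum is $\tO(M\costbound+\wh\calR(M)+\text{lower order})$; it is essential that $\overline{\VAR}(C)$ is governed by $\hat c$ and not by $1$, or a $\sqrt H$ would reappear. For the second sum, a law-of-total-variance identity for the optimistic backups $\underline J^m$, used together with the optimism bound $0\le \underline J^m_h\le \wh J^\star_h\le \costbound$ so that the quantity whose variance is being summed is itself bounded by $\costbound$, gives $\tO\bigl((\costbound^2+\costbound)M+\costbound\,\wh\calR(M)+\text{additive}\bigr)$. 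Substituting, $\wh\calR(M)\lesssim \sqrt{(\costbound^2+\costbound)|S||A|M}\,L_M+\sqrt{|S||A|\,\wh\calR(M)}\,L_M+(\text{additive})$, and solving this self-bounding inequality produces the stated leading term; the additive term collects the summed $O(H^3\costbound^{-1}|S|L_m/n^{m-1})$ and $O(L_m/n^{m-1})$ bonus remainders (each $\tO(H^3\costbound^{-1}|S|^2|A|)$-type after the doubling/harmonic sum, which the stated $H^4\costbound^{-1}|S|^2|A|\log^{3/2}(\cdot)$ dominates) plus the $\tO(H^2)$-type contribution of the Freedman terms. Part (ii), $\knownthresh=O(H^4\costbound^{-2}|S|)$, and admissibility property (iv) then follow by solving in $n^{m-1}(s,a)$ the three requirements $\lVert\bar P^{m-1}(\cdot\mid s,a)-\wh P(\cdot\mid s,a)\rVert_1\le 1/(9H)$ (needs $n\gtrsim H^2|S|L$), $|\bar c^{m-1}(s,a)-\hat c(s,a)|\le \costbound/H$ (needs $n\gtrsim H^2\costbound^{-2}L$), and additive bonus $O(H^3\costbound^{-1}|S|L/n)\le \costbound/H$ (needs $n\gtrsim H^4\costbound^{-2}|S|L$), the last being binding.

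The main obstacle is the recursion-and-variance core just described: making the value-iteration analysis genuinely horizon-free. Two mechanisms must be calibrated simultaneously. First, the multiplicative factor $1+O(\costbound/H^2)$ in the per-step recursion has to be \emph{exactly} what the AM--GM split of the entrywise transition-concentration error supplies, so that the very same bonus term both closes the optimism/pessimism induction and, compounded over $H$ steps, contributes only a constant $e^{O(1)}$. Second, the two variance sums must come out in terms of $\costbound$ rather than $H$: for $\overline{\VAR}(C)$ this needs the empirical-Bernstein refinement $\overline{\VAR}(C)\le\hat c+\text{(small)}$, and for $\VAR_{\bar P^{m-1}}(\underline J^m_{h+1})$ it works only because optimism pins $\underline J^m_h$ below $\wh J^\star_h\le\costbound$ and a law-of-total-variance identity then reads $\sum_h\VAR(\underline J^m_{h+1})$ off the variance of a quantity bounded by $\costbound$. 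Getting the bonus constants, the self-bounding step, and the known/unknown threshold $\knownthresh$ mutually consistent---so that every error term is truly absorbed and what remains is a clean additive polynomial in $H,\costbound^{-1},|S|,|A|$---is the delicate part.
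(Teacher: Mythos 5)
Your overall scaffolding matches the paper's proof: the good event, the optimism/pessimism induction, the recursion in $\bar J^m_h(s^m_h)-\underline J^m_h(s^m_h)$ with a multiplicative factor that compounds to a constant, Cauchy--Schwarz against the doubling/harmonic visitation bound, and the derivation of $\knownthresh=O(H^4\costbound^{-2}|S|)$ from the binding additive-bonus constraint are all the same; your self-bounding treatment of the cost sum via $\sum_{m,h}c(s^m_h,a^m_h)\lesssim \costbound M+\wh\calR(M)$ is a legitimate alternative to the paper's \cref{lemma: bound on cost term}. The genuine gap is in the transition-variance sum. You claim that because optimism pins $\underline J^m_h\le J^\star_h\le\costbound$, a law-of-total-variance identity ``reads $\sum_h\VAR(\underline J^m_{h+1})$ off the variance of a quantity bounded by $\costbound$.'' This does not go through. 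First, $\underline J^m$ satisfies a Bellman-type backup only under $\bar P^{m-1}$ with bonus-adjusted (possibly clipped) costs, not under the true kernel $P$ that generates the realized trajectories, so the LTV is not an identity for it. Second, and more fundamentally, a pointwise bound $\underline J^m_h\le\costbound$ bounds a conditional mean, not a second moment: the played policy's true value $J^{\pi^m}_h(s)$ at poorly estimated states can be $\Theta(H)$, and early episodes can incur cumulative cost $\Theta(H)$ with constant probability, so without additional structure the LTV only yields $O(H^2)$ per episode and the leading term regresses to $H\sqrt{|S||A|M}$. The extra $\costbound\,\wh\calR(M)$ you allow in the self-bounding inequality does not repair this, because the quantity such an argument produces is the on-trajectory suboptimality $\sum_h\bigl(J^{\pi^m}_h-J^\star_h\bigr)(s^m_h)$ summed over all $h$, which is not controlled by the regret (the regret charges only $h=1$).

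What the paper does instead---and what is missing from your sketch---is to use the known/unknown state-action decomposition inside the regret analysis itself, not only for admissibility property (iv). It first converts $\VAR_{\bar P^{m-1}(\cdot\mid s,a)}(\underline J^m_{h+1})$ into $\VAR_{P(\cdot\mid s,a)}(J^{\pi^m}_{h+1})$ (events $E^{pv1},E^{pv2}$ together with \cref{lemma: std difference}, the differences being absorbed into the bonus), applies the LTV for the true value $J^{\pi^m}$ under $P$ (\cref{lemma: law of total variance for RL}), and then bounds the resulting second moment by splitting each episode at the first visit to an unknown pair: on the known prefix, the contracted MDP $\calM^m$ plus a value-difference argument gives $J^m_h(s)\le 3\costbound$ at \emph{every} visited state and time step, and a block/optional-stopping argument (\cref{lem:second-moment-bound-finite-horizon}) converts this uniform-in-$h$ bound into a second-moment bound of $O(\costbound^2)$; the unknown suffixes are bounded crudely by $O(H^2)$ each and, since every pair becomes known after $\wt O(H^4\costbound^{-2}|S|)$ visits, contribute only the additive $\wt O(H^4\costbound^{-1}|S|^2|A|)$-type term (\cref{lem:finite-horizon-second-moment}, \cref{lemma: bound on variance term}). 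Without this mechanism (or an equivalent device), your bound on the variance sum, and hence the leading $\sqrt{(\costbound^2+\costbound)|S||A|M}$ term of the theorem, is not established; part (ii) of your argument, by contrast, is correct and matches the paper.
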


Our analysis resembles the one in~\citet{efroni2021confidence}, and is adapted to the stationary MDP setting (i.e., the transition function does not depend on the time step $h$), and to the setting where we have costs instead of rewards, and terminal costs (which do not appear in previous work).
By the definition of the algorithm and the regret bound in \cref{thm:ulcvi-guarantees}, it is clear that properties (i)-(iii) in \cref{def:admissible-alg} of admissible algorithms hold.
For property (iv), we use standard concentration inequalities and the definition of the bonuses in \cref{eq:bonus-definitions} in order to show it holds for $\omega_{\verb|ULCVI|} = O ( H^4 \costbound^{-2} |S| )$.

To obtain a regret bound whose leading term depends on $\costbound$ and not $H$, we start with a standard regret analysis for optimistic algorithms that establishes the regret scales with the square-root of the variance of the value functions of the agent's policies, i.e.,
\[
    \wh \calR_{\verb|ULCVI|} (M)
    \lesssim
    \sqrt{|S| |A|} \sqrt{ \sum_{m=1}^M \sum_{h=1}^H \VAR_{ P(\cdot | s^m_h,a^m_h)} ( J^{\pi^m}_{h+1})} + H^4 \costbound^{-1} |S|^2 |A|,
\]
up to logarithmic factors and lower order terms.
This can be further bounded by the second moment of the cumulative cost in each episode as follows,
\[
    \wh \calR_{\verb|ULCVI|} (M)
    \lesssim
    \sqrt{|S| |A|} \sqrt{ \sum_{m=1}^M \bbE \brk[s]*{\brk*{\sum_{h=1}^H C^m_h + \hat c_f(s^m_{H+1})}^2 ~\Bigg|~ \trajconcat{m}}} + H^4 \costbound^{-1} |S|^2 |A|,
\]
where $\trajconcat{m}$ is the sequence of state-action pairs observed up to episode $m$.
Leveraging our techniques for the SSP reduction (but independently), we show that the second moment of the cumulative cost until an unknown state-action pair is reached can be bounded by $O ( \costbound^2 + \costbound )$.
Therefore, we have at most $\wt O(H^4 \costbound^{-2} |S|^2 |A|)$ episodes in which we bound the second moment trivially by $O(H^2)$, and in the rest of the episodes we can bound it by $O ( \costbound^2 + \costbound )$.
Together this yields the theorem as follows,
\[
    \wh \calR_{\verb|ULCVI|} (M)
    \lesssim
    \sqrt{|S| |A|} \sqrt{ (\costbound^2 + \costbound) M + H^2 \cdot H^4 \costbound^{-2} |S|^2 |A|}
    \lesssim
    \sqrt{ (\costbound^2 + \costbound) |S| |A| M} + H^4 \costbound^{-1} |S|^2 |A|.
\]

\section*{Acknowledgements}

This project has received funding from the European Research Council (ERC) under the European Union’sHorizon 2020 research and innovation program (grant agreement No. 882396), by the Israel Science Foundation(grant number 993/17), Tel Aviv University Center for AI and Data Science (TAD), and the Yandex Initiative for Machine Learning at Tel Aviv University

\bibliographystyle{plainnat}
\bibliography{bib}

\begin{thebibliography}{34}
\providecommand{\natexlab}[1]{#1}
\providecommand{\url}[1]{\texttt{#1}}
\expandafter\ifx\csname urlstyle\endcsname\relax
  \providecommand{\doi}[1]{doi: #1}\else
  \providecommand{\doi}{doi: \begingroup \urlstyle{rm}\Url}\fi

\bibitem[Auer et~al.(2002)Auer, Cesa-Bianchi, Freund, and
  Schapire]{auer2002nonstochastic}
Peter Auer, Nicolo Cesa-Bianchi, Yoav Freund, and Robert~E Schapire.
\newblock The nonstochastic multiarmed bandit problem.
\newblock \emph{SIAM journal on computing}, 32\penalty0 (1):\penalty0 48--77,
  2002.

\bibitem[Azar et~al.(2017)Azar, Osband, and Munos]{azar2017minimax}
Mohammad~Gheshlaghi Azar, Ian Osband, and R{\'e}mi Munos.
\newblock Minimax regret bounds for reinforcement learning.
\newblock In \emph{Proceedings of the 34th International Conference on Machine
  Learning-Volume 70}, pages 263--272. JMLR. org, 2017.

\bibitem[Bertsekas and Tsitsiklis(1991)]{bertsekas1991analysis}
Dimitri~P Bertsekas and John~N Tsitsiklis.
\newblock An analysis of stochastic shortest path problems.
\newblock \emph{Mathematics of Operations Research}, 16\penalty0 (3):\penalty0
  580--595, 1991.

\bibitem[Cai et~al.(2020)Cai, Yang, Jin, and Wang]{cai2019provably}
Qi~Cai, Zhuoran Yang, Chi Jin, and Zhaoran Wang.
\newblock Provably efficient exploration in policy optimization.
\newblock In \emph{International Conference on Machine Learning}, pages
  1283--1294. PMLR, 2020.

\bibitem[Chen and Luo(2021)]{chen2021finding}
Liyu Chen and Haipeng Luo.
\newblock Finding the stochastic shortest path with low regret: The adversarial
  cost and unknown transition case.
\newblock \emph{arXiv preprint arXiv:2102.05284}, 2021.

\bibitem[Chen et~al.(2020)Chen, Luo, and Wei]{chen2020minimax}
Liyu Chen, Haipeng Luo, and Chen-Yu Wei.
\newblock Minimax regret for stochastic shortest path with adversarial costs
  and known transition.
\newblock \emph{arXiv preprint arXiv:2012.04053}, 2020.

\bibitem[Dann et~al.(2019)Dann, Li, Wei, and Brunskill]{dann2019policy}
Christoph Dann, Lihong Li, Wei Wei, and Emma Brunskill.
\newblock Policy certificates: Towards accountable reinforcement learning.
\newblock In \emph{International Conference on Machine Learning}, pages
  1507--1516. PMLR, 2019.

\bibitem[Efroni et~al.(2019)Efroni, Merlis, Ghavamzadeh, and
  Mannor]{efroni2019tight}
Yonathan Efroni, Nadav Merlis, Mohammad Ghavamzadeh, and Shie Mannor.
\newblock Tight regret bounds for model-based reinforcement learning with
  greedy policies.
\newblock In Hanna~M. Wallach, Hugo Larochelle, Alina Beygelzimer, Florence
  d'Alch{\'{e}}{-}Buc, Emily~B. Fox, and Roman Garnett, editors, \emph{Advances
  in Neural Information Processing Systems 32: Annual Conference on Neural
  Information Processing Systems 2019, NeurIPS 2019, 8-14 December 2019,
  Vancouver, BC, Canada}, pages 12203--12213, 2019.

\bibitem[Efroni et~al.(2020)Efroni, Merlis, and
  Mannor]{efroni2020reinforcement}
Yonathan Efroni, Nadav Merlis, and Shie Mannor.
\newblock Reinforcement learning with trajectory feedback.
\newblock \emph{arXiv preprint arXiv:2008.06036}, 2020.

\bibitem[Efroni et~al.(2021)Efroni, Merlis, Saha, and
  Mannor]{efroni2021confidence}
Yonathan Efroni, Nadav Merlis, Aadirupa Saha, and Shie Mannor.
\newblock Confidence-budget matching for sequential budgeted learning.
\newblock \emph{arXiv preprint arXiv:2102.03400}, 2021.

\bibitem[Fruit et~al.(2018)Fruit, Pirotta, Lazaric, and
  Ortner]{fruit2018efficient}
Ronan Fruit, Matteo Pirotta, Alessandro Lazaric, and Ronald Ortner.
\newblock Efficient bias-span-constrained exploration-exploitation in
  reinforcement learning.
\newblock \emph{arXiv preprint arXiv:1802.04020}, 2018.

\bibitem[Jaksch et~al.(2010)Jaksch, Ortner, and Auer]{jaksch2010near}
Thomas Jaksch, Ronald Ortner, and Peter Auer.
\newblock Near-optimal regret bounds for reinforcement learning.
\newblock \emph{Journal of Machine Learning Research}, 11\penalty0 (4), 2010.

\bibitem[Jin et~al.(2018)Jin, Allen-Zhu, Bubeck, and Jordan]{jin2018q}
Chi Jin, Zeyuan Allen-Zhu, Sebastien Bubeck, and Michael~I Jordan.
\newblock Is q-learning provably efficient?
\newblock In \emph{Advances in Neural Information Processing Systems}, pages
  4863--4873, 2018.

\bibitem[Jin et~al.(2020{\natexlab{a}})Jin, Jin, Luo, Sra, and
  Yu]{jin2019learning}
Chi Jin, Tiancheng Jin, Haipeng Luo, Suvrit Sra, and Tiancheng Yu.
\newblock Learning adversarial markov decision processes with bandit feedback
  and unknown transition.
\newblock In \emph{International Conference on Machine Learning}, pages
  4860--4869. PMLR, 2020{\natexlab{a}}.

\bibitem[Jin et~al.(2020{\natexlab{b}})Jin, Yang, Wang, and
  Jordan]{jin2020provably}
Chi Jin, Zhuoran Yang, Zhaoran Wang, and Michael~I Jordan.
\newblock Provably efficient reinforcement learning with linear function
  approximation.
\newblock In \emph{Conference on Learning Theory}, pages 2137--2143,
  2020{\natexlab{b}}.

\bibitem[Jin and Luo(2020)]{jin2020simultaneously}
Tiancheng Jin and Haipeng Luo.
\newblock Simultaneously learning stochastic and adversarial episodic mdps with
  known transition.
\newblock \emph{Advances in neural information processing systems}, 2020.

\bibitem[Lancewicki et~al.(2020)Lancewicki, Rosenberg, and
  Mansour]{lancewicki2020learning}
Tal Lancewicki, Aviv Rosenberg, and Yishay Mansour.
\newblock Learning adversarial markov decision processes with delayed feedback.
\newblock \emph{arXiv preprint arXiv:2012.14843}, 2020.

\bibitem[Lee et~al.(2020)Lee, Luo, Wei, and Zhang]{lee2020bias}
Chung-Wei Lee, Haipeng Luo, Chen-Yu Wei, and Mengxiao Zhang.
\newblock Bias no more: high-probability data-dependent regret bounds for
  adversarial bandits and mdps.
\newblock \emph{Advances in neural information processing systems}, 2020.

\bibitem[Neu et~al.(2010)Neu, Gy{\"{o}}rgy, and
  Szepesv{\'{a}}ri]{neu2010loopfree}
Gergely Neu, Andr{\'{a}}s Gy{\"{o}}rgy, and Csaba Szepesv{\'{a}}ri.
\newblock The online loop-free stochastic shortest-path problem.
\newblock In \emph{{COLT} 2010 - The 23rd Conference on Learning Theory, Haifa,
  Israel, June 27-29, 2010}, pages 231--243, 2010.

\bibitem[Neu et~al.(2012)Neu, Gyorgy, and Szepesv{\'a}ri]{neu2012adversarial}
Gergely Neu, Andras Gyorgy, and Csaba Szepesv{\'a}ri.
\newblock The adversarial stochastic shortest path problem with unknown
  transition probabilities.
\newblock In \emph{Artificial Intelligence and Statistics}, pages 805--813,
  2012.

\bibitem[Rosenberg and Mansour(2019{\natexlab{a}})]{rosenberg2019bandit}
Aviv Rosenberg and Yishay Mansour.
\newblock Online stochastic shortest path with bandit feedback and unknown
  transition function.
\newblock In \emph{Advances in Neural Information Processing Systems}, pages
  2209--2218, 2019{\natexlab{a}}.

\bibitem[Rosenberg and Mansour(2019{\natexlab{b}})]{rosenberg2019full}
Aviv Rosenberg and Yishay Mansour.
\newblock Online convex optimization in adversarial markov decision processes.
\newblock In \emph{International Conference on Machine Learning}, pages
  5478--5486, 2019{\natexlab{b}}.

\bibitem[Rosenberg and Mansour(2020)]{rosenberg2020stochastic}
Aviv Rosenberg and Yishay Mansour.
\newblock Stochastic shortest path with adversarially changing costs, 2020.

\bibitem[Rosenberg et~al.(2020)Rosenberg, Cohen, Mansour, and
  Kaplan]{rosenberg2020near}
Aviv Rosenberg, Alon Cohen, Yishay Mansour, and Haim Kaplan.
\newblock Near-optimal regret bounds for stochastic shortest path.
\newblock In \emph{International Conference on Machine Learning}, pages
  8210--8219. PMLR, 2020.

\bibitem[Shani et~al.(2020)Shani, Efroni, Rosenberg, and
  Mannor]{efroni2020optimistic}
Lior Shani, Yonathan Efroni, Aviv Rosenberg, and Shie Mannor.
\newblock Optimistic policy optimization with bandit feedback.
\newblock In \emph{International Conference on Machine Learning}, pages
  8604--8613. PMLR, 2020.

\bibitem[Simchowitz and Jamieson(2019)]{simchowitz2019non}
Max Simchowitz and Kevin~G Jamieson.
\newblock Non-asymptotic gap-dependent regret bounds for tabular mdps.
\newblock In \emph{Advances in Neural Information Processing Systems}, pages
  1153--1162, 2019.

\bibitem[Tarbouriech et~al.(2020)Tarbouriech, Garcelon, Valko, Pirotta, and
  Lazaric]{tarbouriech2019noregret}
Jean Tarbouriech, Evrard Garcelon, Michal Valko, Matteo Pirotta, and Alessandro
  Lazaric.
\newblock No-regret exploration in goal-oriented reinforcement learning.
\newblock In \emph{International Conference on Machine Learning}, 2020.

\bibitem[Tarbouriech et~al.(2021)Tarbouriech, Zhou, Du, Pirotta, Valko, and
  Lazaric]{tarbouriech2021stochastic}
Jean Tarbouriech, Runlong Zhou, Simon~S Du, Matteo Pirotta, Michal Valko, and
  Alessandro Lazaric.
\newblock Stochastic shortest path: Minimax, parameter-free and towards
  horizon-free regret.
\newblock \emph{arXiv preprint arXiv:2104.11186}, 2021.

\bibitem[Yang and Wang(2019)]{yang2019sample}
Lin~F Yang and Mengdi Wang.
\newblock Sample-optimal parametric q-learning using linearly additive
  features.
\newblock \emph{arXiv preprint arXiv:1902.04779}, 2019.

\bibitem[Zanette and Brunskill(2019)]{zanette2019tighter}
Andrea Zanette and Emma Brunskill.
\newblock Tighter problem-dependent regret bounds in reinforcement learning
  without domain knowledge using value function bounds.
\newblock In \emph{International Conference on Machine Learning}, pages
  7304--7312, 2019.

\bibitem[Zanette et~al.(2020{\natexlab{a}})Zanette, Brandfonbrener, Brunskill,
  Pirotta, and Lazaric]{zanette2020frequentist}
Andrea Zanette, David Brandfonbrener, Emma Brunskill, Matteo Pirotta, and
  Alessandro Lazaric.
\newblock Frequentist regret bounds for randomized least-squares value
  iteration.
\newblock In \emph{International Conference on Artificial Intelligence and
  Statistics}, pages 1954--1964, 2020{\natexlab{a}}.

\bibitem[Zanette et~al.(2020{\natexlab{b}})Zanette, Lazaric, Kochenderfer, and
  Brunskill]{zanette2020learning}
Andrea Zanette, Alessandro Lazaric, Mykel Kochenderfer, and Emma Brunskill.
\newblock Learning near optimal policies with low inherent bellman error.
\newblock \emph{arXiv preprint arXiv:2003.00153}, 2020{\natexlab{b}}.

\bibitem[Zhang et~al.(2020)Zhang, Ji, and Du]{zhang2020reinforcement}
Zihan Zhang, Xiangyang Ji, and Simon~S Du.
\newblock Is reinforcement learning more difficult than bandits? a near-optimal
  algorithm escaping the curse of horizon.
\newblock \emph{arXiv preprint arXiv:2009.13503}, 2020.

\bibitem[Zimin and Neu(2013)]{zimin2013online}
Alexander Zimin and Gergely Neu.
\newblock Online learning in episodic markovian decision processes by relative
  entropy policy search.
\newblock In \emph{Advances in Neural Information Processing Systems 26: 27th
  Annual Conference on Neural Information Processing Systems 2013. Proceedings
  of a meeting held December 5-8, 2013, Lake Tahoe, Nevada, United States},
  pages 1583--1591, 2013.

\end{thebibliography}

\clearpage
\appendix

\section{Proofs for Section~\ref{sec:regret-analysis}}
\label{sec:reduction-proofs}

\subsection{Proof of Lemma~\ref{lem:regret-to-finite-horizon-regret}}
% \subsubsection{Connecting SSP regret and finite-horizon regret}
\label{sec:relating-SSP-regret-to-finite-horizon-regret}

In this section we relate the SSP regret and the finite-horizon regret, which relies on \cref{lem:ctg-to-finite-horizon-value,lem:opt-ctg-to-finite-horizon-value} below that compare the cost-to-go function in the SSP $\calM$ to the value function in the finite-horizon $\wh \calM$. 
To that end, we define a cost-to-go function with respect to the finite-horizon MDP $\wh \calM$ as: $\hatctg{\pi}_h(s) = \bbE \brk[s]1{\sum_{h'=h}^H c(s_{h'}, a_{h'}) \mid s_h = s}$, for any deterministic finite-horizon policy $\pi : S \times [H] \mapsto A$.

\begin{lemma}
    \label{lem:ctg-to-finite-horizon-value}
    Let $\pi$ be a stationary policy.
    For every $s \in \wh S$ and $h = 1,\dots,H+1$ it holds that $$\hatctg{\pi}_h(s) \le \ctg{\pi}(s) + 8 \costbound \Pr [s_{H+1} \ne \ssink \mid s_h=s,\wh P,\pi].$$
\end{lemma}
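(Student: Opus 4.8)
The plan is to prove the inequality by a direct coupling between the finite-horizon process in $\wh\calM$ and the SSP process induced by $\pi$, exploiting that $\wh P$ and $\hat c$ extend $P$ and $c$ to $\ssink$ and that $\ssink$ is absorbing with zero cost in $\wh\calM$. First I would dispose of the trivial case: if $\ctg{\pi}(s) = \infty$ (possible, as $\pi$ need not be proper) the bound holds vacuously, so assume $\ctg{\pi}(s) < \infty$. Fix $h \in \{1,\dots,H+1\}$, start the trajectory at $s_h = s$, and let $\tau \ge h$ be the first step at which the goal is reached ($\tau = \infty$ if it never is). Since $\pi$ is stationary and $\wh P$ is time-homogeneous, the law of $(s_h, a_h, \dots, s_{H+1})$ generated under $\pi$ in $\wh\calM$ coincides with the law of the first $H - h + 1$ steps of the SSP trajectory of $\pi$ from $s$; hence I may place the whole argument on a single probability space carrying the entire (infinite) SSP trajectory and read the finite-horizon quantities off it.

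The core estimate I would establish is the path-cost bound $\bbE \brk[s]1{\sum_{h'=h}^{H} \hat c(s_{h'}, a_{h'}) \mid s_h = s} \le \ctg{\pi}(s)$, which already equals $\hatctg{\pi}_h(s)$ (with the standing convention $\hat c(\ssink,\cdot)=0$). It holds because $\ssink$ is absorbing in $\wh\calM$, so $\sum_{h'=h}^{H} \hat c(s_{h'}, a_{h'}) = \sum_{h'=h}^{\min\{H, \tau - 1\}} c(s_{h'}, a_{h'})$ is a prefix of the non-negative SSP cost sequence and is therefore pointwise at most $\sum_{h'=h}^{\tau - 1} c(s_{h'}, a_{h'})$; taking expectations and identifying $\ctg{\pi}(s)$ with $\bbE \brk[s]1{\sum_{h'=h}^{\tau-1} c(s_{h'}, a_{h'})}$ (legitimate by monotone convergence, via the limit definition of $\ctg{\pi}$ and $c \ge 0$) gives $\hatctg{\pi}_h(s) \le \ctg{\pi}(s)$. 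The stated inequality then follows by adding the non-negative quantity $8\costbound\,\Pr[s_{H+1} \ne \ssink \mid s_h = s, \wh P, \pi]$ to the right-hand side. I would additionally point out that this quantity is exactly $\bbE[\hat c_f(s_{H+1}) \mid s_h = s, \wh P, \pi]$, since $\hat c_f(s') = 8\costbound\,\indevent{s' \ne \ssink}$; this is the slack that makes the lemma also dominate the \emph{total} finite-horizon cost (terminal cost included) that the reduction actually incurs. For $h = H+1$ the left-hand side is the empty sum and the claim is immediate.

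I do not expect a genuine obstacle here: once the coupling is in place the statement is essentially a bookkeeping identity. The only points requiring care are (i) disposing of the case $\ctg{\pi}(s) = \infty$ at the outset, (ii) the exchange of limit and expectation used to write $\ctg{\pi}(s)$ as the expected cost accumulated strictly before $\tau$, which relies on non-negativity of costs, and (iii) being explicit that the $8\costbound$ slack is precisely the expected terminal cost of $\wh\calM$ — this is what makes the lemma useful downstream, since the choice $H = 8\timebound \log(8K)$ forces $\Pr[s_{H+1} \ne \ssink \mid \piopt]$ to be tiny, so that the finite-horizon value of $\piopt$ stays within a negligible additive term of $\ctgopt$.
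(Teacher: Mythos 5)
Your proposal is correct and follows essentially the same route as the paper: the paper simply expands $\hatctg{\pi}_h(s)$ into per-step occupancy sums plus the expected terminal cost, identifies the latter as exactly $8\costbound \Pr[s_{H+1}\ne\ssink \mid s_h=s,\wh P,\pi]$, and extends the finite cost sum to the infinite SSP sum using nonnegativity, which is the state-occupancy version of your pathwise coupling argument. The only caveat is that $\hatctg{\pi}_h$ as used in the paper's proof includes the terminal cost $\hat c_f(s_{H+1})$, so your claim that the running-cost expectation ``already equals $\hatctg{\pi}_h(s)$'' is off under that reading -- but your own observation that the $8\costbound$ slack is precisely the expected terminal cost is exactly what repairs it, so the argument goes through unchanged.
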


\begin{proof}
    \begin{align*}
        \hatctg{\pi}_h(s)
        & =
        \sum_{h'=h}^H \sum_{s' \in \wh S} \Pr [s_{h'} = s' \mid s_h=s,\wh P,\pi] \; \hat c \brk1{s',\pi(s')} 
        + 
        \sum_{s' \in \wh S} \Pr [s_{H+1} = s' \mid s_h=s,\wh P,\pi] \; \hat c_f(s')
        \\
        & =
        \sum_{h'=h}^H \sum_{s' \in S} \Pr [s_{h'} = s' \mid s_h=s,P,\pi] \; c \brk1{s',\pi(s')}
        + 
        8 \costbound \; \Pr [s_{H+1} \ne \ssink \mid s_h=s,\wh P,\pi]
        \\
        & \le
        \sum_{h'=h}^\infty \sum_{s' \in S} \Pr [s_{h'} = s' \mid s_h=s,P,\pi] \; c \brk1{s',\pi(s')} 
        + 
        8 \costbound \; \Pr [s_{H+1} \ne \ssink \mid s_h=s,\wh P,\pi]
        \\
        & =
        \ctg{\pi}(s) + 8 \costbound \; \Pr [s_{H+1} \ne \ssink \mid s_h=s,\wh P,\pi]. \qedhere
    \end{align*}
\end{proof}

\begin{lemma}         
    \label{lem:opt-ctg-to-finite-horizon-value}
    For every $s \in \wh S$, it holds that $\ctgopt(s) \ge \hatctg{\piopt}_1(s) - \frac{\costbound}{K}$.
\end{lemma}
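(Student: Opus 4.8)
The goal is to lower bound $\ctgopt(s)$ by the finite-horizon value $\hatctg{\piopt}_1(s)$ of the \emph{same} optimal SSP policy $\piopt$, losing only a small additive term. The plan is to proceed in two steps: first observe that restricting the infinite-horizon cost-to-go to the first $H$ time steps can only decrease it, and second handle the terminal cost $\hat c_f(s_{H+1}) = 8\costbound\,\ind\{s_{H+1}\ne\ssink\}$ that $\hatctg{\piopt}_1$ implicitly ignores (since $\hatctg{\piopt}$ as defined in \cref{lem:ctg-to-finite-horizon-value}'s preamble does not include $\hat c_f$) — wait, more precisely, $\hatctg{\pi}_h(s) = \bbE[\sum_{h'=h}^H c(s_{h'},a_{h'}) \mid s_h=s]$ does \emph{not} include the terminal cost, so actually the comparison is cleaner.

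Concretely, first I would write
\[
    \ctgopt(s)
    =
    \bbE_{\piopt}\brk[s]3{\sum_{h'=1}^\infty c(s_{h'},a_{h'}) \mid s_1 = s}
    \ge
    \bbE_{\piopt}\brk[s]3{\sum_{h'=1}^H c(s_{h'},a_{h'}) \mid s_1 = s}
    =
    \hatctg{\piopt}_1(s),
\]
since costs are nonnegative, so truncating the sum at $H$ only removes nonnegative terms, and since $\piopt$ is stationary the finite-horizon cost-to-go under $\piopt$ in $\wh\calM$ agrees with this truncated sum (the dynamics of $\wh P$ and $\hat c$ agree with $P,c$ on $S$, and once $\ssink$ is reached no further cost is incurred in either model). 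This already gives the inequality \emph{without} the $\costbound/K$ slack. So the only subtlety is whether the paper's intended $\hatctg{\piopt}_1$ secretly includes the terminal cost $\hat c_f$; if the finite-horizon value they compare against in \cref{lem:regret-to-finite-horizon-regret} is $\hatctg{\pi^m}_1$ including $\hat c_f$, then I would instead bound $\hat c_f(s_{H+1}) = 8\costbound\,\ind\{s_{H+1}\ne\ssink\} \le 8\costbound \Pr[s_{H+1}\ne\ssink \mid s_1=s,\wh P,\piopt]$ in expectation, and use that $\piopt$ reaches $\ssink$ within $H = 8\timebound\log(8K)$ steps with probability at least $1 - 1/(8K)$ by a Markov-inequality argument on the hitting time (whose expectation is at most $\timebound$), so the extra term is at most $8\costbound/(8K) = \costbound/K$.

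The hitting-time tail bound is the one genuinely quantitative step: since $\timeopt(s) \le \timebound$, a standard argument (iterating Markov's inequality, e.g.\ splitting time into blocks of length $4\timebound$ so that within each block the goal is reached with probability at least $3/4$ uniformly over starting states) shows $\Pr[\text{not reached } \ssink \text{ by time } H] \le 2^{-H/(4\timebound)} \le 2^{-2\log(8K)} \le 1/(8K)$ for $H = 8\timebound\log(8K)$. Multiplying by the $8\costbound$ terminal cost yields the claimed $\costbound/K$ additive loss. I expect no real obstacle here — the main point to get right is simply tracking exactly which quantities $\hatctg{\piopt}_1$ includes (terminal cost or not) so the statement matches the way it is consumed in the proof of \cref{lem:regret-to-finite-horizon-regret}; everything else is nonnegativity of costs plus the routine geometric tail bound on the optimal policy's hitting time.
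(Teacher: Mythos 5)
Your proposal is correct and matches the paper's argument: the paper also resolves the lemma by bounding the terminal-cost contribution as $8\costbound\Pr[s_{H+1}\ne\ssink]\le 8\costbound/(8K)=\costbound/K$, where the hitting-time tail bound $\Pr[\text{goal not reached in } H=8\timebound\log(8K) \text{ steps}]\le 1/(8K)$ is imported from \citet[Lemma 7]{chen2020minimax} (whose proof is exactly your Markov-plus-blocking argument) and then plugged into \cref{lem:ctg-to-finite-horizon-value}, which already encodes your observation that the truncated cost under $\piopt$ is at most $\ctgopt(s)$. Your only hesitation—whether $\hatctg{\piopt}_1$ includes the terminal cost—is settled by the proof of \cref{lem:ctg-to-finite-horizon-value}: it does, and that is the case you handle correctly.
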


\begin{proof}
    The probability that $\piopt$ does not reach the goal in $H$ steps is at most $1/(8 K)$ due to \citet[Lemma 7]{chen2020minimax}. 
    Plugging that into \cref{lem:ctg-to-finite-horizon-value} yields the desired result.
\end{proof}

\begin{proof}[Proof of \cref{lem:regret-to-finite-horizon-regret}]
    Consider the first interval of the first episode.
    If it ends in the goal state then
    \[
        \sum_{i=1}^{\Ikk{1}} C^1_i
        =
        \sum_{h=1}^H C^1_h + \hat c_f(\ssink)
        =
        \sum_{h=1}^H C^1_h + \hat c_f(s_{H+1}^1).
    \]
    If the agent did not reach $\ssink$ in the first interval, then the agent also suffered the $8 \costbound$ terminal cost and thus
    \begin{align*}
        \sum_{i=1}^{\Ikk{1}} C^1_i
        & =
        \sum_{h=1}^H C^1_h + \hat c_f(s_{H+1}^1) + \sum_{i=H+1}^{\Ikk{1}} C^1_i - \hat c_f(s_{H+1}^1)
        \\
        & =
        \sum_{h=1}^H C^1_h + \hat c_f(s_{H+1}^1) + \sum_{i=H+1}^{\Ikk{1}} C^1_i - 8 \costbound
        \\
        & \le
        \sum_{h=1}^H C^1_h + \hat c_f(s_{H+1}^1) + \sum_{i=H+1}^{\Ikk{1}} C^1_i - \hatctg{\piopt}_1(s^1_{H+1}),
    \end{align*}
    where the last inequality follows by combining \cref{lem:opt-ctg-to-finite-horizon-value} with our assumption that $\ctgopt(s) \le \costbound$. 
    
    Repeating this argument iteratively we get, for every episode $k$,
    \begin{align*}
        \sum_{i=1}^{\Ik} C^k_i - \ctgopt(\sinit)
        & \le
        \sum_{i=1}^{\Ik} C^k_i - \hatctg{\piopt}_1(s_1^m) + \frac{\costbound}{K}
        \\
        & \le
        \sum_{m \in \numintervals_k} \sum_{h=1}^H C^m_h + \hat c_f(s_{H+1}^m) - \hatctg{\piopt}_1(s_1^m) + \frac{\costbound}{K} \\
        &=
        \sum_{m \in \numintervals_k} \brk4{\sum_{h=1}^H C^m_h + \hat c_f(s_{H+1}^m) - \hatctg{\pi^m}(s_1^m)}
        + 
        \sum_{m \in \numintervals_k} \brk4{\hatctg{\pi^m}(s_1^m) - \hatctg{\piopt}_1(s_1^m)}
        + 
        \frac{\costbound}{K},
    \end{align*}
    where $\numintervals_k$ is the set of intervals that are contained in episode $k$, and the first inequality follows from \cref{lem:opt-ctg-to-finite-horizon-value}.
    Summing over all episodes obtains
    \[
        \regret 
        \le
        \sum_{m=1}^M \brk4{\sum_{h=1}^H C^m_h + \hat c_f(s_{H+1}^m) - \hatctg{\pi^m}(s_1^m)}
        + 
        \sum_{m=1}^M \brk4{\hatctg{\pi^m}(s_1^m) - \hatctg{\piopt}_1(s_1^m)}
        + 
        \frac{\costbound}{K}.
    \]
    Notice that the second summand in the bound above is exactly the expected finite-horizon regret over the $M$ intervals.
    We finish the proof of the lemma by using the regret guarantees of $\calA$ (\cref{def:admissible-alg}).
\end{proof}

\subsection{Proof of Lemma~\ref{lem:cost-deviation-from-value-function}}
\label{sec:cost-deviation}

In this section we bound the deviation of the actual cost in each interval from its expected value. 
To do that, we apply \cref{lem:reach-goal-or-unknown-wp-1/2} below to bound the second moment of the cumulative cost in an interval up until an unknown state-action pair or the goal state were reached. 
Here $\trajconcat{m}$ denotes the union of all information prior to the $m^{th}$ interval together with the first state of the $m^{th}$ interval (more formally, $\{\trajconcat{m}\}_{m\geq 1}$ is a filtration).
Moreover, we denote by $h_m$ the last time step before an unknown state-action pair or the goal state were reached in interval $m$ (or $H$ if they were not reached).

\begin{lemma}
    \label{lem:reach-goal-or-unknown-wp-1/2}
    Let $m$ be an interval and assume that the reduction is performed using an admissible algorithm $\calA$.
    If the good event of $\calA$ holds until the beginning of interval $m$, then the agent reaches the goal state or an unknown state-action pair with probability at least $\frac12$.
    Moreover, denote by $C^m = \sum_{h=1}^{h_m} C^m_h + \hat c_f(s^m_{H+1}) \indevent{h_m = H}$ the cumulative cost in the interval until time $h_m$. 
    Then,
    % $\Pr \bigl[ C^m > r \mid \trajconcat{m} \bigr] \le 3 \exp(\nicefrac{-r}{44 \costbound})$ for any $r > 0$.
    $\bbE\brk[s]{(C^m)^2 \mid \trajconcat{m}} \le 2 \cdot 10^5 \costbound^2 + 4 \costbound$.
\end{lemma}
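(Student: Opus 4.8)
The plan is to reduce both assertions of the lemma to one quantitative estimate: under the good event of $\calA$ one has $\E[C^m \mid \trajconcat{m}] \le 4\costbound$, and, more generally, from any state $s$ reached at any step $h$ of interval $m$, the expected cost accumulated from $(s,h)$ until the stopping time $h_m$ is $O(\costbound)$. The probability claim then drops out immediately: if the agent reached neither $\ssink$ nor an unknown state-action pair with probability less than $\tfrac12$, then with probability more than $\tfrac12$ we have $h_m = H$ and $s^m_{H+1} \ne \ssink$, so on that event $C^m \ge \hat c_f(s^m_{H+1}) = 8\costbound$, which forces $\E[C^m \mid \trajconcat{m}] > 4\costbound$ --- a contradiction. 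The second-moment bound is then obtained from the standard ``$\E[X^2]$ from conditional first moments'' technique, described below.

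To establish $\E[C^m \mid \trajconcat{m}] \le 4\costbound$ I would compare the algorithm's optimistic estimate $\underline{J}^m$ with the value of $\pi^m$ in the \emph{truncated} model $\wh\calM'$ obtained from $\wh\calM$ by making every unknown state-action pair and the goal $\ssink$ absorbing with zero running cost; writing $V_h(s)$ for the value of $\pi^m$ from $(s,h)$ in $\wh\calM'$ (so that $V_1(s^m_1) = \E[C^m \mid \trajconcat{m}]$, since $\pi^m$ and the known/unknown labeling are $\trajconcat{m}$-measurable). The key claim is $\underline{J}^m_h(s) \ge V_h(s) - 2\costbound$ for all $s$ and $h$, proved by backward induction on $h$: both sides equal $\hat c_f(s)$ at $h = H+1$; if $(s,\pi^m_h(s))$ is unknown or $s = \ssink$ then $V_h(s) = 0 \le \underline{J}^m_h(s)$; and at a \emph{known} pair $(s,\pi^m_h(s))$, using that an admissible (model-based optimistic) algorithm satisfies $\underline{J}^m_h(s) \ge \tilde c^m_h(s,\pi^m_h(s)) + \E_{\wt P^m(\cdot\mid s,\pi^m_h(s))}[\underline{J}^m_{h+1}]$ with $\underline{J}^m_h \ge 0$ (properties~(ii)--(iv) of \cref{def:admissible-alg}), the accuracy of $\tilde c^m,\wt P^m$ on known pairs (property~(iv)), and the uniform bound $0\le \underline{J}^m_{h+1}\le 9\costbound$ (optimism, property~(iii), with $9\costbound$ the finite-horizon optimal-cost bound used to initialize $\calA$), we get $\underline{J}^m_h(s) \ge \hat c(s,\pi^m_h(s)) - \tfrac{2\costbound}{H} + \E_{\wh P(\cdot\mid s,\pi^m_h(s))}[\underline{J}^m_{h+1}]$; combined with the inductive hypothesis this adds only $\tfrac{2\costbound}{H}$ to the error per known step, hence at most $2\costbound$ over the whole horizon. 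Evaluating at $h=1$ and invoking optimism once more at the start state --- where the \emph{tight} bound $\underline{J}^m_1(s^m_1) \le \ctgopt(s^m_1) + O(\costbound/K) \le 2\costbound$ holds by \cref{lem:ctg-to-finite-horizon-value,lem:opt-ctg-to-finite-horizon-value} and $\ctgopt(s^m_1)\le\costbound$ --- yields $V_1(s^m_1)\le 4\costbound$. Started from an arbitrary $(s,h)$, the same induction gives $V_h(s) \le \underline{J}^m_h(s) + 2\costbound \le 11\costbound$.

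For the second moment, decompose $C^m = R^m + T^m$ with $R^m = \sum_{h=1}^{h_m} C^m_h$ and $T^m = \hat c_f(s^m_{H+1})\indevent{h_m=H} \le 8\costbound$, so that $(C^m)^2 \le (R^m)^2 + 16\costbound\, R^m + 64\costbound^2$. Since $C^m_h \in [0,1]$, $(R^m)^2 = \sum_{h\le h_m}(C^m_h)^2 + 2\sum_{1\le h<h'\le h_m} C^m_h C^m_{h'} \le R^m + 2\sum_{h\le h_m} C^m_h \sum_{h'=h+1}^{h_m} C^m_{h'}$; conditioning the inner sum on the history through step $h$ and using the ``remaining cost $\le 11\costbound$'' estimate from the previous paragraph (via the Markov property of $\pi^m$ in $\wh\calM'$), together with $\E[R^m\mid\trajconcat{m}] \le \E[C^m\mid\trajconcat{m}] \le 4\costbound$, one gets $\E[(R^m)^2\mid\trajconcat{m}] = O(\costbound^2) + \E[\sum_{h\le h_m}(C^m_h)^2\mid\trajconcat{m}]$, where the last term is itself at most $\E[R^m\mid\trajconcat{m}] \le 4\costbound$. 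Putting the pieces together gives $\E[(C^m)^2\mid\trajconcat{m}] = O(\costbound^2) + 4\costbound$; tracking the (deliberately crude) constants yields the stated $2\cdot 10^5\costbound^2 + 4\costbound$, the $4\costbound$ being exactly the contribution of $\E[\sum_h(C^m_h)^2\mid\trajconcat{m}]$.

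The main obstacle is the backward-induction comparison between $\underline{J}^m$ and the truncated value $V_h$ of the agent's own policy: one must (a) handle the three regimes --- known pair, unknown pair, goal --- consistently in the recursion, using that truncation only decreases $V$ while $\underline{J}^m \ge 0$ everywhere, so optimism survives transitions into the absorbing part; (b) keep the per-step approximation error at $O(\costbound/H)$ so that it telescopes to $O(\costbound)$ over the horizon; and (c) crucially use the \emph{tight} start-state bound $\underline{J}^m_1(s^m_1)\le 2\costbound$ (rather than the uniform $9\costbound$), so that $\E[C^m\mid\trajconcat{m}]\le 4\costbound$ genuinely undercuts the $8\costbound$ terminal penalty --- the slack that powers the probability-$\tfrac12$ claim. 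The second-moment calculation itself is routine once the first-moment estimates (from any state) are in hand.
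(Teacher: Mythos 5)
Your proposal is correct, and its core is the same as the paper's: you contract the unknown state--action pairs with the goal, use property (iv) of \cref{def:admissible-alg} on known pairs to show that the value of $\pi^m$ in the contracted \emph{true} model exceeds the optimistic estimate $\underline{J}^m$ by at most $2\costbound$ (your explicit backward induction with error $2\costbound/H$ per known step is just the value difference lemma that the paper invokes, unrolled by hand), then combine optimism with \cref{lem:ctg-to-finite-horizon-value,lem:opt-ctg-to-finite-horizon-value} to get the start-state bound $4\costbound$ and the any-state bound $11\costbound$, and derive the probability-$\tfrac12$ claim by a Markov-type argument against the $8\costbound$ terminal cost. Where you genuinely diverge is the second moment: the paper first swaps the realized costs for their conditional means (paying the extra $4\costbound$), then partitions the horizon into blocks of accumulated expected cost in $[11\costbound,22\costbound]$, and uses the Bellman equation plus optional stopping to derive a quadratic inequality in the expected number of blocks, $121\costbound^2\E[Q]^2 \le 4356\costbound^2\E[Q]+32\costbound^2$, whence $\E[Q]\le 37$; you instead expand $(C^m)^2$ directly, bound the diagonal $\sum_h (C^m_h)^2$ by the first moment using $C^m_h\in[0,1]$, and bound the cross terms by conditioning on the history at step $h$ and applying the $11\costbound$ remaining-cost bound via the Markov property. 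Your route is more elementary and yields a constant of order $10^2\costbound^2+4\costbound$ rather than $2\cdot 10^5\costbound^2+4\costbound$, at no loss of generality; the paper's block argument is the same device it reuses verbatim in \cref{lem:second-moment-bound-finite-horizon}, which is presumably why it is kept. Two small presentational points: define the contracted model as the paper does (redirect transitions into unknown-pair states to $\ssink$, which carries zero terminal cost), so that $V_1(s^m_1)$ equals $\E[C^m\mid\trajconcat{m}]$ exactly and your base case $V_{H+1}=\hat c_f$ is literally correct; and note that the Bellman-type inequality $\underline{J}^m_h(s)\ge \tilde c^m_h(s,\pi^m_h(s))+\E_{\wt P^m}[\underline{J}^m_{h+1}]$ together with $\underline{J}^m\ge 0$ is not spelled out in \cref{def:admissible-alg} --- though the paper's own step $\wt J^m_h(s)\le\underline{J}^m_h(s)$ relies on the same implicit property, so this is a shared, not a new, informality.
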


\begin{proof}
    The result is given by bounding the total expected cost suffered by the agent in another MDP (defined below) where all unknown state-action pairs are contracted with the goal state. The cost in this MDP is exactly $C^m$ by definition. 
    
    Let $\pi^m$ be the optimistic policy chosen by the algorithm for interval $m$.
    Consider the following finite-horizon MDP $\wh \calM^m = (\wh S,A,\wh P^m,H,\hat c,\hat c_f)$ that contracts unknown state-action pairs with the goal:
    \[
        \wh P^m_h(s' \mid s,a)
        =
        \begin{cases}
            0, \quad & (s',\pi^m_{h+1}(s')) \text{ is unknown};
            \\
            P(s' \mid s,a), \quad & s' \ne \ssink \text{ and } (s',\pi^m_{h+1}(s')) \text{ is known};
            \\
            1 - \sum_{s'' \in \wh S \setminus \{\ssink\}} \wh P^m_h(s'' \mid s,a), & s' = \ssink.
        \end{cases}
    \]
    
    Denote by $J^m$ the cost-to-go function of $\pi^m$ in the finite-horizon MDP $\wh \calM^m$.
    Further, let $\wt P'^m$ be the transition function induced by $\wt P^m$ in the MDP $\wh \calM^m$ similarly to $\wh P^m$, and $\wt J^m$ the cost-to-go function of $\pi^m$ with respect to $\wt P'^m$ (and with cost function $\tilde c^m$). 
    Notice that $\pi^m$ can only reach the goal state quicker in $\wh \calM^m$ than in $\wh \calM$, so that $\wt J^m_h(s) \le \underbar J_h^m(s) \le \hatctg{\piopt}_h(s)$ for any $s \in \wh S$.
    By the value difference lemma (see, e.g., \citealp{efroni2020optimistic}), for every $s,h$ such that $(s,\pi^m_h(s))$ is known,
    \begin{align*}
        & J^m_h(s)
        =
        \wt J^m_h(s) + \sum_{h'=h}^H \bbE \Bigl[ \hat c(s_{h'},a_{h'}) - \tilde c^m_{h'}(s_{h'},a_{h'}) \mid s_h=s,\wh P^m,\pi^m \Bigr] 
        \\
        & \qquad + 
        \sum_{h'=h}^H \bbE \Bigl[ \bigl( \wh P^m_{h'}(\cdot \mid s_{h'},a_{h'}) - \wt P'^m_{h'}(\cdot \mid s_{h'},a_{h'}) \bigr) \cdot \wt J^m \mid s_h=s,\wh P^m,\pi^m \Bigr]
        \\
        & \le
        \wt J^m_h(s) + H \max_{\substack{(s,\pi^m_{h'}(s)) \\ \text{known}}} | c(s,\pi^m_{h'}(s)) - \tilde c^m_{h'}(s,\pi^m_{h'}(s)) | 
        + 
        H \lVert \wt J^m \rVert_\infty \max_{\substack{(s,\pi^m_{h'}(s)) \\ \text{known}}} \lVert \wh P^m_{h'}(\cdot | s,\pi^m_{h'}(s)) - \wt P'^m_{h'}(\cdot | s,\pi^m_{h'}(s))  \rVert_1
        \\
        & \overset{(a)}{\le}
        \hatctg{\piopt}_h(s) + H \max_{\substack{(s,\pi^m_{h'}(s)) \\ \text{known}}} | c(s,\pi^m_{h'}(s)) - \tilde c^m_{h'}(s,\pi^m_{h'}(s)) | 
        \\
        & \qquad + 
        H \lVert \hatctg{\piopt}_h(s) \rVert_\infty \max_{\substack{(s,\pi^m_{h'}(s)) \\ \text{known}}} \lVert \wh P(\cdot | s,\pi^m_{h'}(s)) - \wt P^m(\cdot | s,\pi^m_{h'}(s)) \rVert_1
        \\
        & \le
        \hatctg{\piopt}_h(s) + H \max_{\substack{(s,\pi^m_{h'}(s)) \\ \text{known}}} | c(s,\pi^m_{h'}(s)) - \tilde c^m_{h'}(s,\pi^m_{h'}(s)) | 
        + 
        9 H \costbound \max_{\substack{(s,\pi^m_{h'}(s)) \\ \text{known}}} \lVert \wh P(\cdot | s,\pi^m_{h'}(s)) - \wt P^m(\cdot | s,\pi^m_{h'}(s)) \rVert_1,
    \end{align*}
    where the last inequality follows by optimism and since $\hatctg{\piopt}_h(s) \le 9 \costbound$ (\cref{lem:ctg-to-finite-horizon-value}), and (a) follows because
    \begin{align*}
        \lVert & \wh P^m_h(\cdot | s,a) - \wt P'^m_h(\cdot | s,a)  \rVert_1
        =        
        \sum_{\substack{(s',\pi^m_{h+1}(s')) \\ \text{known}}} | \wh P^m_h(s' | s,a) - \wt P'^m_h(s' | s,a) | + | \wh P^m_h(\ssink | s,a) - \wt P'^m_h(\ssink | s,a) |
        \\
        & =
        \sum_{\substack{(s',\pi^m_{h+1}(s')) \\ \text{known}}} | \wh P(s' | s,a) - \wt P^m(s' | s,a) | 
        + 
        \Bigl| \sum_{\substack{(s',\pi^m_{h+1}(s')) \\ \text{unknown}}} \wh P(s' | s,a) + \wh P(\ssink | s,a) - \wt P^m(s' | s,a) - \wt P^m(\ssink | s,a) \Bigr|
        \\
        & \le
        \lVert \wh P(\cdot | s,a) - \wt P^m(\cdot | s,a) \rVert_1.
    \end{align*} 
    Thus $J^m_h(s) \le \hatctg{\piopt}_h(s) + 2 \costbound$ since the number of visits to each known state-action pair is at least $\knownthresh \log \frac{M H |S| |A|}{\delta}$ and by property (iv) of admissible algorithms (\cref{def:admissible-alg}). 
    Also note that $J^m_h(s) \le 11 \costbound$ by \cref{lem:ctg-to-finite-horizon-value}, and for $h=1$ in particular we use \cref{lem:opt-ctg-to-finite-horizon-value} to obtain $J^m_1(s) \le 4 \costbound$.
    
    By Markov inequality, the probability that the agent suffers a cost of more than $8 \costbound$ in $\wh \calM^m$ is at most~$\frac12$.
    Notice that all costs are non-negative and there is a terminal cost of $8 \costbound$ in all states but the goal, therefore the agent cannot suffer a cost of less than $8 \costbound$ unless she reaches the goal.
    So the probability to reach the goal is at least $\frac12$.
    Moreover, note that the probability to reach the goal in $\wh \calM^m$ is equal to the probability to reach the goal or an unknown state-action pair in $\wh \calM$.
    
    % To yield the high probability bound, notice that when starting at \emph{any} $h \ge 1$, the total cost until $h_m$ is at most $22 \costbound$ with probability at least $\frac12$ by Markov inequality.
    % The high probability bound is then given by iteratively applying this argument for $\lfloor r / 22 \costbound \rfloor$ times and simplifying.
    
    Similarly, we notice that $\bbE \brk[s]{(C^m)^2 \mid \trajconcat{m}} = \bbE \brk[s]{(\wh C)^2}$, where $\wh C$ is the cumulative cost in $\wh \calM^m$, and we override notation by denoting $\wh C = \sum_{h=1}^H C_h + \hat c_f(s_{H+1})$. We have that,
    \begin{align*}
        \bbE \brk[s]{(\wh C)^2}
        &=
        \bbE \brk[s]4{\brk4{\sum_{h=1}^H C_h + \hat c_f(s_{H+1})}^2}  \\
        &=
        \bbE \brk[s]4{\brk4{\sum_{h=1}^{H-1} C_h + \hat c(s_H,a_H) + \hat c_f(s_{H+1})}^2} 
        \\
        & \quad +
        2 \bbE \brk[s]4{\brk4{\sum_{h=1}^{H-1} C_h + \hat c(s_H,a_H) + \hat c_f(s_{H+1})}\brk{C_H - \hat c(s_H,a_H)}}+
        \bbE \brk[s]{\brk{C_H - \hat c(s_H,a_H)}^2}.
    \end{align*}
    The second summand is zero since the realization of $C_H$ is independent of all other randomness given $s_H$.
    Also, since $C_H \in [0,1]$, the third summand satisfies 
    \[
        \bbE \brk[s]{\brk{C_H - \hat c(s_H,a_H)}^2}
        \le
        \bbE \brk[s]{\brk{C_H}^2}
        \le
        \bbE \brk[s]{C_H}
        =
        \bbE \brk[s]{\hat c(s_H,a_H)}.
    \]
    Thus we arrived at
    \[
        \bbE \brk[s]{(\wh C)^2}
        \le
        \bbE \brk[s]4{\brk4{\sum_{h=1}^{H-1} C_h + \hat c(s_H,a_H) + \hat c_f(s_{H+1})}^2} 
        +
        \bbE \brk[s]{\hat c(s_H,a_H)},
    \]
    and iterating this argument yields
    \[
        \bbE \brk[s]{(\wh C)^2}
        \le
        \bbE \brk[s]4{\brk4{\sum_{h=1}^{H} \hat c(s_h,a_h) + \hat c_f(s_{H+1})}^2} 
        +
        \bbE \brk[s]4{\sum_{h=1}^{H} \hat c(s_h,a_h)}.
    \]
    Here, the second summand equals $J^m_1(s_1)$ which is at most $4 \costbound$.
    
    Next, for the first summand, we split the time steps into $Q$ blocks as follows.
    We denote by $t_1$ the first time step in which we accumulated a total cost of at least $11 \costbound$ (or $H+1$ if it did not occur), by $t_2$ the first time step in which we accumulated a total cost of at least $11 \costbound$ after $t_1$, and so on up until $t_Q = H+1$. 
    Then, the first block consists of time steps $t_0=1,\dots,t_1-1$, the second block consists of time steps $t_1,\dots,t_2-1$, and so on.
    Since $J_h^m(s) \le 11 \costbound$ we must have $\hat c(s_h,a_h) \le 11 \costbound$ for all $h=1,\ldots,H$ and thus in every such block the total cost is between $11 \costbound$ and $22 \costbound$. Thus, 
    \begin{align*}
        \bbE \brk[s]4{\brk4{\sum_{h=1}^{H} \hat c(s_h,a_h) + \hat c_f(s_{H+1})}^2}  
        & \ge
        \bbE \brk[s]4{\sum_{h=1}^{H} \hat c(s_h,a_h) + \hat c_f(s_{H+1})}^2
        \\
        & =
        \bbE \brk[s]4{\sum_{i=0}^{Q-1} \sum_{h=t_i}^{t_{i+1}-1} \hat c(s_h,a_h) + \hat c_f(s_{H+1})}^2
        \\
        & \ge
        \bbE [11 \costbound Q]^2
        =
        121 \costbound^2 \bbE[Q]^2,
    \end{align*}
    by Jensen's inequality.
    On the other hand,
    \begin{align*}
        \bbE & \brk[s]4{\brk4{\sum_{h=1}^{H} \hat c(s_h,a_h) + \hat c_f(s_{H+1})}^2}
        =
        \bbE \brk[s]4{\brk4{\sum_{h=1}^{H} \hat c(s_h,a_h) + \hat c_f(s_{H+1}) - J_1^m(s_1) + J_1^m(s_1) }^2}
        \\
        & \qquad \le
        2 \bbE \brk[s]4{\brk4{\sum_{h=1}^{H} \hat c(s_h,a_h) + \hat c_f(s_{H+1}) - J_1^m(s_1)}^2}
        + 
        2 J_1^m(s_1)^2 \\
        & \qquad \le
        2 \bbE \brk[s]4{ \brk4{ \sum_{i=0}^{Q-1} \sum_{h=t_i}^{t_{i+1}-1} \hat c(s_h,a_h)  - J^m_{t_i}(s_{t_i}) + J^m_{t_{i+1}}(s_{t_{i+1}})}^2}
        +
        32 \costbound^2 \\
        & \qquad \overset{(a)}{=}
        4 \bbE \brk[s]4{\sum_{i=0}^{Q-1} \brk4{\sum_{h=t_i}^{t_{i+1}-1} \hat c(s_h,a_h) - J^m_{t_i}(s_{t_i}) + J^m_{t_{i+1}}(s_{t_{i+1}})}^2}
        +
        32 \costbound^2 \\
        & \qquad \le
        4 \bbE [Q \cdot (33 \costbound)^2] + 32 \costbound^2
        \le
        4356 \costbound^2 \bbE [Q] + 32 \costbound^2.
    \end{align*}
    For (a) we used the fact that $\bbE\brk[s]{\sum_{h=t_i}^{t_{i+1}-1} \hat c(s_h,a_h) - J_{t_i}(s_{t_i}) + J_{t_{i+1}}(s_{t_{i+1}})} = 0$ using the Bellman optimality equations and conditioned on all past randomness up until time $t_i$, and the fact that $t_{i+1}$ is a (bounded) stopping time by the optional stopping theorem, in the following manner,
    \begin{align*}
        \bbE\brk[s]4{\sum_{h=t_i}^{t_{i+1}-1} \hat c(s_h,a_h) - J^m_{t_i}(s_{t_i}) + J^m_{t_{i+1}}(s_{t_{i+1}})}
        & =
        \bbE\brk[s]4{\sum_{h=t_i}^{t_{i+1}-1} \hat c(s_h,a_h) - J^m_h(s_h) + J^m_{h+1}(s_{h+1})}
        \\
        & =
        \bbE\brk[s]4{\sum_{h=t_i}^{t_{i+1}-1} \bbE\brk[s]2{\hat c(s_h,a_h) - J^m_h(s_h) + J^m_{h+1}(s_{h+1}) ~\big|~ s_1,\ldots,s_h}}
        \\
        & =
        \bbE\brk[s]4{\sum_{h=t_i}^{t_{i+1}-1} \hat c(s_h,a_h) + \bbE\brk[s]2{ J^m_{h+1}(s_{h+1}) \mid s_h} - J^m_h(s_h)}
        =
        0.
    \end{align*}
    Thus, we have
    $
        121 \costbound^2 \bbE[Q]^2
        \le
        4356 \costbound^2 \bbE [Q] + 32 \costbound^2,
    $
    and solving for $\bbE[Q]$ we obtain $\bbE[Q] \le 37$, so
    \[
        \bbE \brk[s]4{\brk4{\sum_{h=1}^{H} \hat c(s_h,a_h) + \hat c_f(s_{H+1})}^2} 
        \le 
        2 \cdot 10^5 \costbound^2, 
    \]
    and therefore
    \[
        \bbE \brk[s]{(\wh C)^2}
        \le
        \bbE \brk[s]4{\brk4{\sum_{h=1}^{H} \hat c(s_h,a_h) + \hat c_f(s_{H+1})}^2} 
        +
        \bbE \brk[s]4{\sum_{h=1}^{H} \hat c(s_h,a_h)}
        \le
        2 \cdot 10^5 \costbound^2 + 4 \costbound.
        \qedhere
    \]
\end{proof}

\begin{proof}[Proof of \cref{lem:cost-deviation-from-value-function}]
    Recall that $h_m$ is the last time step before an unknown state-action pair or the goal state were reached (or $H$ if they were not reached) in interval $m$, and let $\geventi{m}$ be the event that the good event of algorithm $\calA$ holds up to the beginning of interval $m$.
    We start by decomposing the sum as follows
    \begin{align*}
        \sum_{m=1}^\numintervals \brk*{\sum_{h=1}^H C^m_h + \hat c_f(s_{H+1}^m) - \hatctg{\pi^m}_1(s_1^m)} \indevent{\geventi{m}}
        & = 
        \sum_{m=1}^\numintervals \brk*{\sum_{h=1}^{h_m} C^m_h + c_f(s^m_{H+1}) \indevent{h_m = H} - \hatctg{\pi^m}_1(s_1^m)} \indevent{\geventi{m}}
        \\
        & \qquad +
        \sum_{m=1}^\numintervals \brk*{\sum_{h=h_m+1}^H C^m_h + \hat c_f(s_{H+1}^m)\indevent{h_m \ne H}} \indevent{\geventi{m}}.
    \end{align*}
    The second term is trivially bounded by $(H + 8 \costbound) |S| |A| \knownthresh \log \frac{M H |S| |A|}{\delta}$ since every state-action pair becomes known after $\knownthresh \log \frac{M H |S| |A|}{\delta}$ visits.
    Next, since
    \begin{align*}
        \bbE \brk[s]*{\brk*{\sum_{h=1}^{h_m} C^m_h + c_f(s^m_{H+1}) \indevent{h_m = H}} \indevent{\geventi{m}} ~\Bigg|~ \trajconcat{m}} 
        & =
        \bbE \brk[s]*{\sum_{h=1}^{h_m} C^m_h + c_f(s^m_{H+1}) \indevent{h_m = H} ~\Bigg|~ \trajconcat{m}} \indevent{\geventi{m}}
        \\
        & \le
        \hatctg{\pi^m}_1(s_1^m) \indevent{\geventi{m}},
    \end{align*}
    the first term is bounded by $\sum_{m=1}^\numintervals X^m$ where
    \[
        X^m 
        = 
        \brk*{\sum_{h=1}^{h_m} C^m_h + c_f(s^m_{H+1}) \indevent{h_m = H} - \bbE \brk[s]*{\sum_{h=1}^{h_m} C^m_h + c_f(s^m_{H+1}) \indevent{h_m = H} ~\Bigg|~ \trajconcat{m}}} \indevent{\geventi{m}}
    \]
    is a martingale difference sequence bounded by $H+8\costbound$ with probability $1$.
    For any fixed $M = m$, by Freedman's inequality (\cref{lemma: freedmans inequality}, we have with probability at least $1 - \frac{\delta}{8 m(m+1)}$,
    \[
        \sum_{m'=1}^m X^{m'}
        \le
        \eta \sum_{m'=1}^m \bbE [ (X^{m'})^2 \mid \trajconcat{m'}] + \frac{\log(8 m(m+1) / \delta)}{\eta}
    \]
    for any $\eta \in (0 , 1 / (H+8\costbound))$.
    By \cref{lem:reach-goal-or-unknown-wp-1/2}, for some universal constant $\alpha > 0$, that
    \[
        \sum_{m'=1}^m\bbE [ (X^{m'})^2 \mid \trajconcat{m'}] \le \alpha m (\costbound^2 + \costbound),
    \]
    and setting $\eta = \min \bigl\{ \sqrt{\frac{\log (8m(m+1) / \delta) }{ (\costbound^2 + \costbound) m}}  ,\frac{ 1 }{ H + 8\costbound} \bigr\}$ obtains
    \begin{align*}
        \sum_{m'=1}^m X^{m'}
        & \le
        O \Bigl( \sqrt{(\costbound^2 + \costbound) m \log \frac{m}{\delta}} + (H + \costbound) \log \frac{m}{\delta} \Bigr).
    \end{align*}
    Taking a union bound on all values of $m=1,2,\ldots$ that the inequality above holds for all such values of $m$ simultaneously with probability at least $1-\delta/8$. In particular, with probability at least $1-\delta/8$, we have
    \begin{align*}
        \sum_{m=1}^\numintervals X^{m}
        & \le
        O \Bigl( \sqrt{(\costbound^2 + \costbound) \numintervals \log \frac{\numintervals}{\delta}} + (H + \costbound) \log \frac{\numintervals}{\delta} \Bigr).
    \end{align*}

    The proof is concluded via a union bound---both Freedman inequality and the good event of $\calA$ hold with probability at least $1 - \frac{3}{8} \delta$, and this implies that $\indevent{\geventi{m}} = 1$ for every $m$.
\end{proof}

\subsection{Proof of Lemma~\ref{lem:bound-on-number-of-intervals}}
\label{sec:bounding-number-of-intervals}

In this section we bound the number of intervals $\numintervals$ with high probability for any admissible algorithm.
%
% From now on we assume that the reduction is performed using an admissible algorithm.
To that end, we first define the notion of unknown state-action pairs.
A state-action pair is defined as \emph{unknown} if the number of times it was visited is at most $\knownthresh \log \frac{M H |S| |A|}{\delta}$ (and otherwise \emph{known}).

\begin{proof}[Proof of \cref{lem:bound-on-number-of-intervals}]
    Let $\geventi{m}$ be the event that the good event of algorithm $\calA$ holds up to the beginning of interval $m$, and define $X^m$ to be $1$ if an unknown state-action pair or the goal state were reached during interval $m$ (and $0$ otherwise).
    Notice that $\bbE [X^m \indevent{\geventi{m}} \mid \trajconcat{m}] = \bbE [X^m \mid \trajconcat{m}] \indevent{\geventi{m}} \ge \indevent{\geventi{m}} / 2$ by \cref{lem:reach-goal-or-unknown-wp-1/2}.
    Moreover, note that every state-action pair becomes known after $\knownthresh \log \frac{M H |S| |A|}{\delta}$ visits and therefore $\sum_{m=1}^M X^m \indevent{\geventi{m}} \le \sum_{m=1}^M X^m \le K + |S| |A| \knownthresh \log \frac{M H |S| |A|}{\delta}$.
    By \cref{lemma: consequences of optimism and freedman's inequality}, which is a consequence of Freedman's inequality for bounded positive random variables, we have with probability at least $1 - \frac{\delta}{8}$ for all $M \ge 1$ simultaneously
    \[
        \sum_{m=1}^M \bbE [X^m \indevent{\geventi{m}} \mid \trajconcat{m}]
        \le
        2 \sum_{m=1}^M X^m \indevent{\geventi{m}} + 108 \log \frac{M}{\delta}
        \le
        2 K + 110 |S| |A| \knownthresh \log \frac{M H |S| |A|}{\delta}.
    \]
    Using a union bound, this inequality and the good event of $\calA$ both hold with probability at least $1 - \frac{3}{8} \delta$. Then, $\indevent{\geventi{m}} = 1$ for all $m$, and therefore
    \[
        \frac{M}{2}
        \le 
        2 K + 110 |S| |A| \knownthresh \log \frac{M H |S| |A|}{\delta}.
    \]
    Using the fact that $x \le a \log(bx) + c \rightarrow x \le 6a \log (abc) + c$ for $a,b,c\ge 1$, this implies
    \[
        M 
        \le
        4 K + 4\cdot 10^4 |S| |A| \knownthresh \log \frac{K \timebound |S| |A| \knownthresh}{\delta}.
        \qedhere
    \]
\end{proof}

\newpage

\section{Proofs for Section~\ref{sec:finite-horizon-analysis}} \label{sec:ulcvi-proofs}

Since all the proofs in this section refer to the finite-horizon setting (without a connection to SSP), we use the simpler notations $\calM = (S,A,P,H,c,c_f)$ for the MDP, $J^\pi_h(s)$ for the value function of policy $\pi$, and $\costbound \ge \max_{s,h} J^\star_h(s)$ for the upper bound on the value function of the optimal policy.

We define a state-action pair $(s,a)$ to be \emph{known} if it was visited at least $\alpha H^4 \costbound^{-2} |S|$ times (for some universal constant $\alpha > 0$ to be determined later), and otherwise \emph{unknown}.
In addition, we denote by $h_m$ the last time step before an unknown state-action pair was reached (or $H$ if they were not reached).

\subsection{The good event, optimism and pessimism}

Throughout this section we use the notation $a \vee 1$ defined as $\max\{a, 1\}$.
In addition, we define the logarithmic factor $L_m = 3\log(6|S||A|H m/\delta)$.
Define the following events: 
\begin{align*}
    & E^c(m) 
    = 
    % \brc*{\forall (s,a) \in S \times A:\ |\bar{c}^{m-1}(s,a) -c(s,a)|  \leq \sqrt{ \frac{2 \overline{\VAR}^{m-1}_{s,a}(C) L_m }{n^{m-1}(s,a)\vee 1}} +  \frac{5 L_m}{ n^{m-1}(s,a)\vee 1}}
    \brc*{\forall (s,a):\ |\bar{c}^{m-1}(s,a) -c(s,a)|  \leq b^m_c(s,a)}
    \\
    & E^{cv}(m)
    =
    \brc*{\forall (s,a):\ \abs*{ \sqrt{\overline{\VAR}^{m-1}_{s,a}(C)} -  \sqrt{\VAR_{s,a}(c)} } \leq \sqrt{\frac{12 L_m }{n^{m-1}(s,a)\vee 1}} }\\
    & E^p(m) 
    = 
    \brc*{\forall (s,a,s'):\ |P\br*{s'|s,a} - \bar{P}^{m-1}\br*{s'|s,a}| \le \sqrt{\frac{2P(s'|s,a)L_m}{n^{m-1}(s,a)\vee 1}} + \frac{2L_m}{n^{m-1}(s,a)\vee 1}} 
    \\
    & E^{pv1}(m)
    =
    \brc*{\forall (s,a,h):\ \abs*{\br*{\bar{P}^{m-1}(\cdot | s,a)-P(\cdot | s,a)} \cdot J_{h+1}^*} \leq \sqrt{\frac{2\VAR_{P(\cdot \mid  s,a)}(J^*_{h+1}) L_m }{n^{m-1}(s,a)\vee 1}} + \frac{5 \costbound L_m }{n^{m-1}(s,a)\vee 1}}
    \\
    & E^{pv2}(m)
    =
    \brc*{\forall (s,a,h):\ \abs*{ \sqrt{\VAR_{P(\cdot \mid  s,a )}(J_{h+1}^*)} -  \sqrt{\VAR_{\bar{P}^{m-1}(\cdot \mid  s,a )}(J_{h+1}^*)} } \leq \sqrt{\frac{12 \costbound^2 L_m}{n^{m-1}(s,a)\vee 1}} }
\end{align*}
For brevity, we denote $b^m_{pv1,h}(s,a) = \sqrt{\frac{2\VAR_{P(\cdot \mid  s,a)}(J^*_{h+1})L_m}{n^{m-1}(s,a)\vee 1}} + \frac{5 \costbound L_m}{n^{m-1}(s,a)\vee 1}$.
This good event, which is the intersection of the above events, is the one used in \citet{efroni2021confidence}. 
The following lemma establishes that the good event holds with high probability. 
The proof is supplied in \citet[Lemma 13]{efroni2021confidence} by applying standard concentration results.

\begin{lemma}[The First Good Event]
    \label{lemma: the first good event RL UL}
    Let $\G_1 =\cap_{m\geq 1} E^c(m) \cap_{m\geq 1} E^{cv}(m) \cap_{m\geq 1} E^p(m) \cap_{m\geq 1} E^{pv1}(m) \cap_{m\geq 1} E^{pv2}(m)$ be the basic good event. 
    It holds that $\Pr(\G_1)\geq 1-\frac14\delta$.
\end{lemma}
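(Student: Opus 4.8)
The plan is to prove the lemma by decomposing $\G_1$ into its five constituent event families and controlling the failure probability of each one separately, then taking a final union bound. The unifying principle is that each of $E^c(m)$, $E^{cv}(m)$, $E^p(m)$, $E^{pv1}(m)$, $E^{pv2}(m)$ is a uniform-in-$(s,a)$ (and, where relevant, in $s'$ and $h$) concentration statement for an empirical average built from the i.i.d.\ samples collected at the pair $(s,a)$. Since $n^{m-1}(s,a)$ is itself random (essentially a stopping time), the standard device is to re-index the union bound: instead of union-bounding over episodes $m$, fix $(s,a)$ and union-bound over the \emph{visit count} $j=1,2,\dots$, applying the relevant concentration inequality to the first $j$ samples at $(s,a)$ with a failure budget allocated as, say, proportional to $1/(j(j+1))$. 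This controls all episodes $m$ simultaneously, with the residual logarithmic terms absorbed into $L_m = 3\log(6|S||A|Hm/\delta)$.

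First I would handle the ``level-zero'' events $E^p(m)$ and $E^c(m)$. For $E^p(m)$, each indicator $\ind\{s_{h+1}=s'\}$ observed at $(s,a)$ is $\mathrm{Bernoulli}(P(s'\mid s,a))$, so the stated bound $|P(s'\mid s,a)-\bar P^{m-1}(s'\mid s,a)|\le \sqrt{2P(s'\mid s,a)L_m/(n^{m-1}(s,a)\vee 1)}+2L_m/(n^{m-1}(s,a)\vee 1)$ is exactly a Bernstein inequality for bounded random variables, union-bounded over $(s,a,s')$ and over visit counts. For $E^c(m)$, the costs $C\in[0,1]$ are i.i.d.\ with mean $c(s,a)$, and $b^m_c(s,a)$ is precisely the empirical-Bernstein bonus (variance term $\overline{\VAR}^{m-1}_{s,a}(C)$ plus a lower-order $L_m/(n^{m-1}(s,a)\vee 1)$ term), so the statement follows from the empirical Bernstein inequality, again union-bounded appropriately.

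Next I would turn to the three variance-type events. The key observation making these tractable is that $J^\star_{h+1}$ is the optimal value function of the \emph{true} MDP, hence a fixed deterministic bounded function with $\|J^\star_{h+1}\|_\infty\le\costbound$; thus $J^\star_{h+1}(s')$ for $s'\sim P(\cdot\mid s,a)$ is an i.i.d.\ bounded sequence with variance $\VAR_{P(\cdot\mid s,a)}(J^\star_{h+1})$. Then $E^{pv1}(m)$ is once more a Bernstein inequality for this sequence, matching the bonus term $b^m_{pv1,h}(s,a)$, while $E^{cv}(m)$ and $E^{pv2}(m)$ --- concentration of an empirical standard deviation around the true one --- follow from a concentration bound for empirical variances (e.g.\ via $|\sqrt{\hat V}-\sqrt{V}|\le\sqrt{|\hat V-V|}$ together with a Bernstein/Bennett bound on $|\hat V-V|$). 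Each is union-bounded over $(s,a,h)$ and over visit counts. Choosing the five per-family failure budgets to sum to $\delta/4$ --- the $|S||A|H$ and $m$ factors generated by the union bounds are exactly what sits inside $L_m$ --- and combining by a last union bound yields $\Pr(\G_1)\ge 1-\delta/4$. The main obstacle is purely bookkeeping: performing the union bound over the random sample size correctly (the per-visit reindexing, rather than a naive application of concentration at a random $n$), and tuning the empirical-variance concentrations to produce exactly the constants appearing in the event definitions and bonuses. None of the steps is conceptually hard; indeed the full argument is carried out in \citet[Lemma 13]{efroni2021confidence}, and here it only needs restating with costs in place of rewards and with the stationary transition model.
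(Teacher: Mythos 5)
Your proposal is correct and takes essentially the same route as the paper: the paper's proof consists of invoking standard concentration results via \citet[Lemma 13]{efroni2021confidence}, which is exactly the Bernstein/empirical-Bernstein argument with per-visit-count union bounds that you spell out (and you cite the same lemma). No gaps to report.
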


Under the first good event, we can prove that the value is optimistic using standard techniques.

\begin{lemma}[Upper Value Function is Optimistic, Lower Value Function is Pessimistic] 
    \label{lemma: optimism ucbvi-UL}
    Conditioned on the first good event $\G_1$, it holds that $\underline{J}^m_h(s) \leq  J^*_h(s) \leq J^{\pi^m}_h(s) \leq \bar{J}^m_h(s)$ for every $m=1,2,\dots$, $s \in S$ and $h=1,\dots,H+1$.
\end{lemma}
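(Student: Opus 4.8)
The plan is to prove the chain $\underline{J}^m_h(s) \le J^*_h(s) \le J^{\pi^m}_h(s) \le \bar J^m_h(s)$ by backward induction on $h$, from $h=H+1$ down to $h=1$, for every fixed episode $m$. The base case $h=H+1$ is immediate from the initialization in \cref{alg: optimistic pessimistic value iteration}, where $\underline{J}^m_{H+1}(s) = \bar J^m_{H+1}(s) = \hat c_f(s) = J^*_{H+1}(s) = J^{\pi^m}_{H+1}(s)$. For the inductive step, assume the three inequalities hold at level $h+1$ for all states. I would first handle the optimism inequality $\underline{J}^m_h(s) \le J^*_h(s)$: it suffices to show $\underline{Q}^m_h(s,a) \le Q^*_h(s,a)$ for all $a$ (then take the min over $a$ and use that $J^*_h \ge 0$ so the $\max\{\cdot,0\}$ truncation is harmless). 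Writing out $\underline{Q}^m_h(s,a) = \bar c^{m-1}(s,a) - b^m_h(s,a) + \E_{\bar P^{m-1}(\cdot\mid s,a)}[\underline{J}^m_{h+1}]$ and $Q^*_h(s,a) = c(s,a) + \E_{P(\cdot\mid s,a)}[J^*_{h+1}]$, the gap splits into (a) the cost estimation error $\bar c^{m-1}(s,a) - c(s,a)$, controlled by $b^m_c(s,a)$ on the event $E^c(m)$; (b) the transition estimation error applied to $J^*_{h+1}$, i.e. $(\bar P^{m-1}(\cdot\mid s,a) - P(\cdot\mid s,a))\cdot J^*_{h+1}$, controlled on $E^{pv1}(m)$ by $b^m_{pv1,h}(s,a)$, which together with $E^{pv2}(m)$ is dominated by the first term of $b^m_p(s,a)$ (replacing the true variance $\VAR_{P}(J^*_{h+1})$ by the empirical variance $\VAR_{\bar P^{m-1}}(\underline{J}^m_{h+1})$ costs the extra additive terms in $b^m_p$); and (c) the term $\E_{\bar P^{m-1}}[\underline{J}^m_{h+1} - J^*_{h+1}] \le 0$ by the induction hypothesis. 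Collecting these shows the bonus $b^m_h = b^m_c + b^m_p$ was chosen exactly large enough to absorb (a) and (b) while (c) is nonpositive, giving $\underline{Q}^m_h(s,a) \le Q^*_h(s,a)$.

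Next I would establish pessimism $J^{\pi^m}_h(s) \le \bar J^m_h(s)$. Since $\pi^m_h(s)$ is by definition the action attaining $\underline{J}^m_h(s) = \min_a \underline{Q}^m_h(s,a)$, and $\bar J^m_h(s) = \min\{\bar Q^m_h(s,\pi^m_h(s)), H\}$, it suffices to show $J^{\pi^m}_h(s) \le \bar Q^m_h(s,\pi^m_h(s))$ and $J^{\pi^m}_h(s) \le H$; the latter holds because all costs lie in $[0,1]$ and $\hat c_f \le \costbound \le H$, actually $J^{\pi^m}_h(s)\le H+\costbound$, so I should be slightly careful here — the truncation to $H$ is justified because under the first good event the relevant values stay bounded, or one argues $\bar Q^m_h$ already exceeds $H$ only when truncation is vacuous; I would check that $J^{\pi^m}_h(s) \le H$ holds unconditionally once we note terminal cost is absorbed into the $H+1$-st layer and per-step costs are at most $1$ over $H$ steps (so in fact $J^{\pi^m}_h(s)\le H - h + 1 + \costbound$; the clean statement uses $\costbound \le H$ loosely, or the paper intends $\bar J$ capped at $H$ to only bound the "in-horizon" part — I would align with whichever convention \cref{sec:ulcvi-proofs} uses). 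For the main bound, write $J^{\pi^m}_h(s) = c(s,\pi^m_h(s)) + \E_{P(\cdot\mid s,\pi^m_h(s))}[J^{\pi^m}_{h+1}]$ and compare to $\bar Q^m_h(s,\pi^m_h(s)) = \bar c^{m-1}(s,\pi^m_h(s)) + b^m_h(s,\pi^m_h(s)) + \E_{\bar P^{m-1}(\cdot\mid s,\pi^m_h(s))}[\bar J^m_{h+1}]$. The difference decomposes into: cost error $c - \bar c^{m-1} \le b^m_c$ on $E^c(m)$; the transition term $\E_{P}[J^{\pi^m}_{h+1}] - \E_{\bar P^{m-1}}[\bar J^m_{h+1}]$, which I split as $\E_{P}[J^{\pi^m}_{h+1} - \bar J^m_{h+1}] + (\E_P - \E_{\bar P^{m-1}})[\bar J^m_{h+1}]$ — the first part is $\le 0$ by the induction hypothesis $J^{\pi^m}_{h+1}\le \bar J^m_{h+1}$, and for the second I pivot through $J^*_{h+1}$, writing $\bar J^m_{h+1} = J^*_{h+1} + (\bar J^m_{h+1} - J^*_{h+1})$, controlling the $J^*_{h+1}$ piece via $E^{pv1}(m)$/$E^{pv2}(m)$ as before and the gap piece $(\E_P - \E_{\bar P^{m-1}})[\bar J^m_{h+1} - J^*_{h+1}]$ via the third term of $b^m_p$, namely $\frac{\costbound}{16H^2}\E_{\bar P^{m-1}}[\bar J^m_{h+1} - \underline{J}^m_{h+1}]$ — here one uses $0 \le \bar J^m_{h+1} - J^*_{h+1} \le \bar J^m_{h+1} - \underline{J}^m_{h+1}$ (valid since optimism/pessimism at level $h+1$ already hold by induction) together with the $\ell_1$ transition-error bound from $E^p(m)$ and the known-count argument; the residual goes into the $\frac{62H^3\costbound^{-1}|S|L_m}{n^{m-1}(s,a)\vee 1}$ additive term. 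Finally, the middle inequality $J^*_h(s) \le J^{\pi^m}_h(s)$ is trivial since $\pi^m$ is some policy and $J^*$ is the optimal (minimal) value function.

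The main obstacle is the pessimism direction, specifically controlling the term $(\E_P - \E_{\bar P^{m-1}})[\bar J^m_{h+1}]$: unlike $J^*_{h+1}$, the pessimistic estimate $\bar J^m_{h+1}$ is itself random and depends on the same data used to form $\bar P^{m-1}$, so one cannot directly apply a concentration bound to it. The standard resolution — and the one I expect the paper to follow — is exactly the pivot through the deterministic $J^*_{h+1}$ plus absorbing the self-referential error $\bar J^m_{h+1} - \underline{J}^m_{h+1}$ into the bonus via the $\frac{\costbound}{16H^2}\E_{\bar P^{m-1}}[\bar J^m_{h+1} - \underline{J}^m_{h+1}]$ term, which is why that term appears in $b^m_p$ in the first place. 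Making the constants work out (in particular verifying that the factor $\frac{\costbound}{16H^2}$ times the $\ell_1$ transition error, summed through the recursion, does not blow up) is the delicate bookkeeping step, but it is structurally the same argument as in \citet{efroni2021confidence, azar2017minimax} adapted to costs and terminal costs, so I would cite that machinery rather than re-derive it. A secondary subtlety is handling the truncations $\max\{\cdot,0\}$ on $\underline{J}^m_h$ and $\min\{\cdot,H\}$ on $\bar J^m_h$: the former only helps optimism (truncating up toward $J^*_h\ge 0$) and the latter only helps pessimism (truncating down, and $J^{\pi^m}_h(s)\le H$ needs the one-line cost-boundedness check noted above), so both truncations preserve the induction.
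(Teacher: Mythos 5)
Your proposal is correct and follows essentially the same route as the paper's proof: backward induction from $h=H+1$, optimism by comparing optimistic and true $Q$-values under $E^c$, $E^{pv1}$, $E^{pv2}$ with the variance mismatch and the induction-hypothesis gap absorbed by the $\frac{\costbound}{16H^2}\E_{\bar P^{m-1}}[\bar J^m_{h+1}-\underline J^m_{h+1}]$ and additive parts of $b^m_p$, and pessimism by pivoting the transition error through the deterministic $J^*_{h+1}$ and bounding the resulting gap (the paper uses $J^{\pi^m}_{h+1}-J^*_{h+1}\le \bar J^m_{h+1}-\underline J^m_{h+1}$, you use $\bar J^m_{h+1}-J^*_{h+1}\le \bar J^m_{h+1}-\underline J^m_{h+1}$, which is the same mechanism). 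The truncation subtlety you flag is dispatched in the paper simply by assuming $\bar Q^m_h(s,\pi^m_h(s))<H$ (otherwise the claim is taken to hold), so your caution there matches the paper's convention.
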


\begin{proof}
    Since $J^*_h(s) \leq J^\pi_h(s)$ for any policy $\pi$, we only need to prove the leftmost and rightmost inequalities of the claim. 
    We prove this result via induction.

    \paragraph{Base case, the claim holds for $h=H+1$.} 
    Since we assume the terminal costs are known, for any $s \in S$,  
    \[
        \underline{J}^m_{H+1}(s)
        = 
        J^*_{H+1}(s)
        = 
        J^{\pi^m}_{H+1}(s) 
        = 
        \bar{J}^m_{H+1}(s)
        =  
        c_f(s).
    \]

    \paragraph{Induction step, prove for $h \in [H]$ assuming the claim holds for all $h+1 \leq h' \leq H+1$.}

    \paragraph{Leftmost inequality, optimism.}
    Let $a^*(s)\in \argmin_{a \in A} Q^{*}_h(s,a)$, then
    \begin{align}
        \label{eq: optimism base case ucbvi rel 2 UL}
        J^*_h(s) - \underline{J}^m_{h}(s) 
        =   
        Q^{*}_h(s,a^*(s)) - \max\brc*{\min_{a \in A} \underline{Q}^m_{h}(s,a),0}.
    \end{align}
    Assume that $\min_a \bar{Q}^m_{h}(s,a) > 0$ (otherwise, the inequality is satisfied). 
    Then,
    \begin{align}
        \nonumber
        \eqref{eq: optimism base case ucbvi rel 2 UL} 
        & \geq 
        Q^{*}_h(s,a^*(s)) - \underline{Q}^m_{h}(s,a^*(s))
        \\
        \nonumber
        & = 
        c(s,a^*(s)) - \bar c^{m-1}(s,a^*(s)) + b^m_c(s,a^*(s)) + b^m_p(s,a^*(s))  
        \\
        \nonumber
        & \quad + 
        (P- \bar{P}^{m-1})(\cdot \mid  s,a^*(s)) \cdot J^*_{h+1} + \E_{\bar{P}^{m-1}(\cdot \mid s,a^*(s))}[ \underbrace{J^{*}_{h+1}(s') - \underline{J}^m_{h+1}(s')}_{\ge 0\ \mathrm{Induction\ hypothesis}}]
        \\
        \label{eq: optimism base case ucbvi rel 3 UL}
        & \geq 
        -b^m_{pv1,h}(s,a^*(s)) + b^m_p(s,a^*(s)), 
    \end{align}
    where the last relation holds since the events $\cap_m E^{pv1}(m)$ and $\cap_m E^{c}(m)$ hold.
    We now analyze this term.
    \begin{align*}
        \eqref{eq: optimism base case ucbvi rel 3 UL} 
        & = 
        -b^m_{pv1,h}(s,a^*(s)) + b^m_p(s,a^*(s))
        \\
        & \overset{(a)}{\geq}
        - \sqrt{ \frac{2 \VAR_{P(\cdot \mid  s,a^*(s))}(J^*_{h+1})L_m}{n^{m-1}(s,a^*(s)) \vee1}} - \frac{ 5 \costbound L_m }{n^{m-1}(s,a^*(s)) \vee 1}
        \\
        &\quad + \sqrt{\frac{2\VAR_{\bar{P}^{m-1}(\cdot \mid s,a^*(s))}(\underline{J}^m_{h+1})L_m}{n^{m-1}(s,a^*(s)) \vee 1}} + \frac{17H^3 \costbound^{-1} L_m}{n^{m-1}(s,a^*(s))\vee 1}+ \frac{\costbound}{16H^2} \E_{\bar{P}^{m-1}(\cdot \mid s,a)}\brs*{J^*_{h+1}(s') - \underline{J}^m_{h+1}(s')} 
        \\
        & \ge -\sqrt{2L_m}\frac{\sqrt{\VAR_{P(\cdot \mid s,a^*(s))}(J^*_{h+1})} -\sqrt{\VAR_{\bar{P}^{m-1}(\cdot \mid  s,a^*(s))}(\underline{J}^m_{h+1})}}{\sqrt{n^{m-1}(s,a^*(s)) \vee1}}
        \\
        & \quad + \frac{\costbound}{16H^2}  \E_{\bar{P}^{m-1}(\cdot \mid s,a)}\brs*{J^*_{h+1}(s') - \underline{J}^m_{h+1}(s')}+ \frac{13H^3 \costbound^{-1} L_m}{n^{m-1}(s,a^*(s))\vee 1}
        \\
        & \overset{(b)}{\geq}
        - \frac{\costbound}{16H^2}  \E_{\bar{P}^{m-1}(\cdot \mid s,a)}\brs*{J^*_{h+1}(s') - \underline{J}^m_{h+1}(s')} - \frac{13 H^2 L_m}{n^{m-1}(s,a^*(s)) \vee1}
        \\
        &\quad +
        \frac{\costbound}{16H^2}\E_{\bar{P}^{m-1}(\cdot \mid s,a)}\brs*{J^*_{h+1}(s') - \underline{J}^m_{h+1}(s')} + \frac{13H^3 \costbound^{-1} L_m}{ n^{m-1}(s,a)\vee 1}  
        \ge
        0, 
    \end{align*}
    where $(a)$ holds by plugging the definition of the bonuses $b_{pv1,h}^m$ and $b_p^m$ (recall \cref{eq:bonus-definitions}), as $|S| \geq 1$ by assumption, and by the induction hypothesis ($\bar{J}^{m}_{h+1}(s) \ge J^*_{h+1}(s)$). $(b)$ holds by Lemma~\ref{lemma: variance diff is upper bounded by value difference} while setting $\alpha= 16H^2\costbound^{-1}$ and bounding $(5+ \alpha/2) \costbound \leq 13H^2$.
    Combining all the above we conclude the proof of the rightmost inequality since $J^*_h(s) - \underline{J}^m_h(s) \geq \eqref{eq: optimism base case ucbvi rel 2 UL} \geq \eqref{eq: optimism base case ucbvi rel 3 UL} \ge 0$.

    \paragraph{Rightmost inequality, pessimism.} 
    The following relations hold.
    \begin{align}
        \label{eq: optimism base case ucbvi UL rel 2, pessimsm}
        J^{\pi^m}_h(s) - \bar{J}^m_h(s) 
        =   
        Q^{\pi^m}_h(s,\pi^m_h(s))- \min \brc*{ \bar{Q}^m_{h}(s,\pi^m_h(s)),H }.
    \end{align}
    Assume that $\bar{Q}^m_{h}(s,\pi^m_h(s))<H$ (otherwise, the claim holds). 
    Then,
    \begin{align}
        \nonumber
        \eqref{eq: optimism base case ucbvi UL rel 2, pessimsm} 
        & =  
        Q^{\pi^m}_h(s,\pi^m_h(s))- \bar{Q}^m_{h}(s,\pi^m_h(s)) 
        \\
        \nonumber
        & = 
        c(s,\pi^m_h(s)) - \bar c^{m-1}(s,\pi^m_h(s)) - b^m_c(s,\pi^m_h(s)) - b^m_p(s,\pi^m_h(s))
        \\
        \nonumber
        & \quad + (P-\bar{P}^{m-1})(\cdot \mid  s,\pi^m_h(s)) \cdot J^{\pi^m}_{h+1} + \E_{\bar{P}^{m-1}(\cdot \mid s,\pi^m_h(s))}[ \underbrace{J^{\pi^m}_{h+1}(s') - \bar{J}^m_{h+1}(s')}_{\leq 0\ \mathrm{Induction\ hypothesis}}]
        \\
        \label{eq: optimism base case ucbvi UL rel 3, pessimsm}
        & \le 
        - b^m_p(s,\pi^m_h(s)) 
        + 
        (P - \bar{P}^{m-1})(\cdot \mid s,\pi^m_h(s)) \cdot J^{\pi^m}_{h+1}.
    \end{align}
    We now focus on the last term. 
    Observe that
    \begin{align*}
        (P - \bar{P}^{m-1})(\cdot \mid s,\pi^m_h(s)) & \cdot J^{\pi^m}_{h+1} 
        = 
        (P - \bar{P}^{m-1})(\cdot \mid  s,\pi^m_h(s)) \cdot J^{*}_{h+1} + (P-\bar{P}^{m-1})(\cdot \mid  s,\pi^m_h(s)) \cdot (J^{\pi^m}_{h+1} - J^*_{h+1})
        \\
        & \leq 
        b^m_{pv1,h}(s,\pi^m_h(s))+(P - \bar{P}^{m-1})(\cdot \mid  s,\pi^m_h(s)) \cdot (J^{\pi^m}_{h+1} - J^*_{h+1}) 
        \tag{$\cap_m E^{pv1}(m)$ holds}
        \\
        & \overset{(a)}{\leq}  
        b^m_{pv1,h}(s,\pi^m_h(s)) + \frac{36H^3 \costbound^{-1} |S| L_m}{n^{m-1}(s,\pi^m_h(s))\vee 1} + \frac{\costbound}{32H^2}\E_{\bar{P}^{m-1}(\cdot \mid s,\pi^m_h(s))}\brs*{(J^{\pi^m}_{h+1}-J^*_{h+1})(s')} 
        \\
        & \overset{(b)}{\leq}  
        b^m_{pv1,h}(s,\pi^m_h(s)) + \frac{36H^3 \costbound^{-1} |S|L_m}{n^{m-1}(s,\pi^m_h(s))\vee 1} + \frac{\costbound}{32H^2}\E_{\bar{P}^{m-1}(\cdot \mid s,\pi^m_h(s))}\brs*{(\bar{J}^m_{h+1}-\underline{J}^m_{h+1})(s') } 
        \\
        & \overset{(c)}{\leq} 
        \sqrt{\frac{2\VAR_{P(\cdot \mid  s,\pi^m_h(s))}(J^*_{h+1})L_m}{n^{m-1}(s,\pi^m_h(s))\vee 1 }}+ \frac{41H^3 \costbound^{-1} |S|L_m}{n^{m-1}(s,\pi^m_h(s))\vee 1} 
        \\
        & \qquad + \frac{\costbound}{32H^2}\E_{\bar{P}^{m-1}(\cdot \mid s,\pi^m_h(s))}\brs*{(\bar{J}_{t-1,h+1} -\underline{J}^m_{h+1})(s')},
    \end{align*}
    where $(a)$ holds by applying \cref{lemma: transition different to next state expectation} while setting $\alpha=32H^2 \costbound^{-1}, C_1=2,C_2=2$ and bounding $2C_2+ \alpha |S| C_1/2\leq 36H^2 \costbound^{-1} |S|$ (assumption holds since $\cap_m E^{p}(m)$ holds), $(b)$ holds by the induction hypothesis, and $(c)$ holds by plugging in $b^m_{pv1,h}$.
    Plugging this back into~\eqref{eq: optimism base case ucbvi UL rel 3, pessimsm} and plugging the explicit form of the bonus $b^m_p(s,a)$ we get
    \begin{align*}
        \eqref{eq: optimism base case ucbvi UL rel 3, pessimsm} 
        & \le -\sqrt{2L_m}\frac{\sqrt{\VAR_{\bar{P}^{m-1}(\cdot \mid  s,\pi^m_h(s))}(\underline{J}^m_{h+1})} - \sqrt{\VAR_{P(\cdot \mid  s,\pi^m_h(s))}(J^*_{h+1})}}{\sqrt{n^{m-1}(s,\pi^m_h(s))\vee 1}} 
        \\
        & \quad -
        \frac{21H^3 \costbound^{-1} |S| L_m}{n^{m-1}(s,\pi^m_h(s))\vee 1} - \frac{\costbound}{32H^2}\E_{\bar{P}^{m-1}(\cdot \mid s,\pi^m_h(s))}\brs*{\bar{J}^m_{h+1}(s') - \underline{J}^m_{h+1}(s')}
        \\
        & \le \frac{\costbound}{32H^2}\E_{\bar{P}^{m-1}(\cdot \mid s,\pi^m_h(s))}\brs*{J^*_{h+1}(s')-\underline{J}^m_{h+1}(s')} + \frac{21H^3 \costbound^{-1} L_m}{n^{m-1}(s,\pi^m_h(s))}
        \\
        & \quad -  
        \frac{\costbound}{32H^2}\E_{\bar{P}^{m-1}(\cdot \mid s,\pi^m_h(s))}\brs*{\bar{J}^m_{h+1}(s') - \underline{J}^m_{h+1}(s')} - \frac{21H^3 \costbound^{-1} |S| L_m}{n^{m-1}(s,\pi^m_h(s))}
        =  
        0,
    \end{align*}
    where the last inequality holds by Lemma~\ref{lemma: variance diff is upper bounded by value difference} while setting $\alpha=32H^2 \costbound^{-1}$ and bounding $(5+ \alpha/2) \costbound\leq 21H^3 \costbound^{-1}$. Combining all the above we concludes the proof as
    \begin{align*}
        J^{\pi^m}_h(s)- \bar{J}^m_h(s)\leq\eqref{eq: optimism base case ucbvi UL rel 2, pessimsm}\leq \eqref{eq: optimism base case ucbvi UL rel 3, pessimsm}
        \leq 
        0
        . 
        &\qedhere
    \end{align*}
\end{proof}

Finally, using similar techniques to \citet{efroni2021confidence}, we can prove an additional high probability bounds which hold alongside the basic good event $\G_1$.

\begin{lemma}[The Good Event]
    \label{lem:ULCVI-good-event}
    Let $\G_1$ be the event defined in \cref{lemma: the first good event RL UL}, and define the following random variables.
    \begin{align*}
        Y^m_{1,h} 
        & = 
        \bar{J}^m_h(s^m_h) - \underline{J}^m_h(s^m_h) 
        \\
        Y^m_{2,h} 
        & = 
        \VAR_{P(\cdot \mid s^m_h,a^m_h)}( J^{\pi^m}_{h+1})
        \\
        Y^m_{3}
        & =
        \br*{ \sum_{h=1}^H c(s^m_h,a^m_h) + c_f(s^m_{h+1})}^2
        \\
        Y^m_{4}
        & =
        \br*{ \sum_{h=1}^{h_m} c(s^m_h,a^m_h) + c_f(s^m_{h+1}) \indevent{h_m = H}}^2
        \\
        Y^m_{5}
        & =
        \sum_{h=1}^{h_m} c(s^m_h,a^m_h) + c_f(s^m_{h+1}) \indevent{h_m = H}.
    \end{align*}
    The second good event is the intersection of two events $\G_2 = E^{OP} \cap  E^{\VAR} \cap E^{Sec1} \cap E^{Sec2} \cap E^{cost}$ defined as follows.
    \begin{align*}
        & E^{OP}
        =
        \brc*{\forall h\in[H], M \geq 1:\ \sum_{m=1}^M \E[Y^m_{1,h} \mid \trajconcat{m}_h]\leq 68H^2 L_M + \br*{1+\frac{1}{4H}} \sum_{m=1}^M Y^m_{1,h}}
        \\
        & E^{\VAR}
        = 
        \brc*{ \forall M \geq 1:\  \sum_{m=1}^M \sum_{h=1}^H Y^m_{2,h}\leq  16 H^3 L_M + 2\sum_{m=1}^M \sum_{h=1}^H\E[Y^m_{2,h}|\trajconcat{m}]}
        \\
        & E^{Sec1}
        =
        \brc*{\forall M \ge 1:\ \sum_{m=1}^M \E[Y^m_{3} \mid \trajconcat{m}]\leq 68H^4 L_M + 2 \sum_{m=1}^M Y^m_{3}}
        \\
        & E^{Sec2}
        = 
        \brc*{ \forall M \geq 1:\  \sum_{m=1}^M Y^m_{4}\leq 16 H^4 L_M + 2\sum_{m=1}^M \E[Y^m_{4} \mid \trajconcat{m}]}
        \\
        & E^{cost}
        = 
        \brc*{ \forall M \geq 1:\  \sum_{m=1}^M Y^m_{5} \leq 8 H L_M + 2 \sum_{m=1}^M \E[Y^m_{5} \mid \trajconcat{m}]}.
    \end{align*}
    Then, the good event $\G = \G_1\cap \G_2$ holds with probability at least $1-\delta$.
\end{lemma}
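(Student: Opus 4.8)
The plan is to obtain $\G_2$ as a direct application of a Freedman-type concentration bound to each of the five families $(Y^m_{1,h})_m$ (one for every fixed $h$), $(\sum_{h} Y^m_{2,h})_m$, $(Y^m_3)_m$, $(Y^m_4)_m$ and $(Y^m_5)_m$, and then to combine with \cref{lemma: the first good event RL UL}, which already gives $\Pr(\G_1)\ge 1-\delta/4$, by a union bound. The only tool needed is the consequence of Freedman's inequality for bounded non-negative sequences used already in \cref{sec:reduction-proofs} (\cref{lemma: consequences of optimism and freedman's inequality}), in its two directions: if $(Z^m)_{m\ge1}$ is adapted to a filtration $(\calF^m)_m$ with $0\le Z^m\le B$ and $\mu^m:=\E[Z^m\mid\calF^m]$, then with probability at least $1-\delta'$, simultaneously for all $M\ge1$, $\sum_{m\le M}Z^m\le 2\sum_{m\le M}\mu^m+O\br{B\log(M/\delta')}$, and for any $\epsilon\in(0,1)$, $\sum_{m\le M}\mu^m\le(1+\epsilon)\sum_{m\le M}Z^m+O\br{\epsilon^{-1}B\log(M/\delta')}$. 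Both are obtained by applying Freedman to $Z^m-\mu^m$ and to $\mu^m-Z^m$, using the conditional variance bound $\VAR(Z^m\mid\calF^m)\le B\mu^m$, and optimizing the free Freedman parameter. Every event inside $\G_2$ is precisely one of these two inequalities, so what remains is bookkeeping of the ranges $B$ and of the failure probabilities.

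Proceeding term by term: for $E^{OP}$, fix $h\in[H]$ and apply the $(1+\epsilon)$ form to $Z^m=Y^m_{1,h}$ with filtration $\trajconcat{m}_h$; by construction $0\le\underline J^m_h\le\bar J^m_h\le H$, so $B=H$, and choosing $\epsilon=1/(4H)$ produces an additive term $O(H^2\log(\cdot))$, which is comfortably below $68H^2L_M$. For $E^{\VAR}$ apply the factor-$2$ form to $Z^m=\sum_{h=1}^H Y^m_{2,h}$ with filtration $\trajconcat{m}$; since $\VAR_{P(\cdot\mid s,a)}(J^{\pi^m}_{h+1})\le\costbound^2\le H^2$ we may take $B=H^3$, yielding additive term $O(H^3\log(\cdot))\le16H^3L_M$. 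For $E^{Sec1}$ and $E^{Sec2}$ use, respectively, the $(1+\epsilon)$ form with $\epsilon=1$ applied to $Z^m=Y^m_3$ and the factor-$2$ form applied to $Z^m=Y^m_4\le Y^m_3$, together with the crude bound $\sum_{h}c(s^m_h,a^m_h)+c_f(s^m_{H+1})\le H+\costbound\le2H$, hence $Y^m_3,Y^m_4\le4H^2$ and additive terms $O(H^2\log(\cdot))$ fitting inside $68H^4L_M$ and $16H^4L_M$. For $E^{cost}$ apply the factor-$2$ form to $Z^m=Y^m_5\le H+\costbound\le2H$, giving additive term $O(H\log(\cdot))\le8HL_M$. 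Each of these invocations is already in the ``for all $M\ge1$ simultaneously'' (anytime) form, matching the way the events are stated.

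It remains to collect probabilities. There are $H$ applications for $E^{OP}$ (one per $h$) and a single application for each of $E^{\VAR},E^{Sec1},E^{Sec2},E^{cost}$, so a union bound over the $H+4$ concentration statements, each run at confidence $\delta'=\Theta(\delta/(|S||A|H))$, keeps their total failure probability below $\tfrac34\delta$; the definition $L_m=3\log(6|S||A|Hm/\delta)$ is calibrated precisely so that $\log(M/\delta')=O(L_M)$, which absorbs the union-bound factors into the stated constants. Consequently $\Pr(\G_2)\ge1-\tfrac34\delta$, and intersecting with $\G_1$ gives $\Pr(\G)=\Pr(\G_1\cap\G_2)\ge1-\delta$.

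The main delicate point is the refined multiplier $1+\tfrac1{4H}$ in $E^{OP}$ (and the factor-$2$ versions of the other events): these come from the ``consequence of Freedman for bounded positive variables'' rather than raw Freedman, and obtaining a multiplier $1+\epsilon$ close to $1$ requires keeping the variance proxy $\VAR(Z^m\mid\calF^m)\le B\mu^m$ and trading a multiplicative loss of order $1+\epsilon$ against an additive loss of order $B/\epsilon$. One should also check the measurability requirement for the $E^{OP}$ sequence, namely that $Y^m_{1,h}=\bar J^m_h(s^m_h)-\underline J^m_h(s^m_h)$ is $\trajconcat{m}_h$-measurable --- it is, since $\bar J^m,\underline J^m$ are computed from data collected strictly before episode $m$ and $s^m_h$ is revealed by step $h$ of episode $m$. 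Everything else is routine; as all constants in the statement are generous, it suffices to invoke \cref{lemma: the first good event RL UL} and the cited Freedman consequence without re-deriving them.
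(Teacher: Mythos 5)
Your overall route is the same as the paper's: apply the bounded-positive Freedman consequence (\cref{lemma: consequences of optimism and freedman's inequality}) in anytime form to each of the five families, absorb the union-bound factors into $L_M$, and intersect with $\G_1$ from \cref{lemma: the first good event RL UL}. However, there is a genuine gap in your treatment of $E^{OP}$. You justify applying the lemma to $Y^m_{1,h}=\bar J^m_h(s^m_h)-\underline J^m_h(s^m_h)$ by asserting that ``by construction $0\le \underline J^m_h\le \bar J^m_h\le H$.'' That ordering is \emph{not} a consequence of the update rule in \cref{alg: optimistic pessimistic value iteration} (note also the asymmetric clipping: $\bar J^m_h$ is truncated at $H$ while $\underline J^m_h$ is only truncated below at $0$, so all one gets unconditionally is $\underline J^m_h,\bar J^m_h\in[0,2H]$). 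The inequality $\underline J^m_h\le \bar J^m_h$ is exactly what the optimism--pessimism result (\cref{lemma: optimism ucbvi-UL}) provides, and it holds only on the event $\G_1$. This matters because \cref{lemma: consequences of optimism and freedman's inequality} — and in particular the variance-proxy step $\VAR(Z^m\mid\calF^m)\le B\,\E[Z^m\mid\calF^m]$ that you invoke to get the multiplier $1+\frac1{4H}$ — requires $Z^m\ge 0$ almost surely, so it cannot be applied directly to the possibly signed sequence $Y^m_{1,h}$.

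The paper's proof closes this hole with an indicator trick you would need to reproduce (or replace by an equivalent argument): define $W^m=\indevent{\bar J^m_h(s)\ge \underline J^m_h(s)\ \forall h,s}$, which is $\trajconcat{m}$-measurable since $\pi^m,\underline J^m,\bar J^m$ are computed before episode $m$; apply the concentration lemma to $\tilde Y^m=W^m Y^m_{1,h}\in[0,2H]$ (so the constant is $2H$, not $H$); and then, since $W^m\equiv 1$ under $\G_1$ by \cref{lemma: optimism ucbvi-UL}, conclude a bound on $\Pr\br{\overline{E^{OP}}\cap\G_1}$ rather than on $\Pr\br{\overline{E^{OP}}}$, combining at the end via $\Pr(\overline{\G})\le\Pr(\overline{\G_1})+\Pr(\overline{E^{OP}}\cap\G_1)+\dots$. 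Without this step your $E^{OP}$ bound is not established. Two smaller inaccuracies, harmless only because the stated constants are generous: the almost-sure range of the per-episode quantities is $2H$ (costs up to $H$ plus terminal cost up to $\costbound\le H$), and your claim $\VAR_{P(\cdot\mid s,a)}(J^{\pi^m}_{h+1})\le\costbound^2$ is wrong — $\pi^m$ is the learner's policy, not a near-optimal one, so the correct a.s. bound is of order $H^2$ (the paper uses $4H^2$), which is what you should feed into the $E^{\VAR}$ application.
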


\begin{proof}
    {\bf Event $E^{OP}$.}
    Fix $h$ and $M$. 
    We start by defining the random variable $W^m=\indevent{\bar{J}^m_h(s) - \underline{J}^m_h(s)\geq 0 \  \forall h\in\brs*{H},s\in S}$.
    Observe that $Y^m_h$ is $\trajconcat{m}_h$ measurable and also notice that $W^m$ is $\trajconcat{m}$ measurable, as both $\pi^m$ and $\bar{J}^m_h$ are $\trajconcat{m}$-measurable. 
    Finally, define $\tilde{Y}^m = W^m Y^m_h$.
    Importantly, notice that $\tilde{Y}^m \in\brs*{0,2H}$ almost surely, by definition of $W^m$ and since $\bar{J}^m_h(s),\underline{J}^m_h(s)\in [0,2H]$ by the update rule. 
    Thus, using \cref{lemma: consequences of optimism and freedman's inequality} with $C=2H\geq 1$, we get
    \[
        \sum_{m=1}^M \E[\tilde{Y}^m_h \mid \trajconcat{m}_h]
        \leq 
        \br*{1+\frac{1}{4H}} \sum_{m=1}^M \tilde{Y}^m_h  + 68 H^2 \log\frac{2HM(M+1)}{\delta},
    \]
    with probability greater than $1-\delta$, and since $W^m$ is $\trajconcat{m}$-measurable, we can write
    \begin{align}
        \label{eq: good event relation 1}
        \sum_{m=1}^M W^m\E[Y^m_h|\trajconcat{m}_h]
        \leq 
        \br*{1+\frac{1}{4H}} \sum_{m=1}^M W^m Y^m_h  + 68H^2 \log\frac{2HM(M+1)}{\delta}.
    \end{align}
    Importantly, notice that under $\G_1$, it holds that $W^m \equiv1$ (by \Cref{lemma: optimism ucbvi-UL}). 
    Therefore, applying the union bound and setting $\delta= \delta/(2HM(M+1))$ we get
    \begin{align*}
        \Pr(&\overline{E^{O}}\cap \G_1) \le
        \\
        & \leq 
        \sum_{h=1}^H\sum_{M=1}^\infty \Pr\br*{\brc*{\sum_{m=1}^M \E[Y^m_h |\trajconcat{m}_h]\geq \br*{1+\frac{1}{4H}} \sum_{m=1}^M Y^m_h + 68H^2 \log\frac{2HM(M+1)}{\delta}}\cap\G_1}
        \\
        & = 
        \sum_{h=1}^H\sum_{M=1}^\infty \Pr\br*{\brc*{\sum_{m=1}^M W^m \E[Y^m_h |\trajconcat{m}_h]\geq \br*{1+\frac{1}{4H}} \sum_{m=1}^M W^m Y^m_h + 68H^2 \log\frac{2HM(M+1)}{\delta}}\cap\G_1}
        \\
        & \le 
        \sum_{h=1}^H\sum_{M=1}^\infty \Pr\br*{\sum_{m=1}^M W^m \E[Y^m_h |\trajconcat{m}_h]\geq \br*{1+\frac{1}{4H}} \sum_{m=1}^M W^m Y^m_h + 68H^2 \log\frac{2HM(M+1)}{\delta}}
        \\
        & \leq 
        \sum_{h=1}^H\sum_{M=1}^\infty \frac{\delta}{2HM(M+1)}=\delta/2,
    \end{align*}
    where the first relation is by a union bound, the second relation follows because $W^m\equiv1$ under $\G_1$, and the last relation is by \eqref{eq: good event relation 1}.
    Finally, we have 
    \[
        \Pr(\overline{\G}) 
        \le 
        \Pr(\overline{\G_2}\cap\G_1) + 2\Pr(\overline{\G_1}) 
        \le 
        \frac{\delta}{2} + \frac{2\delta}{4} = 
        \delta.
    \]
    Replacing $\delta\to\delta/5$ implies that $\Pr(\overline{E^{OP}}\cap \G_1)\le\frac{\delta}{10}$.

    {\bf Event $E^{\VAR}$.} 
    Fix $h\in [H]$. 
    Observe that $Y^m_{2,h}$ is $\trajconcat{m}$ measurable and that $0\leq Y^m_{2,h}\leq 4 H^2$. 
    Applying the second statement of \Cref{lemma: consequences of optimism and freedman's inequality} we get that
    \[
        \sum_{m=1}^M Y^m_{2,h} 
        \leq  
        2 \sum_{m=1}^M \E[Y^m_{2,h}|\trajconcat{m}] + 16 H^2 \log\frac{1}{\delta}.
    \]
    By taking union bound, as in the proof of the first statement of the lemma on all $h\in [H]$ and summing over $h\in [H]$, we get that with probability at least $1-\delta/10$ for all $M\geq1$ it holds that
    \[
        \sum_{m=1}^M \sum_{h=1}^H  Y^m_{2,h} 
        \leq  
        2\sum_{m=1}^M\sum_{h=1}^H \E[Y^m_{2,h}|\trajconcat{m}] + 16 H^3 L_M. 
    \]
    
    {\bf Event $E^{Sec1}$.}
    Observe that $Y^m_{3}$ is $\trajconcat{m}$ measurable and that $0\leq Y^m_{3}\leq 4 H^2$.
    Applying the first statement of \Cref{lemma: consequences of optimism and freedman's inequality} we get that
    \[
        \sum_{m=1}^M \E[Y^m_{3}|\trajconcat{m}] 
        \leq  
        2 \sum_{m=1}^M Y^m_{3} + 50 H^4 \log\frac{1}{\delta}.
    \]
    By taking union bound we get that with probability at least $1-\delta/10$ the event holds.
    
    {\bf Event $E^{Sec2}$.}
    Observe that $Y^m_{4}$ is $\trajconcat{m}$ measurable and that $0\leq Y^m_{4}\leq 4 H^2$.
    Applying the second statement of \Cref{lemma: consequences of optimism and freedman's inequality} we get that
    \[
        \sum_{m=1}^M Y^m_{4} 
        \leq  
        2 \sum_{m=1}^M \E[Y^m_{4}|\trajconcat{m}] + 16 H^2 \log\frac{1}{\delta}.
    \]
    By taking union bound we get that with probability at least $1-\delta/10$ the event holds.
    
    {\bf Event $E^{cost}$.}
    Observe that $Y^m_{5}$ is $\trajconcat{m}$ measurable and that $0\leq Y^m_{5}\leq 2 H$.
    Applying the second statement of \Cref{lemma: consequences of optimism and freedman's inequality} we get that
    \[
        \sum_{m=1}^M Y^m_{5} 
        \leq  
        2 \sum_{m=1}^M \E[Y^m_{5}|\trajconcat{m}] + 8 H \log\frac{1}{\delta}.
    \]
    By taking union bound we get that with probability at least $1-\delta/10$ the event holds.
    
    {\bf Combining all the above.}
    We bound the probability of $\overline{G}$ as follows:
    \[
        \Pr(\overline{\G}) 
        \le 
        \Pr(\overline{\G_1}) + \Pr(\overline{E^{OP}}\cap\G_1) + \Pr(\overline{E^{\VAR}}) + \Pr(\overline{E^{Sec1}}) +\Pr(\overline{E^{Sec2}}) + \Pr(\overline{E^{cost}})  
        \le 
        \frac{\delta}{2} + 5 \cdot \frac{\delta}{10}
        = 
        \delta.
    \]
\end{proof}

\subsection{ULCVI is admissible}
\label{sec:ULCVI-admissible}

By the definition of the algorithm and its regret bound in \cref{thm:ulcvi-guarantees}, it is clear that properties 1,2,3 of the admissible algorithm definition hold.
Thus, it remains to show property 4 by bounding $\omega_{\text{ULCVI}}$.
In order to show that $\omega_{\text{ULCVI}} = O(H^4 \costbound^{-2} |S|)$, we need to show that if the number of visits to $(s,a)$ is at least $\alpha H^4 \costbound^{-2} |S| \log \frac{MH|S||A|}{\delta}$ (for a large enough universal constant $\alpha > 0$) then $\lVert P(\cdot \mid s,a) - \wt P_t(\cdot \mid s,a) \rVert_1 \le 1/(18H)$ and $|c(s,a) - \tilde c^t_h(s,a)| \le \costbound/H$ (under the good event), where $\wt P,\tilde c$ are the estimations used by the algorithm to compute its optimistic $Q$-function (i.e., these are the empirical transition estimate and the empirical cost estimate plus the bonus).

Indeed, by event $\cap_{m>0} E^p(m)$, 
\begin{align*}
    \lVert P(\cdot \mid s,a) - \wt P(\cdot \mid s,a) \rVert_1
    & =
    \lVert P(\cdot \mid s,a) - \bar P(\cdot \mid s,a) \rVert_1
    \\
    & \le
    \sqrt{\frac{2 |S| \log \frac{16M^3 H |S|^2 |A|}{\delta}}{n(s,a)}} + \frac{2 |S| \log \frac{16M^3 H |S|^2 |A|}{\delta}}{n(s,a)}
    \\
    & \le
    \frac{4 \costbound}{\sqrt{\alpha} H^2} + \frac{16 \costbound^2}{\alpha H^4}
    \le
    \frac{1}{18 H},
\end{align*}
for $\alpha > 5800$, where the first inequality holds by Jensen inequality and since event $\cap_{m>0} E^p(m)$ holds.
By the definition of the exploration bonuses we have
\begin{align*}
    |c(s,a) & - \tilde c_h(s,a)|
    \le
    |c(s,a) - \bar c(s,a)| + b_c(s,a) + b_p(s,a)
    \\
    & \le
    3 \sqrt{\frac{2 \costbound^2 \log \frac{16M^3 H |S|^2 |A|}{\delta}}{n(s,a)}} + \frac{72 H^3 \costbound^{-1} |S| \log \frac{16M^3 H |S|^2 |A|}{\delta}}{n(s,a)} + \frac{\costbound \max_{s'} \bar J_{h+1}(s') - \underline{J}_{h+1}(s')}{16 H^2}
    \\
    & \le
    \frac{12 \costbound^2}{\sqrt{\alpha}H^2} + \frac{800 \costbound}{\alpha H} + \frac{\costbound}{16 H}
    \le
    \frac{\costbound}{H},
\end{align*}
for $\alpha > 5800$.

Finally, note that although our algorithm does not update the policy in the beginning of every episode (only when the number of visits to some state-action pair is doubled), this only implies that the constant $\alpha$ needs to be doubled.

\subsection{Proof of Theorem~\ref{thm:ulcvi-guarantees}}

As in the proof of UCBVI, before establishing the proof of \cref{thm:ulcvi-guarantees} we establish the following key lemma that bounds the on-policy errors at time step $h$ by the on-policy errors at time step $h+1$ and additional additive terms. 
Given this result, the analysis follows with relative ease.

\begin{lemma}[ULCBVI, Key Recursion Bound]
    \label{lemma: key recursion bound UL}
    Conditioning on the good event $\G$, the following bound holds for all $h\in [H]$. 
    \begin{align*}
        \sum_{m=1}^M \bar{J}^m_h(s^m_h)-\underline{J}^m_h(s^m_h)
        & \leq  
        68H^2 L_M +\sum_{m=1}^M \frac{310 H^3 \costbound^{-1} |S| L_m}{ n^{m-1}(s^m_h,a^m_h)\vee 1} 
        + \sum_{m=1}^M 4 \sqrt{L_m} \frac{\sqrt{ c(s^m_h,a^m_h)}}{\sqrt{n^{m-1}(s^m_h,a^m_h)\vee 1}} 
        \\
        & \quad + 
        \sum_{m=1}^M 2\sqrt{2L_m}\frac{\sqrt{\VAR_{P(\cdot \mid s^m_h,a^m_h)}(J^{\pi^m}_{h+1})}}{\sqrt{n^{m-1}(s^m_h,a^m_h)\vee 1}} + \br*{1+ \frac{1}{2H}}^2\sum_{m=1}^M \brk1{\bar{J}^m_{h+1}(s^m_{h+1}) -\underline{J}^m_{h+1}(s^m_{h+1})}
        .
    \end{align*}
\end{lemma}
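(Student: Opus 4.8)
The plan is to first prove the bound episode-by-episode, in the form ``$\bar J^m_h(s^m_h)-\underline J^m_h(s^m_h)\le(\text{additive and square-root terms})+(1+\tfrac1{4H})\,\E_{P(\cdot\mid s^m_h,a^m_h)}[\bar J^m_{h+1}-\underline J^m_{h+1}]$'', and then sum over $m=1,\dots,M$ and convert the conditional expectations into the realized next-step gaps using the concentration event $E^{OP}$ from \cref{lem:ULCVI-good-event}.

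\emph{Step 1: from value gaps to a Bellman recursion.} Fix $h$ and $m$ and set $a^m_h=\pi^m_h(s^m_h)$. Conditioned on $\G$ we have $\bar J^m_{h+1}\ge\underline J^m_{h+1}\ge0$ (\cref{lemma: optimism ucbvi-UL}), so $2b^m_h\ge0$ gives $\bar Q^m_h\ge\underline Q^m_h$ pointwise; since clipping to $[0,H]$ only shrinks the gap, $\bar J^m_h(s^m_h)-\underline J^m_h(s^m_h)\le\bar Q^m_h(s^m_h,a^m_h)-\underline Q^m_h(s^m_h,a^m_h)$. The empirical-cost terms $\bar c^{m-1}$ cancel in this difference, leaving $\bar J^m_h(s^m_h)-\underline J^m_h(s^m_h)\le 2b^m_h(s^m_h,a^m_h)+\E_{\bar P^{m-1}(\cdot\mid s^m_h,a^m_h)}[\bar J^m_{h+1}(s')-\underline J^m_{h+1}(s')]$.

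\emph{Step 2: bounding the doubled bonus.} Expand $2b^m_h=2b^m_c+2b^m_p$ via \cref{eq:bonus-definitions}. For the cost-variance piece, event $E^{cv}$ and $\VAR(C)\le\E[C]=c(s,a)$ (costs lie in $[0,1]$) give $\sqrt{\overline{\VAR}^{m-1}_{s,a}(C)}\le\sqrt{c(s,a)}+\sqrt{12L_m/(n^{m-1}(s,a)\vee1)}$, which yields the term $4\sqrt{L_m}\sqrt{c(s,a)}/\sqrt{n^{m-1}(s,a)\vee1}$ (since $2\sqrt2<4$) plus an $O(L_m/n)$ residual. The explicit $L_m/n$ pieces of $b^m_c,b^m_p$ go straight into the $310H^3\costbound^{-1}|S|L_m/n$ bucket (note $H^3\costbound^{-1}|S|\ge1$ as $\costbound\le H$), and the last term of $b^m_p$ already has the form $\tfrac{\costbound}{16H^2}\E_{\bar P^{m-1}}[\bar J^m_{h+1}-\underline J^m_{h+1}]$ (doubled: $\tfrac{\costbound}{8H^2}$). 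The delicate piece is the transition-variance term $2\sqrt2\,\sqrt{\VAR_{\bar P^{m-1}(\cdot\mid s,a)}(\underline J^m_{h+1})L_m/n}$: using the triangle inequality for the standard-deviation seminorm, event $E^{pv2}$, and the elementary bound $\VAR_\mu(X-Y)\le\E_\mu[(X-Y)^2]\le H\,\E_\mu[X-Y]$ valid for $0\le X-Y\le H$, applied to $(X,Y)\in\{(J^*_{h+1},\underline J^m_{h+1}),(J^{\pi^m}_{h+1},J^*_{h+1})\}$ (both nonnegative and $\le\bar J^m_{h+1}-\underline J^m_{h+1}$ by optimism/pessimism), one gets $\sqrt{\VAR_{\bar P^{m-1}}(\underline J^m_{h+1})}\le\sqrt{\VAR_{P(\cdot\mid s,a)}(J^{\pi^m}_{h+1})}+O(\costbound\sqrt{L_m/n})+\sqrt{H\,\E_{\bar P^{m-1}}[\bar J^m_{h+1}-\underline J^m_{h+1}]}$. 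Multiplying back by $2\sqrt2\sqrt{L_m/n}$ and splitting the last contribution by AM--GM produces the $2\sqrt2\,\sqrt{\VAR_{P}(J^{\pi^m}_{h+1})L_m/n}$ term, an $O(H^3\costbound^{-1}L_m/n)$ residual, and a term $\tfrac{\costbound}{c_1H^2}\E_{\bar P^{m-1}}[\bar J^m_{h+1}-\underline J^m_{h+1}]$ with $c_1$ as large as desired.

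\emph{Step 3: replacing $\bar P^{m-1}$ by $P$ and collecting.} Write $\E_{\bar P^{m-1}(\cdot\mid s,a)}[\bar J^m_{h+1}-\underline J^m_{h+1}]=\E_{P(\cdot\mid s,a)}[\bar J^m_{h+1}-\underline J^m_{h+1}]+(\bar P^{m-1}-P)(\cdot\mid s,a)\cdot(\bar J^m_{h+1}-\underline J^m_{h+1})$; under event $E^p$ the cross term is bounded by \cref{lemma: transition different to next state expectation} with scale parameter $\Theta(H^2\costbound^{-1})$, giving $\tfrac{\costbound}{c_2H^2}\E_{P(\cdot\mid s,a)}[\bar J^m_{h+1}-\underline J^m_{h+1}]+O(H^3\costbound^{-1}|S|L_m/n)$. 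Choosing $c_1,c_2\ge16$ and using $\costbound\le H$, the product of multiplicative corrections $1+\tfrac{\costbound}{c_1H^2}+\tfrac{\costbound}{8H^2}+\tfrac{\costbound}{c_2H^2}$ is at most $1+\tfrac1{4H}$, and taking $310$ large enough absorbs all $O(\cdot)$ residuals; since $s^m_{h+1}\sim P(\cdot\mid s^m_h,a^m_h)$, this gives for every $m$
\[
    \bar J^m_h(s^m_h)-\underline J^m_h(s^m_h)\le \frac{310H^3\costbound^{-1}|S|L_m}{n^{m-1}(s^m_h,a^m_h)\vee1}+\frac{4\sqrt{L_m}\sqrt{c(s^m_h,a^m_h)}}{\sqrt{n^{m-1}(s^m_h,a^m_h)\vee1}}+\frac{2\sqrt{2L_m}\sqrt{\VAR_{P(\cdot\mid s^m_h,a^m_h)}(J^{\pi^m}_{h+1})}}{\sqrt{n^{m-1}(s^m_h,a^m_h)\vee1}}+\Bigl(1+\tfrac1{4H}\Bigr)\,\E\bigl[Y^m_{1,h+1}\mid\trajconcat{m}_{h+1}\bigr].
\]
Summing over $m$ and applying $E^{OP}$ with index $h+1$, i.e.\ $\sum_{m}\E[Y^m_{1,h+1}\mid\trajconcat{m}_{h+1}]\le 68H^2L_M+(1+\tfrac1{4H})\sum_m Y^m_{1,h+1}$, yields the claimed bound, using $(1+\tfrac1{4H})^2\le(1+\tfrac1{2H})^2$ (the extra $O(1/H)$ multiplicative slack on the $68H^2L_M$ term is swallowed by the gap between $L_M$ and the $\log(2HM(M+1)/\delta)$ actually needed in $E^{OP}$). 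I expect the main obstacle to be Step 2's transition-variance piece together with Step 3: one must verify that $\VAR_{\bar P^{m-1}}(\underline J^m_{h+1})$ and the transition mismatch $(\bar P^{m-1}-P)\cdot(\bar J^m_{h+1}-\underline J^m_{h+1})$ generate \emph{only} terms that land in the $H^3\costbound^{-1}|S|L_m/n$ bucket or in a multiplicative factor provably $\le1+\tfrac1{4H}$. This is exactly where the $\costbound$-scaling (rather than $H$-scaling) of the bonuses in \cref{eq:bonus-definitions} is exploited, and is what ultimately lets the final \verb|ULCVI| regret depend on $\costbound$ instead of $H$.
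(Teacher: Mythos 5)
Your proposal is correct and follows essentially the same route as the paper: the per-episode decomposition into twice the bonuses plus the empirical next-step gap, bounding the cost bonus via $E^{cv}$ and variance $\le$ mean, relating $\VAR_{\bar P^{m-1}}(\underline J^m_{h+1})$ to $\VAR_{P}(J^{\pi^m}_{h+1})$ via $E^{pv2}$, the standard-deviation triangle inequality, "variance of a difference $\le$ range $\times$ expected gap" and AM--GM, converting $\E_{\bar P^{m-1}}$ to $\E_P$ with the transition-difference lemma under $E^p$, and finally summing and invoking $E^{OP}$ at index $h+1$. The paper packages these steps as separate lemmas (bounds on the cumulative $b_c$ and $b_p$ bonuses and the variance-difference lemma) and settles for a per-episode factor $1+\tfrac1{2H}$ rather than your $1+\tfrac1{4H}$, but these are only bookkeeping differences in the constants.
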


\begin{proof}
    We bound each of the terms in the sum as follows.
    \begin{align}
        \nonumber
        \bar{J}^m_h(s^m_h) - \underline{J}^m_h(s^m_h)
        & =   
        2b^m_c(s^m_h,a^m_h) + 2 b^m_p(s^m_h,a^m_h) + \E_{\bar{P}^{m-1}(\cdot \mid  s^m_h,a^m_h)}[  \bar{J}^m_{h+1}(s^m_{h+1}) - \underline{J}^m_{h+1}(s^m_{h+1})]
        \\
        \nonumber
        & = 2b^m_c(s^m_h,a^m_h) + 2 b^m_p(s^m_h,a^m_h) 
        \\
        \nonumber
        & \quad + 
        \E_{P(\cdot \mid  s^m_h,a^m_h)} [ \bar{J}^m_{h+1}(s^m_{h+1}) - \underline{J}^m_{h+1}(s^m_{h+1})] + (\bar{P}^{m-1}-P)(\cdot |s^m_h,a^m_h) \cdot \br*{\bar{J}^m_{h+1} - \underline{J}^m_{h+1}}
        \\
        \nonumber
        & \leq  
       2b^m_c(s^m_h,a^m_h) + 2 b^m_p(s^m_h,a^m_h)
       \\
       \label{eq: central theorem UL RL relation 1}
       & \quad +
       \frac{8H^2 |S| L_m}{n^{m-1}(s^m_h,a^m_h)\vee 1}+ \br*{1+\frac{1}{4H}} \E_{P(\cdot \mid  s^m_h,a^m_h)} [ \bar{J}^m_{h+1}(s^m_{h+1}) - \underline{J}^m_{h+1}(s^m_{h+1})],
    \end{align}
    where the last relation holds by \cref{lemma: transition different to next state expectation} which upper bounds
    \begin{align*}
        (\bar{P}^{m-1}-P)(\cdot |s^m_h,a^m_h) \cdot \br*{\bar{J}^m_{h+1} - \underline{J}^m_{h+1}} \leq \frac{8H^2 |S| L_m}{n^{m-1}(s^m_h,a^m_h)\vee 1}+ \frac{1}{4H} \E_{P(\cdot \mid  s^m_h,a^m_h)} [ \bar{J}^m_{h+1}(s^m_{h+1}) - \underline{J}^m_{h+1}(s^m_{h+1})]
    \end{align*}
    by setting $\alpha=4H,C_1=C_2=2$ and bounding $H L_m ( 2C_2+ \alpha |S| C_1/2)\leq 8H^2 |S| L_m$ (the assumption of the lemma holds since the event $\cap_m E^p(m)$ holds). 
    Taking the sum over $m\in [M]$ we get that
    \begin{align}
        \nonumber
        \sum_{m=1}^M \bar{J}^m_h(s^m_h)-\underline{J}^m_h(s^m_h)
        & \leq 
        \sum_{m=1}^M 2b^m_c(s^m_h,a^m_h) + \sum_{m=1}^M 2 b^m_p(s^m_h,a^m_h)
        \\
        \label{eq: central theorem UL RL relation 12}
        & \quad + 
        \sum_{m=1}^M\frac{8H^2 |S| L_m}{n^{m-1}(s^m_h,a^m_h)\vee 1}+\sum_{m=1}^M \br*{1+\frac{1}{4H}} \E_{P(\cdot \mid  s^m_h,a^m_h)} [ \bar{J}^m_{h+1}(s^m_{h+1}) - \underline{J}^m_{h+1}(s^m_{h+1})].
    \end{align}
    The first sum is bounded in  \cref{lemma: bound on bonus bc UL RL} by
    \[
        \sum_{m=1}^M  b^m_c(s^m_h,a^m_h)
        \leq 
        \sum_{m=1}^M \sqrt{ \frac{2 c(s^m_h,a^m_h)L_m }{n^{m-1}(s^m_h,a^m_h)\vee 1}} +  \sum_{m=1}^M \frac{10L_m}{ n^{m-1}(s^m_h,a^m_h)\vee 1},
    \]
   and the second sum is bounded in \cref{lemma: bound on bonus bp UL RL} by
    \begin{align*}
        \sum_{m=1}^M b^m_p(s^m_h,a^m_h)
        & \leq 
        \sum_{m=1}^M \frac{139H^3 \costbound^{-1} |S| L_m}{ n^{m-1}(s^m_h,a^m_h)\vee 1} + \sum_{m=1}^M \sqrt{2L_m}\frac{\sqrt{\VAR_{P(\cdot \mid s^m_h,a^m_h)}(J^{\pi^m}_{h+1})}}{\sqrt{n^{m-1}(s^m_h,a^m_h)\vee 1}}
        \\
        & \quad + 
        \frac{1}{8H}\sum_{m=1}^M \E_{P(\cdot \mid  s^m_h,a^m_h)} [ \bar{J}^m_{h+1}(s^m_{h+1}) - \underline{J}^m_{h+1}(s^m_{h+1})].
    \end{align*}
    Plugging this into~\eqref{eq: central theorem UL RL relation 12} and rearranging the terms we get
    \begin{align*}
        \sum_{m=1}^M \bar{J}^m_h(s^m_h) - \underline{J}^m_h(s^m_h) 
        & \leq 
        \sum_{m=1}^M \frac{2 \sqrt{2 c(s^m_h,a^m_h) L_m}}{\sqrt{n^{m-1}(s^m_h,a^m_h)\vee 1}} + \sum_{m=1}^M 2\sqrt{2L_m}\frac{\sqrt{\VAR_{P(\cdot \mid s^m_h,a^m_h)}(J^{\pi^m}_{h+1})}}{\sqrt{n^{m-1}(s^m_h,a^m_h)\vee 1}}
        \\
        & \quad + 
        \sum_{m=1}^M \frac{286H^3 \costbound^{-1} |S| L_m}{ n^{m-1}(s^m_h,a^m_h)\vee 1} + \br*{1+ \frac{1}{2H}}\sum_{m=1}^M \E_{P(\cdot \mid  s^m_h,a^m_h)} [ \bar{J}^m_{h+1}(s^m_{h+1}) - \underline{J}^m_{h+1}(s^m_{h+1})]
        \\
        &\leq  
        68H^2 L_M + \sum_{m=1}^M \frac{2 \sqrt{2L_m}}{\sqrt{n^{m-1}(s^m_h,a^m_h)\vee 1}} + \sum_{m=1}^M\frac{286H^3 \costbound^{-1} |S| L_m}{ n^{m-1}(s^m_h,a^m_h)\vee 1}
        \\
        & \quad + \sum_{m=1}^M 2\sqrt{2L_m}\frac{\sqrt{\VAR_{P(\cdot \mid s^m_h,a^m_h)}(J^{\pi^m}_{h+1})}}{\sqrt{n^{m-1}(s^m_h,a^m_h)\vee 1}} +
        \br*{1+ \frac{1}{2H}}^2\sum_{m=1}^M \bar{J}^m_{h+1}(s^m_{h+1}) -  \underline{J}^m_{h+1}(s^m_{h+1}),
    \end{align*}
    where the last inequality follows since the second good event holds.
\end{proof}

\begin{proof}[Proof of \cref{thm:ulcvi-guarantees}]
    Start by conditioning on the good event which holds with probability greater than $1-\delta$. 
    Applying the optimism-pessimism of the upper and lower value function we get
    \begin{align}
        \label{eq: central thm UL RL 1 relation}
        \sum_{m=1}^{M} J_{1}^{\pi^m}(s^m_1) - J_{1}^*(s^m_1)   
        \leq 
        \sum_{m=1}^{M} \bar{J}^m_1(s^m_1) - \underline{J}^m_1(s^m_1).
    \end{align}
    Iteratively applying \cref{lemma: key recursion bound UL} and bounding the exponential growth by $\br*{1+\frac{1}{2H}}^{2H}\leq e\leq 3$, the following upper bound on the cumulative regret is obtained.
    \begin{align}
        \nonumber
        \eqref{eq: central thm UL RL 1 relation} 
        & \leq 
        204H^3 \costbound^{-1} L_M +  \sum_{m=1}^M \sum_{h=1}^H\frac{ 930 H^3 \costbound^{-1} |S|L_m}{n^{m-1}( s^m_h,a^m_h)\vee 1} 
        \\
        \label{eq: rl final bound relation 2 UL}
        & \quad + 
        \sum_{m=1}^M\sum_{h=1}^H    \frac{12 \sqrt{ c(s^m_h,a^m_h)L_m}}{\sqrt{n^{m-1}(s^m_h,a^m_h)\vee 1}} +9\sum_{m=1}^M \sum_{h=1}^H \frac{\sqrt{L_m\VAR_{P(\cdot \mid s^m_h,a^m_h)}(J^{\pi^m}_{h+1}) }}{\sqrt{n^{m-1}(s^m_h,a^m_h)}}.
    \end{align}
    We now bound each of the three sums in \cref{eq: rl final bound relation 2 UL}. 
    We bound the first sum in \cref{eq: rl final bound relation 2 UL} via standard analysis as follows:
    \begin{align*}
        \sum_{m=1}^M \sum_{h=1}^H & \frac{ H^3 \costbound^{-1} |S|L_m}{n^{m-1}( s^m_h,a^m_h)\vee 1}
        \le
        H^3 \costbound^{-1} |S|L_M \sum_{m=1}^M \sum_{h=1}^H\frac{1}{n^{m-1}( s^m_h,a^m_h)\vee 1}
        \\
        & =
        H^3 \costbound^{-1} |S|L_M \sum_{m=1}^M\sum_{s,a} \frac{ \sum_{h=1}^H\indevent{s^m_h=s,a^m_h=a}}{n^{m-1}(s,a)\vee 1}
        \\
        & \leq 
        H^3 \costbound^{-1} |S|L_M \sum_{m=1}^M\sum_{s,a} \indevent{n^{m-1}(s,a)\geq H} \frac{ \sum_{h=1}^H\indevent{s^m_h=s,a^m_h=a}}{n^{m-1}(s,a)\vee 1} + 2 H^4 \costbound^{-1} |S|^2 |A| L_M
        \\
        & \leq 
        3 H^3 \costbound^{-1} |S|^2 |A| L_M\log(MH) + 2 H^4 \costbound^{-1} |S|^2 |A|L_M,
    \end{align*}
    where the last inequality is by \cref{lemma: cumulative visitation bound stationary MDP} that bounds $\sum_{m,s,a} \indevent{n^{m-1}(s,a)\geq H} \frac{ \sum_{h=1}^H\indevent{s^m_h=s,a^m_h=a}}{n^{m-1}(s,a)\vee 1} \leq 3 |S| |A| \log(M H)$.
    
    The second sum in~\cref{eq: rl final bound relation 2 UL} is bounded as follows.
    \begin{align*}
        \sum_{m=1}^M\sum_{h=1}^H & \frac{\sqrt{ c(s^m_h,a^m_h)L_m}}{\sqrt{n^{m-1}(s^m_h,a^m_h)\vee 1}}
        \leq 
        \sum_{m=1}^M\sum_{h=1}^H \frac{\sqrt{ c(s^m_h,a^m_h)L_m}}{\sqrt{n^{m-1}(s^m_h,a^m_h)\vee 1}}\indevent{n^{m-1}(s^m_h,a^m_h)\geq H} + 2 H |S| |A| L_M
        \\
        & \overset{(a)}{\leq} 
        \sqrt{L_M}\sqrt{\sum_{m=1}^M\sum_{h=1}^H c(s^m_h,a^m_h)   } \cdot \sqrt{\sum_{m=1}^M\sum_{h=1}^H   \frac{\indevent{n^{m-1}(s^m_h,a^m_h)\geq H}}{n^{m-1}(s^m_h,a^m_h)\vee 1}} + 2 H |S| |A| L_M
        \\
        & \overset{(b)}{\leq}
        \sqrt{L_M}\sqrt{\sum_{m=1}^M\sum_{h=1}^H c(s^m_h,a^m_h)   } \cdot \sqrt{3 |S| |A| \log (MH)} + 2 H |S| |A| L_M
        \\
        & \le
        \sqrt{3 |S| |A|} L_M \sqrt{\sum_{m=1}^M\sum_{h=1}^H c(s^m_h,a^m_h) + c_f(s^m_{H+1})} + 2 H |S| |A| L_M
        \\
        & \le
        O \Bigl(  \sqrt{ \costbound |S||A| M } L_M + H^3 \costbound^{-1} |S|^2 |A| \log^{3/2} \frac{M H |S| |A|}{\delta} \Bigr).
    \end{align*}
    where (a) is by Cauchy-Schwartz, (b) is by \cref{lemma: cumulative visitation bound stationary MDP}, and the last inequality is by \cref{lemma: bound on cost term}.
    The third sum in \cref{eq: rl final bound relation 2 UL} is bounded in \cref{lemma: bound on variance term} by
    \begin{align*}
        & \sum_{m=1}^M \sum_{h=1}^H \frac{\sqrt{L_m\VAR_{P(\cdot \mid s^m_h,a^m_h)}(J^{\pi^m}_{h+1}) }}{\sqrt{n^{m-1}(s^m_h,a^m_h)}} 
        \leq 
        \sqrt{L_M}\sum_{m=1}^M \sum_{h=1}^H \frac{\sqrt{\VAR_{P(\cdot \mid s^m_h,a^m_h)}(J^{\pi^m}_{h+1}) }}{\sqrt{n^{m-1}(s^m_h,a^m_h)}} 
        \tag{$L_m$ increasing in $m$}
        \\
        & \qquad \qquad \leq  
        \sqrt{L_m} \cdot O \Bigl(  \sqrt{ \costbound^2 |S||A| M \log(MH)} + H^3 \costbound^{-1} |S|^2 |A| \log \frac{M H |S| |A|}{\delta} \Bigr). 
        \tag{Lemma~\ref{lemma: bound on variance term}}
    \end{align*}
\end{proof}

\subsection{Bounds on the cumulative bonuses}

\begin{lemma}[Bound on the Cumulative Cost Function Bonus]
    \label{lemma: bound on bonus bc UL RL}
    Conditioning on the good event the following bound holds for all $h\in [H]$.
    \[
        \sum_{m=1}^M b^m_c(s^m_h,a^m_h)
        \leq 
        \sum_{m=1}^M \sqrt{ \frac{2 c(s^m_h,a^m_h)L_m }{n^{m-1}(s^m_h,a^m_h)\vee 1}} + \sum_{m=1}^M \frac{10L_m}{ n^{m-1}(s^m_h,a^m_h)\vee 1}.
    \]
\end{lemma}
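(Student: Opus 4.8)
The plan is to prove the bound pointwise at each visited pair $(s^m_h,a^m_h)$ and then sum over $m$. Recall from \cref{eq:bonus-definitions} that $b^m_c(s,a) = \sqrt{2\overline{\VAR}^{m-1}_{s,a}(C) L_m / (n^{m-1}(s,a)\vee 1)} + 5 L_m/(n^{m-1}(s,a)\vee 1)$, so the only real task is to replace the empirical cost-variance $\overline{\VAR}^{m-1}_{s,a}(C)$ inside the square root by the true expected cost $c(s,a)$, at the price of an extra $O\!\big(L_m/(n^{m-1}(s,a)\vee 1)\big)$ term that can be folded into the second summand.

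First I would invoke the good event: under $\G$ — specifically under $\cap_m E^{cv}(m)$ from \cref{lemma: the first good event RL UL} — we have $\sqrt{\overline{\VAR}^{m-1}_{s,a}(C)} \le \sqrt{\VAR_{s,a}(c)} + \sqrt{12 L_m/(n^{m-1}(s,a)\vee 1)}$ for every $(s,a)$. Then, since the per-step cost satisfies $C\in[0,1]$, we have $C^2\le C$ pointwise, hence $\VAR_{s,a}(c) = \E[C^2] - c(s,a)^2 \le \E[C^2] \le \E[C] = c(s,a)$, so $\sqrt{\VAR_{s,a}(c)}\le\sqrt{c(s,a)}$. Substituting and using $\sqrt{x}\,(\sqrt{y}+\sqrt{z}) = \sqrt{xy}+\sqrt{xz}$ with $x = 2L_m/(n^{m-1}(s,a)\vee 1)$ yields
\[
    \sqrt{\frac{2\overline{\VAR}^{m-1}_{s,a}(C)\, L_m}{n^{m-1}(s,a)\vee 1}}
    \;\le\;
    \sqrt{\frac{2\, c(s,a)\, L_m}{n^{m-1}(s,a)\vee 1}} + \frac{\sqrt{24}\, L_m}{n^{m-1}(s,a)\vee 1}.
\]
Adding the second term $5L_m/(n^{m-1}(s,a)\vee 1)$ from the definition of $b^m_c$ and bounding $\sqrt{24}+5\le 10$ gives the pointwise bound $b^m_c(s,a) \le \sqrt{2 c(s,a)L_m/(n^{m-1}(s,a)\vee 1)} + 10 L_m/(n^{m-1}(s,a)\vee 1)$; specializing to $(s,a)=(s^m_h,a^m_h)$ and summing over $m=1,\dots,M$ completes the proof.

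This argument is essentially routine, so there is no genuine obstacle — the only points needing care are (i) citing the right component $E^{cv}(m)$ of the good event for the empirical-to-true variance gap, and (ii) tracking the numeric constants so the residual lower-order terms collapse into the claimed $10L_m/(n^{m-1}(s,a)\vee 1)$. The one substantive fact being used, that the variance of a $[0,1]$-valued cost is at most its mean, is precisely what lets the leading term scale with $\sqrt{c(s,a)}$ rather than a constant, which is what eventually produces the $\sqrt{\costbound}$ (rather than $\sqrt{H}$) dependence in \cref{thm:ulcvi-guarantees}.
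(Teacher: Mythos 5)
Your proof is correct and follows essentially the same route as the paper: use the good-event component $E^{cv}(m)$ to trade the empirical cost variance for the true one at an extra $O(L_m/(n^{m-1}\vee 1))$ price, bound $\VAR_{s,a}(c)\le c(s,a)$ since $C\in[0,1]$, and absorb $5+\sqrt{24}\le 10$ into the lower-order term. If anything, applying the standard-deviation form of $E^{cv}(m)$ directly, as you do, is slightly cleaner than the paper's intermediate step via $\sqrt{a+b}\le\sqrt{|a|}+\sqrt{|b|}$ on the variances, but the argument is the same.
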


\begin{proof}
    By definition of $b^m_c$ and since the event $\cap_m E^{cv}(m)$ holds, we have
    \begin{align*}
        \sum_{m=1}^M b^m_c(s^m_h,a^m_h) 
        & = 
        \sum_{m=1}^M \sqrt{ \frac{2 \overline{\VAR}^{m-1}_{s^m_h,a^m_h}(c) L_m }{n^{m-1}(s^m_h,a^m_h)\vee 1}} +  \frac{5L_m}{ n^{m-1}(s^m_h,a^m_h)\vee 1}
        \\
        & \leq 
        \sum_{m=1}^M \sqrt{\frac{2 \VAR_{s^m_h,a^m_h}(c)L_m }{n^{m-1}(s^m_h,a^m_h)\vee 1}} + \sqrt{ \frac{2 L_m \, \abs1{\VAR_{s^m_h,a^m_h}(c)-\overline{\VAR}^{m-1}_{s^m_h,a^m_h,t-1}(c)} }{n^{m-1}(s^m_h,a^m_h)\vee 1}} +  \frac{5L_m}{ n^{m-1}(s^m_h,a^m_h)\vee 1}
        \\
        & \leq 
        \sum_{m=1}^M \sqrt{ \frac{2 \VAR_{s^m_h,a^m_h}(c)L_m }{n^{m-1}(s^m_h,a^m_h)\vee 1}} +  \frac{10L_m}{ n^{m-1}(s^m_h,a^m_h)\vee 1},
    \end{align*}
    where the first inequality holds since $\sqrt{a+b}\leq \sqrt{|a|}+\sqrt{|b|}$.
    Finally, notice that for every $(s,a) \in S \times A$ the variance of the cost is bounded by the second moment, which is bounded by the expected value $c(s,a)$ since the random cost value is bounded in $[0,1]$.
\end{proof}

\begin{lemma}[Bound on the Cumulative Transition Model Bonus]
    \label{lemma: bound on bonus bp UL RL}
    Conditioning on the good event the following bound holds for all $h\in [H]$.
    \begin{align*}
        \sum_{m=1}^M b^m_p(s^m_h,a^m_h)
        & \leq 
        \sum_{m=1}^M \frac{139H^3 \costbound^{-1} |S| L_m}{ n^{m-1}(s^m_h,a^m_h)\vee 1} + \sum_{m=1}^M \sqrt{2L_m}\frac{\sqrt{\VAR_{P(\cdot \mid s^m_h,a^m_h)}(J^{\pi^m}_{h+1})}}{\sqrt{n^{m-1}(s^m_h,a^m_h)\vee 1}}
        \\
        & \quad + 
        \frac{1}{8H}\sum_{m=1}^M \E_{P(\cdot \mid  s^m_h,a^m_h)} [ \bar{J}^m_{h+1}(s^m_{h+1}) - \underline{J}^m_{h+1}(s^m_{h+1})].
    \end{align*}
\end{lemma}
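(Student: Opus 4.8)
The plan is to bound $\sum_{m=1}^M b^m_p(s^m_h,a^m_h)$ by treating separately the three summands that make up $b^m_p$ in \cref{eq:bonus-definitions}: the ``variance'' term $\sqrt{2\VAR_{\bar P^{m-1}(\cdot\mid s,a)}(\underline J^m_{h+1})L_m/(n^{m-1}(s,a)\vee 1)}$, the lower-order term $62H^3\costbound^{-1}|S|L_m/(n^{m-1}(s,a)\vee 1)$, and the ``value-difference'' term $\frac{\costbound}{16H^2}\E_{\bar P^{m-1}(\cdot\mid s,a)}[\bar J^m_{h+1}(s')-\underline J^m_{h+1}(s')]$. The second is already in the required form, so the work is in the first and third. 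The main tool is the triangle inequality for the $L^2(\mu)$ seminorm, $\sqrt{\VAR_\mu(f)}\le\sqrt{\VAR_\mu(g)}+\sqrt{\E_\mu[(f-g)^2]}$, applied along the pointwise chain $\underline J^m_{h+1}\le J^*_{h+1}\le J^{\pi^m}_{h+1}\le\bar J^m_{h+1}$, which holds under the good event by \cref{lemma: optimism ucbvi-UL}; note that all these differences lie in $[0,H]$ by the clipping in \cref{alg: optimistic pessimistic value iteration}.

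For the variance term I would first write, using the $L^2$-triangle inequality twice and the event $E^{pv2}(m)\subseteq\G_1$ in between,
\begin{align*}
\sqrt{\VAR_{\bar P^{m-1}(\cdot\mid s,a)}(\underline J^m_{h+1})}
&\le \sqrt{\VAR_{\bar P^{m-1}(\cdot\mid s,a)}(J^*_{h+1})}+\sqrt{\E_{\bar P^{m-1}(\cdot\mid s,a)}\brs*{(J^*_{h+1}-\underline J^m_{h+1})^2}}\\
&\le \sqrt{\VAR_{P(\cdot\mid s,a)}(J^{\pi^m}_{h+1})}+\sqrt{H\,\E_{P(\cdot\mid s,a)}\brs*{\bar J^m_{h+1}-\underline J^m_{h+1}}}\\
&\quad+\sqrt{H\,\E_{\bar P^{m-1}(\cdot\mid s,a)}\brs*{\bar J^m_{h+1}-\underline J^m_{h+1}}}+\sqrt{\tfrac{12\costbound^2 L_m}{n^{m-1}(s,a)\vee 1}},
\end{align*}
where I used $(J^*_{h+1}-\underline J^m_{h+1})^2\le H(\bar J^m_{h+1}-\underline J^m_{h+1})$ and likewise $(J^{\pi^m}_{h+1}-J^*_{h+1})^2\le H(\bar J^m_{h+1}-\underline J^m_{h+1})$. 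Multiplying by $\sqrt{2L_m/(n^{m-1}(s,a)\vee 1)}$ and applying $\sqrt{\sum_i a_i}\le\sum_i\sqrt{a_i}$: the $J^{\pi^m}$ piece contributes exactly $\sqrt{2L_m}\,\sqrt{\VAR_{P(\cdot\mid s,a)}(J^{\pi^m}_{h+1})}/\sqrt{n^{m-1}(s,a)\vee 1}$; the $\costbound$ piece is $\sqrt{24}\,\costbound L_m/(n^{m-1}(s,a)\vee 1)\le 5H L_m/(n^{m-1}(s,a)\vee 1)$ since $\costbound\le H$; and each of the two $\sqrt{H\,\E[\cdot]}$ pieces is split by AM-GM ($\sqrt{xy}\le\frac{1}{64H}x+16Hy$ with $y=2HL_m/(n^{m-1}(s,a)\vee 1)$) into $\frac{1}{64H}\E[\bar J^m_{h+1}-\underline J^m_{h+1}]+32H^2L_m/(n^{m-1}(s,a)\vee 1)$, one expectation under $P$ and one under $\bar P^{m-1}$.

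It then remains to fold the $\bar P^{m-1}$-expectations into the $P$-expectation. Collecting the $\E_{\bar P^{m-1}}[\bar J^m_{h+1}-\underline J^m_{h+1}]$ contributions --- coefficient $\le\frac{1}{64H}$ from the variance term and $\frac{\costbound}{16H^2}\le\frac{1}{16H}$ from the value-difference term --- I would apply \cref{lemma: transition different to next state expectation} to the nonnegative, $H$-bounded function $\bar J^m_{h+1}-\underline J^m_{h+1}$ with a large constant $\alpha$, which replaces $\E_{\bar P^{m-1}}[\bar J^m_{h+1}-\underline J^m_{h+1}]$ by $(1+O(1/\alpha))\,\E_{P}[\bar J^m_{h+1}-\underline J^m_{h+1}]$ plus an additive $O(H^2\costbound^{-1}|S|)\,L_m/(n^{m-1}(s,a)\vee 1)$; taking $\alpha$ large enough keeps the total coefficient of $\E_{P}[\bar J^m_{h+1}-\underline J^m_{h+1}]$ below $\frac{1}{8H}$. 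Summing over $m=1,\dots,M$, all the $1/(n^{m-1}(s^m_h,a^m_h)\vee 1)$ contributions (the $62H^3\costbound^{-1}|S|L_m$ term, the $5HL_m$ and $32H^2L_m$ terms, and the conversion remainder) are absorbed into $139H^3\costbound^{-1}|S|L_m/(n^{m-1}(s^m_h,a^m_h)\vee 1)$ using $\costbound\le H$ and $|S|,|A|\ge 1$, which yields the stated bound.

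The step I expect to be the main obstacle is the constant bookkeeping needed to make the $\VAR_{P}(J^{\pi^m}_{h+1})$ term emerge with coefficient \emph{exactly} $\sqrt{2L_m}$ rather than a constant multiple of it: this is why one must use the sharp $L^2$-triangle inequality for standard deviations instead of the lossy estimate $\VAR_\mu(f)\le 2\VAR_\mu(g)+2\E_\mu[(f-g)^2]$, and why the AM-GM splits and the $\bar P^{m-1}\!\to\!P$ conversion have to be tuned so that both the $P$- and $\bar P^{m-1}$-flavoured value-difference expectations fit inside a single $\frac{1}{8H}\E_{P}[\bar J^m_{h+1}-\underline J^m_{h+1}]$ budget.
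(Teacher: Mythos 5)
Your proposal is correct and follows essentially the same route as the paper's proof: the same splitting of $b^m_p$ into its three summands, the same comparison chain $\underline{J}^m_{h+1}\le J^*_{h+1}\le J^{\pi^m}_{h+1}\le\bar{J}^m_{h+1}$ handled via the standard-deviation triangle inequality (\cref{lemma: std difference}) together with event $E^{pv2}$, AM--GM splits of the resulting $\sqrt{H\,\E[\bar J-\underline J]}$ terms, and \cref{lemma: transition different to next state expectation} to convert $\bar P^{m-1}$-expectations into $P$-expectations within the $\frac{1}{8H}$ budget. The only difference is cosmetic bookkeeping (the paper packages the $J^*$-vs-$\underline J$ piece in \cref{lemma: variance diff is upper bounded by value difference} and performs the measure change first), and your constant budget indeed fits within the stated $139H^3\costbound^{-1}|S|$ and $\frac{1}{8H}$.
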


\begin{proof}
    First, by applying Lemma~\ref{lemma: transition different to next state expectation} with $\alpha=8H,C_1=C_2=2$ and $HL_m (2C_2+ \alpha |S| C_1/2)\leq 12 H^2 |S| L_m$, we have
    \begin{align}
        \nonumber
        \E_{\bar{P}^{m-1}(\cdot \mid s,a)}\brs{  \bar{J}^m_{h+1}(s') - \underline{J}^m_{h+1}(s')} 
        & = 
        \E_{P(\cdot \mid s,a)}\brs{ \bar{J}^m_{h+1}(s') - \underline{J}^m_{h+1}(s')} + (\bar{P}^{m-1}- P)(\cdot \mid s,a) \cdot ( \bar{J}^m_{h+1} - \underline{J}^m_{h+1} )
        \\
        \label{eq: useful relation bound on bonus bp}
        & \leq
        \frac{9}{8}\E_{P(\cdot \mid s,a)}\brs{ \bar{J}^m_{h+1}(s') - \underline{J}^m_{h+1}(s')} + \frac{ 12H^2|S| L_m}{n^{m-1}(s,a)\vee 1}.
    \end{align}
    Thus, the bonus $b^p_{t}(s,a)$ can be upper bounded as follows.
    \begin{align}
        \nonumber
         b^m_p(s,a)
        & \leq
        \sqrt{2}\sqrt{\frac{\VAR_{\bar{P}^{m-1}(\cdot \mid  s,a)}(\underline{J}^m_{h+1}) L_m}{n^{m-1}(s,a)\vee 1}}  + \frac{1}{16H}\E_{\bar{P}^{m-1}(\cdot \mid s,a)}\brs{ \bar{J}^m_{h+1}(s') - \underline{J}^m_{h+1}(s')} + \frac{ 62H^3 \costbound^{-1} |S| L_m}{n^{m-1}(s,a)\vee 1} 
        \\
        \label{eq:  bound on bp UL RL relation 1}
        & \leq  
        \sqrt{2}\sqrt{\frac{\VAR_{\bar{P}^{m-1}(\cdot \mid  s,a)}(\underline{J}^m_{h+1}) L_m}{n^{m-1}(s,a)\vee 1}}  + \frac{9}{128H}\E_{P(\cdot \mid s,a)}\brs{ \bar{J}^m_{h+1}(s') - \underline{J}^m_{h+1}(s')} + \frac{74 H^3 \costbound^{-1} |S| L_m}{n^{m-1}(s,a)\vee 1}.
    \end{align}
    We bound the first term of~\eqref{eq:  bound on bp UL RL relation 1} to establish the lemma. 
    It holds that
    \begin{align*}
        \sqrt{2L_m}\sqrt{\frac{\VAR_{\bar{P}^{m-1}(\cdot \mid  s,a)}(\underline{J}^m_{h+1}) }{n^{m-1}(s,a)\vee 1}}
        & =
        \underbrace{\sqrt{2L_m}\frac{\sqrt{\VAR_{\bar{P}^{m-1}(\cdot \mid  s,a)}(\underline{J}^m_{h+1})} -  \sqrt{\VAR_{P(\cdot \mid  s,a)}(J^*_{h+1})} }{\sqrt{n^{m-1}(s,a)\vee 1}} }_{(i)}
        \\
        & \quad +
        \underbrace{\sqrt{2L_m}\frac{ \sqrt{\VAR_{P(\cdot \mid  s,a)}(J^*_{h+1})} - \sqrt{\VAR_{P(\cdot \mid  s,a)}(J^{\pi^m}_{h+1})}}{\sqrt{n^{m-1}(s,a)\vee 1}}}_{(ii)} 
        \\
        & \quad +
        \frac{\sqrt{2L_m}\sqrt{\VAR_{P(\cdot \mid  s,a)}(J^{\pi^m}_{h+1})}}{\sqrt{n^{m-1}(s,a)\vee 1}}.
    \end{align*}
    Term $(i)$ is bounded by \cref{lemma: variance diff is upper bounded by value difference} (by setting $\alpha=32H$ and $(5+ \alpha/2)\costbound \leq 21H^2$),
    \[
       \sqrt{2L_m}\frac{\sqrt{\VAR_{\bar{P}^{m-1}(\cdot \mid  s,a)}(\underline{J}^m_{h+1})} -  \sqrt{\VAR_{P(\cdot \mid  s,a)}(J^*_{h+1})} }{\sqrt{n^{m-1}(s,a)\vee 1}} 
       \leq   
       \frac{1}{32H}\E_{\bar{P}^{m-1}(\cdot \mid s,a)}\brs*{J^*_{h+1}(s') - \underline{J}^m_{h+1}(s')} + \frac{21H^2L_m}{ n^{m-1}(s,a)\vee 1}.
    \]
    Following the same steps as in~\eqref{eq: useful relation bound on bonus bp}, we get
    \[
        \E_{\bar{P}^{m-1}(\cdot \mid s,a)}\brs*{J^*_{h+1}(s') - \underline{J}^m_{h+1}(s')} 
        \leq 
        \frac{9}{8}\E_{P(\cdot \mid s,a)}\brs*{J^*_{h+1}(s') - \underline{J}^m_{h+1}(s')} + \frac{12H^2|S| L_m}{n^{m-1}(s,a)\vee 1},
    \]
    and thus, 
    \[
        (i) 
        \leq 
        \frac{9}{256H}\E_{P(\cdot \mid s,a)}\brs*{J^*_{h+1}(s') - \underline{J}^m_{h+1}(s')} + \frac{33 H^2|S| L_m}{ n^{m-1}(s,a)\vee 1}.
    \]
    Term $(ii)$ is bounded as follows.
    \begin{align*}
        (ii)
        & \leq 
        \frac{ \sqrt{\VAR_{P(\cdot \mid  s,a)}(J^*_{h+1} - J^{\pi^m}_{h+1})}}{\sqrt{n^{m-1}(s,a)\vee 1}} 
        \tag{By Lemma~\ref{lemma: std difference}}
        \\
        & \leq 
        \frac{ \sqrt{\E_{P(\cdot \mid s,a)}[(J^*_{h+1}(s') - J^{\pi^m}_{h+1}(s'))^2]}}{\sqrt{n^{m-1}(s,a)\vee 1}}
        \\
        & \leq 
        \frac{ \sqrt{2H\E_{P(\cdot \mid s,a)}[(J^*_{h+1}(s') - J^{\pi^m}_{h+1}(s'))]}}{\sqrt{n^{m-1}(s,a)\vee 1}} 
        \tag{ $0 \leq J^*_h(s') - V^{\pi^m}_h(s')\leq 2H$ }
        \\
        & \leq 
        \frac{1}{64H}\E_{P(\cdot \mid s,a)}[( J^{\pi^m}_{h+1}(s') - J^*_{h+1}(s'))] + \frac{32H^2}{n^{m-1}(s,a)\vee 1}. 
        \tag{$ab\leq \frac{1}{\alpha}a^2 + \frac{\alpha}{4}b^2$ for $\alpha=64H$}
    \end{align*}
    Thus, applying $\bar{J}^m_h\ge J^{\pi^m}_h\ge J^*_h\ge \underline{J}^m_h$  (Lemma~\ref{lemma: optimism ucbvi-UL}) in the bounds of $(i)$ and $(ii)$ we get
    \[
         b^m_p(s,a)
        \leq  
        \frac{1}{8H}\E_{P(\cdot \mid s,a)}[(\bar{J}^m_h(s') - \underline{J}^m_h(s'))] + \frac{139 H^3 \costbound^{-1} |S| L_m}{ n^{m-1}(s,a)\vee 1} + \frac{\sqrt{2L_m}\sqrt{\VAR_{P(\cdot \mid  s,a)}(J^{\pi^m}_{h+1})}}{\sqrt{n^{m-1}(s,a)\vee 1}},
    \]
    and summing over $m$ concludes the proof.
\end{proof}

\begin{lemma}[Bound on Cost Term]
    \label{lemma: bound on cost term}
    Conditioning on the good event, it holds that
    \[
        \sum_{m=1}^M\sum_{h=1}^H c(s^m_h,a^m_h) + c_f(s^m_{H+1})
        \leq
        O \Bigl( \costbound M + H^5 \costbound^{-2} |S|^2 |A| \log \frac{M H |S| |A|}{\delta} \Bigr).
    \]
\end{lemma}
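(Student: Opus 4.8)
The plan is to split the cumulative cost of each episode at the first visit to an \emph{unknown} state-action pair. Recall that $(s,a)$ is unknown if it has been visited fewer than $\alpha H^4\costbound^{-2}|S|\log\frac{MH|S||A|}{\delta}$ times (for the universal constant $\alpha$), and that $h_m$ denotes the last step in episode $m$ before an unknown pair is reached (or $H$ if none is). Writing $Y^m_5=\sum_{h=1}^{h_m}c(s^m_h,a^m_h)+c_f(s^m_{H+1})\indevent{h_m=H}$ as in \cref{lem:ULCVI-good-event}, I would start from the exact identity
\begin{align*}
  \sum_{m=1}^M\Bigl(\sum_{h=1}^H c(s^m_h,a^m_h)+c_f(s^m_{H+1})\Bigr)
  &= \sum_{m=1}^M Y^m_5 \\
  &\quad + \sum_{m=1}^M\Bigl(\sum_{h=h_m+1}^H c(s^m_h,a^m_h)+c_f(s^m_{H+1})\indevent{h_m<H}\Bigr).
\end{align*}

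For the ``tail'' sum (the second term), I would argue by counting: every state-action pair becomes known after $\alpha H^4\costbound^{-2}|S|\log\frac{MH|S||A|}{\delta}$ visits, so the number of visits to unknown pairs across all episodes---hence also the number of episodes with $h_m<H$---is at most $\alpha H^4\costbound^{-2}|S|^2|A|\log\frac{MH|S||A|}{\delta}$. In any such episode the tail cost $\sum_{h=h_m+1}^H c(s^m_h,a^m_h)+c_f(s^m_{H+1})$ is at most $H+\costbound\le 2H$ since costs lie in $[0,1]$ and $c_f\le\costbound\le H$. Thus the tail sum is $O\bigl(H^5\costbound^{-2}|S|^2|A|\log\frac{MH|S||A|}{\delta}\bigr)$, which already supplies the additive term of the lemma. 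For $\sum_m Y^m_5$, the event $E^{cost}$ of \cref{lem:ULCVI-good-event} gives, on the good event, $\sum_{m=1}^M Y^m_5\le 8HL_M+2\sum_{m=1}^M\E[Y^m_5\mid\trajconcat{m}]$, so it remains to bound $\E[Y^m_5\mid\trajconcat{m}]$ by $O(\costbound)$ for every $m$.

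The bound $\E[Y^m_5\mid\trajconcat{m}]=O(\costbound)$ is the first-moment analogue of \cref{lem:reach-goal-or-unknown-wp-1/2}, and I would reproduce that argument: $\E[Y^m_5\mid\trajconcat{m}]$ equals the cost-to-go $J^m_1(s^m_1)$ of $\pi^m$ in the ``contracted'' finite-horizon MDP obtained from $\calM$ by redirecting every transition that would enter a state whose current policy action is unknown to a zero-cost, zero-terminal-cost absorbing state. Along an un-absorbed trajectory of $\pi^m$ in this MDP all visited pairs are known, so by the value-difference lemma together with property (iv) of admissible algorithms (established for \verb|ULCVI| with $\knownthresh=O(H^4\costbound^{-2}|S|)$ in \cref{sec:ULCVI-admissible}), each step contributes at most $\costbound/H$ of cost error and at most $\costbound/(18H)$ of transition error; summing the $\le H$ steps and using that the corresponding cost-to-go in the \emph{estimated} contracted model is bounded by the algorithm's optimistic estimate $\underline J^m_1(s^m_1)\le J^\star_1(s^m_1)\le\costbound$ (contraction can only lower it), we get $J^m_1(s^m_1)=O(\costbound)$. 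Plugging this into $E^{cost}$ gives $\sum_m Y^m_5\le 8HL_M+O(\costbound M)$, and combining with the tail bound (absorbing $8HL_M$ into the additive term, since $\costbound\le H$) yields the claimed inequality.

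The main obstacle is precisely this last step: $\pi^m$ may be an arbitrarily bad policy, with true value $J^{\pi^m}_1$ as large as $H$, so one cannot bound the realized cost directly---it is only after contracting away the unknown pairs that optimism plus the accuracy of the empirical model on known pairs forces the cost-to-go down to $O(\costbound)$. A secondary nuisance, also handled as in \cref{lem:reach-goal-or-unknown-wp-1/2}, is keeping the estimated contracted model's cost-to-go (which uses the bonus-adjusted cost $\bar c^{m-1}-b^m_c$) non-negative and $O(\costbound)$ so that the transition-error term can be controlled.
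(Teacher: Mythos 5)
Your proposal is correct and follows essentially the same route as the paper: the same split at $h_m$, the same counting bound of $O(H^5\costbound^{-2}|S|^2|A|\log\frac{MH|S||A|}{\delta})$ for episodes that hit an unknown pair, the event $E^{cost}$ to pass to conditional expectations, and the contracted-MDP/value-difference argument to bound $\E[Y^m_5\mid\trajconcat{m}]=O(\costbound)$. The only difference is cosmetic: the paper packages that last bound as a separate lemma (\cref{lem:second-moment-bound-finite-horizon}, giving $\le 3\costbound$) and simply invokes it, whereas you re-derive it inline.
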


\begin{proof}
    Denote by $h_m$ the last time step before reaching an unknown state-action pair (or $H$ if it was not reached).
    By the event $E^{cost}$ we have
    \begin{align*}
        \sum_{m=1}^M\sum_{h=1}^H & c(s^m_h,a^m_h) + c_f(s^m_{H+1})
        =
        \sum_{m=1}^M \br*{ \sum_{h=h_m+1}^H c(s^m_h,a^m_h) + c_f(s^m_{h+1}) \indevent{h_m \ne H}}
        \\
        & \qquad +
        \sum_{m=1}^M \br*{ \sum_{h=1}^{h_m} c(s^m_h,a^m_h) + c_f(s^m_{h+1}) \indevent{h_m = H}}
        \\
        & \le
        2 \alpha H^5 \costbound^{-2} |S|^2 |A| \log \frac{M H |S| |A|}{\delta} + \sum_{m=1}^M \br*{ \sum_{h=1}^{h_m} c(s^m_h,a^m_h) + c_f(s^m_{h+1}) \indevent{h_m = H}}
        \\
        & \le
        10 \alpha H^5 \costbound^{-2} |S|^2 |A| \log \frac{M H |S| |A|}{\delta} + 2 \sum_{m=1}^M \E\brs*{ \sum_{h=1}^{h_m} c(s^m_h,a^m_h) + c_f(s^m_{h+1}) \indevent{h_m = H} ~\bigg\vert~ \trajconcat{m}}
        \\
        & \le
        O \Bigl( H^5 \costbound^{-2} |S|^2 |A| \log \frac{M H |S| |A|}{\delta} + \costbound M \Bigr),
    \end{align*}
    where the second inequality follows since every state-action pair becomes known after the number of visits is $\alpha H^4 \costbound^{-2} |S| \log \frac{M H |S| |A|}{\delta}$, and the last one by \cref{lem:second-moment-bound-finite-horizon}.
\end{proof}

\begin{lemma}[Bound on Variance Term]
    \label{lemma: bound on variance term}
    Conditioning on the good event, it holds that
    \[
        \sum_{m=1}^M\sum_{h=1}^H \frac{\sqrt{\VAR_{P(\cdot \mid s^m_h,a^m_h)}(J^{\pi^m}_{h+1}) }}{\sqrt{n^{m-1}(s^m_h,a^m_h)}}
        \leq
        O \Bigl( \sqrt{ \costbound^2 |S||A| M \log(MH)} + H^3 \costbound^{-1} |S|^{3/2} |A| \log \frac{M H |S| |A|}{\delta} \Bigr).
    \]
\end{lemma}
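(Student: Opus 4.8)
The plan is to bound the sum by a Cauchy--Schwarz split into empirical variances and inverse visitation counts, following the template of standard optimistic analyses. First I would isolate the steps with $n^{m-1}(s^m_h,a^m_h)<H$: by the delayed-counter doubling rule of \cref{alg: ulcvi RL}, each state--action pair can appear in at most $2H$ such steps, hence there are at most $2H|S||A|$ of them, and each contributes at most $\sqrt{\VAR_{P}(J^{\pi^m}_{h+1})}\le H$ (since value functions lie in $[0,2H]$) divided by $\sqrt{1}$; this yields an $O(H^2|S||A|)$ term, which is absorbed into the claimed additive term because $\costbound\le H$. For the remaining steps, Cauchy--Schwarz gives $\sqrt{\sum_{m,h}\VAR_{P(\cdot\mid s^m_h,a^m_h)}(J^{\pi^m}_{h+1})}\cdot\sqrt{\sum_{m,h}\frac{\indevent{n^{m-1}(s^m_h,a^m_h)\ge H}}{n^{m-1}(s^m_h,a^m_h)}}$, and the second factor is at most $\sqrt{3|S||A|\log(MH)}$ by \cref{lemma: cumulative visitation bound stationary MDP}. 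It therefore remains to show $V:=\sum_{m=1}^M\sum_{h=1}^H\VAR_{P(\cdot\mid s^m_h,a^m_h)}(J^{\pi^m}_{h+1})\le O(\costbound^2 M)+\wt O(H^6\costbound^{-2}|S|^2|A|)$; substituting this produces exactly the two claimed terms.

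To bound $V$, I would use the good event $E^{\VAR}$ of \cref{lem:ULCVI-good-event} to pass to conditional expectations, $V\le 16H^3L_M+2\sum_{m=1}^M\E\brk[s]{\sum_{h=1}^H\VAR_{P(\cdot\mid s^m_h,a^m_h)}(J^{\pi^m}_{h+1})\mid\trajconcat{m}}$, and then apply the law of total variance episode by episode. Splitting each inner sum at $h_m$ (the last step before an unknown state--action pair is reached), the prefix $h\le h_m$ contributes, by the law of total variance, at most the second moment of the truncated return $\sum_{h=1}^{h_m}\hat c(s_h,a_h)+\hat c_f(s_{H+1})\indevent{h_m=H}$, which is $O(\costbound^2)$ by \cref{lem:second-moment-bound-finite-horizon} (the finite-horizon analogue of \cref{lem:reach-goal-or-unknown-wp-1/2}), using that along a trajectory restricted to known state--action pairs the cost-to-go is $O(\costbound)$; summing over $m$ gives the $O(\costbound^2 M)$ contribution.

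For the suffix $h>h_m$, a naive bound of $H$ steps times $\VAR\le H^2$ each is too lossy; instead I would again invoke the law of total variance, now on the trajectory suffix from step $h_m+1$ onward, whose cumulative cost including the terminal cost is at most $H+\costbound$ almost surely and in conditional expectation, so its variance --- hence $\E[\sum_{h>h_m}\VAR_{P}(J^{\pi^m}_{h+1})\mid\trajconcat{m}]$ --- is $O(H^2)$ per episode. The number of episodes that ever reach an unknown state--action pair is at most the total number of visits to unknown pairs, which is $\le|S||A|\cdot\alpha H^4\costbound^{-2}|S|\log\frac{MH|S||A|}{\delta}=\wt O(H^4\costbound^{-2}|S|^2|A|)$ by the choice of the known-threshold $\omega_{\mathtt{ULCVI}}=O(H^4\costbound^{-2}|S|)$ from \cref{thm:ulcvi-guarantees}(ii); multiplying yields $\wt O(H^6\costbound^{-2}|S|^2|A|)$ for the suffix contribution to $V$, which completes the bound.

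The main obstacle is precisely this suffix bookkeeping: one must avoid charging $H$ per step over the $\wt O(H^4\costbound^{-2}|S|^2|A|)$ "bad" steps and instead charge the $O(H^2)$ suffix-variance to the polynomially many bad \emph{episodes}. A secondary subtlety is that the variances under $\VAR_P$ involve the true, untruncated value functions $J^{\pi^m}$, whereas the second-moment bound is naturally stated for the return truncated at the first unknown pair; this is reconciled exactly as in the proof of \cref{lem:reach-goal-or-unknown-wp-1/2}, by passing through the contracted MDP $\wh\calM^m$ in which unknown state--action pairs are merged with the goal state.
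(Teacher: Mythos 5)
Your opening steps (isolating the $n^{m-1}<H$ visits, Cauchy--Schwarz, \cref{lemma: cumulative visitation bound stationary MDP}, and passing to conditional expectations via $E^{\VAR}$) coincide with the paper's proof. The gap is in how you then split the variance sum at $h_m$ \emph{at the level of conditional expectations}. For the prefix, a stopped law of total variance does not give the second moment of the truncated return: the underlying martingale is $\sum_{h'<h}\hat c(s_{h'},a_{h'})+J^{\pi^m}_h(s_h)$, so stopping at $h_m$ leaves the \emph{true} continuation value $J^{\pi^m}_{h_m+1}(s_{h_m+1})$ in the squared increment, and when an unknown pair is hit this is $\Theta(H)$, not the $0$ (resp.\ $\hat c_f$) that appears in the truncated return of \cref{lem:second-moment-bound-finite-horizon}. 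Passing to the contracted MDP does not repair this, because the quantity you must bound is $\sum_{h\le h_m}\VAR_{P(\cdot\mid s^m_h,a^m_h)}(J^{\pi^m}_{h+1})$ with the value function of $\pi^m$ in the original MDP; the contraction changes both the transition kernel and the value whose variance is taken, so the identity you invoke is about a different quantity. For the suffix, your per-episode $O(H^2)$ bound is a conditional expectation given the stopping time, while your count of ``bad'' episodes is a realized, pathwise count; multiplying the two is not valid without an additional Freedman-type concentration step converting $\sum_m\Pr[\text{reach an unknown pair}\mid\trajconcat{m}]$ into the realized number of such episodes (a naive pathwise bound instead costs $H^3$ per bad episode and loses a factor of $H$).

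The paper sidesteps both issues by never splitting the variance sum: it applies the law of total variance (\cref{lemma: law of total variance for RL}) to the \emph{whole} episode, so that $\sum_m\E[\sum_h\VAR_{P}(J^{\pi^m}_{h+1})\mid\trajconcat{m}]$ becomes a sum of conditional second moments of the full realized return, and only then splits at $h_m$ \emph{pathwise} inside \cref{lem:finite-horizon-second-moment}, using the events $E^{Sec1}$ and $E^{Sec2}$ of \cref{lem:ULCVI-good-event} to move between conditional and realized quantities (realized, to count bad episodes against the known-threshold; conditional, to invoke the $O(\costbound^2)$ bound of \cref{lem:second-moment-bound-finite-horizon}). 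Reordering your argument in this way --- split the return, not the variances, and insert the two concentration events --- turns your sketch into the paper's proof; as written, the prefix and suffix steps do not go through.
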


\begin{proof}
    Applying Cauchy-Schwartz inequality we get
    \begin{align*}
        \sum_{m=1}^M\sum_{h=1}^H & \frac{\sqrt{\VAR_{P(\cdot \mid s^m_h,a^m_h)}(J^{\pi^m}_{h+1}) }}{\sqrt{n^{m-1}(s^m_h,a^m_h) \vee 1}}
        \leq 
        \sum_{m=1}^M\sum_{h=1}^H \frac{\sqrt{\VAR_{P(\cdot \mid s^m_h,a^m_h)}(J^{\pi^m}_{h+1}) }}{\sqrt{n^{m-1}(s^m_h,a^m_h) \vee 1}}\indevent{n^{m-1}(s^m_h,a^m_h)\geq H} + 2 H^2 |S| |A|
        \\
        & \leq
        \sqrt{\sum_{m=1}^M\sum_{h=1}^H  \VAR_{P(\cdot \mid s^m_h,a^m_h)}(J^{\pi^m}_{h+1})} \sqrt{\sum_{m=1}^M\sum_{h=1}^H \frac{1}{n^{m-1}(s^m_h,a^m_h)\vee 1}\indevent{n^{m-1}(s^m_h,a^m_h)\geq H}} + 2 H^2 |S| |A|
        \\
        & \leq 
        \sqrt{\sum_{m=1}^M\sum_{h=1}^H  \VAR_{P(\cdot \mid s^m_h,a^m_h)}(J^{\pi^m}_{h+1})} \sqrt{3 |S| |A| \log(MH)} + 2 H^2 |S| |A| 
        \tag{Lemma~\ref{lemma: cumulative visitation bound stationary MDP}}
        \\
        & \leq
        \sqrt{2 \sum_{m=1}^M \E\brs*{\sum_{h=1}^H  \VAR_{P(\cdot \mid s^m_h,a^m_h)}(J^{\pi^m}_{h+1}) \mid  \trajconcat{m} } + 16 H^3 L_M}\sqrt{3 |S||A| \log(MH)}  + 2 H^2 |S| |A|  
        \tag{Event $E^{\VAR}$ holds} 
        \\
        & \leq 
        3 \sqrt{\sum_{m=1}^M \E\brs*{\sum_{h=1}^H  \VAR_{P(\cdot \mid s^m_h,a^m_h)}(J^{\pi^m}_{h+1}) \mid \trajconcat{m}}} \sqrt{|S||A| \log(MH)} 
        \\
        & \quad +
        7 \sqrt{|S||A| H^3 \log(MH) L_M} + 2 H^2 |S| |A|
        \tag{$\sqrt{a+b}\leq \sqrt{a} + \sqrt{b}$}
        \\
        & \overset{(a)}{=} 
        3 \sqrt{\sum_{m=1}^M \E\brs*{\br*{ \sum_{h=1}^H c(s^m_h,a^m_h) + c_f(s^m_{h+1}) - J_1^{\pi^m}(s_1)}^2 ~\bigg\vert~ \trajconcat{m}}} \sqrt{|S||A| \log(MH)}
        \\
        & \quad +
        7 \sqrt{|S| |A| H^3 \log(MH)) L_m} + 2 H^2 |S| |A|
        \\
        & \overset{(b)}{\leq} 
        3 \sqrt{\sum_{m=1}^M \E\brs*{\br*{ \sum_{h=1}^H c(s^m_h,a^m_h) + c_f(s^m_{h+1})}^2 ~\bigg\vert~ \trajconcat{m}}} \sqrt{|S||A| \log(MH)} + 9 H^2 |S| |A| L_M
        \\
        & \leq 
        O \Bigl( \sqrt{ \costbound^2 |S||A| M \log(MH)} + H^3 \costbound^{-1} |S|^{3/2} |A| \log \frac{M H |S| |A|}{\delta} \Bigr),
    \end{align*}
    where (a) is by law of total variance~\cite{azar2017minimax}, see Lemma~\ref{lemma: law of total variance for RL}, (b) is because the variance is bounded by the second moment, and the last inequality is by \cref{lem:finite-horizon-second-moment}.
\end{proof}

\subsection{Bounds on the second moment}

\begin{lemma}
    \label{lem:finite-horizon-second-moment}
    Conditioning on the good event, it holds that
    \[
        \sum_{m=1}^M \E\brs*{\br*{ \sum_{h=1}^H c(s^m_h,a^m_h) + c_f(s^m_{h+1})}^2 ~\bigg\vert~ \trajconcat{m}}
        \le
        O \Bigl( \costbound^2 M + H^6 \costbound^{-2} |S|^2 |A| \log \frac{M H |S| |A|}{\delta} \Bigr).
    \]
\end{lemma}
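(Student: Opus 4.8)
The plan is to split the cost incurred in each episode~$m$ at the first step in which an \emph{unknown} state-action pair is visited, to control the second moment of the prefix by $O(\costbound^2)$ using the analysis already developed for \cref{lem:second-moment-bound-finite-horizon} (the finite-horizon analogue of \cref{lem:reach-goal-or-unknown-wp-1/2}), and to absorb the suffix into the lower-order term by paying $O(H^2)$ for each of the comparatively few episodes in which an unknown pair is actually reached. Everything below is under the good event $\G$ of \cref{lem:ULCVI-good-event}. Fix $m$ and recall that $h_m$ is the last step before an unknown state-action pair is reached in episode $m$ (or $H$ if none is). Write
\[
    \sum_{h=1}^H c(s^m_h,a^m_h) + c_f(s^m_{H+1}) = A_m + B_m,
    \qquad
    A_m = \sum_{h=1}^{h_m} c(s^m_h,a^m_h) + c_f(s^m_{H+1})\indevent{h_m = H},
\]
so that $B_m = \sum_{h=h_m+1}^H c(s^m_h,a^m_h) + c_f(s^m_{H+1})\indevent{h_m < H}$ is the leftover cost after the unknown pair is reached. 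Since $(a+b)^2 \le 2a^2 + 2b^2$, it suffices to bound $\sum_m \E[A_m^2 \mid \trajconcat{m}]$ and $\sum_m \E[B_m^2 \mid \trajconcat{m}]$ separately.

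For the prefix I would invoke \cref{lem:second-moment-bound-finite-horizon}, which gives $\E[A_m^2 \mid \trajconcat{m}] = O(\costbound^2)$: conditioned on $\trajconcat{m}$, $A_m$ is distributed as the total cost of $\pi^m$ in the finite-horizon MDP obtained by contracting every unknown pair with the terminal state, in which the agent traverses only known pairs whose transition and cost estimates are accurate under $\G$ (see \cref{sec:ULCVI-admissible}); a value-difference comparison against the optimistic estimate $\underline{J}^m$ then shows the relevant cost-to-go is $O(\costbound)$, and the block-decomposition / optional-stopping argument of \cref{lem:reach-goal-or-unknown-wp-1/2} upgrades this to the $O(\costbound^2)$ second-moment bound (there is no $+\costbound$ residual here because $A_m$ uses the mean costs $c(s,a)$ rather than the sampled costs $C^m_h$). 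Summing over the $M$ episodes gives $\sum_m \E[A_m^2 \mid \trajconcat{m}] = O(\costbound^2 M)$.

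For the suffix, $B_m \le H + \costbound \le 2H$ deterministically, and $B_m > 0$ forces $h_m < H$, i.e.\ episode $m$ contains at least one visit to a pair still unknown at that moment; hence $B_m^2 \le 4H^2\indevent{h_m < H}$. Applying \cref{lemma: consequences of optimism and freedman's inequality} to the $[0,1]$-valued variables $\indevent{h_m < H}$ (exactly as in the proof of \cref{lem:bound-on-number-of-intervals}) yields $\sum_m \E[\indevent{h_m < H} \mid \trajconcat{m}] \le 2\sum_m \indevent{h_m < H} + O(\log\frac{M}{\delta})$. A state-action pair ceases to be unknown after $\omega_{\text{ULCVI}}\log\frac{MH|S||A|}{\delta} = O(H^4\costbound^{-2}|S|\log\frac{MH|S||A|}{\delta})$ visits (using $\omega_{\text{ULCVI}} = O(H^4\costbound^{-2}|S|)$ from \cref{thm:ulcvi-guarantees}), and each episode with $h_m < H$ contributes at least one such visit, so $\sum_m \indevent{h_m < H} = O(H^4\costbound^{-2}|S|^2|A|\log\frac{MH|S||A|}{\delta})$. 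Therefore $\sum_m \E[B_m^2 \mid \trajconcat{m}] \le 4H^2 \sum_m \E[\indevent{h_m < H}\mid\trajconcat{m}] = O(H^6\costbound^{-2}|S|^2|A|\log\frac{MH|S||A|}{\delta})$, and adding the two contributions gives the claim.

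The main obstacle is the prefix bound: converting ``$\pi^m$ meets only known pairs before termination'' into a genuine $O(\costbound^2)$ bound on the second moment requires both the value-difference argument (to see the relevant cost-to-go is $O(\costbound)$ despite the terminal cost) and the block/martingale argument that avoids an extra factor of $H$ --- precisely the content imported from \cref{lem:reach-goal-or-unknown-wp-1/2}/\cref{lem:second-moment-bound-finite-horizon}. By contrast the suffix bound is routine, provided one notices that $\indevent{h_m<H}$ is not $\trajconcat{m}$-measurable and must be passed through a Freedman-type inequality before the visit-counting argument applies.
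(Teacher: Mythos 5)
Your proposal is correct in substance and rests on the same two pillars as the paper's proof: the $O(\costbound^2)$ conditional second-moment bound for the cost accumulated before the first unknown pair (\cref{lem:second-moment-bound-finite-horizon}), and a visit-counting argument showing only $O(H^4\costbound^{-2}|S|^2|A|\log\frac{MH|S||A|}{\delta})$ episodes ever touch an unknown pair, each charged $O(H^2)$. Where you diverge is in how the conditional expectations are handled. The paper never decomposes the conditional expectation directly: it first uses the pre-registered event $E^{Sec1}$ of the good event to replace $\sum_m \E[(\sum_h c + c_f)^2\mid\trajconcat{m}]$ by the realized squared costs, splits the \emph{realized} trajectory at $h_m$ so the suffix is bounded by purely deterministic visit counting, and then uses $E^{Sec2}$ to convert only the prefix back to conditional expectations before invoking \cref{lem:second-moment-bound-finite-horizon}; every step is a deterministic consequence of $\G$ as defined in \cref{lem:ULCVI-good-event}. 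Your route decomposes $\E[(A_m+B_m)^2\mid\trajconcat{m}]$ directly, which is cleaner for the prefix, but for the suffix you must relate $\sum_m \Pr[h_m<H\mid\trajconcat{m}]$ to the realized count $\sum_m\indevent{h_m<H}$, and — as you yourself note — this requires a Freedman-type concentration step. That event is \emph{not} among the events comprising $\G$, so as written your argument does not literally prove the statement ``conditioning on the good event''; it proves it on $\G$ intersected with one additional high-probability event. The fix is routine (add this indicator-concentration event to $\G_2$ and re-slice the $\delta/10$ budget in the union bound of \cref{lem:ULCVI-good-event}), and the resulting bound is identical, but it changes the good-event bookkeeping, which is exactly what the paper's $E^{Sec1}$/$E^{Sec2}$ detour is designed to avoid. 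Your observation that the prefix bound has no $+\costbound$ residual (since it involves the mean costs $c(s,a)$ rather than sampled costs) is consistent with \cref{lem:second-moment-bound-finite-horizon}.
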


\begin{proof}
    Denote by $h_m$ the last time step before reaching an unknown state-action pair (or $H$ if it was not reached).
    By the event $E^{Sec1}$ we have
    \begin{align*}
        \sum_{m=1}^M & \E\brs*{\br*{ \sum_{h=1}^H c(s^m_h,a^m_h) + c_f(s^m_{h+1})}^2 ~\bigg\vert~ \trajconcat{m}}
        \le
        2 \sum_{m=1}^M \br*{ \sum_{h=1}^H c(s^m_h,a^m_h) + c_f(s^m_{h+1})}^2 + 62 H^4 L_M
        \\
        & \le
        4 \sum_{m=1}^M \br*{ \sum_{h=h_m+1}^H c(s^m_h,a^m_h) + c_f(s^m_{h+1}) \indevent{h_m \ne H}}^2 + 62 H^4 L_M
        \\
        & \qquad +
        4 \sum_{m=1}^M \br*{ \sum_{h=1}^{h_m} c(s^m_h,a^m_h) + c_f(s^m_{h+1}) \indevent{h_m = H}}^2
        \\
        & \le
        300 \alpha H^6 \costbound^{-2} |S|^2 |A| \log \frac{M H |S| |A|}{\delta} + 4 \sum_{m=1}^M \br*{ \sum_{h=1}^{h_m} c(s^m_h,a^m_h) + c_f(s^m_{h+1}) \indevent{h_m = H}}^2
        \\
        & \le
        400 \alpha H^6 \costbound^{-2} |S|^2 |A| \log \frac{M H |S| |A|}{\delta} + 4 \sum_{m=1}^M \E\brs*{\br*{ \sum_{h=1}^{h_m} c(s^m_h,a^m_h) + c_f(s^m_{h+1}) \indevent{h_m = H}}^2 ~\bigg\vert~ \trajconcat{m}}
        \\
        & \le
        O \Bigl( H^6 \costbound^{-2} |S|^2 |A| \log \frac{M H |S| |A|}{\delta} + \costbound^2 M \Bigr),
    \end{align*}
    where the third inequality follows since every state-action pair becomes known after the number of visits is $\alpha H^4 \costbound^{-2} |S| \log \frac{M H |S| |A|}{\delta}$, the forth inequality by event $E^{Sec2}$, and the last one by \cref{lem:second-moment-bound-finite-horizon}.
\end{proof}

\begin{lemma}
    \label{lem:second-moment-bound-finite-horizon}
    Let $m$ be an episode and $h_m$ be the last time step before an unknown state-action pair was reached (or $H$ if they were not reached).
    Further, denote by $C^m = \sum_{h=1}^{h_m} c(s^m_h,a^m_h) + c_f(s^m_{H+1}) \indevent{h_m = H}$ the cumulative cost in the episode until time $h_m$.
    Then, under the good event, $\bbE\brk[s]{C^m \mid \trajconcat{m}} \le 3 \costbound$ and $\bbE\brk[s]{(C^m)^2 \mid \trajconcat{m}} \le 2 \cdot 10^4 \costbound^2$.
\end{lemma}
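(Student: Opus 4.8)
This lemma is the finite-horizon counterpart of \cref{lem:reach-goal-or-unknown-wp-1/2}, and I would prove it by re-running that argument in the stand-alone finite-horizon model, now with the simpler estimates afforded by $c_f(s) \le \costbound$ (so $J^\star_h(s) \le \costbound$) and by the admissibility threshold $\omega_{\verb|ULCVI|} = O(H^4 \costbound^{-2} |S|)$ established in \cref{sec:ULCVI-admissible}. The first step is to pass to a contracted MDP $\wh\calM^m$ that is identical to $\calM$ except that every transition into a state $s'$ with $(s',\pi^m_{h+1}(s'))$ \emph{unknown} is redirected to an absorbing, cost-free sink $\ssink$ with $c_f(\ssink)=0$, exactly as in the proof of \cref{lem:reach-goal-or-unknown-wp-1/2}. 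By construction the cumulative cost of running $\pi^m$ from $s^m_1$ in $\wh\calM^m$ equals $C^m$ along every trajectory, so with $J^m$ denoting the cost-to-go of $\pi^m$ in $\wh\calM^m$ we get $\bbE[C^m \mid \trajconcat{m}] = J^m_1(s^m_1)$ and that $\bbE[(C^m)^2 \mid \trajconcat{m}]$ equals the second moment of the cumulative cost in $\wh\calM^m$.

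\textbf{Bounding $J^m$ (gives the first claim).} Next I would show $J^m_h(s) \le 3\costbound$ for every $h$ and every $s$ reachable by $\pi^m$ in $\wh\calM^m$. Fix such an $s,h$; since $\pi^m$ moves to $\ssink$ as soon as it would reach an unknown pair, every pair encountered before that is known, so the value-difference (simulation) lemma applied against the empirical model $\wt P^m,\tilde c^m$ used by \verb|ULCVI| involves only known pairs. On those pairs property (iv) of admissibility (\cref{def:admissible-alg}, verified for \verb|ULCVI| in \cref{sec:ULCVI-admissible}) gives $|\tilde c^m_h - c| \le \costbound/H$ and $\lVert \wt P^m(\cdot \mid s,a) - P(\cdot\mid s,a)\rVert_1 \le 1/(9H)$, and --- as in the ``$(a)$'' display of \cref{lem:reach-goal-or-unknown-wp-1/2} --- the contraction does not enlarge the $\ell_1$ transition error. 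Combining this with optimism $\underline J^m_h(s) \le J^\star_h(s) \le \costbound$ (\cref{lemma: optimism ucbvi-UL}) and the fact that $\pi^m$ can only reach $\ssink$ sooner in $\wh\calM^m$ than in the uncontracted empirical model yields $J^m_h(s) \le \underline J^m_h(s) + 2\costbound \le 3\costbound$. In particular $\bbE[C^m \mid \trajconcat{m}] = J^m_1(s^m_1) \le 3\costbound$, the first claim.

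\textbf{The second moment.} For the second bound I would reuse the block decomposition of \cref{lem:reach-goal-or-unknown-wp-1/2}: partition the horizon along the trajectory in $\wh\calM^m$ into consecutive blocks $t_0=1 < t_1 < \dots < t_Q = H+1$, where each completed block accumulates expected cost in $[3\costbound,6\costbound]$ (possible since $c(s_h,a_h) \le J^m_h(s_h) \le 3\costbound$), and let $Q$ be the random number of blocks. On one hand Jensen's inequality gives $\bbE[(\sum_h c(s_h,a_h) + c_f(s_{H+1}))^2] \ge 9\costbound^2(\bbE[Q]-1)^2$. On the other hand, centering block $i$ by $J^m_{t_i}(s_{t_i}) - J^m_{t_{i+1}}(s_{t_{i+1}})$, the block increments telescope to $\sum_h c(s_h,a_h) + c_f(s_{H+1}) - J^m_1(s^m_1)$ and form a martingale difference sequence --- by the Bellman equations for $\pi^m$ in $\wh\calM^m$ and the optional stopping theorem for the bounded stopping times $t_{i+1}$ --- each of magnitude at most $6\costbound + 2\cdot 3\costbound = 12\costbound$, so $\bbE[(\sum_h c(s_h,a_h)+c_f(s_{H+1}) - J^m_1(s^m_1))^2] \le 144\costbound^2\,\bbE[Q]$ and hence $\bbE[(\sum_h c(s_h,a_h)+c_f(s_{H+1}))^2] \le 288\costbound^2\,\bbE[Q] + 18\costbound^2$. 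Comparing the two estimates gives $9\costbound^2(\bbE[Q]-1)^2 \le 288\costbound^2\,\bbE[Q] + 18\costbound^2$, so $\bbE[Q] = O(1)$, and substituting back yields $\bbE[(C^m)^2\mid\trajconcat{m}] \le 2\cdot10^4\costbound^2$.

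\textbf{Main obstacle.} The only genuinely delicate point is the bound $J^m_h(s) \le 3\costbound$ in the second paragraph: one must set up the contraction so that the value-difference decomposition sees only known pairs, check that contracting the empirical transition does not inflate its $\ell_1$ deviation from the true transition, and --- because \verb|ULCVI|'s $\underline J^m$ is a bonus-adjusted and truncated estimate rather than the literal empirical value of $\pi^m$ --- argue carefully that the true cost-to-go of $\pi^m$ in $\wh\calM^m$ still lies within $2\costbound$ of $J^\star$. Once that bound is in hand, both claims follow from the routine block-martingale computation above, just as in \cref{lem:reach-goal-or-unknown-wp-1/2}.
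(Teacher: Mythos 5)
Your proposal is correct and follows essentially the same route as the paper's proof: contract unknown pairs into a zero-cost sink, bound the contracted cost-to-go by $3\costbound$ via the value-difference lemma, admissibility property (iv) and optimism, and then run the same block/martingale second-moment argument (your constants differ slightly but still land under $2\cdot 10^4\costbound^2$). No gaps.
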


\begin{proof}
    Consider the following finite-horizon MDP $\calM^m = (S \cup \{ \ssink \},A,P^m,H,c^m,c_f^m)$ that contracts unknown state-action pairs with a new goal state, i.e., $c^m(s,a) = c(s,a) \indevent{s \ne \ssink}$ and $c^m_f(s) = c_f(s) \indevent{s \ne \ssink}$ and 
    \[
        P^m_h(s' \mid s,a)
        =
        \begin{cases}
            0, \quad & (s',\pi^m_{h+1}(s')) \text{ is unknown};
            \\
            P(s' \mid s,a), \quad & s' \ne \ssink \text{ and } (s',\pi^m_{h+1}(s')) \text{ is known};
            \\
            1 - \sum_{s'' \in S} P^m_h(s'' \mid s,a), & s' = \ssink.
        \end{cases}
    \]
    Denote by $J^m$ the cost-to-go function of $\pi^m$ in the MDP $\calM^m$.
    Moreover, we slightly abuse notation to let $\wt P^m$ be the transition function induced by $\bar P^{m-1}$ in the MDP $\calM^m$ similarly to $P^m$, and $\wt J^m$ the cost-to-go function of $\pi^m$ with respect to $\bar P^{m-1}$ (and cost function $\tilde c^m = \bar c^{m-1} - b^m_c - b^m_p$).
    By the value difference lemma (see, e.g., \citealp{efroni2020optimistic}), for every $s,h$ such that $(s,\pi^m_h(s))$ is known,
    \begin{align*}
        & J^m_h(s)
        =
        \wt J^m_h(s) + \sum_{h'=h}^H \bbE \Bigl[ c^m(s_{h'},a_{h'}) - \tilde c^m_{h'}(s_{h'},a_{h'}) \mid s_h=s,P^m,\pi^m \Bigr] 
        \\
        & \qquad + 
        \sum_{h'=h}^H \bbE \Bigl[ \bigl( P^m_{h'}(\cdot \mid s_{h'},a_{h'}) - \wt P^m_{h'}(\cdot \mid s_{h'},a_{h'}) \bigr) \cdot \wt J^m \mid s_h=s,P^m,\pi^m \Bigr]
        \\
        & \le
        \wt J^m_h(s) + H \max_{\substack{(s,\pi^m_{h'}(s)) \\ \text{known}}} | c(s,\pi^m_{h'}(s)) - \tilde c^m_{h'}(s,\pi^m_{h'}(s)) | 
        + 
        H \lVert \wt J^m \rVert_\infty \max_{\substack{(s,\pi^m_{h'}(s)) \\ \text{known}}} \lVert P^m_{h'}(\cdot | s,\pi^m_{h'}(s)) - \wt P^m_{h'}(\cdot | s,\pi^m_{h'}(s))  \rVert_1
        \\
        & \le
        \wt J^m_h(s) + H \max_{\substack{(s,\pi^m_{h'}(s)) \\ \text{known}}} | c(s,\pi^m_{h'}(s)) - \tilde c^m_{h'}(s,\pi^m_{h'}(s)) | 
        \\
        & \qquad + 
        2 H \lVert \wt J^m \rVert_\infty \max_{\substack{(s,\pi^m_{h'}(s)) \\ \text{known}}} \lVert P(\cdot | s,\pi^m_{h'}(s)) - \bar P^{m-1}(\cdot | s,\pi^m_{h'}(s)) \rVert_1
        \\
        & \le
        J^*_h(s) + H \max_{\substack{(s,\pi^m_{h'}(s)) \\ \text{known}}} | c(s,\pi^m_{h'}(s)) - \tilde c^m_{h'}(s,\pi^m_{h'}(s)) | 
        + 
        2 H \costbound \max_{\substack{(s,\pi^m_{h'}(s)) \\ \text{known}}} \lVert P(\cdot | s,\pi^m_{h'}(s)) - \bar P^{m-1}(\cdot | s,\pi^m_{h'}(s)) \rVert_1,
    \end{align*}
    where the last inequality follows by optimism and since $J^\star_h(s) \le \costbound$. 
    Thus, by \cref{sec:ULCVI-admissible} (since all state-action pairs in $\calM^m$ are known), we have that $J^m_h(s) \le J^*_h(s) + 2 \costbound \le 3 \costbound$.
    Notice that $C^m$ is exactly the cost in the MDP $\calM^m$, so $\bbE\brk[s]{C^m \mid \trajconcat{m}} \le 3 \costbound$.
    
    Similarly, we notice that $\bbE \brk[s]{(C^m)^2 \mid \trajconcat{m}} = \bbE \brk[s]{(\wh C)^2}$, where $\wh C$ is the cumulative cost in $\calM^m$, and we override notation by denoting $\wh C = \sum_{h=1}^H c(s_h,a_h) + c_f(s_{H+1})$.
    We split the time steps into $Q$ blocks as follows.
    We denote by $t_1$ the first time step in which we accumulated a total cost of at least $3 \costbound$ (or $H+1$ if it did not occur), by $t_2$ the first time step in which we accumulated a total cost of at least $3 \costbound$ after $t_1$, and so on up until $t_Q = H+1$. 
    Then, the first block consists of time steps $t_0=1,\dots,t_1-1$, the second block consists of time steps $t_1,\dots,t_2-1$, and so on.
    Since $J_h^m(s) \le 3 \costbound$ we must have $c(s_h,a_h) \le 3 \costbound$ for all $h=1,\ldots,H$ and thus in every such block the total cost is between $3 \costbound$ and $6 \costbound$. Thus, 
    \begin{align*}
        \bbE \brk[s]4{\brk4{\sum_{h=1}^{H} c(s_h,a_h) + c_f(s_{H+1})}^2}  
        & \ge
        \bbE \brk[s]4{\sum_{h=1}^{H} c(s_h,a_h) + c_f(s_{H+1})}^2
        \\
        & =
        \bbE \brk[s]4{\sum_{i=0}^{Q-1} \sum_{h=t_i}^{t_{i+1}-1} c(s_h,a_h) + c_f(s_{H+1})}^2
        \\
        & \ge
        \bbE [3 \costbound Q]^2
        =
        9 \costbound^2 \bbE[Q]^2,
    \end{align*}
    by Jensen's inequality.
    On the other hand,
    \begin{align*}
        \bbE & \brk[s]4{\brk4{\sum_{h=1}^{H} c(s_h,a_h) + c_f(s_{H+1})}^2}
        =
        \bbE \brk[s]4{\brk4{\sum_{h=1}^{H} c(s_h,a_h) + c_f(s_{H+1}) - J_1^m(s_1) + J_1^m(s_1) }^2}
        \\
        & \qquad \le
        2 \bbE \brk[s]4{\brk4{\sum_{h=1}^{H} c(s_h,a_h) + c_f(s_{H+1}) - J_1^m(s_1)}^2}
        + 
        2 J_1^m(s_1)^2 \\
        & \qquad \le
        2 \bbE \brk[s]4{ \brk4{ \sum_{i=0}^{Q-1} \sum_{h=t_i}^{t_{i+1}-1} c(s_h,a_h)  - J^m_{t_i}(s_{t_i}) + J^m_{t_{i+1}}(s_{t_{i+1}})}^2}
        +
        18 \costbound^2 \\
        & \qquad \overset{(a)}{=}
        4 \bbE \brk[s]4{\sum_{i=0}^{Q-1} \brk4{\sum_{h=t_i}^{t_{i+1}-1} c(s_h,a_h) - J^m_{t_i}(s_{t_i}) + J^m_{t_{i+1}}(s_{t_{i+1}})}^2}
        +
        18 \costbound^2 \\
        & \qquad \le
        4 \bbE [Q \cdot (9 \costbound)^2] + 18 \costbound^2
        \le
        324 \costbound^2 \bbE [Q] + 18 \costbound^2.
    \end{align*}
    For (a) we used the fact that $\bbE\brk[s]{\sum_{h=t_i}^{t_{i+1}-1} c(s_h,a_h) - J_{t_i}(s_{t_i}) + J_{t_{i+1}}(s_{t_{i+1}})} = 0$ using the Bellman optimality equations and conditioned on all past randomness up until time $t_i$, and the fact that $t_{i+1}$ is a stopping time, in the following manner,
    \begin{align*}
        \bbE\brk[s]4{\sum_{h=t_i}^{t_{i+1}-1} c(s_h,a_h) - J^m_{t_i}(s_{t_i}) + J^m_{t_{i+1}}(s_{t_{i+1}})}
        & =
        \bbE\brk[s]4{\sum_{h=t_i}^{t_{i+1}-1} c(s_h,a_h) - J^m_h(s_h) + J^m_{h+1}(s_{h+1})}
        \\
        & =
        \bbE\brk[s]4{\sum_{h=t_i}^{t_{i+1}-1} \bbE\brk[s]2{c(s_h,a_h) - J^m_h(s_h) + J^m_{h+1}(s_{h+1}) \mid s_h}}
        \\
        & =
        \bbE\brk[s]4{\sum_{h=t_i}^{t_{i+1}-1} c(s_h,a_h) + \bbE\brk[s]2{ J^m_{h+1}(s_{h+1}) \mid s_h} - J^m_h(s_h)}
        =
        0.
    \end{align*}
    Thus, we have
    \[
        9 \costbound^2 \bbE[Q]^2
        \le
        324 \costbound^2 \bbE [Q] + 18 \costbound^2,
    \]
    and solving for $\bbE[Q]$ we obtain $\bbE[Q] \le 37$, so
    \[
        \bbE\brk[s]{(C^m)^2 \mid \trajconcat{m}}
        =
        \bbE \brk[s]4{\brk4{\sum_{h=1}^{H} \hat c(s_h,a_h) + \hat c_f(s_{H+1})}^2} 
        \le 
        2 \cdot 10^4 \costbound^2.
    \]
\end{proof}

\begin{lemma}[Variance Difference is Upper Bounded by Value Difference]
    \label{lemma: variance diff is upper bounded by value difference}
    Assume that the value at time step $h+1$ is optimistic, i.e., $\underline{J}^m_{h+1}(s) \le J^*_{h+1}(s)$ for all $s\in S$.
    Conditioning on the event $\cap_m E^{pv2}(m)$ it holds for all $(s,a) \in S \times A$ that
    \[
        \sqrt{2L_m} \frac{ \abs*{\sqrt{\VAR_{\bar{P}^{m-1}(\cdot \mid  s,a)}(\underline{J}^m_{h+1})} - \sqrt{\VAR_{P(\cdot \mid  s,a)}(J^*_{h+1})}}}{\sqrt{n^{m-1}(s,a) \vee 1}}
        \le  \frac{1}{\alpha} \E_{\bar{P}^{m-1}(\cdot \mid s,a)}\brs*{J^*_{h+1}(s') - \underline{J}^m_{h+1}(s')} + \frac{(5+ \alpha/2) \costbound L_m}{n^{m-1}(s,a) \vee 1},
    \]
    for any $\alpha > 0$.
\end{lemma}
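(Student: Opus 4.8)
The plan is to control the left-hand side by the triangle inequality, separating the two sources of error: replacing the optimistic estimate $\underline{J}^m_{h+1}$ by $J^*_{h+1}$ inside the empirical variance, and replacing the empirical transition $\bar{P}^{m-1}$ by the true transition $P$ in $\VAR(J^*_{h+1})$. Concretely I would write
\[
\Bigl|\sqrt{\VAR_{\bar{P}^{m-1}(\cdot\mid s,a)}(\underline{J}^m_{h+1})}-\sqrt{\VAR_{P(\cdot\mid s,a)}(J^*_{h+1})}\Bigr|\le \Bigl|\sqrt{\VAR_{\bar{P}^{m-1}}(\underline{J}^m_{h+1})}-\sqrt{\VAR_{\bar{P}^{m-1}}(J^*_{h+1})}\Bigr|+\Bigl|\sqrt{\VAR_{\bar{P}^{m-1}}(J^*_{h+1})}-\sqrt{\VAR_{P}(J^*_{h+1})}\Bigr|.
\]
The second term is bounded directly by the event $\cap_m E^{pv2}(m)$, giving at most $\sqrt{12\costbound^2 L_m/(n^{m-1}(s,a)\vee 1)}$; after multiplying through by $\sqrt{2L_m}/\sqrt{n^{m-1}(s,a)\vee 1}$ this contributes at most $\sqrt{24}\,\costbound L_m/(n^{m-1}(s,a)\vee 1)\le 5\costbound L_m/(n^{m-1}(s,a)\vee 1)$, which is precisely the additive constant $5$ in the claim.

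For the first term I would use the elementary standard-deviation inequality $\bigl|\sqrt{\VAR_\mu(X)}-\sqrt{\VAR_\mu(Y)}\bigr|\le\sqrt{\VAR_\mu(X-Y)}$ (this is \cref{lemma: std difference}, applied with $\mu=\bar{P}^{m-1}(\cdot\mid s,a)$, $X=J^*_{h+1}$, $Y=\underline{J}^m_{h+1}$), and then bound the variance by the second moment: $\VAR_{\bar{P}^{m-1}}(J^*_{h+1}-\underline{J}^m_{h+1})\le \E_{\bar{P}^{m-1}}[(J^*_{h+1}(s')-\underline{J}^m_{h+1}(s'))^2]$. The key point is the pointwise bound $0\le J^*_{h+1}(s')-\underline{J}^m_{h+1}(s')\le\costbound$: nonnegativity is exactly the optimism hypothesis of the lemma, while the upper bound holds because $\underline{J}^m_{h+1}\ge 0$ (the value-iteration update truncates at $0$) and $J^*_{h+1}\le\costbound$ by definition of $\costbound$. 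Hence $(J^*_{h+1}-\underline{J}^m_{h+1})^2\le\costbound(J^*_{h+1}-\underline{J}^m_{h+1})$ pointwise, so $\sqrt{\VAR_{\bar{P}^{m-1}}(J^*_{h+1}-\underline{J}^m_{h+1})}\le\sqrt{\costbound\,\E_{\bar{P}^{m-1}}[J^*_{h+1}(s')-\underline{J}^m_{h+1}(s')]}$.

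Finally I would split this geometric-mean term: multiplying by $\sqrt{2L_m}/\sqrt{n^{m-1}(s,a)\vee 1}$ gives $\sqrt{X\cdot Y}$ with $X=\E_{\bar{P}^{m-1}}[J^*_{h+1}(s')-\underline{J}^m_{h+1}(s')]$ and $Y=2\costbound L_m/(n^{m-1}(s,a)\vee 1)$, and AM-GM in the form $\sqrt{XY}\le X/\alpha+\alpha Y/4$ yields $\tfrac{1}{\alpha}\E_{\bar{P}^{m-1}}[J^*_{h+1}(s')-\underline{J}^m_{h+1}(s')]+\tfrac{\alpha\costbound L_m}{2(n^{m-1}(s,a)\vee 1)}$. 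Adding the $E^{pv2}(m)$ contribution and combining the two $\costbound L_m/(n^{m-1}(s,a)\vee 1)$ terms (coefficients $\alpha/2$ and $5$) gives the stated bound. Everything here is triangle inequality, the definition of the good event, and AM-GM; the only step that is not purely mechanical is establishing the pointwise bound $0\le J^*_{h+1}-\underline{J}^m_{h+1}\le\costbound$ that lets us trade the second moment for the first moment, and this is where the optimism assumption and the truncation in the algorithm are both used.
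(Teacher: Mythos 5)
Your proposal is correct and follows essentially the same route as the paper's proof: triangle inequality isolating the $E^{pv2}(m)$ term (giving the $\sqrt{24}\le 5$ constant), then \cref{lemma: std difference} plus the pointwise bound $0\le J^*_{h+1}-\underline{J}^m_{h+1}\le\costbound$ to trade the second moment for the first, and finally Young's inequality $ab\le\frac{1}{\alpha}a^2+\frac{\alpha}{4}b^2$ to produce the $\frac{1}{\alpha}$ and $\alpha/2$ terms. No gaps.
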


\begin{proof}
    Conditioning on $\cap_m E^{pv2}(m)$, the following relations hold.
    \begin{align*}
        \abs*{\sqrt{\VAR_{\bar{P}^{m-1}(\cdot \mid  s,a)}(\underline{J}^m_{h+1})} - \sqrt{\VAR_{P(\cdot \mid  s,a)}(J^*_{h+1})}}
        & \leq \abs*{\sqrt{\VAR_{\bar{P}^{m-1}(\cdot \mid  s,a)}(\underline{J}^m_{h+1})} - \sqrt{\VAR_{\bar{P}^{m-1}(\cdot \mid  s,a)}(J^*_{h+1})}} 
        \\
        & \qquad +
        \sqrt{\frac{12\costbound^2 L_m}{n^{m-1}(s,a) \vee 1}}
        \\
        & \le
        \sqrt{\VAR_{\bar{P}^{m-1}(\cdot \mid  s,a)}(J^*_{h+1} - \underline{J}^m_{h+1})}  + \sqrt{\frac{12\costbound^2 L_m}{n^{m-1}(s,a) \vee 1}} 
        \\
        &\leq \sqrt{\E_{\bar{P}^{m-1}}\brs*{(J^*_{h+1}(s') - \underline{J}^m_{h+1}(s'))^2}}  + \sqrt{\frac{12\costbound^2 L_m}{n^{m-1}(s,a) \vee 1}} \\
        &\leq \sqrt{\costbound \E_{\bar{P}^{m-1}}\brs*{ J^*_{h+1}(s') - \underline{J}^m_{h+1}(s')}}  + \sqrt{\frac{12\costbound^2 L_m}{n^{m-1}(s,a) \vee 1}},
    \end{align*}
    where the second inequality is by \cref{lemma: std difference}, and the last relation holds since $J^*_{h+1}(s'),\underline{J}^m_{h+1}(s')\in [0,\costbound]$ (the first, by model assumption, and the second, by the update rule) and since $J^*_{h+1}(s')\geq \underline{J}^m_{h+1}(s')$ by the assumption the value is optimistic. 
    Thus,
    \begin{align*}
        \sqrt{2L_m} \frac{ \abs*{\sqrt{\VAR_{\bar{P}^{m-1}(\cdot \mid  s,a)}(\underline{J}^m_{h+1})} - \sqrt{\VAR_{P(\cdot \mid  s,a)}(J^*_{h+1})}}}{\sqrt{n^{m-1}(s,a)}}
        & \leq \sqrt{\E_{\bar{P}^{m-1}}\brs*{J^*_{h+1}(s') - \underline{J}^m_{h+1}(s')}} \sqrt{\frac{2 \costbound L_m}{n^{m-1}(s,a)\vee 1}}  
        \\
        & \qquad + 
        \frac{\sqrt{24} \costbound L_m}{n^{m-1}(s,a) \vee 1}
        \\
        & \leq
        \frac{1}{\alpha} \E_{\bar{P}^{m-1}}\brs*{J^*_{h+1}(s') - \underline{J}^m_{h+1}(s')}   + \frac{ (5+\alpha/2)\costbound L_m}{n^{m-1}(s,a) \vee 1},
    \end{align*}
    where the last inequality is by Young's inequality, $ab\leq \frac{1}{\alpha}a^2 + \frac{\alpha}{4}b^2$.
\end{proof}

\subsection{Useful results for reinforcement learning analysis}

\begin{lemma}[Cumulative Visitation Bound for Stationary MDP, e.g., \citealp{efroni2020reinforcement}, Lemma 23] 
    \label{lemma: cumulative visitation bound stationary MDP}
    It holds that
    \[
        \sum_{m=1}^M\sum_{s,a} \indevent{n^{m-1}(s,a)\geq H} \frac{ \sum_{h=1}^H\indevent{s^m_h=s,a^m_h=a}}{n^{m-1}(s,a)\vee 1}
        \leq 
        3 |S| |A| \log(M H).
    \]
\end{lemma}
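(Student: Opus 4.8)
The plan is to fix a single state-action pair $(s,a)$, bound its contribution to the double sum by $3\log(MH)$, and then sum over the $|S||A|$ pairs. Write $w_m(s,a) := \sum_{h=1}^H \indevent{s^m_h = s,\ a^m_h = a} \le H$ for the number of visits to $(s,a)$ during episode $m$; since $n^m(s,a) = n^{m-1}(s,a) + w_m(s,a)$, the counter $n^{m-1}(s,a)$ is non-decreasing in $m$, and this monotonicity is what the argument exploits.

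\textbf{Key steps.} First I would discard the episodes with $w_m(s,a) = 0$ (they contribute nothing) and those with $n^{m-1}(s,a) < H$ (excluded by the indicator). For a kept episode $m$, the running counter for $(s,a)$ takes the values $n^{m-1}(s,a)+1,\dots,n^m(s,a)$; because $w_m(s,a) \le H \le n^{m-1}(s,a)$ we get $n^m(s,a) \le 2\,n^{m-1}(s,a)$, so every such value $j$ obeys $1/n^{m-1}(s,a) \le 2/j$, giving
\[
\frac{w_m(s,a)}{n^{m-1}(s,a)} = \sum_{j=n^{m-1}(s,a)+1}^{n^m(s,a)} \frac{1}{n^{m-1}(s,a)} \le 2 \sum_{j=n^{m-1}(s,a)+1}^{n^m(s,a)} \frac{1}{j}.
\]
Next I would observe that as $m$ ranges over the kept episodes the index blocks $\{n^{m-1}(s,a)+1,\dots,n^m(s,a)\}$ are pairwise disjoint, and since we only retain $n^{m-1}(s,a) \ge H$ while the total number of visits to $(s,a)$ over $M$ episodes of length $H$ is at most $MH$, these blocks all lie inside $\{H+1,\dots,MH\}$. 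Hence
\[
\sum_{m=1}^M \indevent{n^{m-1}(s,a) \ge H}\,\frac{w_m(s,a)}{n^{m-1}(s,a)\vee 1} \le 2\sum_{j=H+1}^{MH} \frac{1}{j} \le 2\ln(MH/H) = 2\ln M \le 3\log(MH),
\]
using $\sum_{j=a+1}^b 1/j \le \int_a^b \mathrm{d}x/x$ and $H \ge 1$ for the last two inequalities. Summing over all $(s,a) \in S\times A$ yields $3|S||A|\log(MH)$, as claimed.

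\textbf{Main obstacle.} There is no serious difficulty; the only points requiring care are verifying $n^m(s,a) \le 2\,n^{m-1}(s,a)$ on the retained episodes (which is precisely why the restriction $\indevent{n^{m-1}(s,a)\ge H}$ is present) and the disjointness and containment of the counter blocks in $\{H+1,\dots,MH\}$. One extra caveat is that in \verb|ULCVI| the quantity $n^{m-1}(s,a)$ is the lagged counter from the last planning episode rather than the exact visit count, but the planning trigger keeps it within a factor $2$ of the exact count, so the same argument goes through with a harmless change of constants; this is the sole subtlety and it is minor.
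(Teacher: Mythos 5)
Your argument is correct in substance but follows a genuinely different route from the paper. The paper works at the level of \emph{epochs} (maximal runs of episodes between policy recomputations): for fixed $(s,a)$ it rewrites the sum as $\sum_e \indevent{\tilde n^{e-1}(s,a)\ge H}\,\wt N^e(s,a)/\tilde n^{e-1}(s,a)$, uses the doubling trigger to get $\tilde n^e \le 2\tilde n^{e-1}+H \le 3\tilde n^{e-1}$, then the inequality $\frac{a-b}{a}\le\log\frac{a}{b}$ to telescope to $3\log(\tilde n^E(s,a)\vee 1)$, and finally Jensen over $(s,a)$. You instead work per visit index with a harmonic-sum/disjoint-blocks argument, which is more elementary (no telescoping-log trick, no Jensen) and gives the per-pair bound directly. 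The crux, however, is exactly the caveat you relegate to the end: in \verb|ULCVI| the quantity $n^{m-1}(s,a)$ appearing in the lemma \emph{is} the lagged counter, so your main computation (which assumes $n^m = n^{m-1}+w_m$) does not apply as written, and the paper's epoch decomposition is precisely its way of handling this. Your factor-$2$ claim is right (at the start of every episode either $n^{m-1}=N^{m-1}$ or $N^{m-1}<2n^{m-1}$, where $N$ is the exact count), but "harmless change of constants" is slightly too glib relative to the stated constant: patching naively by $1/n^{m-1}\le 2/N^{m-1}$ and then running your block argument on $N$ yields $4\ln M$ per pair, which does not dominate $3\log(MH)$ once $M>H^3$. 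To recover the constant $3$ you should combine the two facts in one step: on retained episodes $N^m \le N^{m-1}+H \le 2n^{m-1}+H\le 3n^{m-1}$, so every index $j$ in the block $(N^{m-1},N^m]$ satisfies $1/n^{m-1}\le 3/j$, and the disjoint blocks lie in $\{H+1,\dots,MH\}$, giving $3\ln M\le 3\log(MH)$ per pair and hence the lemma exactly. With that adjustment your proof is complete and arguably cleaner than the paper's.
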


\begin{proof}
    Recall that we recompute the optimistic policy only in the end of episodes in which the number of visits to some state-action pair was doubled.
    In this proof we refer to a sequence of consecutive episodes in which we did not perform a recomputation of the optimistic policy by the name of \emph{epoch}.
    Let $E$ be the number of epochs and note that $E \le |S| |A| \log (MH)$ because the number of visits to each state-action pair $(s,a)$ can be doubled at most $\log (MH)$ times.
    Next, denote by $\tilde n^e(s,a)$ the number of visits to $(s,a)$ until the end of epoch $e$ and by $\wt N^e(s,a)$ the number of visits to $(s,a)$ during epoch $e$.
    The following relations hold for any fixed $(s,a)$ pair.
    \begin{align*}
        \sum_{m=1}^M \indevent{n^{m-1}(s,a)\geq H} & \frac{ \sum_{h=1}^H\indevent{s^m_h=s,a^m_h=a}}{n^{m-1}(s,a)\vee 1} =
        \\
        & =
        \sum_{e=1}^E \indevent{\tilde n^{e-1}(s,a)\geq H} \frac{\wt N^e(s,a)}{\tilde n^{e-1}(s,a)}
        \\
        & =
        \sum_{e=1}^E \indevent{\tilde n^{e-1}(s,a)\geq H} \frac{\wt N^e(s,a)}{\tilde n^e(s,a)} \frac{\tilde n^e(s,a)}{\tilde n^{e-1}(s,a)} 
        \\
        & \leq 
        3 \sum_{e=1}^E \indevent{\tilde n^{e-1}(s,a)\geq H} \frac{\wt N^e(s,a)}{\tilde n^e(s,a)} 
        \\
        & = 
        3 \sum_{e=1}^E \indevent{\tilde n^{e-1}(s,a)\geq H}\frac{ \tilde n^e(s,a) - \tilde n^{e-1}(s,a)}{n^e(s,a)}
        \\
        & \leq 
        3 \sum_{e=1}^E \indevent{\tilde n^{e-1}(s,a)\geq H} \log\br*{\frac{\tilde n^e(s,a)}{\tilde n^{e-1}(s,a)}}
        \\
        & \leq 
        3 \indevent{\tilde n^E(s,a) \ge H}(\log \tilde n^E(s,a) - \log(H))
        \\
        & \leq 
        3 \log \br*{\tilde n^E(s,a)\vee 1},
    \end{align*}
    where the first inequality follows since $\frac{\tilde n^e(s,a)}{\tilde n^{e-1}(s,a)}\leq \frac{2 \tilde n^{e-1}(s,a)+H}{\tilde n^{e-1}(s,a)}\leq 3$ for $\tilde n^{e-1}(s,a)\ge H$, and the second inequality follows by the inequality $\frac{a-b}{a}\leq \log\frac{a}{b}$ for $a\geq b>0$.
    Applying Jensen's inequality we conclude the proof:
    \begin{align*}
        \sum_{m=1}^M \sum_{s,a} \indevent{n^{m-1}(s,a)\geq H} \frac{ \sum_{h=1}^H\indevent{s^m_h=s,a^m_h=a}}{n^{m-1}(s,a)\vee 1}
        & \leq 
        3 \sum_{s,a} \log \br*{\tilde n^E(s,a)\vee 1}
        \\
        & \leq 
        3 |S| |A| \log \br*{\sum_{s,a} \tilde n^E(s,a)}
        \\
        & \leq  
        3 |S| |A| \log \br*{MH}.
    \end{align*}
\end{proof}

\begin{lemma}[Transition Difference to Next State Expectation,~\citealp{efroni2021confidence}, Lemma 28]
    \label{lemma: transition different to next state expectation}
    Let $Y \in \mathbb{R}^{|S|}$ be a vector such that $0 \leq Y(s) \leq 2H$ for all $s\in S$. 
    Let $P_1$ and $P_2$ be two transition models and $n \in \mathbb{R}^{|S||A|}_+$.
    Let $\Delta P (\cdot \mid  s,a) \in \mathbb{R}^{|S|}$ and $\Delta P (s'| s,a)=  P_{1} (s'| s,a) -  P_{2} (s'| s,a)$. 
    Assume that  
    \[
        \forall (s,a,s') \in S \times A \times S, h \in [H] :\ 
        |\Delta P (s'| s,a)| 
        \le 
        \sqrt{\frac{ C_1 L_m P_{1}(s'|s,a) }{n(s,a) \vee 1}} + \frac{C_2 L_m}{n(s,a)\vee 1},
    \]
    for some $C_1 , C_2 > 0$.
    Then, for any $\alpha>0$.
    \[
        \abs*{\Delta P (\cdot \mid  s,a) \cdot Y}
        \leq 
        \frac{1}{\alpha} \E_{P_1(\cdot \mid s,a)}\brs*{ Y(s')} + \frac{H L_m(2 C_2+ \alpha |S| C_1/2)}{n(s,a)\vee 1}.
    \]
\end{lemma}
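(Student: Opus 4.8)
The plan is to prove the bound by opening the inner product $\Delta P(\cdot\mid s,a)\cdot Y=\sum_{s'}\Delta P(s'\mid s,a)Y(s')$ coordinate-wise, estimating each term with the assumed per-coordinate deviation bound, and then splitting the result into a ``variance-type'' piece handled by Cauchy--Schwarz together with Young's inequality, and a crude ``bias'' piece.

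First I would use the nonnegativity of $Y$ to write $|\Delta P(\cdot\mid s,a)\cdot Y|\le\sum_{s'}|\Delta P(s'\mid s,a)|\,Y(s')$, and then substitute the hypothesis $|\Delta P(s'\mid s,a)|\le\sqrt{C_1 L_m P_1(s'\mid s,a)/(n(s,a)\vee1)}+C_2 L_m/(n(s,a)\vee1)$, which splits the sum into $S_1=\sqrt{C_1 L_m/(n(s,a)\vee1)}\sum_{s'}\sqrt{P_1(s'\mid s,a)}\,Y(s')$ and $S_2=\bigl(C_2 L_m/(n(s,a)\vee1)\bigr)\sum_{s'}Y(s')$.

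The main work is in bounding $S_1$. Here I would write $\sqrt{P_1(s'\mid s,a)}\,Y(s')=\sqrt{P_1(s'\mid s,a)Y(s')}\cdot\sqrt{Y(s')}\le\sqrt{2H}\,\sqrt{P_1(s'\mid s,a)Y(s')}$, using $\|Y\|_\infty\le 2H$ exactly once, and then apply Cauchy--Schwarz over the $|S|$ states to get $\sum_{s'}\sqrt{P_1(s'\mid s,a)Y(s')}\le\sqrt{|S|}\,\sqrt{\E_{P_1(\cdot\mid s,a)}[Y]}$, so that $S_1\le\sqrt{(2H|S|C_1 L_m/(n(s,a)\vee1))\,\E_{P_1(\cdot\mid s,a)}[Y]}$. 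Applying Young's inequality in the form $\sqrt{xy}\le x/\alpha+\alpha y/4$ with $x=\E_{P_1(\cdot\mid s,a)}[Y]$ then produces the $\tfrac1\alpha\E_{P_1(\cdot\mid s,a)}[Y]$ term together with an additive contribution $\tfrac{\alpha H|S|C_1 L_m}{2(n(s,a)\vee1)}$. For $S_2$ I would simply use $\sum_{s'}Y(s')\le 2H|S|$, giving $S_2\le\tfrac{2H|S|C_2 L_m}{n(s,a)\vee1}$; summing the two additive contributions yields a term of the form $\tfrac{HL_m(\alpha|S|C_1/2+2C_2)}{n(s,a)\vee1}$ as claimed (the precise constant multiplying $C_2$ is immaterial and is further slackened in every application of the lemma).

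The only delicate point is the second step: one has to spend the $\sqrt{2H}$ factor coming from $\|Y\|_\infty$ in precisely the right place, so that Cauchy--Schwarz produces $\sqrt{\E_{P_1(\cdot\mid s,a)}[Y]}$ rather than the weaker $\sqrt{\sum_{s'}Y(s')^2}$ or $\sqrt{\sum_{s'}Y(s')}$, and then balance the resulting geometric mean against $\E_{P_1(\cdot\mid s,a)}[Y]$ via Young's inequality so that the leading term carries exactly the prescribed coefficient $1/\alpha$. Everything else is routine bookkeeping with the normalization $\sum_{s'}P_1(s'\mid s,a)=1$ and the uniform bound on $Y$.
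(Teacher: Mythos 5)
The paper never proves this lemma internally---it is imported verbatim as Lemma~28 of \citet{efroni2021confidence}---so there is no in-paper proof to compare against; judged on its own, your argument is the natural one and is essentially correct. Your treatment of the leading term is exactly right: the step $\sqrt{P_1(s'\mid s,a)}\,Y(s')\le\sqrt{2H}\sqrt{P_1(s'\mid s,a)Y(s')}$ followed by Cauchy--Schwarz and Young with the $x/\alpha+\alpha y/4$ split reproduces the $\frac{1}{\alpha}\E_{P_1(\cdot\mid s,a)}[Y(s')]+\frac{\alpha|S|C_1HL_m}{2(n(s,a)\vee1)}$ contribution with the exact stated coefficients (an equivalent route is to apply Young pointwise, $\sqrt{P_1Y}\cdot\sqrt{C_1L_mY/(n\vee1)}\le P_1Y/\alpha+\alpha C_1L_mY/(4(n\vee1))$, and then sum over $s'$). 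The one discrepancy is the lower-order piece: bounding $\sum_{s'}Y(s')\le 2H|S|$ gives you $\frac{2C_2|S|HL_m}{n\vee1}$ where the lemma prints $\frac{2C_2HL_m}{n\vee1}$. This is not a gap in your reasoning: under the hypotheses as stated the $|S|$-free form cannot hold for arbitrary $C_1,C_2$ (take $Y=2H$ on half the states and $0$ elsewhere, $P_2$ a point mass on a zero-cost state, and $P_1$ placing mass $C_2L_m/(n\vee1)$ on each state with $Y=2H$; for $C_2\gg C_1$ and, say, $\alpha=2$ the printed bound is violated while the per-coordinate assumption holds), so some $|S|$-dependence or the balance $C_1=C_2$ is genuinely needed there. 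And, as you observe, every invocation in this paper uses $C_1=C_2=2$ with $\alpha\ge4$, in which case $2C_2|S|\le\alpha|S|C_1/2$ and your slightly weaker bound still yields the numerical constants actually used ($36H^2\costbound^{-1}|S|$, $8H^2|S|L_m$, $12H^2|S|L_m$), so nothing downstream is affected.
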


\begin{lemma}[Law of Total Variance, e.g.,~\citealp{azar2017minimax}]
    \label{lemma: law of total variance for RL}
    For any $\pi$ the following holds.
    \[
        \E\brs*{\sum_{h=1}^H\VAR_{P(\cdot |s_{h},a_h)}(J^\pi_{h+1} ) \mid \pi} 
        = 
        \E\brs*{\br*{\sum_{h=1}^H c(s_h,a_h) + c_f(s_{H+1}) - J_1^\pi(s_1) }^2 \mid \pi}.
    \]
\end{lemma}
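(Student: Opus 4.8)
The plan is to prove the identity by a standard martingale decomposition of the cumulative cost along a trajectory generated by $\pi$. First I would fix the natural filtration $\mathcal{F}_h = \sigma(s_1,a_1,\dots,s_h)$ and define, for $h=1,\dots,H+1$, the partial sums $M_h = \sum_{h'=1}^{h-1} c(s_{h'},a_{h'}) + J^\pi_h(s_h)$, so that $M_1 = J^\pi_1(s_1)$ and, since $J^\pi_{H+1}=c_f$, $M_{H+1} = \sum_{h=1}^H c(s_h,a_h) + c_f(s_{H+1})$. The key observation is that $(M_h)$ is a martingale: by the Bellman equation $J^\pi_h(s_h) = c(s_h,a_h) + \E[J^\pi_{h+1}(s_{h+1}) \mid s_h,a_h]$ with $a_h = \pi_h(s_h)$, the increment $D_h := M_{h+1}-M_h = c(s_h,a_h) + J^\pi_{h+1}(s_{h+1}) - J^\pi_h(s_h)$ satisfies $\E[D_h \mid \mathcal{F}_h] = 0$.

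Next I would telescope, $\sum_{h=1}^H D_h = M_{H+1} - M_1 = \sum_{h=1}^H c(s_h,a_h) + c_f(s_{H+1}) - J^\pi_1(s_1)$, square both sides, and take expectations conditioned on $\pi$. All cross terms vanish: for $h<h'$, $D_h$ is $\mathcal{F}_{h'}$-measurable and $\E[D_{h'}\mid\mathcal{F}_{h'}]=0$, so $\E[D_h D_{h'}\mid\pi]=0$. Hence
\[
    \E\brs*{\br*{\sum_{h=1}^H c(s_h,a_h) + c_f(s_{H+1}) - J^\pi_1(s_1)}^2 \;\Big|\; \pi}
    =
    \sum_{h=1}^H \E\brs*{D_h^2 \mid \pi}.
\]
Finally, since $\E[D_h \mid \mathcal{F}_h]=0$ we have $\E[D_h^2 \mid \mathcal{F}_h] = \VAR(D_h \mid \mathcal{F}_h)$; and as $c(s_h,a_h)$ (the expected per-step cost) and $J^\pi_h(s_h)$ are $\mathcal{F}_h$-measurable while the only remaining randomness in $D_h$ is $s_{h+1}\sim P(\cdot\mid s_h,a_h)$, this equals $\VAR_{P(\cdot\mid s_h,a_h)}(J^\pi_{h+1})$. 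Taking the outer expectation and summing over $h$ yields the claim.

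The proof is essentially routine; the only points requiring care are the measurability bookkeeping — in particular that in this section $c(s_h,a_h)$ denotes the \emph{expected} cost of the state-action pair, so it contributes no variance to $D_h$ — and the observation that a possibly random initial state $s_1$ needs no special treatment, since $J^\pi_1(s_1)$ appears symmetrically on both sides and $M_1=J^\pi_1(s_1)$ regardless. I would therefore expect no real obstacle beyond stating the martingale property cleanly and invoking orthogonality of martingale increments.
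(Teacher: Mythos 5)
Your proof is correct: the martingale $M_h=\sum_{h'<h}c(s_{h'},a_{h'})+J^\pi_h(s_h)$, the vanishing cross terms, and the identification $\E[D_h^2\mid\mathcal{F}_h]=\VAR_{P(\cdot\mid s_h,a_h)}(J^\pi_{h+1})$ (using that $c(s_h,a_h)$ is the expected cost and hence $\mathcal{F}_h$-measurable) give exactly the claimed identity. The paper does not prove this lemma itself — it only cites \citet{azar2017minimax} — and your argument is precisely the standard derivation behind that citation, so there is nothing further to reconcile.
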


\newpage

\section{Extending the reduction to unknown \texorpdfstring{$\costbound$}{}}
\label{sec:unknown-B-appendix}

In this section we assume $\costbound \ge 1$ to simplify presentation, but the results work similarly for $\costbound < 1$.
To handle unknown $\costbound$, we leverage techniques from the adversarial SSP literature \citep{rosenberg2020stochastic,chen2021finding} for learning the diameter of an SSP problem.
Recall that the SSP-diameter $D$ \citep{tarbouriech2019noregret} is defined as $D = \max_{s \in S} \min_{\pi:s \to A} \policytime{\pi}(s)$.
So to compute $D$ we can find the optimal policy with respect to the constant cost function $c_1(s,a) = 1$, and compute its cost-to-go function.
\citet{rosenberg2020stochastic} utilize this observation to estimate the SSP-diameter.
They show that one can estimate the expected time from a state $s$ to the goal state $\ssink$ by running the \verb|Bernstein-SSP| algorithm of \citet{rosenberg2020near} with unit costs for $L = \wt O(D^2 |S|^2 |A|)$ episodes and setting the estimator to be the average cost per episode times $10$.

Inspired by their approach, we use the \verb|Bernstein-SSP| algorithm on the the actual costs, in order to estimate the expected cost of the optimal policy.
Although \verb|Bernstein-SSP| suffers from sub-optimal regret, we run it only for a small number of episodes and therefore we will only suffer from a slightly larger additive factors in our regret bound, but keep minimax optimal regret for large enough $K$.

By similar proofs to Lemmas 26 and 27 from \citet[Appendix J]{rosenberg2020stochastic}, we can show that the cost-to-go from state $s$ can be estimated up to a constant multiplicative factor by running \verb|Bernstein-SSP| for $L = \wt O(\timebound^2 |S|^2 |A|)$ episodes.
This is demonstrated in the following lemma, where the upper bound follows from the regret guarantees of \verb|Bernstein-SSP| and the lower bound follows from concentration arguments (and noticing that the regret is minimized by playing the optimal policy, but even then it is not zero).

\begin{lemma}
    \label{lem:estimate-B-with-Bernstein-SSP}
    Let $s \in S$ and $L \ge 2400 \timebound^2 |S|^2 |A| \log^3 \frac{K \timebound |S| |A|}{\delta}$.
    Run \verb|Bernstein-SSP| with initial state $s$ for $L$ episodes and denote by $\wt B_s$ the average cost per episode times $10$.
    Then, with probability $1 - \delta$,
    \[
        \ctg{\piopt}(s)
        \le
        \wt B_s
        \le
        O(\costbound).
    \]
\end{lemma}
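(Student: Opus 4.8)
The plan is to prove the two inequalities $\ctg{\piopt}(s)\le\wt B_s$ and $\wt B_s\le O(\costbound)$ separately. The upper bound is an almost immediate consequence of the regret guarantee of \verb|Bernstein-SSP|, whereas the lower bound needs a concentration argument on the per-episode costs; this follows the proofs of Lemmas~26 and~27 in \citet[Appendix~J]{rosenberg2020stochastic}. I condition on the good event of \verb|Bernstein-SSP| run from the initial state $s$ (which holds with probability at least $1-\delta/3$) together with two additional high-probability events, one for each concentration step below, and conclude by a union bound.

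\textbf{Upper bound.} Write $C^1,\dots,C^L$ for the total costs of the $L$ episodes, so that $\wt B_s=\tfrac{10}{L}\sum_{k=1}^L C^k$. On its good event \verb|Bernstein-SSP| (which is anytime and does not need to know $\costbound$) guarantees $\sum_{k=1}^L C^k\le L\,\ctg{\piopt}(s)+\wh R(L)$, where $\wh R(L)=\tO\br*{\costbound|S|\sqrt{|A|L}}$ plus an additive term that is polynomial in $\timebound,|S|,|A|$ and polylogarithmic in the remaining parameters. The choice $L\ge 2400\,\timebound^2|S|^2|A|\log^3\frac{K\timebound|S||A|}{\delta}$ is exactly what makes $\wh R(L)/L\le\costbound$: the leading term is $\costbound|S|\sqrt{|A|/L}\le\costbound$ once $L$ exceeds $|S|^2|A|$ (up to logarithmic factors), using $\ctg{\piopt}(s)\le\costbound$, and $L$ dominates the additive term. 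Hence $\tfrac1L\sum_k C^k\le\ctg{\piopt}(s)+\costbound\le 2\costbound$, and so $\wt B_s\le 20\,\costbound=O(\costbound)$.

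\textbf{Lower bound.} On \verb|Bernstein-SSP|'s good event every policy $\pi^k$ it plays in episode $k$ is proper and is $\calF_{k-1}$-measurable (here $\calF_{k-1}$ is the history before episode $k$), so $\bbE\brs*{C^k\mid\calF_{k-1}}=\ctg{\pi^k}(s)\ge\ctg{\piopt}(s)$. Using the high-probability bound on episode lengths provided by \verb|Bernstein-SSP| (no episode lasts more than $B:=\tO(\timebound)$ steps), we may assume $C^k\le B$ for all $k$, whence $\sum_k\bbE\brs*{(C^k)^2\mid\calF_{k-1}}\le B\sum_k\bbE\brs*{C^k\mid\calF_{k-1}}$. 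Applying Freedman's inequality (\cref{lemma: freedmans inequality}) to the bounded martingale differences $\bbE\brs*{C^k\mid\calF_{k-1}}-C^k$ gives, for a suitable choice of the free parameter,
\[
    \sum_{k=1}^L C^k\ \ge\ \tfrac12\sum_{k=1}^L\bbE\brs*{C^k\mid\calF_{k-1}}\ \ge\ \tfrac{L}{2}\,\ctg{\piopt}(s),
\]
provided $L\,\ctg{\piopt}(s)$ exceeds a threshold of order $B\log\frac{L}{\delta}$, which is ensured by the size of $L$ whenever $\ctg{\piopt}(s)$ is not already negligible (and the claim is vacuous when no cost is ever paid from $s$). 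In all cases $\wt B_s=\tfrac{10}{L}\sum_k C^k\ge 5\,\ctg{\piopt}(s)\ge\ctg{\piopt}(s)$.

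\textbf{Main obstacle.} The upper bound is routine given the existing analysis of \verb|Bernstein-SSP|; the delicate part is the lower bound, where one needs a \emph{relative} (multiplicative) concentration of $\sum_k C^k$ around its random conditional mean that remains informative when $\ctg{\piopt}(s)$ is small, while simultaneously coping with the adaptive choice of the policies $\pi^k$ and with episode costs that are only bounded with high probability. This is exactly what forces $L$ to be of order $\timebound^2|S|^2|A|$ up to logarithmic factors: the same size of $L$ must suppress the additive regret term in the upper bound and shrink the deviation in the lower bound below the slack factor $10$ built into $\wt B_s$.
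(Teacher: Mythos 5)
Your plan coincides with the paper's own argument, which is only sketched: the upper bound $\wt B_s \le O(\costbound)$ is obtained from the regret guarantee of \texttt{Bernstein-SSP}, and the lower bound $\ctg{\piopt}(s)\le \wt B_s$ from a concentration argument using that every policy the algorithm plays has cost-to-go at least $\ctg{\piopt}(s)$, with the details deferred to Lemmas 26 and 27 of \citet{rosenberg2020stochastic} exactly as you do. Two of the details you fill in are, however, asserted more strongly than what is actually available. First, \texttt{Bernstein-SSP} does not guarantee that any individual episode lasts (or costs) at most $B=\tO(\timebound)$; its guarantees are cumulative, so the bounded-increment input to Freedman's inequality must come from a truncation of the per-episode costs (or from the algorithm's cumulative-time bound) together with an accounting of the truncated mass, which is precisely how the cited lemmas proceed, rather than from a per-episode length guarantee. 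Second, the regime where $\ctg{\piopt}(s)$ is positive but very small is not ``vacuous'': there the additive deviation term of order $B\log(L/\delta)/L$ can exceed $\ctg{\piopt}(s)$, the factor-$10$ slack in $\wt B_s$ does not absorb it, and indeed with realized cost zero one gets $\wt B_s=0<\ctg{\piopt}(s)$; the paper's two-line sketch shares this imprecision, but your one-clause dismissal is not a proof of that case. Apart from these two points the proposal matches the paper's route.
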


Thus, we use the first $L$ visits to each state in order to estimate its cost-to-go.
A state which was visited at least $L$ times will be called \emph{$\costbound$-known}, and otherwise \emph{$\costbound$-unknown} (not to be confused with our previous definition of known state-action pair).
To that end, we split the total time steps into $E$ epochs.
In epoch $e$, we apply our reduction to a virtual MDP $\calM^e$ that is identical to $\calM$ in $\costbound$-known states, but turns $\costbound$-unknown states into zero-cost sinks (like the goal state).
For every state $s \in S$ we maintain a \verb|Bernstein-SSP| algorithm $\calB_s$.
Every time we reach a $\costbound$-unknown state $s$, we run an episode of $\calB_s$ until the goal is reached.

Note that in the virtual MDP $\calM^e$ we can compute an upper bound on the optimal cost-to-go using our estimates.
Epoch $e$ ends once some $\costbound$-unknown state $s$ is visited $L$ times and thus becomes $\costbound$-known.
Therefore the number of epochs $E$ is bounded by $|S|$.
The important change, introduced by \citet{chen2021finding}, is  to not completely initialize our finite-horizon algorithm $\calA$ in the beginning of a new epoch as this leads to an extra $|S|$ factor in the regret.
Instead, algorithm $\calA$ inherits the experience (i.e., visit counters and accumulated costs) of the previous epoch in $\costbound$-known states.

The reduction without knowledge of $\costbound$ is presented in \cref{alg:ssp-reduction-unknown-B}, and next we prove that it maintains the same regret bound up to a slightly larger additive factor.

\begin{theorem}
    \label{thm:regret-bound-with-admissible-algorithm-unknown-B}
    Let $\calA$ be an admissible algorithm for regret minimization in finite-horizon MDPs and denote its regret in $M$ episodes by $\wh \calR_\calA (M)$.
    Then, running \cref{alg:ssp-reduction-unknown-B} with $\calA$ ensures that, with probability at least $1 - 2 \delta$, 
    \begin{align*}
        \regret 
        & \le
        \wh \calR_\calA \brk*{4 K + 4\cdot 10^4 |S| |A| \knownthresh \log \frac{K \timebound |S| |A| \knownthresh}{\delta} + 4\cdot 10^4 \timebound^2 |S|^3 |A| \log^3 \frac{K \timebound |S| |A|}{\delta}}
        \\
        & \qquad + 
        O \brk*{\costbound \sqrt{K \log \frac{K \timebound |S| |A| \knownthresh}{\delta}} + \timebound \knownthresh |S| |A| \log^2 \frac{K \timebound |S| |A| \knownthresh}{\delta} + \timebound^3 |S|^3 |A| \log^4 \frac{K \timebound |S| |A|}{\delta}},
    \end{align*}
    where $\knownthresh$ is a quantity that depends on the algorithm $\calA$ and on $|S|,|A|,H$.
\end{theorem}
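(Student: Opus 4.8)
The argument mirrors the proof of \cref{thm:regret-bound-with-admissible-algorithm}, carried out \emph{inside} each of the $E \le |S|$ epochs of \cref{alg:ssp-reduction-unknown-B}, with two extra costs to control: the cost spent inside the \texttt{Bernstein-SSP} sub-routines $\calB_s$, and the effect of the virtual MDP $\calM^e$ changing between epochs while $\calA$ is fed an \emph{estimate} of $\costbound$ rather than $\costbound$ itself. The plan is to (i) bound the total cost of all \texttt{Bernstein-SSP} episodes, (ii) re-derive \cref{lem:regret-to-finite-horizon-regret,lem:cost-deviation-from-value-function,lem:bound-on-number-of-intervals} with $\calM^e$ in place of $\calM$, and (iii) plug the resulting bound on the number of intervals $\numintervals$ into $\wh\calR_\calA$.

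\textbf{The \texttt{Bernstein-SSP} part.} A state $s$ is used as the starting state of a run of $\calB_s$ only while $s$ is $\costbound$-unknown, and each such run increments the visit counter of $s$ by at least one, so $\calB_s$ is run for at most $L = \wt O(\timebound^2 |S|^2 |A|)$ episodes, and there are at most $|S|$ such sub-routines. By \cref{lem:estimate-B-with-Bernstein-SSP}, applied with confidence $\delta/|S|$ and a union bound over $s \in S$ so that it holds for all states with probability $1-\delta$, each estimate satisfies $\wt B_s \in [\ctgopt(s), O(\costbound)]$, and the regret guarantee of \texttt{Bernstein-SSP} together with a Freedman bound on the cost deviation shows that the total cost accumulated across the $L$ episodes of $\calB_s$ is $L \cdot O(\costbound)$ plus lower order terms. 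Summing over $s$ and using $\costbound \le \timebound$ bounds the entire \texttt{Bernstein-SSP} contribution by $\wt O(\timebound^3 |S|^3 |A|)$, which is exactly the new additive term in the statement.

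\textbf{The reduction part.} Inside epoch $e$ we treat each $\costbound$-unknown state of $\calM^e$ as an additional absorbing goal; hitting such a state ends the current SSP episode after the call to $\calB_s$. Since turning a state into a zero-cost sink can only decrease cost-to-go, $\max_s J^\star_{\calM^e}(s) \le \max_s \ctgopt(s) \le \costbound$, and the quantity $\wt B$ that the algorithm computes from its estimates (a fixed constant times $\max_{s\ \costbound\text{-known}} \wt B_s$, and $0$ on the sinks) is at once a valid upper bound on $\max_s J^\star_{\calM^e}(s)$ and $O(\costbound)$; this replaces $9\costbound$ in the initialization of \cref{alg:ssp-reduction}. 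With this substitution, \cref{lem:ctg-to-finite-horizon-value,lem:opt-ctg-to-finite-horizon-value,lem:reach-goal-or-unknown-wp-1/2} hold verbatim for $\calM^e$ (using $\max_s T^{\piopt}_{\calM^e}(s) \le \max_s T^{\piopt}_{\calM}(s) \le \timebound$ and $H = 8\timebound\log(8K)$), so in every interval the agent reaches the goal of $\calM^e$---the true goal or a $\costbound$-unknown state---or an unknown state-action pair with probability at least $\tfrac12$, and the second moment of the cost until that event is $O(\costbound^2 + \costbound)$. The number of such ``successful'' intervals is at most the number of SSP episodes $K$, plus the number of hits of $\costbound$-unknown states ($\le |S| L$), plus the number of visits to unknown state-action pairs ($\le |S||A|\knownthresh \log\frac{\numintervals H |S||A|}{\delta}$), so the Freedman argument of \cref{lem:bound-on-number-of-intervals} gives $\numintervals \le \wt O\bigl(K + |S||A|\knownthresh \log\tfrac{K\timebound|S||A|\knownthresh}{\delta} + \timebound^2 |S|^3 |A| \log^3\tfrac{K\timebound|S||A|}{\delta}\bigr)$, which is precisely the argument passed to $\wh\calR_\calA$ in the theorem. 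Feeding this into the $\calM^e$-versions of \cref{lem:regret-to-finite-horizon-regret,lem:cost-deviation-from-value-function} (summed over epochs, using $J^{\piopt}_{\calM^e}(\sinit) \le \ctgopt(\sinit)$ to compare against the true optimum) and using $\costbound \ge 1$ to write $\sqrt{(\costbound^2 + \costbound)K} = \wt O(\costbound\sqrt{K})$ yields the stated bound, after a union bound over the good event of $\calA$, the two Freedman events, and the \texttt{Bernstein-SSP} estimation events (total failure probability $2\delta$).

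\textbf{Main obstacle.} The delicate point, absent from \cref{thm:regret-bound-with-admissible-algorithm}, is that $\calA$ runs across all $E$ epochs \emph{without} reinitialization even though $\calM^e$ changes whenever a state becomes $\costbound$-known; a naive analysis would lose a factor of $E \le |S|$. One has to argue that admissibility survives the change: $\calM^e$ and $\calM^{e+1}$ agree on the transitions and costs of every $\costbound$-known state-action pair, which are exactly the pairs on which $\calA$ has accumulated statistics, while a newly-$\costbound$-known state was an unvisited sink before, so inheriting its empty counters is harmless. Hence the single continuous run of $\calA$ over the $\numintervals$ intervals still enjoys regret $\wh\calR_\calA(\numintervals)$, its optimistic estimates still lower bound $\hatctg{\star}$ in the current epoch's MDP, and the optimism-based argument behind \cref{lem:reach-goal-or-unknown-wp-1/2} remains valid epoch by epoch. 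Verifying that $\wt B$ is a legitimate upper bound on $\max_s J^\star_{\calM^e}(s)$ in the earliest epochs, when few states are $\costbound$-known, is the other point requiring care, and it is handled by the observation that $J^\star_{\calM^e}$ vanishes on the $\costbound$-unknown sinks.
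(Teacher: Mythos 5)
Your proposal is correct and follows essentially the same route as the paper: decompose the regret as in the known-$\costbound$ case but with respect to the epoch-wise virtual MDPs $\calM^e$, charge the $\costbound$-unknown visits to the $|S|$ \texttt{Bernstein-SSP} sub-routines (total cost $|S|\costbound L$ plus their regret, i.e.\ $\wt O(\timebound^3|S|^3|A|)$), extend the reach-goal-or-unknown argument to also count hits of $\costbound$-unknown states so that $\numintervals \le \wt O(K + |S||A|\knownthresh + \timebound^2|S|^3|A|)$, and rely on not reinitializing the admissible algorithm $\calA$ across epochs to avoid an extra $|S|$ factor. The key observations you highlight (validity of the estimates $\wt B_s$ as upper bounds, the second-moment bound surviving in $\calM^e$, and the harmlessness of the epoch change for $\calA$'s statistics) are exactly the ones the paper uses.
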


Using the reduction with the \verb|ULCVI| algorithm, we can again obtain optimal regret for SSP.

\begin{theorem}
    \label{thm:optimal-regret-bound-unknown-B} 
    Running the reduction in \cref{alg:ssp-reduction-unknown-B} with the finite-horizon regret minimization algorithm \verb|ULCVI| ensures, with probability at least $1 - 2 \delta$, 
    \[
        \regret 
        = 
        O \brk3{\costbound \sqrt{|S| |A| K} \log \frac{K \timebound |S| |A|}{\delta} + \timebound^5 |S|^2 |A| \log^6 \frac{K \timebound |S| |A|}{\delta} + \timebound^3 |S|^3 |A| \log^4 \frac{K \timebound |S| |A|}{\delta}}.
    \]
\end{theorem}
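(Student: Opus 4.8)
The plan is to derive \cref{thm:optimal-regret-bound-unknown-B} from \cref{thm:regret-bound-with-admissible-algorithm-unknown-B} in exactly the same way \cref{thm:optimal-regret-bound} is derived from \cref{thm:regret-bound-with-admissible-algorithm}: instantiate the admissible algorithm $\calA$ with \verb|ULCVI| and plug in its guarantees from \cref{thm:ulcvi-guarantees}. I would first use \cref{lem:estimate-B-with-Bernstein-SSP} to observe that the finite-horizon optimal-cost bound fed to \verb|ULCVI| in each epoch of the reduction is $\Theta(\costbound)$, so that (up to absolute constants) \verb|ULCVI| is admissible with $\omega_{\verb|ULCVI|} = O(H^4\costbound^{-2}|S|)$ and $\wh\calR_{\verb|ULCVI|}(M) = O\bigl(\sqrt{(\costbound^2+\costbound)|S||A|M}\log\tfrac{MH|S||A|}{\delta} + H^4\costbound^{-1}|S|^2|A|\log^{3/2}\tfrac{MH|S||A|}{\delta}\bigr)$ as functions of $\costbound$.

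Next I would substitute $H = 8\timebound\log(8K) = \tO(\timebound)$ everywhere, so that each power of $H$ becomes the same power of $\timebound$ up to log factors; in particular $\omega_{\verb|ULCVI|}\log\tfrac{MH|S||A|}{\delta} = \tO(\timebound^4\costbound^{-2}|S|)$, which makes the argument of $\wh\calR_\calA(\cdot)$ in \cref{thm:regret-bound-with-admissible-algorithm-unknown-B} a number of intervals $M' = 4K + \tO(\timebound^4\costbound^{-2}|S|^3|A|)$ (using $\costbound\le\timebound$ to absorb the $\timebound^2|S|^3|A|$ term). Then, plugging $M'$ into $\wh\calR_{\verb|ULCVI|}(M')$ and using $\sqrt{a+b}\le\sqrt a+\sqrt b$ to split $\sqrt{(\costbound^2+\costbound)|S||A|M'} \le \sqrt{(\costbound^2+\costbound)|S||A|K} + \sqrt{(\costbound^2+\costbound)|S||A|\cdot\tO(\timebound^4\costbound^{-2}|S|^3|A|)}$, the first summand is the claimed leading term $O(\costbound\sqrt{|S||A|K})$ (since $\costbound\ge 1$ here), while the second is only $\tO(\timebound^2|S|^2|A|)$ and gets absorbed into the additive part, and the logarithmic argument $\log\tfrac{M'H|S||A|}{\delta}$ simplifies to $\tO(\log\tfrac{K\timebound|S||A|}{\delta})$.

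Finally I would collect the additive terms. The $\timebound\,\omega_{\verb|ULCVI|}|S||A|\log^2(\cdot)$ term of \cref{thm:regret-bound-with-admissible-algorithm-unknown-B}, with $\omega_{\verb|ULCVI|} = O(H^4\costbound^{-2}|S|)$, becomes $\tO(\timebound^5\costbound^{-2}|S|^2|A|) = \tO(\timebound^5|S|^2|A|)$ — this is the term that forces the $\timebound^5$ and (through $H^4 = \tO(\timebound^4\log^4 K)$ compounded with the $\log^2$ already present) the $\log^6$ power; the $H^4\costbound^{-1}|S|^2|A|\log^{3/2}(\cdot)$ term of $\wh\calR_{\verb|ULCVI|}$ becomes $\tO(\timebound^4|S|^2|A|)$ and is dominated; and the explicit $\timebound^3|S|^3|A|\log^4(\cdot)$ term of \cref{thm:regret-bound-with-admissible-algorithm-unknown-B}, which comes from running \verb|Bernstein-SSP| to estimate $\costbound$, survives unchanged (and carries the $\log^4$ power). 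The failure probability is $1-2\delta$, inherited from \cref{thm:regret-bound-with-admissible-algorithm-unknown-B}. The argument is essentially a mechanical substitution; the one place needing care — and thus the ``main obstacle'' — is this bookkeeping, namely verifying that the fixed horizon $H = \tO(\timebound)$ inflates $\omega_{\verb|ULCVI|}$ precisely into the $\timebound^5$ additive term (and not worse), and that the induced number of intervals $M' = 4K + \poly(\timebound,|S|,|A|)$ stays small enough for the $\sqrt{M'}$ factor to yield only the optimal $\sqrt K$ leading term plus a lower-order polynomial.
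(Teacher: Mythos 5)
Your proposal is correct and follows essentially the same route as the paper: \cref{thm:optimal-regret-bound-unknown-B} is obtained exactly by combining \cref{thm:regret-bound-with-admissible-algorithm-unknown-B} with the \texttt{ULCVI} guarantees of \cref{thm:ulcvi-guarantees} (using \cref{lem:estimate-B-with-Bernstein-SSP} so the estimated bound fed to \texttt{ULCVI} is $\Theta(\costbound)$), substituting $H = 8\timebound\log(8K)$ and $\knownthresh = O(H^4\costbound^{-2}|S|)$ and tracking the polynomial and logarithmic factors as you do. Your bookkeeping (the $\timebound^5|S|^2|A|\log^6$ term from $\timebound\knownthresh|S||A|\log^2$, the surviving $\timebound^3|S|^3|A|\log^4$ Bernstein-SSP term, the leading $\costbound\sqrt{|S||A|K}$ term using $\costbound\ge 1$, and the $1-2\delta$ probability) matches the stated bound.
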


\begin{algorithm}[H]
    \caption{\sc Reduction from SSP to finite-horizon MDP with Unknown $\costbound$}
    \label{alg:ssp-reduction-unknown-B}
    \begin{algorithmic}[1] 
    
        \STATE {\bfseries input:} state space $S$, action space $A$, initial state $\sinit$, goal state $\ssink$, confidence parameter $\delta$, number of episodes $K$, bound on the expected time of the optimal policy $\timebound$ and algorithm $\calA$ for regret minimization in finite-horizon MDPs.
        
        \STATE {\bfseries initialize} a \verb|Bernstein-SSP| algorithm $\calB_s$ with initial state $s$ and confidence parameter $\delta/|S|$ for every $s \in S$.
        
        \STATE set $L = 10^4 \timebound^2 |S|^2 |A| \log^3 \frac{K \timebound |S| |A|}{\delta}$, $S_\text{known}^1 = \{ \sinit \}$ and $N_f(s) = L \indevent{s = \sinit}$ for every $s \in S$.
        
        \STATE run $\calB_{\sinit}$ for $L$ episodes and set $\wt B_{\sinit}$ to be the average cost per episode times $10$.
        
          \STATE {\bfseries initialize} $\calA$ with state space $\wh S = S \cup \{ \ssink \}$, action space $A$, horizon $H = 8 \timebound \log (8K)$, confidence parameter $\frac{\delta}{4|S|}$, terminal costs $\hat c_f(s) = 8 \indevent{s = \sinit} \wt B_{\sinit}$ and bound on the expected cost of the optimal policy $9 \wt B_{\sinit}$.
        
        \STATE {\bfseries initialize} intervals counter $m \gets 0$, time steps counter $t \gets 1$ and epochs counter $e \gets 1$.
         
        \FOR{$k=L+1,\dots,K$}
            
            \STATE set $s_t \gets \sinit$.
            
            \WHILE{$s_t \neq \ssink$}
            
                \STATE set $m \gets m+1$, feed initial state $s_t$ to $\calA$ and obtain policy $\pi^m = \{ \pi^m_h: \wh S \to A \}_{h=1}^H$.
        
                \FOR{$h=1,\dots,H$}
        
                    \STATE play action $a_t = \pi^m_h(s_t)$, suffer cost $C_t \sim c(s_t,a_t)$, and set $s^m_h=s_t,a^m_h=a_t,C^m_h=C_t$.
            
                    \STATE observe next state $s_{t+1} \sim P(\cdot \mid s_t,a_t)$ and set $t \gets t+1$.
            
                    \IF{$s_t = \ssink$ or $s_t \not\in S_\text{known}^e$}
            
                        \STATE pad trajectory to be of length $H$ and BREAK.
            
                    \ENDIF
        
                \ENDFOR
                
                \STATE set $s^m_{H+1} = s_t$.
                
                \STATE feed trajectory $\traj{m} = (s^m_1,a^m_1,\dots,s^m_H,a^m_H,s^m_{H+1})$ and costs $\{ C^m_h \}_{h=1}^H$ to $\calA$.
                
                \IF{$s_t \not\in S_\text{known}^e$}
            
                        \STATE set $N_f(s_t) \gets N_f(s_t)+1$ and run an episode of $\calB_{s_t}$.
                        
                        \IF{$N_f(s_t) = L$}
                        
                        \STATE set $e \gets e+1$ and $S_\text{known}^e \gets S_\text{known}^{e-1} \cup \{ s_t \}$.
                        
                        \STATE set $\wt B_{s_t}$ to be the average cost per episode of $\calB_{s_t}$ times $10$.
                        
                        \STATE {\bfseries reinitialize} $\calA$ by updating the terminal costs as $\hat c_f(s) = 8 \indevent{s \in S_\text{known}^e} \max_{\tilde s \in S_\text{known}^e} \wt B_{\tilde s}$, updating the bound on the expected cost of the optimal policy $9 \max_{\tilde s \in S_\text{known}^e} \wt B_{\tilde s}$ and deleting the history of $\calA$ only in state $s_t$.
                        
                        \ENDIF
            
                    \ENDIF
        
            \ENDWHILE
        \ENDFOR
    \end{algorithmic}
\end{algorithm}

\subsection{Proof of Theorem~\ref{thm:regret-bound-with-admissible-algorithm-unknown-B}}

We follow the analysis of the known $\costbound$ case under the event that \cref{lem:estimate-B-with-Bernstein-SSP} holds for all states (which happens with probability at least $1 - \delta$), i.e., $\ctg{\piopt}(s) \le \wt B_s \le O(\costbound)$ for every $s \in S$.
We start by decomposing the regret similarly to \cref{lem:regret-to-finite-horizon-regret}.
Note that now there is an additional term that comes from the regret of the $|S|$ \verb|Bernstein-SSP| algorithms that are used to estimate $\costbound$.

\begin{lemma}
    \label{lem:regret-to-finite-horizon-regret-unknown-B}
    For $H = 8 \timebound \log (8K)$, we have the following bound on the regret of \cref{alg:ssp-reduction-unknown-B}:
    \begin{align}
        \label{eq:regret-decomposition-unknown-B}
        \regret 
        \le 
        \wh \calR_\calA(\numintervals) + \sum_{m=1}^\numintervals \brk*{\sum_{h=1}^H C^m_h + \hat c_f(s_{H+1}^m) - \hatctg{\pi^m}_1 (s^m_1)} + 
        O \left( \timebound^2 \costbound |S|^3 |A| \log^3 \frac{K \timebound |S| |A|}{\delta} \right),
    \end{align}
    where $\numintervals$ is the total number of intervals.
\end{lemma}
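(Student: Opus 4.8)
The plan is to mirror the proof of \cref{lem:regret-to-finite-horizon-regret}, while separating out the extra cost the agent pays inside the auxiliary \verb|Bernstein-SSP| runs. Throughout I would condition on the event, of probability at least $1-\delta$, on which \cref{lem:estimate-B-with-Bernstein-SSP} holds for every state simultaneously; then for every epoch $e$ and every $\tilde s\in S_\text{known}^e$ one has $\ctgopt(\tilde s)\le\wt B_{\tilde s}\le O(\costbound)$, so the per-epoch quantity $\wt B^e:=\max_{\tilde s\in S_\text{known}^e}\wt B_{\tilde s}$ satisfies $\ctgopt(\tilde s)\le\wt B^e=O(\costbound)$ for all $\tilde s\in S_\text{known}^e$.

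First I would isolate the \verb|Bernstein-SSP| cost. The only times \cref{alg:ssp-reduction-unknown-B} leaves the reduction loop are (i) the $L$ initial episodes of $\calB_{\sinit}$, and (ii) the single episode of $\calB_{s_t}$ that is run whenever a reduction interval ends at a $\costbound$-unknown state $s_t\neq\ssink$ --- after which the current SSP episode immediately terminates at $\ssink$, so at most one such run occurs per episode. Each state triggers at most $L$ episodes of its $\calB$ instance before becoming $\costbound$-known, and there are at most $|S|$ such instances; hence, writing $Z$ for the total cost incurred inside all \verb|Bernstein-SSP| episodes, the high-probability regret guarantee of \verb|Bernstein-SSP| (bounding the cost of $N$ episodes from $s$ by $N\ctgopt(s)+\wt O(\costbound|S|\sqrt{|A|N})$), together with $L=\wt O(\timebound^2|S|^2|A|)$ and a union bound over the $\le|S|$ instances, gives, with probability at least $1-2\delta$, $Z=O\bigl(\timebound^2\costbound|S|^3|A|\log^3\frac{K\timebound|S||A|}{\delta}\bigr)$. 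Since the cost of episodes $1,\dots,L$ equals the cost of the initial $\calB_{\sinit}$ runs, the cost of each episode $k>L$ equals its reduction-interval cost plus at most one trailing $\calB$-episode that is counted in $Z$, and costs are nonnegative, this already yields
\[
    \regret
    \;\le\;
    Z \;+\; \sum_{k=L+1}^{K}\Bigl(\textstyle\sum_{m\in\numintervals_k}\sum_{h=1}^H C^m_h-\ctgopt(\sinit)\Bigr),
\]
where $\numintervals_k$ is the set of reduction intervals of episode $k$.

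It then remains to bound the second sum exactly as in \cref{lem:regret-to-finite-horizon-regret}, but inside the virtual MDP $\calM^e$ of the epoch $e=e(k)$ to which episode $k$ belongs; in $\calM^e$ the $\costbound$-unknown states are zero-cost sinks, the terminal cost of $\wh\calM^e$ is $8\wt B^e$ on states in $S_\text{known}^e$ and $0$ elsewhere, and the cost bound passed to $\calA$ is $9\wt B^e=O(\costbound)$. A reduction interval of episode $k$ that does not end at a goal-like state of $\calM^e$ leaves the agent at some $s^m_{H+1}\in S_\text{known}^e$ with $s^m_{H+1}\neq\ssink$ and charges the terminal cost $8\wt B^e$; applying the horizon-truncation argument of \cref{lem:ctg-to-finite-horizon-value,lem:opt-ctg-to-finite-horizon-value} to $\calM^e$ --- and noting that $\piopt$, viewed in $\calM^e$, reaches a goal-like state no later than it reaches $\ssink$ in $\calM$, so its finite-horizon cost-to-go in $\wh\calM^e$ is at most $\ctgopt(\cdot)+\wt B^e/K\le 8\wt B^e$ on $S_\text{known}^e$ --- this terminal cost dominates $\hatctg{\piopt}_1(s^m_{H+1})$. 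Telescoping over the intervals of episode $k$, using $\hat c_f(s^m_{H+1})=0$ for its last interval and $\ctgopt(\sinit)\ge\hatctg{\piopt}_1(\sinit)-\wt B^e/K$ for its first, and then adding and subtracting $\hatctg{\pi^m}_1(s^m_1)$ in each interval, summing over $k=L+1,\dots,K$ gives
\[
    \sum_{k=L+1}^{K}\Bigl(\textstyle\sum_{m\in\numintervals_k}\sum_{h=1}^H C^m_h-\ctgopt(\sinit)\Bigr)
    \;\le\;
    \sum_{m=1}^{\numintervals}\Bigl(\textstyle\sum_{h=1}^H C^m_h+\hat c_f(s^m_{H+1})-\hatctg{\pi^m}_1(s^m_1)\Bigr)
    +\sum_{m=1}^{\numintervals}\bigl(\hatctg{\pi^m}_1(s^m_1)-\hatctg{\piopt}_1(s^m_1)\bigr)+O(\costbound),
\]
and the last sum is at most $\wh\calR_\calA(\numintervals)$ by the regret guarantee of the admissible algorithm $\calA$. (That $\calA$'s guarantee survives the partial re-initialization performed at epoch boundaries is what produces the extra additive term of \cref{thm:regret-bound-with-admissible-algorithm-unknown-B}; it is not needed here, where $\wh\calR_\calA(\numintervals)$ may be read with the parameters of the last epoch.) Folding $Z+O(\costbound)$ into the stated additive term completes the argument.

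I expect the main obstacle to be the middle step: one must set up the family $\{\calM^e\}$ of virtual MDPs carefully enough that the terminal-cost comparison $8\wt B^e\ge\hatctg{\piopt}_1(\cdot)$ holds uniformly across all epochs --- this is precisely where \cref{lem:estimate-B-with-Bernstein-SSP} is invoked, and where one must check that replacing $\costbound$-unknown states by sinks can only decrease $\piopt$'s cost and hitting time --- and that the \verb|Bernstein-SSP| bookkeeping is airtight, in particular that each SSP episode contributes at most one auxiliary episode and that these episodes do not disturb the length-$H$ interval structure that is fed to $\calA$.
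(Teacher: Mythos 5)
Your proposal is correct and follows essentially the same route as the paper's (much terser) proof: you separate the \verb|Bernstein-SSP| excess cost, bounded by $|S|\costbound L$ plus $|S|$ times the \verb|Bernstein-SSP| regret with $L=\wt O(\timebound^2|S|^2|A|)$ plugged in, and you repeat the telescoping argument of \cref{lem:regret-to-finite-horizon-regret} inside the epoch's virtual MDP using $\ctgopt(s)\le\wt B_s=O(\costbound)$ on $\costbound$-known states and the fact that virtual costs lower-bound real costs. Your treatment of how $\calA$'s regret guarantee is invoked across epoch boundaries is about as informal as the paper's own remark on this point, so no substantive gap remains.
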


\begin{remark}
    Note that now each interval is considered in the context of the current epoch, i.e., the current $\costbound$-known states.
    The finite-horizon cost-to-go $\hatctg{\pi^m}$ is with respect to the MDP of $\costbound$-known states.
    Moreover, for interval $m$ that ends in a $\costbound$-unknown state, the last state in the trajectory $s_{H+1}^m$ will be a $\costbound$-unknown state and the length of the interval may be shorter than $H$ (just like intervals that end in the goal state).
\end{remark}

\begin{proof}
    Every interval ends either in the goal state, in a $\costbound$-known state or in a $\costbound$-unknown state.
    The first two cases are similar to the proof of \cref{lem:regret-to-finite-horizon-regret} because our estimates $\wt B_s$ in all $\costbound$-known states $s$ are upper bounds on $\ctg{\piopt}(s)$.
    Importantly, we do not initialize $\calA$ in the end of an epoch and this allows us to get its regret bound without an extra $|S|$ factor.
    The reason is that $\calA$ is an admissible (and thus optimistic) algorithm, so it operates based on the observations it collected.
    Another important note is that the cost in the virtual MDP $\calM^e$ is always bounded by the cost in the actual MDP $\calM$.
    
    We now focus on the last case.
    Recall that if interval $m$ ends in a $\costbound$-unknown state $s$, then the terminal cost is $0$ and we run an episode of the \verb|Bernstein-SSP| algorithm $\calB_s$.
    Thus, the excess cost of running \verb|Bernstein-SSP| algorithms is bounded by $|S|$ times the \verb|Bernstein-SSP| regret plus $|S| \costbound L$, i.e., we can bound it as follows
    \begin{align*}
        |S| \costbound L + O \left( \costbound^{3/2} |S|^2 \sqrt{|A| L} \log \frac{K \timebound |S| |A|}{\delta} + \timebound^{3/2} |S|^3 |A| \log^2 \frac{K \timebound |S| |A|}{\delta} \right).
    \end{align*}
    To finish the proof we plug in the definition of $L$.
\end{proof}

Next, we bound the number of intervals.
Again, we get a similar bound to \cref{lem:bound-on-number-of-intervals} but with an additional term for all the intervals that ended in a $\costbound$-unknown state (there are at most $|S|L$ such intervals).

\begin{lemma}
    \label{lem:bound-on-number-of-intervals-unknown-B}
    Assume that the reduction is performed using an admissible algorithm $\calA$.
    Then, with probability at least $1 - \nicefrac{3 \delta}{8}$,
    \[
        M 
        \le 
        4 \left( K + 10^4 |S| |A| \knownthresh \log \frac{K \timebound |S| |A| \knownthresh}{\delta} + 10^4 \timebound^2 |S|^3 |A| \log^3 \frac{K \timebound |S| |A|}{\delta} \right).
    \]
\end{lemma}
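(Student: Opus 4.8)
The plan is to mimic the proof of \cref{lem:bound-on-number-of-intervals}, accounting for the two new ways an interval of \cref{alg:ssp-reduction-unknown-B} can terminate: reaching a $\costbound$-unknown state (which behaves as a zero-cost sink in the virtual MDP $\calM^e$ of the current epoch) and the periodic single-state resets of $\calA$'s history when a state becomes $\costbound$-known. First I would fix an epoch $e$ and observe that, since $\wt B_{\tilde s} = \Theta(\costbound)$ for every $\costbound$-known state $\tilde s$ by \cref{lem:estimate-B-with-Bernstein-SSP}, the virtual MDP $\calM^e$ is an SSP instance of exactly the type handled by \cref{lem:reach-goal-or-unknown-wp-1/2}, with $\costbound$ replaced by $\Theta(\costbound)$: its terminal cost is $8\max_{\tilde s\in S_\text{known}^e}\wt B_{\tilde s}$, the value $9\max_{\tilde s}\wt B_{\tilde s}$ fed to $\calA$ upper bounds the optimal cost-to-go of $\calM^e$ (which is at most $\ctgopt$ of $\calM$ in $\costbound$-known states, since $\costbound$-unknown sinks only help), and the optimal policy still reaches the virtual goal within $H=8\timebound\log(8K)$ steps with probability at least $1-1/(8K)$. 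Hence \cref{lem:reach-goal-or-unknown-wp-1/2} transfers up to constants: conditioned on the good event of $\calA$ holding up to interval $m$, with probability at least $\tfrac12$ the agent reaches, during interval $m$, either the true goal, a $\costbound$-unknown state, or an unknown state-action pair.

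Next I would let $X^m$ be the indicator of that event, $\geventi{m}$ the event that the good event of $\calA$ holds through interval $m$, so that $\E[X^m\indgeventi{m}\mid\trajconcat{m}]\ge\tfrac12\indgeventi{m}$. The deterministic upper bound becomes
\[
    \sum_{m=1}^M X^m\indgeventi{m}\le\sum_{m=1}^M X^m\le K + 2|S||A|\knownthresh\log\frac{MH|S||A|}{\delta} + |S|L,
\]
where $K$ counts intervals reaching the true goal (at most one per SSP episode), $|S|L$ counts intervals ending in a $\costbound$-unknown state (each of the $|S|$ states is visited at most $L$ times before becoming $\costbound$-known, and note the interleaved \verb|Bernstein-SSP| runs do not create new intervals of $\calA$), and the middle term bounds total visits to unknown state-action pairs, doubled to absorb the at most $|S|$ single-state history resets (one per epoch). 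Applying \cref{lemma: consequences of optimism and freedman's inequality} exactly as in \cref{lem:bound-on-number-of-intervals} gives, with probability at least $1-\delta/8$ and for all $M\ge1$ simultaneously, $\sum_{m=1}^M\E[X^m\indgeventi{m}\mid\trajconcat{m}]\le 2\sum_{m=1}^M X^m\indgeventi{m}+108\log\frac{M}{\delta}$; a union bound with the good event of $\calA$ (which forces $\indgeventi{m}=1$ for all $m$) then yields, with probability at least $1-\tfrac38\delta$, $\tfrac{M}{2}\le 2K + 4|S||A|\knownthresh\log\frac{MH|S||A|}{\delta} + 2|S|L + 108\log\frac{M}{\delta}$. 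Plugging in $L = 10^4\timebound^2|S|^2|A|\log^3\frac{K\timebound|S||A|}{\delta}$ and inverting via the elementary implication $x\le a\log(bx)+c\Rightarrow x\le 6a\log(abc)+c$ for $a,b,c\ge1$ produces the stated bound.

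I expect the main obstacle to be the first step: arguing carefully that \cref{lem:reach-goal-or-unknown-wp-1/2}, whose proof is written for the fixed reduction MDP $\wh\calM$ with a known constant $\costbound$, transfers to the whole sequence of virtual MDPs $\calM^e$ — in particular that the counters and accumulated costs of $\calA$ carried across epoch boundaries remain consistent with an admissible run on $\calM^e$, that the value-difference/optimism chain in the proof of \cref{lem:reach-goal-or-unknown-wp-1/2} still closes with $\wt B$-based estimates, and that the at most $|S|$ single-state resets neither break the good-event arguments nor inflate the unknown-state-action budget by more than a constant factor. Once this is in place, the Freedman step and the algebraic inversion are identical to the known-$\costbound$ case.
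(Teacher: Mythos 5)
Your proposal follows essentially the same route as the paper's proof: extend \cref{lem:reach-goal-or-unknown-wp-1/2} to the virtual MDP of $\costbound$-known states (so that reaching a $\costbound$-unknown sink counts alongside the goal and unknown state-action pairs), bound $\sum_m X^m$ deterministically by $K$ plus the unknown-pair budget plus $|S|L$, and conclude via the same Freedman/union-bound/inversion argument as in \cref{lem:bound-on-number-of-intervals}. The extra factor of $2$ you add for the single-state history resets only affects constants that are absorbed by the $4\cdot 10^4$ in the statement, so the argument is correct and matches the paper's.
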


\begin{proof}
    The proof is based on the claim that in every interval there is a probability of at least $1/2$ that the agent reaches either the goal state, an unknown state-action pair or a $\costbound$-unknown state.
    This is proved similarly to \cref{lem:reach-goal-or-unknown-wp-1/2} since we can look at the MDP of $\costbound$-known states, and then the claim of \cref{lem:reach-goal-or-unknown-wp-1/2} is equivalent to reaching either the goal state, an unknown state-action pair or a $\costbound$-unknown state.
    
    With this claim the proof follows easily by following the proof of \cref{lem:bound-on-number-of-intervals}.
    We simply define $X^m$ to be $1$ if an unknown state-action pair or the goal or a $\costbound$-unknown state were reached during interval $m$ (and $0$ otherwise).
    Then, we have
    \[
        \sum_{m=1}^M X^m
        \le
        K + |S| |A| \knownthresh \log \frac{MH|S||A|}{\delta} + |S|L,
    \]
    which implies the Lemma following the same argument based on Freedman's inequality.
\end{proof}

Finally, we bound the deviation of the actual cost in each interval from its expected value.
The proof is exactly the same as \cref{lem:cost-deviation-from-value-function}.
The second moment of the accumulated cost until reaching the goal, an unknown state-action pair or a $\costbound$-unknown state is of order $\costbound^2$, and therefore in almost all intervals (except for a finite number) the accumulated cost will be of order $\costbound$ with high probability (in other intervals the cost is trivially bounded by $H + O(\costbound)$).

\begin{lemma}
    \label{lem:cost-deviation-from-value-function-unknown-B}
    Assume that the reduction is performed using an admissible algorithm $\calA$.
    Then, the following holds with probability at least $1 - \nicefrac{3 \delta}{8}$,
    \begin{align*}
        \sum_{m=1}^\numintervals \brk*{\sum_{h=1}^H C^m_h + \hat c_f(s_{H+1}^m) - \hatctg{\pi^m}_1(s_1^m)}
        & = 
        O \brk*{\costbound \sqrt{\numintervals \log \frac{M}{\delta}} + (H+\costbound) \knownthresh |S| |A| \log \frac{M K \timebound |S| |A|}{\delta}}
        \\
        & \qquad + 
        O \left( (H+\costbound) \timebound^2 |S|^3 |A| \log^3 \frac{K \timebound |S| |A|}{\delta} \right).
    \end{align*}
\end{lemma}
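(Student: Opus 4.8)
The plan is to re-run the proof of \cref{lem:cost-deviation-from-value-function} almost verbatim, adding one bookkeeping step for the intervals created by the $\costbound$-estimation mechanism. Throughout I would condition on the good event of $\calA$ and on the event of \cref{lem:estimate-B-with-Bernstein-SSP} holding for every $s\in S$, so that $\ctg{\piopt}(s)\le\wt B_s$, $\max_s\wt B_s=O(\costbound)$, hence $\hat c_f\le O(\costbound)$ and, by \cref{lem:ctg-to-finite-horizon-value} applied inside the virtual MDP $\calM^e$ of the current epoch, $\hatctg{\piopt}_h(s)\le O(\costbound)$ in every epoch. First I would peel off the intervals that terminate in a $\costbound$-unknown state. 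Each such interval increments $N_f(s)$ for some $\costbound$-unknown $s$, and $N_f(s)$ reaches $L$ after at most $L$ increments, so there are at most $|S|L=O(\timebound^2|S|^3|A|\log^3\frac{K\timebound|S||A|}{\delta})$ of them; since in $\calM^e$ a $\costbound$-unknown state is a zero-cost sink with zero terminal cost, each such interval has finite-horizon cost at most $H$, while $0\le\hatctg{\pi^m}_1(s^m_1)\le H+O(\costbound)$, so its contribution to the sum is $O(H+\costbound)$ in absolute value. Together these intervals contribute the additive term $O\big((H+\costbound)\timebound^2|S|^3|A|\log^3\frac{K\timebound|S||A|}{\delta}\big)$.

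For every remaining interval I would use the decomposition of \cref{lem:cost-deviation-from-value-function}: letting $h_m$ be the last step before an unknown state-action pair, a $\costbound$-unknown state, or the goal is reached (or $H$ otherwise), split $\sum_{h=1}^H C^m_h+\hat c_f(s^m_{H+1})-\hatctg{\pi^m}_1(s^m_1)$ into a head term up to $h_m$ and a tail term. Since none of these intervals ends in a $\costbound$-unknown state, the tail is nonzero only when a currently unknown state-action pair is reached before the goal; as each state-action pair becomes known after $\knownthresh\log\frac{MH|S||A|}{\delta}$ visits, at most $\knownthresh|S||A|\log\frac{MH|S||A|}{\delta}$ intervals contribute a tail, each of size at most $H+O(\costbound)$, which gives the term $O\big((H+\costbound)\knownthresh|S||A|\log\frac{MK\timebound|S||A|}{\delta}\big)$.

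I would handle the head term by Freedman's inequality exactly as in \cref{lem:cost-deviation-from-value-function}. The one input to re-establish is the analogue of \cref{lem:reach-goal-or-unknown-wp-1/2} for $\calM^e$: after contracting all unknown state-action pairs with the zero-cost goal, $\pi^m$'s value from $s^m_1$ is $O(\costbound)$ --- the argument is identical, using optimism of $\calA$, property (iv) of \cref{def:admissible-alg} on the known state-action pairs reachable within the interval, and $\hatctg{\piopt}_h\le O(\costbound)$; consequently the head cost has conditional mean $\le\hatctg{\pi^m}_1(s^m_1)$ and conditional second moment $O(\costbound^2)$ (the block / optional-stopping computation goes through, and since $\costbound\ge1$ the $O(\costbound)$ correction coming from the unit-bounded per-step costs is absorbed into $O(\costbound^2)$). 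Hence, restricted to intervals where the good event of $\calA$ has held so far, the head deviations form a martingale-difference sequence bounded by $H+O(\costbound)$ with conditional second moments summing to $O(\costbound^2 M)$, and Freedman's inequality together with a union bound over $M=1,2,\dots$ yields $O\big(\costbound\sqrt{M\log(M/\delta)}+(H+\costbound)\log(M/\delta)\big)$ with probability at least $1-\delta/8$. A final union bound with the good event of $\calA$ (failure $\le\delta/4$), together with absorbing $\log(M/\delta)$ into the other additive terms, gives the claimed bound with probability at least $1-\nicefrac{3\delta}{8}$.

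I expect the main obstacle to be the $\calM^e$-analogue of \cref{lem:reach-goal-or-unknown-wp-1/2}: one has to check that the history-inheriting epoch structure of \cref{alg:ssp-reduction-unknown-B}, which on an epoch change deletes $\calA$'s counters only in the newly $\costbound$-known state, still leaves every state-action pair reachable within an interval either contracted (because it is $\calA$-unknown) or visited often enough for property (iv) to apply, and that the $\costbound$-estimates feeding $\calA$'s terminal costs stay large enough that $\hatctg{\piopt}_h\le O(\costbound)$ persists epoch by epoch. Everything else is a routine repetition of the computations underlying \cref{lem:cost-deviation-from-value-function}, now carrying the extra $O(|S|L)$ intervals as an additive term.
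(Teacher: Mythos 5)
Your proposal is correct and follows essentially the same route as the paper, which itself only sketches this lemma by saying the argument of \cref{lem:cost-deviation-from-value-function} repeats verbatim with $h_m$ redefined to also stop at $\costbound$-unknown states (second moment still $O(\costbound^2)$ via the virtual MDP of $\costbound$-known states, as in \cref{lem:bound-on-number-of-intervals-unknown-B}), and with the at most $|S|L$ extra intervals ending in $\costbound$-unknown states bounded trivially by $H+O(\costbound)$ each. Your fleshed-out version, including the epoch/history-inheritance check for the analogue of \cref{lem:reach-goal-or-unknown-wp-1/2}, matches the paper's intended proof.
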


The proof of the theorem is finished by combining \cref{lem:regret-to-finite-horizon-regret-unknown-B,lem:bound-on-number-of-intervals-unknown-B,lem:cost-deviation-from-value-function-unknown-B} together with the guarantees of the admissible algorithm $\calA$ and \cref{lem:estimate-B-with-Bernstein-SSP}, similarly to \cref{thm:regret-bound-with-admissible-algorithm}.

\newpage

\section{Lower bound}
\label{sec:lower-bound-appendix}

In this section we prove \cref{thm:lowerbound} which lower bounds the expected regret of any learning algorithm for the case $\costbound < 1$.
It complements the lower bound found in \cite{rosenberg2020near} for the case $\costbound \ge 1$.

By Yao's minimax principle, in order to derive a lower bound on the learner's regret, it suffices to show a distribution over MDP instances that forces any deterministic learner to suffer a regret of $\Omega(\sqrt{\costbound |S| |A| K})$ in expectation.

To construct this distribution, we follow \cite{rosenberg2020near} with a few modifications. We initially consider the simpler setting with two states: an initial state and the goal state. We now embed a hard MAB instance into our problem where the optimal action has an expected cost of $\costbound$. 
To that end, consider a distribution over MDPs where a special action $a^\star$ is chosen a-priori uniformly at random. Then, all actions lead to the goal state $\ssink$ with probability 1. The cost $C_k(\sinit, a^\star)$ chosen at episode $k$ is 1 w.p.\ $\costbound$ and 0 otherwise. The cost of any other action $a \neq a^\star$ is 1 w.p.\ $\costbound + \epsilon$ and 0 otherwise, where $\epsilon \in (0, 1/8)$ is a constant to be determined. Thus the optimal policy will always play $a^\star$ and we have $\ctgopt(\sinit) = \costbound$.

Fix any deterministic learning algorithm, we shall now quantify the regret of the learner in terms of the number of times that it plays $a^\star$. Indeed, we have that the optimal cost is $\costbound$, and the learner loses $\epsilon$ in the regret each time she plays an action other than $a^\star$. Therefore,
\[
    \bbE \brk[s]{\regret} \ge \epsilon \cdot \brk{K - \bbE \brk[s]{N}},
\]
where $N$ is the number of times $a^\star$ was chosen in $\sinit$.

We now introduce an additional distribution of the costs which denote by $\Pr_\text{unif}$.
$\Pr_\text{unif}$ is identical to the distribution over the costs defined above, and denoted by $\Pr$, except that $\Pr[C_k(\sinit, a) = 1] = \costbound + \epsilon$ for all actions $a \in A$ regardless of the choice of $a^\star$.
We denote expectations over $\Pr_\text{unif}$ by $\bbE_\text{unif}$, and expectations over $\Pr$ by $\bbE$.
The following lemma uses standard lower bound techniques used for multi-armed bandits (see, e.g., \citealp[Theorem 13]{jaksch2010near}) to bound the difference in the expectation of $N$ 
when the learner plays in $\Pr$ compared to when it plays in $\Pr_\text{unif}$.

\begin{lemma}
    \label{lem:astarub}
    Suppose that $\costbound \le \frac12$.
    Denote by $\Pr_{\text{unif},a}$, $\bbE_{\text{unif},a}$, $\Pr_a$, $\bbE_a$ the distributions and expectations defined above conditioned on $a^\star = a$.
    For any deterministic learner we have that    
    $
        \bbE_a [N]
        \le
        \bbE_{\text{unif},a} [N]
        +
        \epsilon K \sqrt{ \bbE_{\text{unif},a} [N]/ \costbound}.
    $
\end{lemma}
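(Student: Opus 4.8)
The plan is to bound the difference $\bbE_a[N] - \bbE_{\text{unif},a}[N]$ by first expressing it as an expectation of $N$ under the difference of the two measures on the space of length-$K$ cost sequences, then applying Pinsker's inequality together with a chain-rule decomposition of the KL divergence between $\Pr_a$ and $\Pr_{\text{unif},a}$. Since the learner is deterministic, a full interaction history is determined by the sequence of observed costs, so the two probability measures live on the same underlying sample space and differ only in how they generate the cost of action $a$ (which is Bernoulli($\costbound$) under $\Pr_a$ versus Bernoulli($\costbound+\epsilon$) under $\Pr_{\text{unif},a}$; all other actions have identical cost distributions under both).

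First I would write
\[
    \bbE_a[N] - \bbE_{\text{unif},a}[N]
    =
    \sum_{\text{histories } \sigma} N(\sigma)\brk*{\Pr_a(\sigma) - \Pr_{\text{unif},a}(\sigma)}
    \le
    K \cdot \TV{\Pr_a}{\Pr_{\text{unif},a}},
\]
using $0 \le N \le K$. Then by Pinsker's inequality, $\TV{\Pr_a}{\Pr_{\text{unif},a}} \le \sqrt{\tfrac12 \KL{\Pr_a}{\Pr_{\text{unif},a}}}$. The key step is the chain-rule bound on the KL divergence: since the two measures agree on all randomness except the costs of action $a$, and the cost of $a$ at each step where it is pulled is Bernoulli, we get
\[
    \KL{\Pr_a}{\Pr_{\text{unif},a}}
    =
    \bbE_a[N] \cdot \KL{\mathrm{Ber}(\costbound)}{\mathrm{Ber}(\costbound+\epsilon)}.
\]
For the Bernoulli KL term I would use the standard estimate $\KL{\mathrm{Ber}(p)}{\mathrm{Ber}(q)} \le \tfrac{(p-q)^2}{q(1-q)}$, and with $p=\costbound$, $q=\costbound+\epsilon$, $\epsilon<1/8$, $\costbound\le 1/2$, this is at most $O(\epsilon^2/\costbound)$; being careful with constants one can arrange the bound $\KL{\mathrm{Ber}(\costbound)}{\mathrm{Ber}(\costbound+\epsilon)} \le \tfrac{2\epsilon^2}{\costbound}$ or similar. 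Combining,
\[
    \bbE_a[N] - \bbE_{\text{unif},a}[N]
    \le
    K \sqrt{\tfrac12 \bbE_a[N]\cdot \tfrac{2\epsilon^2}{\costbound}}
    =
    \epsilon K \sqrt{\bbE_a[N]/\costbound}.
\]

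The only subtlety — and the main obstacle — is that the right-hand side of the last display contains $\bbE_a[N]$ rather than $\bbE_{\text{unif},a}[N]$, so the inequality as stated in the lemma requires replacing $\bbE_a[N]$ inside the square root by $\bbE_{\text{unif},a}[N]$. I would handle this either by arguing directly that $\bbE_a[N] \le \bbE_{\text{unif},a}[N]$ in the regime of interest (intuitively the learner pulls $a^\star$ more often when it is genuinely better is the wrong direction, so one must be slightly careful here), or — more robustly — by treating the displayed inequality $x \le \bbE_{\text{unif},a}[N] + \epsilon K\sqrt{x/\costbound}$ with $x = \bbE_a[N]$ as a quadratic inequality in $\sqrt{x}$ and solving it; a short computation then yields $x \le \bbE_{\text{unif},a}[N] + O(\epsilon K\sqrt{\bbE_{\text{unif},a}[N]/\costbound}) + O(\epsilon^2 K^2/\costbound)$, and then absorbing constants (using the constraints on $\epsilon, \costbound, K$) recovers the claimed form. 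I expect the bookkeeping in this last algebraic rearrangement, plus pinning down the exact constant in the Bernoulli-KL bound so that it matches the clean statement, to be where most of the care is needed; everything else is the textbook MAB lower-bound argument.
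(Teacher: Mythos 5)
Your overall skeleton (bound $\bbE_a[N]-\bbE_{\text{unif},a}[N]$ by $K$ times the total variation, then Pinsker, then the KL chain rule with the per-pull Bernoulli KL of order $\epsilon^2/\costbound$, using that the learner is deterministic so the KL contribution is zero whenever $a_k\neq a^\star$) is exactly the paper's argument. The genuine gap is the direction in which you apply the KL divergence. You invoke Pinsker with $\KL{\Pr_a}{\Pr_{\text{unif},a}}$, so the chain rule weights each episode by $\Pr_a$ and produces $\bbE_a[N]\cdot\KL{\mathrm{Ber}(\costbound)}{\mathrm{Ber}(\costbound+\epsilon)}$, leaving you with the self-referential inequality $x \le \bbE_{\text{unif},a}[N] + \epsilon K\sqrt{x/\costbound}$ for $x=\bbE_a[N]$. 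Your first proposed fix ($\bbE_a[N]\le\bbE_{\text{unif},a}[N]$) is, as you suspect, false in the relevant direction: under $\Pr_a$ the action $a$ is genuinely cheaper, so a sensible learner pulls it more, not less. Your second fix (solving the quadratic) yields an extra additive term of order $\epsilon^2 K^2/\costbound$ that is \emph{not} in the lemma and cannot be absorbed: with the downstream choice $\epsilon = \Theta(\sqrt{\costbound|A|/K})$ (per state, $K$ here being the per-state episode count) this term is $\Theta(|A|K)$-scale relative to what the proof of \cref{thm:twostatelb} can tolerate, i.e.\ $\epsilon^2K/\costbound = \Theta(|A|)$ is not $o(1)$, so either the final bound becomes vacuous for moderately large $|A|$ or you must shrink $\epsilon$ and lose the $\sqrt{|A|}$ factor in \cref{thm:lowerbound}.

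The repair is one line, and it is what the paper does: total variation is symmetric, so apply Pinsker with the KL in the \emph{other} direction, $\TV{\Pr_{\text{unif},a}[C^{(K)}]}{\Pr_a[C^{(K)}]} \le \sqrt{\tfrac12\KL{\Pr_{\text{unif},a}[C^{(K)}]}{\Pr_a[C^{(K)}]}}$. Then the chain rule weights by $\Pr_{\text{unif},a}$, the per-episode KL is $\KL{\mathrm{Ber}(\costbound+\epsilon)}{\mathrm{Ber}(\costbound)} \le \epsilon^2/(\costbound(1-\costbound)) \le 2\epsilon^2/\costbound$ (using $\costbound\le\tfrac12$, $\epsilon<\tfrac18$), and the sum of the indicator weights is exactly $\bbE_{\text{unif},a}[N]$, so the claimed bound with $\bbE_{\text{unif},a}[N]$ inside the square root comes out directly, with no quadratic to solve. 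Note also that with this direction the Bernoulli KL has the smaller parameter in the second slot, so the bound $\KL{\mathrm{Ber}(p)}{\mathrm{Ber}(q)}\le (p-q)^2/(q(1-q))$ you quoted must be replaced by the explicit two-term log estimate (as in the paper) or by $(p-q)^2/(q'(1-q'))$ with the appropriate argument; this is a minor bookkeeping point compared to the direction issue.
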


\begin{proof}
    Fix any deterministic learner.
    Let us denote by $C^{(k)}$ the sequence of costs observed by the learner up to episode $k$ and including.
    Now, as $N \le K$ and the fact that $N$ is a deterministic function of $C^{(K)}$,
    $
        \bbE_a [N]
        \le
        \bbE_{\text{unif},a} [N]
        + 
        K \cdot \TV{\Pr_{\text{unif},a}[C^{(K)}]}{\Pr[C^{(K)}]},
    $
    and Pinsker's inequality yields
    \begin{equation}
        \label{eq:pinsker}
        \TV{\Pr_{\text{unif},a}[C^{(K)}]}{\Pr[C^{(K)}]}
        \le 
        \sqrt{\frac{1}{2} \KL{\Pr_{\text{unif},a}[C^{(K)}]}{\Pr_a[C^{(K)}]}}.
    \end{equation}
    Next, the chain rule of the KL divergence obtains
    \begin{align*}
        &\KL{\Pr_{\text{unif},a}[C^{(K)}]}{\Pr_a[C^{(K)}]} \\
        &\qquad =
        \sum_{k=1}^K \sum_{C^{(k)}} \Pr_{\text{unif},a}[C^{(k)}] \cdot \KL{\Pr_{\text{unif},a}[C_k(\sinit, a_k) \mid C^{(k)}]}{\Pr_a[C_k(\sinit, a_k) \mid C^{(k)}]},
    \end{align*}
    where $a_k$ is the action chosen by the learner at episode $k$. (Recall that after which the model transition to the goal state and the episode ends.)
    
    Observe that at any episode, since the learning algorithm is deterministic, the learner chooses an action given $C^{(k)}$ regardless of whether $C^{(k)}$ was generated under $\Pr$ or under $\Pr_{\text{unif},a}$. Thus, the $\KL{\Pr_{\text{unif},a}[C_k(\sinit, a_k) \mid C^{(k)}]}{\Pr_a[C_k(\sinit, a_k) \mid C^{(k)}]}$ is zero if $a_k \neq a_\star$, and otherwise
    \begin{align*}
        &\KL{\Pr_{\text{unif},a}[C_k(\sinit, a_k) \mid C^{(k)}]}{\Pr_a[C_k(\sinit, a_k) \mid C^{(k)}]} \\
        &\qquad =
        (\costbound + \epsilon) \log \brk2{1 + \frac{\epsilon}{\costbound}} + (1 - \costbound - \epsilon) \log \brk2{1 - \frac{\epsilon}{1 - \costbound}} \\
        &\qquad \le
        \frac{\epsilon^2}{\costbound ( 1- \costbound)},
    \end{align*}
    where we used that $\log(1+x) \le x$ for all $x > -1$, and since we assume $\costbound \le \frac12$ and $\epsilon < \frac18$ that imply $-\epsilon/(1-\costbound) \ge -\frac14 > -1$.
    Plugging the above back into \cref{eq:pinsker} and using $\costbound \le \frac12$ gives the lemma.
\end{proof}

In the following result, we combine the lemma above with standard techniques from lower bounds of multi-armed bandits (see \citealp{auer2002nonstochastic} for example).

\begin{theorem}
    \label{thm:twostatelb}
    Suppose that $\costbound \le \frac12$, $\epsilon \in (0,\frac18)$ and $|A| \ge 2$.
    For the problem described above we have that
    \[
        \bbE[\regret] \ge \epsilon K 
        \brk3{\frac{1}{2} - \epsilon \sqrt{\frac{K}{|A| \costbound}}}.
    \]
\end{theorem}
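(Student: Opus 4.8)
The plan is to reduce the bound entirely to \cref{lem:astarub} by averaging over the uniformly random choice of the special action $a^\star$. We have already shown that $\bbE[\regret] \ge \epsilon(K - \bbE[N])$, so it suffices to prove $\bbE[N] \le K/|A| + \epsilon K \sqrt{K/(|A|\costbound)}$. Since $a^\star$ is drawn uniformly from the $|A|$ actions, the law of total expectation gives $\bbE[N] = \frac{1}{|A|}\sum_{a \in A}\bbE_a[N]$, where $\bbE_a$ conditions on $a^\star = a$ (matching the notation of \cref{lem:astarub}).

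Next I would apply \cref{lem:astarub} to each summand to get $\bbE_a[N] \le \bbE_{\text{unif},a}[N] + \epsilon K \sqrt{\bbE_{\text{unif},a}[N]/\costbound}$, and then control $\sum_a \bbE_{\text{unif},a}[N]$. The key point is that under $\Pr_{\text{unif}}$ every action's cost in every episode is Bernoulli$(\costbound + \epsilon)$ irrespective of $a^\star$, so the deterministic learner's trajectory of chosen actions has the same law for all values of $a^\star$; consequently $\bbE_{\text{unif},a}[N]$ equals $\bbE_{\text{unif}}[N_a]$, the expected number of episodes in which action $a$ is played, and since exactly one action is played per episode (after which the episode ends at $\ssink$) we have $\sum_{a}N_a = K$ deterministically, hence $\sum_a \bbE_{\text{unif},a}[N] = K$. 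Summing the per-$a$ inequality over $a$ and using Cauchy--Schwarz, $\sum_a \sqrt{\bbE_{\text{unif},a}[N]} \le \sqrt{|A|\sum_a \bbE_{\text{unif},a}[N]} = \sqrt{|A| K}$, yields $\sum_a \bbE_a[N] \le K + \epsilon K \sqrt{|A| K/\costbound}$; dividing by $|A|$ gives the desired bound on $\bbE[N]$.

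Finally I would plug this into $\bbE[\regret] \ge \epsilon(K - \bbE[N])$ and use $|A| \ge 2$ (so that $1 - 1/|A| \ge 1/2$) to conclude
\[
    \bbE[\regret] \ge \epsilon\brk3{K - \frac{K}{|A|} - \epsilon K \sqrt{\frac{K}{|A|\costbound}}} \ge \epsilon K \brk3{\frac12 - \epsilon\sqrt{\frac{K}{|A|\costbound}}},
\]
which is exactly the claim. The argument is essentially routine once \cref{lem:astarub} is in hand; the only points that need care are the identity $\sum_a \bbE_{\text{unif},a}[N] = K$ (which relies on the learner being deterministic and on $\Pr_{\text{unif}}$ assigning the same cost distribution to every action and being oblivious to $a^\star$) and the correct use of Cauchy--Schwarz to aggregate the square-root terms into $\sqrt{|A| K}$. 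I do not anticipate a genuine obstacle here — the substantive work of the section is in \cref{lem:astarub} and in the subsequent embedding into a multi-state instance that yields \cref{thm:lowerbound}.
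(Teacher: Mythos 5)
Your proposal is correct and follows essentially the same route as the paper: average over the uniformly random $a^\star$, apply \cref{lem:astarub} to each conditional term, use $\sum_a \bbE_{\text{unif},a}[N]=K$ (the paper's \cref{eq:unifsumactions}), and aggregate the square-root terms — your Cauchy--Schwarz step is exactly the paper's Jensen step. The only cosmetic difference is that you bound $\bbE[N]$ first and then substitute into $\bbE[\regret]\ge\epsilon(K-\bbE[N])$, whereas the paper carries the regret average through the chain; the content is identical, and your write-up is in fact cleaner about the sign of the square-root term than the paper's displayed derivation.
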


\begin{proof}[Proof of \cref{thm:twostatelb}]
    Note that as under $\Pr_\text{unif}$ the cost distributions of all actions are identical. Denote by $N_a$ the number of times that the learner chooses action $a$ in $\sinit$. Therefore,
    \begin{equation}
        \label{eq:unifsumactions}
        \sum_{a \in A} \bbE_{\text{unif},a}  [N] 
        =
        \sum_{a \in A} \bbE_\text{unif}  [N_a] 
        = 
        \bbE_\text{unif}  \brk[s]*{\sum_{a \in A} N_a}
        = 
        K.
    \end{equation}
    
    Recall that $a^\star$ is sampled uniformly at random before the game starts. 
    Then,
    \begin{align*}
        \bbE[\regret]
        &=
        \frac{1}{|A|} \sum_{a \in A} \bbE_a[\regret] \\
        &\ge
        K - \frac{1}{|A|} \sum_{a \in A} \bbE_a[N]
        \\
        &\ge
        K - \frac{1}{|A|} \sum_{a \in A} \brk2{\bbE_{\text{unif},a} [N]
        +
        \epsilon K \sqrt{ \bbE_{\text{unif},a} [N]/ \costbound}}
        \tag{\cref{lem:astarub}} \\
        &\ge
        K - \frac{1}{|A|} \sum_{a \in A} \bbE_{\text{unif},a} [N]
        +
        \epsilon K \sqrt{ \frac{1}{|A| \costbound} \sum_{a \in A}\bbE_{\text{unif},a} [N]}
        \tag{Jensen's inequality} \\
        &=
        K - \frac{K}{|A|} 
        +
        \epsilon K \sqrt{ \frac{K}{|A| \costbound}},
        \tag{\cref{eq:unifsumactions}}
    \end{align*}
    The theorem follows from $|A| \ge 2$ and by rearranging.
\end{proof}

\begin{proof}[Proof of \cref{thm:lowerbound}]
    Consider the following MDP. 
    Let $S$ be the set of states disregarding $\ssink$. 
    The initial state is sampled uniformly at random from $S$. 
    Each $s \in S$ has its own special action $a^\star_s$. 
    All actions transition to the goal state with probability 1. 
    The cost $C_k(s,a)$ of action $a \neq a^\star_s$ in episode $k$ and state $s$ is 1 with probability $\costbound + \epsilon$ and 0 otherwise. The cost of $C_k(s,a^\star_s)$ is 1 with probability $\costbound$ and 0 otherwise.
    
    Note that for each $s \in S$, the learner is faced with a simple problem as the one described above from which it cannot learn about from other states $s' \neq s$. 
    Therefore, we can apply \cref{thm:twostatelb} for each $s \in S$ separately and lower bound the learner's expected regret the sum of the regrets suffered at each $s \in S$, which would depend on the number of times $s \in S$ is drawn as the initial state. Since the states are chosen uniformly at random there are many states (constant fraction) that are chosen $\Theta(K/|S|)$ times. Summing the regret bounds of \cref{thm:twostatelb} over only these states and choosing $\epsilon$ appropriately gives the sought-after bound.
    
    Denote by $K_s$ the number of episodes that start in each state $s \in S$. 
    \begin{align}
        \bbE[\regret ]
        \ge 
        \sum_{s \in S} 
        \bbE
        \brk[s]3{\epsilon K_s \brk2{\frac{1}{2} - \epsilon \sqrt{\frac{K_s}{|A| \costbound}} }}
        =
        \frac{\epsilon K}{2} - \epsilon^2 \sqrt{\frac{1}{|A| \costbound}} \sum_{s \in S} \bbE[K_s^{3/2}]. 
        \label{eq:regretlb}
    \end{align}
    Applying Cauchy-Schwartz inequality gives
    \begin{align*}
        \sum_{s \in S} \bbE [K_s^{3/2}]
        &\le
        \sum_{s \in S} \sqrt{\bbE [K_s]} \sqrt{\bbE [K_s^2]}
        =
        \sum_{s \in S} \sqrt{\bbE [K_s]} \sqrt{\bbE [K_s]^2 + \text{Var}[K_s]}
        \\
        &=
        \sum_{s \in S} \sqrt{\frac{K}{|S|}} \sqrt{\frac{K^2}{|S|^2} + \frac{K }{|S|}\brk2{1-\frac{1}{|S|}}}
        \le
        K \sqrt{\frac{2K}{|S|}},
    \end{align*}
    where we have used the expectation and variance formulas of the Binomial distribution.
    The lower bound is now given by applying the inequality above in \cref{eq:regretlb} and choosing $\epsilon = \frac{1}{8} \sqrt{\costbound |A| |S|/K}$.
\end{proof}

\newpage

\section{General useful results}

\begin{lemma}[Freedman's Inequality]
    \label{lemma: freedmans inequality}
    Let $\brc{X_t}_{t\geq 1}$ be a real valued martingale difference sequence adapted to a filtration $\brc*{F_t}_{t\geq 0}$. 
    If $|X_t| \leq R$ a.s. then for any $\eta \in (0,1/R), T \in \mathbb{N}$ it holds with probability at least $1-\delta$,
    \[
        \sum_{t=1}^T X_t 
        \leq 
        \eta \sum_{t=1}^T \E[X_t^2| F_{t-1}] + \frac{\log(1/\delta)}{\eta}.
    \]
\end{lemma}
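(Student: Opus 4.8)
The plan is to run the classical exponential-supermartingale argument. First I would record the elementary scalar inequality: for every real $y \le 1$ one has $e^y \le 1 + y + y^2$. This follows from a one-line calculus check on $g(y) = 1 + y + y^2 - e^y$, which satisfies $g(0) = g'(0) = 0$ and whose second derivative $2 - e^y$ pins down the sign of $g$ on either side of $0$. Applying this with $y = \eta X_t$ — legitimate since $|\eta X_t| \le \eta R \le 1$ almost surely, by the hypothesis $\eta \in (0,1/R)$ together with $|X_t| \le R$ — yields the pointwise bound $e^{\eta X_t} \le 1 + \eta X_t + \eta^2 X_t^2$.

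Next I would pass to conditional expectations. Since $\{X_t\}$ is a martingale difference sequence, $\bbE[X_t \mid F_{t-1}] = 0$, and because $X_t^2 \le R^2$ the conditional second moment is finite, so
\[
\bbE[\, e^{\eta X_t} \mid F_{t-1} \,] \le 1 + \eta^2\, \bbE[\, X_t^2 \mid F_{t-1} \,] \le \exp\bigl(\eta^2\, \bbE[\, X_t^2 \mid F_{t-1} \,]\bigr),
\]
using $1 + u \le e^u$. Now set $M_0 = 1$ and
\[
M_t = \exp\Bigl(\eta \sum_{s=1}^t X_s - \eta^2 \sum_{s=1}^t \bbE[\, X_s^2 \mid F_{s-1} \,]\Bigr).
\]
The compensator contributed at step $t$ is $F_{t-1}$-measurable, so the previous display gives $\bbE[M_t \mid F_{t-1}] \le M_{t-1}$; hence $\{M_t\}_{t\ge 0}$ is a nonnegative supermartingale (it is bounded, so integrable), and $\bbE[M_T] \le \bbE[M_0] = 1$.

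Finally I would apply Markov's inequality at the fixed time $T$: $\Pr[M_T \ge 1/\delta] \le \delta\, \bbE[M_T] \le \delta$. On the complementary event $M_T < 1/\delta$, and taking logarithms and dividing by $\eta > 0$ rearranges to
\[
\sum_{t=1}^T X_t \le \eta \sum_{t=1}^T \bbE[\, X_t^2 \mid F_{t-1} \,] + \frac{\log(1/\delta)}{\eta},
\]
which therefore holds with probability at least $1-\delta$, as claimed. There is essentially no real obstacle: because the statement is for a deterministic horizon $T$, plain Markov at time $T$ suffices and no maximal inequality is needed; the only point requiring a moment's care is checking $\eta R \le 1$ so that the scalar exponential bound may be applied to $\eta X_t$, which is exactly the hypothesis imposed on $\eta$.
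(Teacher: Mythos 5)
Your proof is correct: the scalar bound $e^y \le 1+y+y^2$ for $y\le 1$ applies since $\eta|X_t|\le \eta R<1$, the exponential process with the conditional-variance compensator is indeed a nonnegative supermartingale, and Markov's inequality at the fixed horizon $T$ gives exactly the stated bound after taking logarithms and dividing by $\eta$. The paper itself states Freedman's inequality as a known result without proof, and your argument is precisely the standard exponential-supermartingale derivation of it, so there is nothing to reconcile.
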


\begin{lemma}[Consequences of Freedman's Inequality for Bounded and Positive Sequence of Random Variables, e.g.,~\citealp{efroni2021confidence}, Lemma 27]
    \label{lemma: consequences of optimism and freedman's inequality}
    Let $\brc{Y_t}_{t\geq 1}$ be a real valued sequence of random variables adapted to a filtration $\brc*{F_t}_{t\geq 0}$. 
    Assume that for all $t\geq 1$ it holds that $0\leq Y_{t}\leq C$ a.s., and $T\in \mathbb{N}$.
    Then, each of the following inequalities hold with probability at least $1-\delta$.
    \begin{align*}
        \sum_{t=1}^T \E[Y_t|F_{t-1}]
        & \leq 
        \br*{1+\frac{1}{2C}} \sum_{t=1}^T Y_t + 2(2C+1)^2 \log\frac{1}{\delta}
        \\
        \sum_{t=1}^T Y_t 
        & \leq 
        2\sum_{t=1}^T \E[Y_t|F_{t-1}] + 4C\log\frac{1}{\delta}.
    \end{align*}
\end{lemma}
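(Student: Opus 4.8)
The plan is to derive both inequalities from Freedman's inequality (\cref{lemma: freedmans inequality}) applied to an appropriate martingale difference sequence. First I would set $X_t := \E[Y_t \mid F_{t-1}] - Y_t$ for the first inequality, and $X_t := Y_t - \E[Y_t \mid F_{t-1}]$ for the second. In either case $\{X_t\}$ is adapted to $\{F_t\}$ (because $Y_t$ is $F_t$-measurable while $\E[Y_t\mid F_{t-1}]$ is $F_{t-1}$-measurable) with $\E[X_t\mid F_{t-1}] = 0$, and since $0 \le Y_t \le C$ forces $0 \le \E[Y_t\mid F_{t-1}] \le C$, we get the almost-sure bound $|X_t| \le C$. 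Hence Freedman's inequality applies with $R = C$.

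The key step is to control the predictable quadratic variation by the quantity that should appear on the right-hand side. Since $X_t^2 = (Y_t - \E[Y_t\mid F_{t-1}])^2$ in both cases,
\[
    \E[X_t^2 \mid F_{t-1}] = \VAR(Y_t\mid F_{t-1}) \le \E[Y_t^2\mid F_{t-1}] \le C\,\E[Y_t\mid F_{t-1}],
\]
where the last inequality uses $Y_t^2 \le C Y_t$, valid because $0 \le Y_t \le C$. Substituting this into \cref{lemma: freedmans inequality} gives, for any fixed $\eta \in (0,1/C)$, with probability at least $1-\delta$,
\[
    \sum_{t=1}^T X_t \le \eta C \sum_{t=1}^T \E[Y_t\mid F_{t-1}] + \frac{\log(1/\delta)}{\eta}.
\]

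For the first inequality, plugging in $X_t = \E[Y_t\mid F_{t-1}] - Y_t$ and collecting the $\sum_t\E[Y_t\mid F_{t-1}]$ terms yields $(1-\eta C)\sum_t \E[Y_t\mid F_{t-1}] \le \sum_t Y_t + \eta^{-1}\log(1/\delta)$; dividing by $1-\eta C > 0$ and taking $\eta = \tfrac{1}{C(2C+1)} \in (0,1/C)$ makes $\tfrac{1}{1-\eta C} = 1 + \tfrac{1}{2C}$ and $\tfrac{1}{\eta(1-\eta C)} = \tfrac{(2C+1)^2}{2} \le 2(2C+1)^2$, which is exactly the claimed bound. For the second inequality, plugging in $X_t = Y_t - \E[Y_t\mid F_{t-1}]$ yields directly $\sum_t Y_t \le (1+\eta C)\sum_t\E[Y_t\mid F_{t-1}] + \eta^{-1}\log(1/\delta)$, and taking $\eta = \tfrac{1}{2C}$ gives $1+\eta C = \tfrac32 \le 2$ and $\eta^{-1} = 2C \le 4C$.

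This argument is essentially routine, so there is no genuine obstacle; the points that need care are (i) verifying the boundedness hypothesis $|X_t| \le C$ so that Freedman's inequality can be invoked, (ii) keeping the chosen $\eta$ strictly inside $(0,1/C)$ so that the rearrangement and division by $1-\eta C$ in the first inequality is legitimate, and (iii) bookkeeping of constants into the (deliberately loose) stated bounds. Since the statement is for a fixed horizon $T$, no union bound over $T$ is required.
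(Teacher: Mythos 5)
Your proof is correct, and it is the standard derivation: the paper itself does not prove this lemma (it cites it from \citet{efroni2021confidence}, Lemma 27), and the argument there is exactly this one — apply Freedman's inequality to $\pm\bigl(Y_t - \E[Y_t\mid F_{t-1}]\bigr)$, bound the conditional variance via $\E[Y_t^2\mid F_{t-1}]\le C\,\E[Y_t\mid F_{t-1}]$, and tune $\eta$. Your choices $\eta = \tfrac{1}{C(2C+1)}$ and $\eta=\tfrac{1}{2C}$ both lie in $(0,1/C)$ and reproduce (in fact slightly improve) the stated constants, so there is nothing to fix.
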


\begin{lemma}[Standard Deviation Difference, e.g., \citealp{zanette2019tighter}]
    \label{lemma: std difference}
    Let $V_1,V_2: S \rightarrow \mathbb{R}$ be fixed mappings.
    Let $P(s)$ be a probability measure over the state space.
    Then, $\sqrt{\VAR(V_1)} - \sqrt{\VAR(V_2)} \leq \sqrt{\VAR(V_1-V_2)}$.
\end{lemma}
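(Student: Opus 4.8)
The plan is to recognize $\sqrt{\mathrm{Var}(\cdot)}$ as a seminorm and invoke the triangle inequality, or equivalently to expand the variance and apply Cauchy--Schwarz. I would use the latter, more self-contained route. Write $\mu_i = \E_{P}[V_i]$ for $i=1,2$, and note $\E_P[V_1 - V_2] = \mu_1 - \mu_2$. Expanding with $V_1 = V_2 + (V_1 - V_2)$ and using bilinearity of covariance,
\[
    \mathrm{Var}(V_1)
    =
    \mathrm{Var}(V_2) + \mathrm{Var}(V_1 - V_2) + 2\,\mathrm{Cov}(V_2, V_1 - V_2).
\]

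Next I would bound the cross term by Cauchy--Schwarz: since $\mathrm{Cov}(X,Y) = \E_P[(X - \E_P X)(Y - \E_P Y)] \le \sqrt{\mathrm{Var}(X)}\sqrt{\mathrm{Var}(Y)}$ (itself just Cauchy--Schwarz in $L^2(P)$ applied to the centered variables), we get $\mathrm{Cov}(V_2, V_1 - V_2) \le \sqrt{\mathrm{Var}(V_2)}\sqrt{\mathrm{Var}(V_1 - V_2)}$. Substituting this into the identity above yields
\[
    \mathrm{Var}(V_1)
    \le
    \mathrm{Var}(V_2) + 2\sqrt{\mathrm{Var}(V_2)}\sqrt{\mathrm{Var}(V_1 - V_2)} + \mathrm{Var}(V_1 - V_2)
    =
    \brk1{\sqrt{\mathrm{Var}(V_2)} + \sqrt{\mathrm{Var}(V_1 - V_2)}}^2.
\]
Both sides are non-negative (variances are non-negative), so taking square roots and rearranging gives $\sqrt{\mathrm{Var}(V_1)} - \sqrt{\mathrm{Var}(V_2)} \le \sqrt{\mathrm{Var}(V_1 - V_2)}$, which is the claim.

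There is no real obstacle here — the statement is a routine consequence of the triangle inequality for the $L^2(P)$ seminorm $V \mapsto \sqrt{\mathrm{Var}_P(V)}$ (equivalently, of Cauchy--Schwarz). The only minor points to be careful about are that all the quantities under square roots are finite (which holds since $S$ is finite and $V_1, V_2$ are real-valued maps) and non-negative, so that taking square roots is legitimate; and that the measure $P$ here is an arbitrary but fixed distribution over $S$, so the identity for $\mathrm{Var}(V_1)$ in terms of $\mathrm{Var}(V_2)$, $\mathrm{Var}(V_1-V_2)$ and their covariance is just the standard expansion.
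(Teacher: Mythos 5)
Your proof is correct: the variance expansion plus Cauchy--Schwarz on the covariance term is exactly the triangle inequality for the $L^2(P)$ seminorm of centered variables, which is the standard argument for this fact. The paper itself gives no proof — it cites \citet{zanette2019tighter} — and your derivation matches that standard route, so there is nothing to flag.
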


\end{document}